\newcommand{\STATE}{\quad}
\newcommand{\COMMENT}[1]{{\sl (#1)}}
\newcommand{\RETURN}{\tb{Return }}
\newtheorem{theorem}{Theorem}[section]
\newtheorem{lemma}[theorem]{Lemma}
\newtheorem{corollary}[theorem]{Corollary}
\newtheorem{remark}[theorem]{Remark}
\newtheorem{claim}[theorem]{Claim}
\newtheorem{condition}[theorem]{Condition}
\newtheorem{definition}[theorem]{Definition} 
\newtheorem{step}{Step}
\newcommand{\mc}{\mathcal}
\newcommand{\mb}{\mathbf} \newcommand{\mbb}{\mathbb} \newcommand{\wt}{\widetilde}
\newcommand{\fa}{\forall}  \newcommand{\tb}{\textbf}
\newcommand{\tx}{\text}
\newcommand{\bs}{\boldsymbol}
\newcommand{\ot}{\otimes}
\newcommand{\Proj}{\mbox{Proj}}
\newcommand{\R}{\mbb R}
\newcommand{\E}{\mbb E}
\newcommand{\ol}{\overline}
\newcommand{\wh}{\widehat}
\newcommand{\prj}{\tx{Proj}}
\newcommand{\mog}{mixture of Gaussians }
\newcommand{\vc}{\tx{vec}}
\newcommand{\mt}{\tx{mat}}
\newcommand{\diag}{\tx{diag}}
\newcommand{\poly}{\tx{poly}}
\newcommand{\ep}{\tx{exp}}
\newcommand{\lt}{\left}
\newcommand{\rt}{\right}
\newcommand{\defeq}{\vcentcolon=}
\newcommand{\od}{\odot}
\newcommand{\qt}[1]{}
\newcommand{\qq}[1]{#1}
\newcommand{\rg}[1]{}
\newcommand{\sk}[1]{}
\author{Rong Ge\thanks{Microsoft Research, New England. Email: rongge@microsoft.com} \and Qingqing
  Huang\thanks{MIT EECS. Email:qqh@mit.edu. Part of this work was done when the author was interning at Microsoft Research New England} \and Sham M. Kakade\thanks{Microsoft Research, New England. Email: skakade@microsoft.com}} \date{}
\begin{document}
%\title{Smoothed Analysis for Learning \mog~in High Dimensions}
%\title{Learning Mixture of Gaussians in High Dimensions}
\title{Learning Mixtures of Gaussians in High Dimensions}

\maketitle
\thispagestyle{empty}

\begin{abstract}
  Efficiently learning mixture of Gaussians is a fundamental problem in statistics and
  learning theory.  Given samples coming from a random one out of $k$ Gaussian
  distributions in $\R^n$, the learning problem asks to estimate the means and the
  covariance matrices of these Gaussians.  This learning problem arises in many areas
  ranging from the natural sciences to the social sciences, and has also found many
  machine learning applications.
  %\cite{mclachlan2004finite, titterington1985statistical},

  Unfortunately,  learning mixture of Gaussians is an information theoretically
  hard problem: in order to learn the parameters up to a reasonable accuracy, the number of samples
  required is exponential in the number of Gaussian components in the worst case.
  %\cite{moitra2010settling}.
  %
  In this work, we show that provided we are in high enough dimensions, the class of Gaussian
  mixtures is learnable in its most general form under a smoothed analysis framework, where the
  parameters are randomly perturbed from an adversarial starting point.

  In particular, given samples from a mixture of Gaussians with randomly perturbed
  parameters, when $n\ge \Omega(k^2)$, we give an algorithm that learns the parameters
  with polynomial running time and using polynomial number of samples.

% \begin{enumerate}
% \item Given samples from a mixture of zero-mean Gaussians with randomly perturbed parameters, when
%   $n\ge \Omega(k^2)$ we give an algorithm that learns the parameters with polynomial running time
%   and using polynomial number of samples.
% \item Given samples from a mixture of Gaussians with randomly perturbed parameters, when $n\ge
%   \Omega(k^2)$, we give an algorithm that learns the parameters with polynomial running time and
%   using polynomial number of samples.
% \end{enumerate}
  The central algorithmic ideas consist of new ways to decompose the  moment tensor of
  the Gaussian mixture by exploiting its structural properties.
  % that is formed out of positive definite matrices, which may be of independent
  % interest.
  The symmetries of this tensor are derived from the combinatorial structure of higher
  order moments of Gaussian distributions (sometimes referred to as Isserlis' theorem or
  Wick's theorem).
  We also develop new tools for bounding smallest singular values of structured random
  matrices, which could be useful in other smoothed analysis settings.

  \rg{There were some confusions in notations ($\wt .$ and $\wh .$), should propagate the changes to the proof.}
   \qt{ need to highlight the deterministic conditions.}
\end{abstract}
% \begin{keywords}
% \end{keywords}

\newpage
\setcounter{page}{1}

\section{Introduction}
Learning mixtures of Gaussians is a fundamental problem in statistics and learning theory,
whose study dates back to \citet{pearson1894contributions}. Gaussian mixture models arise
in numerous areas including physics, biology and the social sciences
(\cite{mclachlan2004finite, titterington1985statistical}), as well as in image processing
(\cite{reynolds1995robust}) and speech (\cite{permuter2003gaussian}).

In a Gaussian mixture model, there are $k$ unknown $n$-dimensional multivariate Gaussian
distributions.  Samples are generated by first picking one of the $k$ Gaussians, then drawing a
sample from that Gaussian distribution. Given samples from the mixture distribution, our goal is to
estimate the means and covariance matrices of these underlying Gaussian distributions\footnote{ This
  is different from the problem of {\em density estimation} considered in
  \cite{feldman2006pac,densityestimation14}}.
%
%Let $D_{\omega}$ be the diagonal matrix with $\omega_i$'s on its main diagonal, and define $\bs \mu =
%[\mu^{(1)},\dots, \mu^{(k)}]$.
%
%Let $ x\in\mbb R^{n}$ be the observed random vector given by $ x = \mu^{(h)} + z^{(h)},$ where $h$ is a
%multinomial random variable with $\mbb P(h = i) = \omega_i$, and conditioned on $h=i$, $z^{(i)}$ is a random
%vector whose conditional distribution is the multivariate Gaussian $\mc N(0,\Sigma^{(i)})$.
%%
%Given a set of $N$ samples of $\{x_t:t\in[N]\}$, drawn independently from the distribution $x$, the goal is to
%learn the model parameters: means, covariance matrices, mixing weights, and the number of components.

%\paragraph{Literature}
 % In practice, the learning problem is often solved using Expectation Maximization (EM)
 % algorithm. However in general EM algorithm is not guaranteed to find the correct solution.

This problem has a long history in theoretical computer science. The seminal work of
\cite{dasgupta1999learning} gave an algorithm for learning spherical Gaussian mixtures
when the means are well separated. Subsequent works
(\cite{dasgupta2000two,sanjeev2001learning,vempala2004spectral,brubaker2008isotropic})
developed better algorithms in the well-separated case, relaxing the spherical assumption
and the amount of separation required.

When the means of the Gaussians are not separated, after several works (\cite{belkin2009learning,
  kalai2010efficiently}), \cite{belkin2010polynomial} and \cite{moitra2010settling} independently gave
algorithms that run in polynomial time and with polynomial number of samples for a fixed number of
Gaussians. However, both running time and sample complexity depend {\sl super} exponentially on the
number of components $k$\footnote{ In fact, it is in the order of \qq{$O({e^{O(k)}}^k)$} as shown in
  Theorem 11.3 in \cite{valiant2012algorithmic}.  }. Their
algorithm is based on the {\em method of moments} introduced by \cite{pearson1894contributions}:
first estimate the $O(k)$-order moments of the distribution, then try to find the parameters that
agree with these moments.
\cite{moitra2010settling} also show that the exponential dependency of the sample
complexity on the number of components is necessary, by constructing an example of two
mixtures of Gaussians with very different parameters, yet with exponentially small
statistical distance.

Recently, \cite{hsu2013learning} applied spectral methods to learning mixture of spherical
Gaussians. When $n\ge k+1$ and the means of the Gaussians are linearly independent, their algorithm
can learn the model in polynomial time and with polynomial number of samples. This result suggests
that the lower bound example in \cite{moitra2010settling} is only a {\em degenerate} case in high
dimensional space. In fact, {\em most} (in general position) mixture of spherical Gaussians are {\em
  easy} to learn. This result is also based on the method of moments, and only uses second and third
moments. Several follow-up works (\cite{bhaskara2013smoothed,anderson2013more}) use higher order
moments to get better dependencies on $n$ and $k$.

However, the algorithm in \cite{hsu2013learning} as well as in the follow-ups all make
strong requirements on the covariance matrices. In particular, most of them only apply to
learning mixture of spherical Gaussians.
For mixture of Gaussians with general covariance matrices, the best known result is still
\cite{belkin2010polynomial} and \cite{moitra2010settling}, which algorithms are not polynomial in the
number of components $k$.
This leads to the following natural question:

\medskip

{\noindent {\bf Question:} \sl Is it possible to learn  {\em most} mixture of Gaussians in
  polynomial time using a polynomial number of samples?}

\paragraph{Our Results}
In this paper, we give an algorithm that learns {\em most} mixture of Gaussians in high dimensional
space (when $n \ge \Omega(k^2)$), and the argument is formalized under the {\em smoothed analysis}
framework first proposed in \cite{spielman2004smoothed}.

In the smoothed analysis framework, the adversary first choose an arbitrary mixture of
Gaussians.
% instance of Gaussian mixture model, whose means and covariance matrices are denoted by
% $\mu^{(i)}$'s and $\Sigma^{(i)}$'s.
Then the mean vectors and covariance matrices of this Gaussian mixture are randomly {\em perturbed}
by a small amount $\rho$ \footnote{See Definition~\ref{def:smoothgaussian} in
  Section~\ref{sec:smooth} for the details.}.  The samples are then generated from the Gaussian
mixture model with the perturbed parameters. The goal of the algorithm is to learn the perturbed
parameters from the samples.

The smoothed analysis framework is a natural bridge between worst-case
and average-case analysis. On one hand, it is similar to worst-case
analysis, as the adversary chooses the initial instance, and the
perturbation allowed is small. On the other hand, even with small
perturbation, we may hope that the instance be different enough from
degenerate cases. A successful algorithm in the smoothed analysis setting
suggests that the bad instances must be very ``sparse'' in the
parameter space: they are highly unlikely in any small neighborhood of
any instance.
Recently, the smoothed analysis framework has also motivated several research work
(\cite{kalai2009learning} \cite{bhaskara2013smoothed}) in analyzing learning algorithms.

%We informally state our main results below: In the smoothed analysis setting, given samples from the perturbed
%$n$-dimensional Gaussian mixture model with $k$ components, when $ \qq{n\ge 9 k^2}$, there is an algorithm that with
%high probability learns the correct parameters up to accuracy $\epsilon$, with running time and sample complexity
%polynomial in both $n$ and $k$. This will be formalized in Theorem~\ref{thm:main} in Section~\ref{sec:smooth}.
In the smoothed analysis setting, we show that it is easy to learn most Gaussian mixtures:

\begin{theorem}
  \label{thm:inform-general}
  (informal statement of Theorem~\ref{thm:main}) In the smoothed analysis setting, when $ n\ge \Omega(k^2)$, given samples
  from the perturbed $n$-dimensional Gaussian mixture model with $k$ components, there is
  an algorithm that learns the correct parameters up to accuracy $\epsilon$ with high
  probability, using polynomial time and number of samples.
  % with running time and sample complexity polynomial in both $n$ and
  % $k$.  bounded by $\poly(n,k,1/\omega_{min},1/\epsilon,1/\rho)$.  There is an algorithm
  % that with probability at least $1-C\ep(-C' \poly(n,k))$ outputs an estimate
  % $\wh{\bs\ta}$ with $\epsilon$ accuracy to $\wt{\bs\ta}$, and both the running time and
  % sample complexity of the algorithm are upper-bounded by $poly(n,k,\omega_o,
  % 1/\epsilon, 1/\rho)$.
\end{theorem}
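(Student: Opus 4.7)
The plan is to attack the problem via the method of moments combined with tensor decomposition, adapting to general (non-spherical) covariances the paradigm that has been successful for spherical Gaussian mixtures. First I would fix a moment order $2d$ chosen as an appropriate function of $k$ and $n$, and work with the $2d$-th raw moment tensor of the mixture, $M_{2d} = \sum_{i=1}^k w_i\,\E_{x\sim N(\mu_i,\Sigma_i)}[x^{\otimes 2d}]$. By Isserlis'/Wick's theorem, each Gaussian moment is a sum over perfect matchings of the $2d$ indices, with every matched pair contributing either a $\Sigma_i$ entry or a $\mu_i\mu_i^\top$ entry. Expanding and regrouping terms according to how many indices are paired among themselves versus carrying a $\mu_i$ exhibits $M_{2d}$ as a weighted combination of symmetrized tensor products of the form $\mu_i^{\otimes a}\otimes\Sigma_i^{\otimes b}$ with $a+2b=2d$.

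I would then peel off these ``layers'' one at a time. A clean way is to design matrix unfoldings whose rows and columns are indexed by multisets of $[n]$, so that the layer carrying $\mu_i^{\otimes a}\otimes \Sigma_i^{\otimes b}$ appears as a rank-$k$ product $L\,\diag(w)\,R^\top$, where the $i$-th columns of $L$ and $R$ are explicit Khatri--Rao-style polynomials in $(\mu_i,\Sigma_i)$. After subtracting lower layers already recovered in previous rounds, the identification of $\mu_i$ or $\Sigma_i$ reduces to simultaneous diagonalization, in the spirit of Jennrich's algorithm, which runs in polynomial time and returns the parameters up to a global permutation provided the Khatri--Rao columns are linearly independent.

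The main technical obstacle will be proving that these structured, parameter-dependent Khatri--Rao matrices are well conditioned under the smoothed perturbation, i.e.\ that their smallest singular value is at least $1/\poly(n,k,1/\rho)$ with high probability. Standard smoothed-analysis bounds on least singular values assume entry-wise independent perturbations, whereas here each column is a tensor power of a single perturbed vector or matrix, so entries within a column are correlated polynomial functions of $O(n)$ Gaussian perturbations, and the same perturbations reappear across rows. I plan to handle this with an $\epsilon$-net plus anti-concentration strategy: for a fixed unit test vector $v$, the inner product $\langle v,\phi(\wt\mu_i,\wt\Sigma_i)\rangle$ is a low-degree polynomial in the perturbation, to which the Carbery--Wright inequality applies and yields small-ball probability $\poly(\rho)$; a leave-one-out decoupling then removes the inter-column dependencies. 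The hypothesis $n\ge\Omega(k^2)$ enters precisely here, because the ambient dimension of the relevant symmetric-tensor space must exceed $k$ by a polynomial margin for the union bound over the net to close.

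Finally, a routine concentration argument yields $\|M_{2d}-\wh M_{2d}\|_F\le\epsilon'$ from $\poly(n,k,1/\epsilon',1/\rho)$ samples, and standard perturbation theory for tensor decomposition, fed with the conditioning bounds above, converts this into $\epsilon$-accurate recovery of $(w_i,\mu_i,\Sigma_i)$ up to permutation, giving the claimed polynomial-time, polynomial-sample guarantee.
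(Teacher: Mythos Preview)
Your proposal has a genuine gap at its most critical step. You assert that after expanding $M_{2d}$ via Isserlis' theorem and grouping terms, an appropriate matrix unfolding will exhibit each ``layer'' as a rank-$k$ product $L\,\diag(w)\,R^\top$ amenable to Jennrich-style diagonalization. This is precisely where the argument breaks down, and it is the central difficulty the paper is built around. The Isserlis expansion sums over \emph{all} perfect matchings of the $2d$ indices, so any flattening of $M_{2d}$ into a matrix mixes row and column indices across the matchings; the result is not rank $k$ but a symmetrized superposition of many rank-$k$ pieces. Concretely, even for $d=2$ and zero means, the $n^2\times n^2$ unfolding of $M_4$ is $\sum_i \omega_i\bigl(\vc(\Sigma^{(i)})\vc(\Sigma^{(i)})^\top + \text{two permuted copies}\bigr)$, and the paper exhibits an explicit pair of mixtures (Section~\ref{sec:two-mixtures-with}) with identical $M_4$ but different $X_4=\sum_i\omega_i\vc(\Sigma^{(i)})\vc(\Sigma^{(i)})^\top$. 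So you cannot recover the rank-$k$ object $X_4$ from $M_4$ by any linear unfolding, and simultaneous diagonalization on $M_4$ alone is underdetermined.

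The paper's remedy is a two-stage ``unfold then decompose'' strategy that your plan is missing. First (Step~1), it extracts the $k$-dimensional span $\mc U=\tx{span}\{\vc(\wt\Sigma^{(i)})\}$ from carefully chosen \emph{slices} of $M_4$: one-dimensional slices $M_4(e_{j_1},e_{j_2},e_{j_3},I)$ give the span of selected columns of the $\Sigma^{(i)}$'s, two-dimensional slices projected to that span's complement give the span of the projected $\Sigma^{(i)}$'s, and merging two such runs recovers $\mc U$. Only \emph{after} knowing $\mc U$ does the map $\ol M_4\mapsto X_4$ become a well-posed linear system (Step~2), because restricting $X_4$ to lie in $U\otimes U$ drops the unknowns from $\binom{n_2+1}{2}$ to $\binom{k+1}{2}\ll n_4$. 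Step~3 is then ordinary tensor decomposition on the recovered $X_4,X_6$. The smoothed-analysis arguments are correspondingly structural---they bound singular values of explicit block matrices built from the $\wt\Sigma^{(i)}$'s and use Gaussian-chaos concentration for projected Kronecker products---rather than the generic Carbery--Wright plus net route you sketch. Your $n\ge\Omega(k^2)$ intuition is in the right place, but it enters through these specific rank arguments, not through a net over test vectors.
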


An important step in our algorithm is to learn Gaussian mixture models whose components
all have mean zero, which is also a problem of independent interest (\cite{NIPS2012_4758}).
Intuitively this is also a ``hard'' case, as there is no separation in the means.
Yet algebraically, this case gives rise to a novel tensor decomposition algorithm.
The ideas for solving this decomposition problem are then generalized to tackle the most
general case.

% (as the tensor in question is constructed out of positive definite matrices).  , and the
% odd-order moments are all 0's so they do not give any information.

%Note that in the smoothed analysis setting, the $0$-mean problem is not a special case of learning mixture of general Gaussians: even if the original worst-case problem has $0$-means, they get perturbed and the perturbed instance will no longer have $0$-mean.
%
%We show even if the means stay $0$ and do not get perturbed, we can still give an algorithm with similar guarantees:

\begin{theorem} \label{thm:zeromean:informal}
  (informal statement of Theorem~\ref{thm:main-zero-mean}) In the smoothed analysis setting,
  when $n\ge \Omega(k^2)$, given samples from the perturbed mixture of {\em zero-mean}
  $n$-dimensional Gaussian mixture model with $k$ components, there is an algorithm that
  learns the parameters up to accuracy $\epsilon$ with high probability, using polynomial
  running time and number of samples.
\end{theorem}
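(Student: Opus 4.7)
The plan is to reduce the zero-mean learning problem to an undercomplete symmetric tensor decomposition problem by exploiting the Isserlis/Wick structure of the high-order Gaussian moments. Because every component has mean zero, all odd moments vanish, so all information about the individual $\Sigma_i$ must be harvested from even moments. I would first form the empirical $4$th and $6$th moment tensors $\wh M_4$ and $\wh M_6$ from polynomially many samples; standard concentration (matrix Bernstein on $X^{\otimes p}$, together with sub-exponential tail bounds for Gaussian polynomials) guarantees that these approximate the corresponding population tensors $M_4$ and $M_6$ up to inverse-polynomial error in Frobenius norm.

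Next I would use Isserlis' theorem to expand each population moment as a sum over perfect matchings: for $X\sim N(0,\Sigma)$ the $2p$-th moment tensor equals a sum over the $(2p-1)!!$ pairings of $2p$ indices, and each pairing contributes a Kronecker product of copies of $\Sigma$. Summing over mixture components gives
\[
  M_4 \;=\; \sum_{i=1}^{k} w_i \sum_{\pi \in \mc{P}_4} \pi\bigl(\Sigma_i \otimes \Sigma_i\bigr), \qquad M_6 \;=\; \sum_{i=1}^{k} w_i \sum_{\pi \in \mc{P}_6} \pi\bigl(\Sigma_i \otimes \Sigma_i \otimes \Sigma_i\bigr).
\]
Under the flattening that groups indices as $(12)(34)(56)$, exactly one pairing of $\mc{P}_6$ contributes the ``clean'' rank-one term $\vc(\Sigma_i)^{\otimes 3}$, and all other pairings contribute structured ``Wick tail'' tensors that cut across the split. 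The central algorithmic step is to cancel or project out these Wick tails, using a carefully chosen linear combination of flattenings together with corrections involving $M_2 = \sum_i w_i \Sigma_i$, in order to isolate
\[
  T \;=\; \sum_{i=1}^{k} w_i \, \vc(\Sigma_i)^{\otimes 3} \;\in\; (\R^{n^2})^{\otimes 3}.
\]
Because $n \ge \Omega(k^2)$ gives $n^2 \gg k$, the decomposition of $T$ is strictly undercomplete in ambient dimension $n^2$; Jennrich's algorithm (simultaneous diagonalization of two random contractions of $T$) then recovers each $\vc(\Sigma_i)$ provided the matrix $U = [\vc(\Sigma_1) \mid \cdots \mid \vc(\Sigma_k)] \in \R^{n^2 \times k}$ has smallest singular value bounded below by an inverse polynomial.

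The main obstacle, and I expect the technical heart of the argument, is proving such a smoothed-analysis lower bound on $\sigma_{\min}(U)$, and on the analogous matrices arising in the Wick-tail subtraction. Since each column of $U$ is the vectorization of a symmetric matrix, its $n^2$ entries lie in the $\binom{n+1}{2}$-dimensional symmetric subspace and have highly dependent coordinates, so classical anti-concentration tools for Gaussian matrices do not apply directly. I would attack this by a leave-one-out argument: condition on $k-1$ of the perturbed covariances, project the perturbation of the remaining covariance off the span of the others, and establish a Carbery--Wright style anti-concentration estimate for the norm of the resulting random symmetric matrix; an $\epsilon$-net union bound over the unit sphere in $\R^k$ then upgrades this pointwise estimate to the required uniform bound. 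The hypothesis $n \ge \Omega(k^2)$ is precisely what supplies enough dimensions in the symmetric-matrix space for all $k$ perturbed columns to fit with inverse-polynomial slack. Finally, I would propagate the sampling error through Jennrich's algorithm and invert $\vc(\cdot)$ to recover the individual $\Sigma_i$; the mixing weights $w_i$ then follow from a linear system against $M_2$.
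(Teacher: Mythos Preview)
Your proposal has a genuine gap at what you yourself identify as the ``central algorithmic step'': you cannot isolate the clean tensor $T=\sum_i w_i\,\vc(\Sigma_i)^{\otimes 3}$ from $M_6$ (or $X_4$ from $M_4$) by any linear combination of flattenings and $M_2$-based corrections. The paper makes this explicit. The map $\mc F_4$ from unfolded moments $X_4$ to folded moments $\ol M_4$ is a projection from a space of dimension roughly $n^4/8$ down to one of dimension roughly $n^4/24$; it is not injective, fails RIP, and in Section~\ref{sec:two-mixtures-with} the authors exhibit two distinct mixtures with identical $M_4$ but different $X_4$ (and different spans of the $\Sigma_i$'s). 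Corrections involving $M_2=\sum_i w_i\Sigma_i$ do not help either: the unwanted Wick pairings in $M_6$ contribute terms like $\sum_i w_i\,\Sigma^{(i)}_{j_1 j_3}\Sigma^{(i)}_{j_2 j_5}\Sigma^{(i)}_{j_4 j_6}$, which are cubic in each $\Sigma^{(i)}$ and cannot be expressed through the linear statistic $M_2$. Such subtraction works for a single Gaussian, not for a mixture.

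What the paper actually does is quite different and more delicate. Before any linear system is solved, it first recovers the $k$-dimensional span $\mc U=\mbox{span}\{\vc(\wt\Sigma^{(i)})\}$ by an indirect three-stage procedure exploiting one- and two-dimensional \emph{slices} of $M_4$ (Step~1(a)--(c)): slices $M_4(e_{j_1},e_{j_2},e_{j_3},I)$ span a subspace of columns of the $\Sigma^{(i)}$'s; projecting two-dimensional slices orthogonally to that subspace kills the rank-one Wick terms and yields the span of projected covariances; two such runs are then merged. Only once $\mc U$ is known is the unfolding system $\ol M_4=\sqrt{3}\,\mc F_4(UY_4U^\top)$ set up with $Y_4\in\R^{k\times k}_{sym}$ as the unknown---the change of variables from $X_4$ to $Y_4$ collapses the number of unknowns from $\binom{n_2+1}{2}$ to $\binom{k+1}{2}$, and \emph{this} is what makes the system invertible (Theorem~\ref{thm:unfold4}). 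The condition $n\ge\Omega(k^2)$ is used not for undercompleteness of the final tensor decomposition (for which $n\ge k$ would suffice) but to control the smallest singular values of the structured random matrices arising in these span-finding and unfolding steps, e.g.\ the projected Kronecker product in Lemma~\ref{lem:projkron4}. Your smoothed-analysis sketch for $\sigma_{\min}(U)$ is along the right lines for one piece of this, but the paper requires several additional such bounds that your proposal does not anticipate.
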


\paragraph{Organization} The main part of the paper will focus on learning mixtures of
zero-mean Gaussians.
% (Theorem~\ref{thm:zeromean:informal}).
The proposed algorithm for this special case contains most of the new ideas and
techniques.
%, and are then generalized to study the non-zero mean case.
In Section~\ref{sec:notations} we introduce the notations for matrices and tensors which are used to
handle higher order moments throughout the discussion. Then in Section~\ref{sec:main-results} we
introduce the smoothed analysis model for learning mixture of Gaussians and discuss the moment
structure of mixture of Gaussians, then we formally state our main
theorems. Section~\ref{sec:our-algorithm-zero} outlines our algorithm for learning zero-mean mixture
of Gaussians. The details of the steps are presented in Section~\ref{sec:implement}. The detailed
proofs for the correctness and the robustness are deferred to Appendix (Sections~\ref{sec:step-1} to
\ref{sec:step-3-zero}). In Section~\ref{sec:general} we briefly discuss how the ideas for zero-mean
case can be generalized to learning mixture of nonzero Gaussians, for which the detailed algorithm
and the proofs are deferred to Appendix~\ref{sec:general-case}.

% \section{Preliminaries}
% \label{sec:prelim}
%In this section we first introduce the notations for matrices and tensor products. These notations are useful in analyzing higher order
%moments. Then we introduce the smoothed analysis framework for learning mixture of Gaussians. Finally we discuss the moment structure of the mixture of Gaussians and highlight the key challenge in solving the problem.

\section{Notations}
\label{sec:notations}

\paragraph{Vectors and Matrices}
In the vector space $\R^{n}$, let $\lt<\cdot,\cdot\rt>$  denote the inner product of two
vectors, and $\|\cdot\|$ to denote the Euclidean norm.

For a  tall matrix $A\in\R^{m\times n}$, let $A_{[:,j]}$ denote its $j$-th column vector, let $A^\top$ denote
its transpose, $A^\dag = (A^\top A)^{-1}A^\top$ denote the pseudoinverse, and let $\sigma_k (A)$ denote
its $k$-th singular value. Let  $I_n$ be the identity matrix of dimension $n\times n$. The
spectral norm of a matrix is denoted as $\|\cdot \|$, and the Frobenius norm is
denoted as $\|\cdot\|_F$. We use $A\succeq 0$ for positive semidefinite matrix $A$.

In the discussion, we often need to convert between vectors and matrices.  Let $\vc(A)\in\R^{mn}$ denote the vector
obtained by stacking all the columns of $A$. For a vector
$x\in\R^{m^2}$, let $\mt(x)\in\R^{m\times m}$ denote
the inverse mapping such that $\vc(\mt(x)) = x$.

We use $[n]$ to denote the set $\{1,2,...,n\}$ and $[n]\times [n]$ to denote the set $\{(i,j):i,j\in
[n]\}$. These are often used as indices of matrices.

\paragraph{Symmetric matrices}
We use $\R_{sym}^{n\times n}$ to denote the space of all $n\times n$ symmetric matrices, which subspace has dimension
${n+1\choose 2}$. Since we will frequently use $n\times n$ and $k\times k$ symmetric matrices, we denote their
dimensions by the constants $\qq{n_2 = {n+1\choose 2}}$ and $\qq{k_2 = {k+1\choose 2}}$.
Similarly, we use $\R_{sym}^{n\times \dots\times n}$ to denote the symmetric $k$-dimensional
multi-arrays (tensors), which subspace has dimension ${n+k-1\choose k}$. If a $k$-th order tensor
$X\in\R_{sym}^{n\times\dots\times n}$, then for any permutation $\pi$ over $[k]$, we have
$X_{n_1,\dots, n_k} = X_{n_{\pi(1)}, \dots, n_{\pi(k)}} $.
%
%For $A\in \R_{sym}^{n\times n }$, we slightly abuse notation to also use $A$ to denote the $n_2$-dimensional vector
%which entries correspond to the upper-triangular part of the matrix.%, namely $A_{(j_1,j_2)} = A_{j_1,j_2} (1\le j_1\le j_2 \le n)$. %, where for
%$1\le j_1\le j_2\le n$, we define the mapping $(j_1,j_2) = j_2(j_1-1)/2 + j_1\in[n_2]$.
%\rg{We don't need a mapping, a vector doesn't need an integer to be its index. This is not writing a program.}
% We abuse notation so that in this case $\mt(\vc(A))$ is still equal to $A$.
%
%In particular, define the matrix $\bs \Sigma$, whose columns are the vectorization
%of the covariance matrices, as:
%\begin{align}
%  \label{eq:def-Sigma1}
%  \bs\Sigma = \lt[\vc(\Sigma^{(i)}):i\in[k]\rt] \in\R^{n^2\times k}.
%\end{align}

%

%
\paragraph{Linear subspaces} We represent a linear subspace $\mc S \in \R^n$ of dimension $d$ by a matrix $S\in
\R^{n\times d}$, whose columns of $S$ form an (arbitrary) orthonormal basis of the subspace.
%Let $s_1,\dots, s_m\in\mbb R^{n}$ be an orthonormal basis of a subspace $\mc S$, and define matrix
%$S=[s_1,\dots, s_m]\in\R^{n\times m}$.
The projection matrix onto the subspace $\mc S$  is denoted by $
\prj_{S} = S S^{\top},$ and the projection onto the orthogonal subspace $\mc S^{\perp}$ is
denoted by $\prj_{S^{\perp} } = I_n - S S^{\top}.$ When we talk about the span of several matrices, we mean the space spanned by their vectorization.

%
%We represent a linear subspace $\mc S \in \R^n$ of dimension $d$ by a matrix $S\in
%\R^{n\times d}$, whose columns of $S$ form an (arbitrary) orthonormal basis of the subspace.
%%Let $s_1,\dots, s_m\in\mbb R^{n}$ be an orthonormal basis of a subspace $\mc S$, and define matrix
%%$S=[s_1,\dots, s_m]\in\R^{n\times m}$.
%The projection matrix onto the subspace $\mc S$  is denoted by $
%\prj_{S} = S S^{\top},$ and the projection onto the orthogonal subspace $\mc S^{\perp}$ is
%denoted by $\prj_{S^{\perp} } = I_n - S S^{\top}.$ When we talk about the span of several matrices, we mean the space spanned by their vectorization.

\paragraph{Tensors}
A tensor is a multi-dimensional array. Tensor notations are useful for handling higher order moments. We use $\ot$ to denote tensor product, suppose $a,b,c\in \R^n$, $T=a\otimes b\otimes c\in \R^{n\times n\times n}$ and $T_{i_1,i_2,i_3} = a_{i_1}b_{i_2}c_{i_3}$.
For a vector $x\in\R^{n }$, let the $t$-fold tensor product $x\ot^{t}$  denote the $t$-th order
rank one tensor $(x\ot^t)_{i_1,i_2,...,i_t} = \prod_{j=1}^t x_{i_j}$.
%

%\sk{might be good to specify this multilinear map a little more
%  carefully, as we later use the notation $X(U^\top,U^\top)$}

Every tensor defines a multilinear mapping. Consider a 3-rd order tensor $X\in\R^{n_A\times n_B\times n_C}$.  For given
dimension $m_A,m_B,m_C$, it defines a multi-linear mapping $X(\cdot,\cdot,\cdot): \mbb R^{n_A\times m_A}\times \mbb
R^{n_B\times m_B}\times\mbb R^{n_C\times m_C} \to \mbb R^{m_A\times m_B\times m_C}$ defined as below: ($\fa j_1\in[m_A],
j_2\in[m_B], j_3\in[m_C]$)
\begin{align*}
  [ X(V_1, V_2, V_3)]_{j_1,j_2,j_3}= \sum_{ i_1\in[n_A], i_2\in[n_B],
    i_3\in[n_C]} X_{i_1,i_2,i_3} [V_1]_{j_1,i_1} [V_2]_{j_2,i_2} [V_3]_{j_3,i_3}.
\end{align*}
If $X$ admits a decomposition  $X = \sum_{i=1}^k A_{[:,i]}\ot B_{[:,i]}\ot C_{[:,i]}$ for $A\in\R^{n_A\times k},
B\in\R^{n_B\times k}, C\in\R^{n_C\times k}$, the multi-linear mapping has the form $ X(V_1, V_2, V_3) =\sum_{i=1}^k
(V_1^\top A_{[:,i]})\ot (V_2^\top B_{[:,i]}) \ot (V_3^\top C_{[:,i]}).$

In particular, the vector given by $X(\mb e_i, \mb e_j, I)$ is the one-dimensional slice
of the 3-way array, with the index for the first dimension to be $i$ and the second
dimension to be $j$.

\paragraph{Matrix Products}
We use $\odot$ to denote column wise Katri-Rao product, and $\ot_{kr}$ to
denote Kronecker product.
As an example, for matrices $A\in\R^{m_A\times n}$, $B\in\R^{m_B\times n}$, $C\in\R^{m_C\times n}$:
\begin{align*}
&  A\ot B \ot C\in \R^{m_A\times m_B\times m_C},\quad [A\ot B\ot C]_{j_1,j_2,j_3} = \sum_{i=1}^{n} A_{j_1,i}
  B_{j_2,i} C_{j_3,i},
\\
&A\od B \in\R^{m_A m_B\times n},\quad [A\od B]_{[:, j]} = A_{[:,j]} \ot_{kr} B_{[:,j]}.
\\
&  A\ot_{kr}B \in\R^{m_A m_B \times n^2}, \quad A\ot_{kr}B =
\lt[  \begin{array}[c]{ccc}
    A_{1,1} B & \cdots & A_{1,n} B
    \\
    \vdots & \ddots & \vdots
    \\
    A_{m_A, 1} B & \cdots & A_{m_A,n} B
  \end{array}
 \rt],
\end{align*}

\section{Main results}
\label{sec:main-results}
In this section, we first formally introduce the smoothed analysis framework for our
problem and state our main theorems.  Then we will discuss the structure of the moments of
Gaussian mixtures, which is crucial for understanding our method of moments based algorithm.

\subsection{Smoothed Analysis for Learning Mixture of  Gaussians }
\label{sec:smooth}
%Mixture of Gaussians is a natural family of mixture distributions.
Let $ \mc G_{n,k}$ denote the class of Gaussian mixtures with $k$ components in $\R^n$.  A
distribution in this family is specified by the following parameters: the mixing weights
$\omega_i$, the mean vectors $\mu^{(i)}$ and the covariance matrices $\Sigma^{(i)}$, for
$i\in[k]$.
\begin{align*}
  \mc G_{n,k}:= \lt\{ \mathcal{G} = \{(\omega_i, \mu^{(i)}, \Sigma^{(i)})\}_{i\in[k]}:
  \omega_i\in \R_+,\ \sum_{i=1}^k \omega_i = 1,\
  \mu^{(i)}\in \R^n,\ \Sigma^{(i)}\in \R_{sym}^{n\times n},\ \Sigma^{(i)} \succeq 0\rt\}.
\end{align*}
% For each instance $\mc G$ in the model class, and for its $i$-th Gaussian component, we use
% $\omega_i$ to denote its mixing weights, and use $\mu^{(i)}$ and $\Sigma^{(i)}$ to denote the mean
% and the covariance.
%
As an interesting special case of the general model, we also consider the mixture of ``zero-mean''
Gaussians, which has $\mu^{(i)}=0$ for all components $i\in[k]$.

% A Gaussian mixture model is specified by a $k$-tuple $\mathcal{G} = \{(\omega_i, \mu^{(i)},
% \Sigma^{(i)})\}_{i\in[k]}$ where $\omega_i\in \R, \mu^{(i)}\in \R^n$ and $\Sigma^{(i)}\in
% \R_{sym}^{n\times n}$. The $\omega_i$'s are the mixing weights of the components ($\omega_i\ge 0$
% and $\sum_{i=1}^k \omega_i = 1$), $\mu^{(i)}$'s are the means and $\Sigma^{(i)}$'s are the
% covariance matrices ($\Sigma^{(i)} \succeq 0$).

A sample $x$ from a mixture of Gaussians is generated in two steps:
\begin{enumerate}{\itemsep=0pt}
\item Sample $h\in [k]$ from a multinomial distribution, with probability $\Pr[h = i] =
  \omega_i$ for $i\in[k]$.
\item Sample $x\in\R^n$ from the $h$-th Gaussian distribution $\mc N(\mu^{(h)},
  \Sigma^{(h)})$.
\end{enumerate}
The learning problem asks to estimate the parameters of the underlying mixture of Gaussians:

\begin{definition} [Learning mixture of Gaussians]
  \label{def:learn-mix-G}
  Given $N$ samples $x_1,x_2,...,x_N$ drawn i.i.d. from a mixture of Gaussians
  $\mathcal{G} =\{(\omega_i, \mu^{(i)}, \Sigma^{(i)})\}_{i\in[k]} $, an algorithm learns
  the mixture of Gaussians with accuracy $\epsilon$, if it outputs an estimation $\wh{\mc
    G }= \{(\wh\omega_i, \wh\mu^{(i)}, \wh\Sigma^{(i)})\}_{i\in[k]}$ such that there
  exists a permutation $\pi$ on $[k]$, and for all $i\in[k]$, we have
  $|\wh\omega_i-\omega_{\pi(i)}|\le \epsilon$, $\|\wh \mu^{(i)} - \mu^{(\pi(i))}\| \le
  \epsilon$ and $\|\wh \Sigma^{(i)} - \Sigma^{(\pi(i))}\| \le \epsilon$.
\end{definition}

In the worst case, learning mixture of Gaussians is a information theoretically hard
problem (\cite{moitra2010settling}). There exists worst-case examples where the number of
samples required for learning the instance is at least exponential in the number of
components $k$ (\cite{mclachlan2004finite}).
The non-convexity arises from the hidden variable $h$: without knowing $h$ we cannot
determine which Gaussian component each sample comes from.

The smoothed analysis framework provides a way to circumvent the worst case instances,
yet still studying this problem in its most general form.
The basic idea is that, with high probability over the small random perturbation to any
instance, the instance will not be a ``worst-case'' instance, and actually has reasonably
good condition for the algorithm.

Next, we show how the parameters of the mixture of Gaussians are {\em perturbed} in our
setup.

\begin{definition}[$\rho$-smooth mixture of Gaussian]
  \label{def:smoothgaussian}
  For $\rho < 1/n$, a $\rho$-smooth $n$-dimensional $k$-component mixture of Gaussians
  $\wt{\mc G}=\{(\wt \omega_i,\wt \mu^{(i)},\wt \Sigma^{(i)})\}_{i\in[k]}\in\mc G_{n,k}$
  % is a $k$-tuple $\wt G=\{(\wt
  % \omega_i,\wt \mu^{(i)},\wt \Sigma^{(i)})\}_{i\in[k]}$ where $\wt \omega_i\in \R, \wt \mu^{(i)}\in \R^n, \wt
  % \Sigma^{(i)}\in \R_{sym}^{n\times n}$,
  is generated as follows:
  \begin{enumerate}{\itemsep=-0.1pt}
  \item Choose an arbitrary (could be adversarial) instance $\mc G=\{(\omega_i, \mu^{(i)},
    \Sigma^{(i)})\}_{i\in [k]}\in\mc G_{n,k}$. Scale the distribution such that \qq{
      $0\preceq \Sigma^{(i)} \preceq {1\over 2} I_n$ and $\|\mu^{(i)}\|\le{ 1\over 2}$}
    for all $i\in[k]$.
  \item Let $\Delta_i \in \R_{sym}^{n\times n}$ be a random symmetric matrix with zeros on
    the diagonals, and the upper-triangular entries are independent random Gaussian
    variables $\mc N(0,\rho^2)$. Let $\delta_i\in \R^n$ be a random Gaussian vector with
    independent Gaussian variables $\mc N(0,\rho^2)$.
  \item Set $\wt \omega_i = \omega_i$, $\wt \mu^{(i)} = \mu^{(i)} + \delta_i$, $\wt \Sigma^{(i)} = \Sigma^{(i)} +
    \Delta_i$.
  \item Choose the diagonal entries of $\wt \Sigma^{(i)}$ arbitrarily, while ensuring the
    positive semi-definiteness of the covariance matrix $\wt \Sigma^{(i)}$, and the
    diagonal entries are upper bounded by $1$. The perturbation procedure fails if this
    step is infeasible\footnote{ Note that by standard random matrix theory, with high
      probability the 4-th step is feasible and the perturbation procedure in
      Definition~\ref{def:smoothgaussian} succeeds.  Also, with high probability we have
     \qq{ $\|\wt \mu^{(i)}\|\le 1$ and $0\preceq \wt\Sigma^{(i)} \preceq I_n$} for all
      $i\in[k]$.
      % Also, with high probability all the $\wt \mu^{(i)}$'s have $\ell_2$-norm bounded by $1$.
      % \begin{claim}
      %   The generation procedure in Definition~\ref{def:smoothgaussian} succeeds with high probability. Also, with
      %   high probability all the $\wt \mu^{(i)}$'s have $\ell_2$-norm bounded by $1$.
      % \end{claim}
    }.
  \end{enumerate}
  A $\rho$-smooth zero-mean mixture of Gaussians is generated using the same procedure, except that we set $\wt
  \mu^{(i)} = \mu^{(i)} = 0$, for all $i\in[k]$.
\end{definition}

\begin{remark}
  When the original matrix is of low rank, a simple random perturbation may not lead to a
  positive semidefinite matrix, which is why our procedure of perturbation is more
  restricted in order to guarantee that the perturbed matrix is still a valid covariance
  matrix.

There could be other ways of locally perturbing the covariance matrix.  Our procedure
actually gives {\em more power} to the adversary as it can change the diagonals {\em
  after} observing the perturbations for other entries.
Note that with high probability if we just let the new diagonal to be $5\sqrt{n}\rho$
larger than the original ones, the resulting matrix is still a valid covariance matrix. In
other words, the adversary can always keep the perturbation small if it wants to.
\end{remark}

Instead of the worst-case problem in Definition~\ref{def:learn-mix-G}, our algorithms work on the smoothed instance. Here the model first gets perturbed to $\wt{\mc{G}} =\{(\wt \omega_i,\wt \mu^{(i)}, \wt\Sigma^{(i)})\}_{i\in[k]} $, the samples are drawn according to the perturbed model, and the algorithm tries to learn the perturbed parameters. We give a polynomial time algorithm in this case:

%\begin{definition} [Learning $\rho$-smooth mixture of Gaussians]
%  \label{def:learn-mix-G-smooth}
%  Given $N$ samples $x_1,x_2,...,x_N$ drawn i.i.d. from a $\rho$-smooth $n$-dimensional $k$-component mixture of
%  Gaussians $\wt{\mc{G}} =\{(\wt \omega_i,\wt \mu^{(i)}, \wt\Sigma^{(i)})\}_{i\in[k]} $, an algorithm learns the
%  $\rho$-smooth mixture of Gaussians up to accuracy $\epsilon$ with high probability\footnote{Throughout this paper,-
%    ``with high probability'' means with probability at least $1-\exp(-\Omega(n^{C}))$ for a fixed constant $C > 0$.  },
%  if it outputs an estimation $\wh G= \{(\wh\omega_i, \wh\mu^{(i)}, \wh\Sigma^{(i)})\}_{i\in[k]}$ such that there exists
%  a permutation $\pi$ on $[k]$, and for all $i\in[k]$, $|\wh \omega_i-\wt\omega_{\pi(i)}|\le \epsilon$, we have
%  $\|\wh\mu^{(i)} - \wt \mu^{(\pi(i))}\| \le \epsilon$ and $\|\wh \Sigma^{(i)} - \wt\Sigma^{(\pi(i))}\| \le \epsilon$.
%\end{definition}

%In the following sections we give an algorithm that learns a mixture of Gaussians:
%In this case, we show the problem actually becomes much easier and we give a polynomial time algorithm:

\begin{theorem}[Main theorem]
  \label{thm:main}
  Consider a $\rho$-smooth mixture of Gaussians $\wt{\mathcal{G}}=\{(\wt \omega_i, \wt \mu^{(i)},
  \wt \Sigma^{(i)})\}_{i\in[k]}\in\mc G_{n,k}$ for which the number of components is at least
  \footnote{Note that the algorithms of \cite{belkin2010polynomial} and \cite{moitra2010settling}
    run in polynomial time for fixed $k$.
    % the condition $k\ge C_0$ can be removed as for fixed $k$,
  }
  \qq{$k \ge C_0$ and the dimension $n\ge C_1k^2$}, for some fixed constants $C_0$ and $C_1$.  Suppose
  that the mixing weights $\wt \omega_i\ge \omega_o$ for all $i\in[k]$.
  Given $N$ samples drawn i.i.d. from $\wt{\mc G}$, there is an algorithm that learns the
  parameters of $\wt{\mc G}$ up to accuracy $\epsilon$, with high probability over the
  randomness in both the perturbation and the samples. Furthermore, the running time and
  number of samples $N$ required are both upper bounded by
  $\poly(n,k,1/\omega_o,1/\epsilon,1/\rho)$.
  % Given $N$ samples $x_1,x_2,\ldots, x_N$ drawn i.i.d. from a $\rho$-smooth $n$-dimensional $k$-component Gaussian
  % mixture model $\mathcal{G}$ with all the mixing weights $\omega_i\ge \omega_o$, when the number of components is at
  % least $k \ge C_0$ and dimension $n\ge C_1k^2$, there exists an algorithm that with high probability (under the randomness
  % in both perturbation and samples) learns the model with accuracy $\epsilon$, the running time and sample complexity
  % are all bounded by $\poly(n,k,1/\omega_o,1/\epsilon,1/\rho)$.
\end{theorem}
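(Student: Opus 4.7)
The plan is to derive Theorem~\ref{thm:main} by first establishing the zero-mean case (Theorem~\ref{thm:main-zero-mean}), which already requires all the new tensor-decomposition ideas, and then extending to non-zero means. In the zero-mean setting, I would attack the problem via the method of moments applied to the 4th-order moment tensor $M_4 = \sum_{i=1}^k \wt\omega_i\,\E_{x\sim \mc N(0,\wt\Sigma^{(i)})}[x^{\ot 4}]$. By Isserlis'/Wick's theorem, each Gaussian component contributes $\wt\Sigma^{(i)}\ot\wt\Sigma^{(i)}$ plus the two other symmetric pairings, so after reshaping $M_4$ as an $n_2\times n_2$ matrix (indices grouped by pairs) we get a structured sum $M_4 = \sum_i \wt\omega_i\,\vc(\wt\Sigma^{(i)})\vc(\wt\Sigma^{(i)})^\top+\text{known symmetrizations}$. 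This reveals the span of the $\vc(\wt\Sigma^{(i)})$ but not the individual components, so a further unfolding of $M_4$ (or incorporating a higher slice such as $M_6$) is needed to extract each $\wt\Sigma^{(i)}$ via a Jennrich-style simultaneous diagonalization performed on the $n_2$-dimensional ``vectorized covariance'' space rather than in $\R^n$ directly.

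With the decomposition scheme in place, I would execute the algorithm in three steps: (i) estimate $M_4$ from samples and restrict it to the span of the $\vc(\wt\Sigma^{(i)})$'s; (ii) perform simultaneous diagonalization on two random slices of the lifted tensor to recover $\{\vc(\wt\Sigma^{(i)})\}_i$ up to relabelling; (iii) reshape each recovered vector into a symmetric matrix $\wh\Sigma^{(i)}$ and then solve a linear system to recover the weights $\wh\omega_i$. For the general non-zero-mean case, I would apply Wick's theorem again to decompose the moment tensor into a sum of tensor polynomials in $(\wt\mu^{(i)},\wt\Sigma^{(i)})$; one contracts appropriately to isolate a pure-$\wt\Sigma$ tensor (reducing to the zero-mean decomposition above) and then recovers $\wt\mu^{(i)}$ by substituting the recovered covariances back into a lower moment.

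The main obstacle throughout is quantitative robustness, and this is where smoothed analysis is indispensable. Every step loses accuracy by a factor of $1/\sigma_{\min}$ of some structured matrix — the Khatri-Rao-like matrix $\wt\Sigma^{(1)}\od\cdots\od\wt\Sigma^{(k)}$ viewed in $\R^{n_2\times k}$, as well as auxiliary matrices arising in the lifting. The regime $n \ge C_1k^2$ is chosen exactly so that $n_2 \ge k_2 = \binom{k+1}{2}$, giving these matrices at least as many rows as columns, so one can hope for inverse-polynomial smallest singular values. The hard part, however, is that the columns are highly correlated (symmetric, of the form $\vc(\wt\Sigma^{(i)})$ or $\vc(\wt\Sigma^{(i)})^{\ot 2}$) rather than independent Gaussians, so I would need to develop dedicated anti-concentration/leave-one-out arguments — ``new tools for bounding smallest singular values of structured random matrices'' — that exploit the independence of $\Delta_i$ across components to decouple one column at a time, bound its projection away from the span of the others by $\poly(\rho, 1/n, 1/k)$, and conclude an inverse-polynomial lower bound holding with high probability over the perturbation.

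Combining these ingredients, standard matrix-Bernstein/concentration bounds give $\|\wh M_4 - M_4\| \le \poly(n,k)/\sqrt{N}$, the smoothed singular-value bounds make the decomposition step stable under this perturbation, and the linear recoveries for weights and means inherit the same stability. Choosing $N=\poly(n,k,1/\omega_o,1/\epsilon,1/\rho)$ then yields the claimed $\epsilon$-accurate estimate in polynomial time and sample complexity, proving Theorem~\ref{thm:main}.
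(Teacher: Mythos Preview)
Your high-level plan --- prove the zero-mean case first, then extend --- matches the paper, but there is a genuine gap in your zero-mean argument at the very first step.

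You write that after reshaping $M_4$ as an $n_2\times n_2$ matrix one gets $\sum_i \wt\omega_i\,\vc(\wt\Sigma^{(i)})\vc(\wt\Sigma^{(i)})^\top$ plus ``known symmetrizations'', and that this ``reveals the span of the $\vc(\wt\Sigma^{(i)})$''. This is precisely the step that fails. The folded moment $M_4$ has strictly fewer distinct entries than the unfolded $X_4=\sum_i \wt\omega_i\,\vc(\wt\Sigma^{(i)})\vc(\wt\Sigma^{(i)})^\top$ (roughly $n^4/24$ versus $n^4/8$), and the linear map $\mc F_4$ taking $X_4$ to $M_4$ is \emph{not} invertible, nor does it satisfy any RIP-type property that would let you recover the low-rank $X_4$ or even its column span. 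The paper in fact constructs an explicit counterexample (Section~\ref{sec:two-mixtures-with}): two mixtures of Gaussians with the same $M_4$ but different $X_4$ and different spans of the $\wt\Sigma^{(i)}$'s. So you cannot ``restrict to the span'' by any direct reshaping of $M_4$.

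The paper's workaround is an elaborate three-substep span-finding procedure exploiting the \emph{slice} structure of $M_4$ rather than its matrix unfolding: one-dimensional slices $M_4(e_{j_1},e_{j_2},e_{j_3},I)$ lie in the span of a subset of covariance \emph{columns} (Step~1(a)); two-dimensional slices, after projecting out that column span, lie in the span of the projected covariance \emph{matrices} (Step~1(b)); and two such projected spans from disjoint index sets are merged via their intersection to recover the full span (Step~1(c)). Only after this span $U$ is in hand can the unfolding of $M_4,M_6$ be posed as an \emph{overdetermined} linear system in $k_2,k_3$ unknowns (your dimensional count $n_2\ge k_2$ is the right intuition, but it is used here, not in span-finding). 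Your proposal collapses this entire Step~1 into a single sentence, and that sentence describes an approach the paper shows is impossible.

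Your treatment of the general case is also too vague: ``contract appropriately to isolate a pure-$\wt\Sigma$ tensor'' does not work directly because the mean terms contaminate every slice. The paper instead first recovers the span $\wt Z$ of the means (as a by-product of the Step~1 slice analysis, since $\wt\mu^{(i)}$'s appear in the slices alongside covariance columns), projects the samples to $\wt Z^\perp$ to obtain a genuine zero-mean mixture, runs the zero-mean algorithm there, and only then backs out the means from $M_3$ and the full covariances via a further merging argument.
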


%In the main text, we focus  prove the result in a simpler setting: when all the Gaussians have $0$-means.
%
%A main step in proving the main theorem relies on an algorithm for learning the zero-mean mixture of Gaussians.
To better illustrate the algorithmic ideas for the general case, we first present an
algorithm for learning mixtures of zero-mean Gaussians. Note that this is not just a
special case of the general case, as with the smoothed analysis, the zero mean vectors are
not perturbed.

\begin{theorem}[Zero-mean]
  \label{thm:main-zero-mean}
  Consider a $\rho$-smooth mixture of {zero-mean} Gaussians
  $\wt{\mathcal{G}}=\{(\wt\omega_i, 0,\wt \Sigma^{(i)})\}_{i\in[k]}\in\mc G_{n,k}$ for
  which the number of components is at least \qq{$k \ge C_0$ and the dimension $n\ge
    C_1k^2$}, for some fixed constants $C_0$ and $C_1$.  Suppose that the mixing weights
  $\wt \omega_i\ge \omega_o$ for all $i\in[k]$.
  Given $N$ samples drawn i.i.d. from $\wt{\mc G}$, there is an algorithm that learns the
  parameters of $\wt{\mc G}$ up to accuracy $\epsilon$, with high probability over the
  randomness in both the perturbation and the samples. Furthermore, the running time and
  number of samples $N$ are both upper bounded by $\poly(n,k,1/\omega_o,1/\epsilon,1/\rho)$.
\end{theorem}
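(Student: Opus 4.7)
My plan is to use the method of moments, exploiting the combinatorial structure of higher-order Gaussian moments (Isserlis'/Wick's theorem) together with a tailored tensor decomposition. For a single zero-mean Gaussian with covariance $\Sigma$, Wick's theorem expresses $\E[x^{\ot 2t}]$ as a symmetric sum over perfect matchings of $[2t]$ of products of $\Sigma$ entries, so the mixture's $2t$-th moment has the form $M_{2t}=\sum_{h=1}^k \wt\omega_h\, W_{2t}(\wt\Sigma^{(h)})$ for a fixed symmetric multilinear polynomial $W_{2t}$. Although $W_{2t}(\Sigma)$ is not a pure tensor power of $\Sigma$, the family $\{W_{2t}(\wt\Sigma^{(h)})\}_{h\in[k]}$ carries enough rank-$k$ structure to be inverted once the Wick cross-pairings are correctly accounted for using lower-order moments.

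First I would recover the $k$-dimensional subspace $\mc S := \mathrm{span}\{\vc(\wt\Sigma^{(h)}) : h\in[k]\} \subseteq \R^{n_2}$ from $M_4$. Unfolding $M_4$ into an $n_2\times n_2$ matrix, the three Wick terms consist of the ``direct'' pairing $\sum_h \wt\omega_h\,\vc(\wt\Sigma^{(h)})\vc(\wt\Sigma^{(h)})^\top$ together with two Kronecker-style cross-pairings. I would either subtract an explicit correction built from $M_2$ to kill the cross-pairings, or observe that their column span is already contained in $\mc S$, so that the top-$k$ left singular subspace of the unfolded matrix coincides with $\mc S$.

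Next, working in coordinates on $\mc S$, I would apply tensor decomposition to a corrected version of $M_6$. Projecting a $3$-way unfolding of $M_6$ onto $\mc S$ in each mode and subtracting the Wick corrections computable from $M_2$, $M_4$, and $\mc S$, the residual should be an approximately rank-$k$ symmetric $k\times k\times k$ tensor $\sum_h \wt\omega_h\, a_h^{\ot 3}$, where $a_h\in\R^k$ is the coordinate vector of $\wt\Sigma^{(h)}$ in the basis of $\mc S$. Jennrich's simultaneous-diagonalization algorithm then recovers the $a_h$'s, from which $\wt\Sigma^{(h)}$ is reconstructed by lifting back through $\mc S$, and the weights $\wt\omega_h$ are read off from $M_2$ and the recovered covariances.

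The main obstacle, and the source of the dimension requirement $n\ge \Omega(k^2)$, is the smoothed-analysis conditioning. Step~1 requires a lower bound on $\sigma_k$ of the $n_2\times k$ matrix $[\vc(\wt\Sigma^{(1)}),\ldots,\vc(\wt\Sigma^{(k)})]$, which is comparatively easy since $n_2=\binom{n+1}{2}$ is much larger than $k$. Step~2 is substantially more delicate: separating the direct Wick term from the cross-pairings in the $M_6$-based step, and then running Jennrich stably, requires that a structured Khatri--Rao-like object built from the symmetric perturbed matrices $\wt\Sigma^{(h)}$ have its smallest singular value bounded below, and it is this anti-concentration bound, which effectively demands $n_2\gtrsim k^2$ of ``room'' for $k$ symmetric matrices and all their pairwise interactions, that forces $n=\Omega(k^2)$. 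Proving such singular-value bounds for structured symmetric perturbed random matrices is the technical heart of the argument and requires new smoothed-analysis tools; once these are in place, standard matrix and tensor perturbation machinery (concentration of empirical moments, Davis--Kahan for the subspace step, and stability of Jennrich's algorithm) propagates finite-sample error through each stage of the pipeline and yields the claimed $\poly(n,k,1/\omega_o,1/\epsilon,1/\rho)$ sample and time bounds.
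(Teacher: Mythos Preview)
Your high-level architecture (find the span $\mc U=\mathrm{span}\{\vc(\wt\Sigma^{(i)})\}$, use it to ``unfold'' the moments, then run tensor decomposition) matches the paper, and your Steps~2--3 are essentially what the paper does. The genuine gap is in Step~1.

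Neither of your two proposed routes to $\mc U$ works. First, the cross-pairing columns of the $n_2\times n_2$ unfolding of $M_4$ are \emph{not} contained in $\mc U$: the $(j_3,j_4)$-th column of the cross term is $\vc\!\big(\sum_i\wt\omega_i\,\wt\Sigma^{(i)}_{[:,j_3]}(\wt\Sigma^{(i)}_{[:,j_4]})^\top\big)$, a vectorized rank-$\le k$ matrix that generically does not lie in the $k$-dimensional span of the full-rank $\vc(\wt\Sigma^{(i)})$'s. So the unfolded $M_4$ has rank far exceeding $k$, and its top-$k$ singular subspace is not $\mc U$. Second, the cross-pairings are quadratic in the individual $\wt\Sigma^{(i)}$'s, whereas $M_2=\sum_i\wt\omega_i\wt\Sigma^{(i)}$ is only their linear mixture, so you cannot build a correction from $M_2$ to cancel them. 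The paper in fact exhibits (Section~\ref{sec:two-mixtures-with}) two mixtures with \emph{identical} $M_4$ but \emph{different} spans of the $\Sigma^{(i)}$'s, directly refuting any scheme that extracts $\mc U$ from $M_4$ by unfolding-and-SVD alone.

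What the paper does instead is the most intricate part of the algorithm: it takes one-dimensional slices $\wt M_4(e_{j_1},e_{j_2},e_{j_3},I)$ with $j_1,j_2,j_3$ in a set $\mc H$ of size $\sqrt{n}$ to recover the column span $\mc S=\mathrm{span}\{\wt\Sigma^{(i)}_{[:,j]}:j\in\mc H\}\subset\R^n$; then projects two-dimensional slices $\wt M_4(e_{j_1},e_{j_2},I,I)$ onto $\mc S^\perp$, which kills exactly the rank-one cross terms and yields $\mathrm{span}\{\mathrm{Proj}_{\mc S^\perp}\wt\Sigma^{(i)}\}$; finally it repeats with a disjoint $\mc H'$ and merges the two projected spans to recover $\mc U$. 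The $n\ge\Omega(k^2)$ requirement actually enters already here (one needs $k|\mc H|<n$ with $|\mc H|\approx\sqrt{n}$, plus room in the smoothed singular-value bounds), not primarily in the tensor step as you suggest. Once $\mc U$ is in hand, your Step~2 description---reduce to $\binom{k+1}{2}$ and $\binom{k+2}{3}$ unknowns and solve linear systems---is correct, though the paper's contribution there is the nontrivial smoothed-analysis bound on $\sigma_{\min}$ of those systems (projected Kronecker products of perturbed matrices).
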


Throughout the paper we always assume that $n \ge C_1k^2$ and $\wt \omega_i \ge \omega_o$.

\subsection{Moment Structure of Mixture of Gaussians}
\label{subsec:moment-structure-mog}

%\rg{Old version is moved to a new file moment.tex}

%Since our algorithm is based on the method of moments,

%Our algorithm is also based on method of moments: we look at the $3$rd, $4$-th and $6$-th moments of the distribution. In this section we briefly sketch the moments structure, which will be useful in several steps of the algorithm. For more details see Appendix~\ref{sec:app:moment}.

Our algorithm is also based on the method of moments, and we only need to estimate the $3$-rd, the
$4$-th and the $6$-th order moments. In this part we briefly discuss the structure of $4$-th and
$6$-th moments in the zero-mean case ($3$-rd moment is always 0 in the zero-mean case). These
structures are essential to the proposed algorithm.  For more details, and discussions on the
general case see Appendix~\ref{sec:app:moment}.

The $m$-th order moments of the {\em zero-mean} Gaussian mixture model ${\mc G}\in\mc
G_{n,k}$ are given by the following $m$-th order symmetric tensor
$M_m\in\R^{n\times\dots\times n}_{sym}$:
\begin{align*}
  \lt[ M_m \rt]_{j_1,\dots, j_m}:= \mbb E\lt[x_{j_1}\dots x_{j_m}\rt] = \sum_{i=1}^{k}\omega_i  \mbb
  E\lt[y^{(i)}_{j_1}\dots y^{(i)}_{j_m}\rt], \quad \fa j_1,\dots,j_m\in[n],
\end{align*}
%\vspace*{-0.1in}
where $ y^{(i)}$ corresponds to the $n$-dimensional zero-mean Gaussian distribution $\mc N(0, \Sigma^{(i)})$.
The moments for each Gaussian component are characterized by Isserlis's theorem as below:

\begin{theorem}[Isserlis' Theorem]
\label{prop:isserlis}
Let $(y_1,\dots, y_{2t})$ be a multivariate zero-mean Gaussian random vector $\mc
N(0,\Sigma)$, then
  \begin{align*}
    &\mbb E[y_1\dots y_{2t}] = \sum \prod \Sigma_{u,v},
  \end{align*}
  where the summation is taken over all distinct ways of partitioning $y_1,\dots, y_{2t}$ into $t$
  pairs, which correspond to all the perfect matchings in a complete graph.% Thus there are $(2t-1)!!$
%  terms in the sum, and each summand is a product of $t$ terms.
\end{theorem}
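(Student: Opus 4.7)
The plan is to prove Isserlis' theorem by induction on $t$, using Gaussian integration by parts (Stein's identity) at each inductive step. The base case $t=1$ is immediate: $\mathbb{E}[y_1 y_2] = \Sigma_{1,2}$ by the definition of the covariance matrix for a zero-mean Gaussian vector, and $\{\{1,2\}\}$ is the unique perfect matching of $\{1,2\}$.

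For the inductive step, I would invoke the following Stein-type identity for a zero-mean Gaussian vector $y \sim \mathcal{N}(0,\Sigma)$: for any sufficiently smooth $g$,
\[
\mathbb{E}[y_1 \, g(y_1,\ldots,y_{2t})] \;=\; \sum_{j=1}^{2t} \Sigma_{1,j}\, \mathbb{E}\!\left[\tfrac{\partial g}{\partial y_j}\right].
\]
Applying this with $g(y) = y_2 y_3 \cdots y_{2t}$, which does not depend on $y_1$, the $j=1$ term vanishes and I obtain
\[
\mathbb{E}[y_1 y_2 \cdots y_{2t}] \;=\; \sum_{j=2}^{2t} \Sigma_{1,j}\, \mathbb{E}\!\left[\prod_{i \in \{2,\ldots,2t\}\setminus\{j\}} y_i\right].
\]
By the inductive hypothesis, each expectation on the right expands as a sum over all perfect matchings of the $(2t-2)$-element index set $\{2,\ldots,2t\}\setminus\{j\}$. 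Prepending the pair $\{1,j\}$ to any such matching produces a perfect matching of $\{1,\ldots,2t\}$, and every perfect matching of $\{1,\ldots,2t\}$ arises in exactly one way under this construction (by looking at which index is paired with $1$). Summing gives the claimed formula.

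The main thing to justify is Stein's identity itself. I would derive it from integration by parts applied to the Gaussian density $p(y) \propto \exp(-\tfrac12 y^\top \Sigma^{-1} y)$. Since $\nabla \log p(y) = -\Sigma^{-1} y$, we have $y_1 = -\sum_j \Sigma_{1,j}\, \partial_{y_j} \log p(y)$, and integrating by parts against $g$ (the boundary terms vanish by Gaussian decay) yields the identity above. If $\Sigma$ is singular, one can handle it by restricting to the support of $y$ (where the non-degenerate argument applies in the appropriate subspace) or by a continuity argument using $\Sigma + \varepsilon I$ and letting $\varepsilon \to 0$, since both sides of Isserlis' formula are polynomials in the entries of $\Sigma$.

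An alternative route, which avoids analytic issues with degenerate $\Sigma$ entirely, is to use the moment generating function $\mathbb{E}[e^{s^\top y}] = \exp(\tfrac12 s^\top \Sigma s)$. Differentiating $2t$ times in $s_1,\ldots,s_{2t}$ at $s=0$ extracts the moment, while Taylor-expanding the right-hand side shows that only the $t$-th order term $\tfrac{1}{t! 2^t}(s^\top \Sigma s)^t$ contributes, and counting the ways to assign $s_1,\ldots,s_{2t}$ to the $2t$ slots recovers the sum over perfect matchings (the factor $t!\,2^t$ exactly accounts for the overcounting from pair ordering and intra-pair ordering). I expect the induction approach to be the cleanest to present; the only real obstacle is a careful statement of the regularity needed for Stein's identity, which is standard.
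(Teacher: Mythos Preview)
Your proof is correct; both the Stein-identity induction and the moment-generating-function argument are standard, valid derivations of Isserlis' theorem. Note, however, that the paper does not give its own proof of this statement: Isserlis' theorem is invoked as a classical result (also called Wick's theorem) and used as a tool to analyze the moment structure of Gaussian mixtures, so there is no ``paper's proof'' to compare against.
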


Ideally, we would like to obtain the following quantities (recall $\qq{n_2 = {n+1\choose 2}}$):
\begin{align}
  \label{eq:def-X4-X6}
  &X_4 = \sum_{i=1}^{k}\omega_i \vc(\Sigma^{(i)})\ot^2 \in \mbb R^{n_2\times n_2}, \quad X_6 =
  \sum_{i=1}^{k}\omega_i\vc (\Sigma^{(i)})\ot^3 \in \mbb R^{n_2\times n_2\times n_2}.
\end{align}
%where $\Sigma^{(i)}$ denotes the $n_2$-dimensional vector corresponding to the upper-triangular part in the
%covariance matrix, and recall that $n_2 = {n+1\choose 2}$. Note that since $n\gg k$, $X_4, X_6$ are low-rank
%matrix/tensor.
%
% In particular, $X_4_{[1,2,3,4]} = \sum_{i=1}^k\wt \omega_i
% \wt\Sigma^{(i)}_{1,2}\wt\Sigma^{(i)}_{3,4}$.
% These moments are very similar to the 2nd and 3rd moments in previous works (entries in
% $X_4$ are quadratic forms over entries in $\Sigma^{(i)}$).

Note that the entries in $X_4$ and $X_6$ are quadratic and cubic monomials of the
covariance matrices, respectively. If we have $X_4$ and $X_6$, the tensor decomposition
algorithm in \cite{anandkumar2012tensor} can be immediately applied to recover $
\omega_i$'s and $ \Sigma^{(i)}$'s under mild conditions. It is easy to verify that those
conditions are indeed satisfied with high probability in the smoothed analysis setting.

By Isserlis's theorem, the entries of the moments $M_4$ and $M_6$ are indeed quadratic and
cubic functions of the covariance matrices, respectively.  However, the structure of the
true moments $M_4$ and $M_6$ have more symmetries, consider for example,
\begin{align*}
  [M_4]_{1,2,3,4} = \sum_{i=1}^k \omega_i
  (\Sigma^{(i)}_{1,2}\Sigma^{(i)}_{3,4}+\Sigma^{(i)}_{1,3}\Sigma^{(i)}_{2,4}+\Sigma^{(i)}_{1,4}\Sigma^{(i)}_{2,3}),
  \quad \tx{while } [X_4]_{(1,2),(3,4)} = \sum_{i=1}^k \omega_i \Sigma^{(i)}_{1,2}\Sigma^{(i)}_{3,4}.
\end{align*}
Note that due to symmetry, the number of distinct entries in $M_4$ ( ${n+3\choose
  4}\approx n^4/24$) is much smaller than the number of distinct entries in $X_4$
(${n_2+1\choose 2} \approx n^4/8$).  Similar observation can be made about $M_6$ and
$X_6$.

Therefore, it is not immediate how to find the desired $X_4$ and $X_6$ based on $M_4$ and
$M_6$.
We call the moments $M_4,M_6$ the {\em folded moments} as they have more symmetry, and the
corresponding $X_4,X_6$ the {\em unfolded moments}.
One of the key steps in our algorithm is to unfold the true moments $M_4,M_6$ to get $X_4,
X_6$ by exploiting special structure of $M_4,M_6$.

% , by exploiting the low-rank property of the underlying $X_4, X_6$.  \rg{We are not
% exploiting just low-rank property, I'm worried that this will make people think we are
% just doing low-rank matrix completion}

%Finally, note that the true moments $M_4$ and $M_6$ are linear functions applied to $X_4$ and $X_6$.
%However, due to symmetry, the number of distinct entries in $M_4$ ( ${n+3\choose 4}\approx n^4/24$) is much
%smaller than the number of distinct entries in $X_4$ (${n_2+1\choose 2} \approx n^4/8$), and similar observation can be
%made about $M_6$ and $X_6$.
%
In some cases, it is easier to restrict our attention to the entries in $M_4$ with indices corresponding to distinct
variables.
In particular, we define
\begin{align}
  \label{eq:def-ol-M4}
  \ol M_4 = \lt[[M_4]_{j_1,j_2,j_3,j_4}: 1\le j_1<j_2<j_3<j_4\le n\rt]\in\R^{n_4},
\end{align}
where $\qq{n_4 = {n \choose 4}}$ is the number of 4-tuples with indices corresponding to
distinct variables.  We define $\ol M_6\in\R^{n_6}$ similarly where $\qq{n_6 = {n\choose
    6}}$.  We will see that these entries are nice as they are {\em linear projections} of
the desired unfolded moments $X_4$ and $X_6$ (Lemma~\ref{prop:two-linear-mapping} below),
also such projections satisfy certain ``symmetric off-diagonal'' properties which are
convenient for the proof (see Definition~\ref{def:sym-off-diag-4} in
Section~\ref{sec:step-2}).
\begin{lemma}
  \label{prop:two-linear-mapping}
  For a zero-mean Gaussian mixture model, there exist two fixed and known linear mappings $\mc F_4:\mbb R^{n_2\times
    n_2}\to \mbb R^{n_4}$ and $\mc F_6:\mbb R^{n_2\times n_2\times n_2}\to \mbb R^{n_6}$ such that:
  \begin{align}
    \label{eq:F4F6-1}
    \ol M_4= { \sqrt{3}} \mc F_4(X_4),\quad \ol M_6 ={ \sqrt{15}}\mc F_6(X_6).
    % \mc F_4(X_4) = \ol M_4/\sqrt{3},\quad \mc F_6(X_6) = \ol M_6/\sqrt{15}.
  \end{align}
  Moreover $\mc F_4$ is a projection from a ${n_2+1 \choose 2}$-dimensional
  subspace to a $n_4$-dimensional subspace, and $\mc F_6$ is a projection from a
  ${n_2+2\choose 3}$-dimensional subspace to a ${n_6}$-dimensional subspace.
  %
%  Note that $ n_4 < C_2^{n_2+1}, n_6< C_3^{n_2+3}$.
\end{lemma}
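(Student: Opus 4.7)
My plan is to establish both identities by Isserlis' theorem (Theorem \ref{prop:isserlis}), which reduces the distinct-index entries of $M_4$ and $M_6$ to sums of products of covariance entries that can each be read off as entries of $X_4$ or $X_6$. Apart from this, the argument is bookkeeping for normalizations and dimensions.

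First, I would fix distinct indices $1 \le j_1 < j_2 < j_3 < j_4 \le n$ and apply Isserlis' theorem component-wise. The only perfect matchings of $\{j_1,j_2,j_3,j_4\}$ are the three pairings $(j_1j_2)(j_3j_4)$, $(j_1j_3)(j_2j_4)$, $(j_1j_4)(j_2j_3)$, so
\begin{equation*}
[\ol M_4]_{j_1,j_2,j_3,j_4} = \sum_{i=1}^k \omega_i \bigl(\Sigma^{(i)}_{j_1,j_2}\Sigma^{(i)}_{j_3,j_4} + \Sigma^{(i)}_{j_1,j_3}\Sigma^{(i)}_{j_2,j_4} + \Sigma^{(i)}_{j_1,j_4}\Sigma^{(i)}_{j_2,j_3}\bigr).
\end{equation*}
Since $\sum_i \omega_i \Sigma^{(i)}_{a,b}\Sigma^{(i)}_{c,d} = [X_4]_{(a,b),(c,d)}$, each entry of $\ol M_4$ is literally a sum of three entries of $X_4$, so this defines a linear map $\R_{sym}^{n_2 \times n_2}\to \R^{n_4}$. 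I would then define $\mc F_4$ to be this linear map rescaled by $1/\sqrt{3}$, which absorbs the $\sqrt{3}$ in the statement and gives $\mc F_4$ the natural normalization of an orthogonal projection in the inherited $\ell^2$ inner products (a normalization that is convenient later when one controls singular values). The argument for $\ol M_6$ is entirely analogous, now with the $5!! = 15$ perfect matchings of six elements, and $\mc F_6$ defined with the matching $1/\sqrt{15}$ normalization; each entry of $\ol M_6$ becomes a sum of fifteen entries of $X_6$.

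For the dimension claim, I would observe that $X_4 = \sum_i \omega_i \vc(\Sigma^{(i)})^{\otimes 2}$ is a sum of symmetric rank-one tensors and so lies in the symmetric subspace of $\R^{n_2\times n_2}$, of dimension $\binom{n_2+1}{2}$; analogously $X_6$ lies in the $\binom{n_2+2}{3}$-dimensional symmetric cubic subspace of $\R^{n_2\times n_2\times n_2}$. The targets $\R^{n_4}$ and $\R^{n_6}$ simply index the distinct-coordinate tuples. The one place where real (though modest) effort is needed --- and which I expect to be the main obstacle --- is verifying that $\mc F_4$ and $\mc F_6$ actually surject onto their claimed targets, to justify the word ``projection.'' I would do this by a direct Gram-matrix argument: the combinatorial matrix whose $(J,J')$ entry counts the matchings shared by output coordinates $J$ and $J'$ is a small integer matrix with enough diagonal dominance to be of full rank. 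This combinatorial verification is the only delicate step; once granted, the rest is immediate from Isserlis' theorem together with the identification of cross-product sums with entries of $X_4, X_6$.
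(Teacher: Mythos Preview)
Your approach is essentially the paper's: apply Isserlis' theorem to write each distinct-index entry of $M_4$ (resp.\ $M_6$) as the sum of the $3$ (resp.\ $15$) matching terms, identify these with entries of $X_4$ (resp.\ $X_6$), and absorb the normalization into the factor $\sqrt{3}$ (resp.\ $\sqrt{15}$). The paper's own proof is exactly this computation and says nothing more.

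One remark on what you flag as ``the main obstacle.'' The surjectivity/orthogonality check is in fact trivial and does not require a Gram-matrix argument. For $\mc F_4$, the three $X_4$-entries associated to a tuple $\{j_1,j_2,j_3,j_4\}$ are indexed by the three perfect matchings of that set; since a pair of pairs determines its union, two \emph{different} $4$-tuples contribute \emph{disjoint} collections of $X_4$-entries, and within a single tuple the three matchings are distinct. Hence the $n_4$ row-vectors of $\mc F_4$ are pairwise orthogonal with common norm $\sqrt{3}/\sqrt{3}=1$, so $\mc F_4\mc F_4^\top = I_{n_4}$ and $\mc F_4$ is a partial isometry (what the paper calls a projection) onto $\R^{n_4}$. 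The same observation---a multiset of three disjoint pairs determines its underlying $6$-set, and distinct matchings of a fixed $6$-set are distinct as unordered triples of pairs---handles $\mc F_6$. So your anticipated difficulty evaporates once you note this disjointness; the paper simply takes it as self-evident.
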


\section{Algorithm Outline for Learning Mixture of Zero-Mean Gaussians}
\label{sec:our-algorithm-zero}
In this section, we present our algorithm for learning zero-mean Gaussian mixture model.
The algorithmic ideas and the analysis are at the core of this paper.  Later we show that
it is relatively easy to generalize the basic ideas and the techniques to handle the
general case.

For simplicity we state our algorithm using the exact moments $\wt M_4$ and $\wt M_6$,
while in implementation the empirical moments $\wh M_4$ and $\wh M_6$ obtained with the
samples are used.
In later sections, we verify the correctness of the algorithm and show that it is robust:
the algorithm learns the parameters up to arbitrary accuracy
% any inverse polynomial accuracy
using polynomial number of samples.

\begin{step} Span Finding:
  Find the span of covariance matrices .
  \begin{enumerate}[label={\tb{(\alph*)}}, leftmargin=*,itemsep=0pt]
  \item %Finding the span of columns in covariance matrices.\\
    For a \qq{set of indices $\mc H \subset [n]$ of size $|\mc H| =\sqrt{n}$},
    % \footnote{
    %   Actually, we only need an upperbound estimate of $k$, the number of components, to
    %   guide the selection of size of the set $\mc H$. The true $k$ can be obtained in Step
    %   1 (a), which is based on rank decomposition, and $k$ can be determined by the rank
    %   which comes out naturally from the decomposition.  },
    find the span:
    \begin{align}
      \label{eq:def-mcS}
      \mc S = \tx{span}\lt\{\wt \Sigma^{(i)}_{[:,j]}: i\in[k],j\in \mc H\rt\} \subset \R^n.
    \end{align}
    % Let $k_s$ denote the dimension of the subspace $\mc S$.
    % Note that under Condition\ref{cond:1}, $|\mc S|\le k|\mc H| < n/3$.

  \item Find the span of the covariance matrices with the columns projected onto $\mc
    S^{\perp}$, namely,
    \begin{align}
      \label{eq:def-mcUS}
      \mc U_S =\tx{span}\lt\{\vc(\prj_{S^{\perp}}\wt \Sigma^{(i)}) : i\in[k]\rt\} \subset \R^{n^2}.
    \end{align}

  \item For two disjoint sets of indices $\mc H_1$ and $\mc H_2$, repeat Step 1 (a) and
    Step 1 (b) to obtain $\mc U_1$ and $\mc U_2$, namely the span of covariance matrices
    projected onto two subspaces $\mc S_1^{\perp}$ and $\mc S_2^{\perp}$. Merge $\mc U_1$
    and $\mc U_2$ to obtain the span of covariance matrices $\mc U$:
    \begin{align}
      \label{eq:def-mcU}
      \mc U = \tx{span}\lt\{\wt \Sigma^{(i)}:i\in[k]\rt\}\subset \R^{n_2}.
    \end{align}

    % of covariance matrices $\wt \Sigma^{(i)}$ . %, namely,
%    \begin{align}
%      \label{eq:def-mcU}
%      \mc U = \tx{span}\lt\{\wt \Sigma^{(i)}:i\in[k]\rt\}.
%    \end{align}
  \end{enumerate}
\end{step}

\begin{step}
  Unfolding: Recover the unfolded moments $\wt X_4,\wt X_6$.\\
  Given the folded moments $\ol{ \wt M}_4, \ol {\wt M}_6$ as defined in
  \eqref{eq:def-ol-M4}, and given the subspace
  $U\in\mbb R^{n_2\times k}$ from Step 1, let $\wt Y_4 \in \R^{k\times k}_{sym}$ and $\wt Y_6\in \R^{k\times
    k\times k}_{sym}$ be the unknowns, solve the following systems of linear equations.
  \begin{align}
    \label{eq:F4F6-2}
    \ol{\wt M}_4 = \sqrt{3} \mc F_4(U\wt Y_4 U^\top), \quad \ol{\wt M}_6 = \sqrt{15} \mc F_6(\wt
    Y_6(U^\top, U^\top, U^\top)).
  \end{align}
  The unfolded moments $\wt X_4,\wt X_6$ are then given by $\wt X_4 = U\wt Y_4 U^\top, \wt
  X_6 = \wt Y_6(U^\top, U^\top,U^\top).$

\end{step}

\begin{step} Tensor Decomposition: learn $\wt \omega_i$ and $\wt \Sigma^{(i)}$ from
  $\wt Y_4$ and $\wt Y_6$.\\
  Given $U$, and given $\wt Y_4$ and $\wt Y_6$ which are relate to the parameters as follows:
    $$\wt Y_4 = \sum_{i=1}^k \wt \omega_i (U^\top\wt\Sigma^{(i)})\ot ^2,
    \quad\wt Y_6 = \sum_{i=1}^k \wt \omega_i (U^\top\wt\Sigma^{(i)})\ot ^3, $$
    % Given $$X_4 = \sum_{i=1}^k \wt \omega_i (\vc(\wt\Sigma^{(i)}))\otimes ^2, X_6 =
    % \sum_{i=1}^k \wt \omega_i (\vc(\wt\Sigma^{(i)}))\otimes ^3,$$
    we apply tensor decomposition techniques to recover $\wt \Sigma^{(i)}$'s and $\wt
    \omega_i$'s.
\end{step}

%\paragraph{Tensor decomposition:}Once we have the unfolded moments $X_4, X_6$, it is clear that algorithms in \cite{anandkumar2012tensor} can be applied to solve the problem. Therefore our main problem is to unfold the moments $M_4, M_6$.
%
%\paragraph{Unfolding:} A key observation is: although as we discussed $X_4$ has more distinct values than $M_4$, if we know the span of $\wt \Sigma^{(i)}$'s, the unfolded moments $X_4$ can be determined by only ${k+1 \choose 2}$ unknown parameters! This is far smaller than the number of linear equations $M_4$ can give (which is roughly $n^4/6$). Similarly, $X_6$ has only ${k+2 \choose 3}$ unknowns and should be determined by the ${n \choose 6}$ equations given by $\ol M_6$.
%
%This idea also appears in the work of Jain and Oh\cite{jain2014learning}. However, in order to show this system of linear equations is robust (the coefficient matrix has full rank), they rely strongly on {\em incoherence} assumption, which is not true in the smoothed analysis setting. Instead, we need to develop new tools using the concentration properties of Gaussian chaos to do the smoothed analysis for solving the equations.

\section{Implementing the Steps for Mixture of Zero-Mean Gaussians}
\label{sec:implement}
In this part we show how to accomplish each step of the algorithm outlined in
Section~\ref{sec:our-algorithm-zero} and sketch the proof ideas.

For each step, we first explain the detailed algorithm, and list the deterministic
conditions on the underlying parameters as well as on the {\em exact} moments for the step
to work correctly.
%, assuming that we are given the {\em exact} moments.
Then we show that these deterministic conditions are satisfied with high probability over
the $\rho$-perturbation of the parameters in the smoothed analysis setting.
In order to analyze the sample complexity, we further show that when we are given the {\em
  empirical} moments which are close to the exact moments, the output of the step is
also close to that in the exact case.

In particular we show the correctness and the stability of each step in the algorithm with
two main lemmas:
the first lemma shows that with high probability over the random perturbation of the
covariance matrices, the exact moments satisfy the deterministic conditions that ensure
the correctness of each step;
the second lemma shows that when the algorithm for each step works correctly, it is
actually stable even when the moments are estimated from finite samples and have only
inverse polynomial accuracy to the exact moments.
%
% These two lemmas are closely related, as when the nice properties do not hold, small error
% in estimating the moments can be amplified.

The detailed proofs are deferred to Section~\ref{sec:step-1} to \ref{sec:step-3-zero} in
the appendix.

\vspace*{-0.1in}
\paragraph{Step 1: Span Finding.}
Given the 4-th order moments $\wt M_4$, Step 1 finds the span of covariance matrices $\mc
U$ as defined in \eqref{eq:def-mcU}.
Note that by definition of the unfolded moments $\wt X_4$ in \eqref{eq:def-X4-X6}, the
subspace $\mc U$ coincides with the column span of the matrix $\wt X_4$.

By Lemma~\ref{prop:two-linear-mapping}, we know that the entries in $\wt M_4$ are linear mappings of
entries in $\wt X_4$. Since the matrix $\wt X_4$ is of low rank ($k\ll n_2$), this corresponds to
the {\em matrix sensing} problem first studied in \cite{recht2010guaranteed}.  In general, matrix
sensing problems can be hard even when we have many linear observations (\cite{moritz}).  Previous
works (\cite{recht2010guaranteed, hardt2014computational, jain2013low}) showed that if the linear
mapping satisfy {\em matrix RIP} property, one can uniquely recover $\wt X_4$ from $\wt M_4$.

However, properties like RIP do not hold in our setting where the linear mapping is determined by
Isserlis' Theorem.  We can construct two different mixtures of Gaussians with different unfolded
moments $\wt X_4$, but the same folded moment $\wt M_4$ (see Section~\ref {sec:two-mixtures-with}).
Therefore the existing matrix recovery algorithm cannot be applied, and we need to develop
new tools by exploiting the special moment structure of Gaussian mixtures.
% , and making crucial use of the smoothed analysis assumption.

\paragraph{Step 1 (a). Find the Span of a Subset of Columns of the Covariance Matrices.}
The key observation for this step is that if we hit $\wt M_4$ with three basis vectors, we
get a vector that lies in the span of the columns of the covariance matrices:
\begin{claim}
  \label{claim:M4-proj4}
  For a mixture of zero-mean Gaussians $\mc G =\{(\omega_i, 0,
  \Sigma^{(i)})\}_{i\in[k]}\in\mc G_{n,k}$, the one-dimensional slices of the 4-th order
  moments $M_4$ are given by: \vspace*{-0.1in}
  \begin{align}
    \label{eq:M4-proj4}
    M_4(\mb e_{j_1},\mb e_{j_2},\mb e_{j_3},I) = \sum_{i=1}^k \omega_i\left( \Sigma^{(i)}_{j_1,j_2}
      \Sigma^{(i)}_{[:,j_3]}+ \Sigma^{(i)}_{j_1,j_3} \Sigma^{(i)}_{[:,j_2]}+
      \Sigma^{(i)}_{j_2,j_3} \Sigma^{(i)}_{[:,j_1]}\right), \quad\fa j_1,j_2,j_3\in [n].
  \end{align}
%  \qt{removed the tilde, this claim does not need $\rho$-smooth, similar for the other claim}
  %  The one-dimensional slices of $\wt M_4$ of a zero-mean mixture of Gaussians are given
  %  by:
  % \begin{align}
  %   \label{eq:M4-proj4}
  %   \wt M_4(e_{j_1},e_{j_2},e_{j_3},I) = \sum_{i=1}^k\wt  \omega_i\left(\wt \Sigma^{(i)}_{j_1,j_2}
  %     \wt \Sigma^{(i)}_{[:,j_3]}+\wt \Sigma^{(i)}_{j_1,j_3}
  %    \wt  \Sigma^{(i)}_{[:,j_2]}+\wt \Sigma^{(i)}_{j_2,j_3} \wt \Sigma^{(i)}_{[:,j_1]}\right),
  %    \quad\fa j_1,j_2,j_3\in [n].
  % \end{align}
\end{claim}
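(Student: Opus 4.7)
The statement is a direct consequence of Isserlis' theorem applied entrywise, so the plan is primarily bookkeeping rather than anything subtle.

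First, I would unpack the left-hand side using the multilinear mapping convention introduced in the Notations section. By definition, the vector $M_4(\mb e_{j_1},\mb e_{j_2},\mb e_{j_3},I)\in\R^n$ has its $j_4$-th coordinate equal to $[M_4]_{j_1,j_2,j_3,j_4}$, so it suffices to compute each scalar entry of the fourth moment tensor and then re-assemble the result as a vector indexed by $j_4$.

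Next, I would expand $[M_4]_{j_1,j_2,j_3,j_4}$ using the mixture structure from Section~\ref{subsec:moment-structure-mog}:
\begin{align*}
[M_4]_{j_1,j_2,j_3,j_4} \;=\; \sum_{i=1}^{k}\omega_i\,\mathbb{E}\!\left[y^{(i)}_{j_1}y^{(i)}_{j_2}y^{(i)}_{j_3}y^{(i)}_{j_4}\right],
\end{align*}
where $y^{(i)}\sim\mathcal{N}(0,\Sigma^{(i)})$. Now I apply Isserlis' theorem (Theorem~\ref{prop:isserlis}) to each expectation. For four zero-mean Gaussian variables there are exactly $3$ perfect matchings on $\{1,2,3,4\}$, giving
\begin{align*}
\mathbb{E}\!\left[y^{(i)}_{j_1}y^{(i)}_{j_2}y^{(i)}_{j_3}y^{(i)}_{j_4}\right] = \Sigma^{(i)}_{j_1,j_2}\Sigma^{(i)}_{j_3,j_4} + \Sigma^{(i)}_{j_1,j_3}\Sigma^{(i)}_{j_2,j_4} + \Sigma^{(i)}_{j_1,j_4}\Sigma^{(i)}_{j_2,j_3}.
\end{align*}

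Finally, I would repackage the result as a vector in $\R^n$ whose coordinates are indexed by $j_4$. For each of the three terms, the factor depending on $j_4$ is of the form $\Sigma^{(i)}_{j_\ell,j_4}$, which by symmetry of the covariance matrix equals $[\Sigma^{(i)}_{[:,j_\ell]}]_{j_4}$. Collecting the three corresponding column vectors gives exactly the claimed formula \eqref{eq:M4-proj4}. There is no real obstacle: the only thing to be careful about is matching the three matchings of Isserlis' theorem with the three column-vector terms on the right-hand side, and remembering that symmetry of $\Sigma^{(i)}$ is what lets us convert a row entry into a column entry.
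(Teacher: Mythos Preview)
Your proposal is correct and matches the paper's own argument essentially line for line: the paper likewise observes that the $p$-th entry of the slice is $M_4(\mb e_{j_1},\mb e_{j_2},\mb e_{j_3},\mb e_p)$, applies Isserlis' Theorem to obtain the three matching terms, and reads off the column vectors. There is nothing to add.
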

In particular, if we pick the indices $j_1,j_2,j_3$ in the index set $\mc H$, the vector $
M_4(\mb e_{j_1},\mb e_{j_2},\mb e_{j_3},I)$ lies in the desired span $\mc S = \lt\{
\Sigma^{(i)}_{[:,j]}: i\in[k],j\in \mc H\rt\}$.
% In particular, if we pick the indices $j_1,j_2,j_3\in \mc H$, the resulting vector $\wt
% M_4(e_{j_1},e_{j_2},e_{j_3},I)$ is always in the desired span $\mc S = \lt\{\wt
% \Sigma^{(i)}_{[:,j]}: i\in[k],j\in \mc H\rt\}$.

We shall partition the set $\mc H$ into three disjoint subsets $\mc H^{(i)}$ of equal size
$\sqrt{n}/3$, and pick $j_i\in H^{(i)}$ for $i=1,2,3$.  In this way, we have $(|\mc
H|/3)^3 = \Omega(n^{1.5})$ such one-dimensional slices of $ M_4$, which all lie in the
desired subspace $\mc S$. Moreover, the dimension of the subspace $\mc S$ is at most $k|\mc
H| \ll n^{1.5}$. Therefore, with the $\rho$-perturbed parameters $\wt \Sigma^{(i)}$'s,
we can expect that with high probability the slices of $\wt M_4$ span the entire subspace
$\mc S$.

% \begin{claim}\label{clm:1aclaim}
%   Given the exact 4-th order moments $\wt M_4$, for any index set $\mc H$, partition $\mc
%   H$ into three disjoint subsets $\mc H^{(i)}$ of equal size for $i=1,2,3$. Let $\wt Q_S$
%   be the matrix whose columns are the vectors $\wt M_4(\mb e_{j_1},\mb e_{j_2},\mb
%   e_{j_3},I)$ for $j_i\in \mc H^{(i)}$. With probability 1 over the random perturbation,
%   the matrix $\wt Q_S$ achieves its maximal rank $k|\mc H|$, and the column span of $\wt
%   Q_S$ is equal to the desired span $\mc S = \lt\{ \wt\Sigma^{(i)}_{[:,j]}: i\in[k],j\in
%   \mc H\rt\}$.
% \end{claim}

% Instead of proving this claim, we prove the following two lemmas which together implies a
% stronger version of Claim~\ref{clm:1aclaim}: the claim is true with high probability even
% if we are only given the empirical moments $\wh M_4$.

\begin{condition} [Deterministic condition for Step 1 (a)]
  Let $\wt Q_S\in\R^{n\times (|\mc H|/3)^3}$ be the matrix whose columns are the vectors
  $\wt M_4(\mb e_{j_1},\mb e_{j_2},\mb e_{j_3},I)$ for $j_i\in \mc H^{(i)}$.
  If the matrix $\wt Q_S$ achieves its maximal column rank $k|\mc H|$, we can find the
  desired span $\mc S$ defined in \eqref{eq:def-mcS} by the column span of matrix $\wt
  Q_S$.
\end{condition}

We first show that this deterministic condition is satisfied with high probability by bounding the
$k|\mc H|$-th singular value of $\wt Q_S$ with smoothed analysis.

\begin{lemma}[Correctness] %[smooth]
\label{lem:1asmooth}
Given the exact 4-th order moments $\wt M_4$, for any index set $\mc H$ of size \qq{$|\mc
  H| = \sqrt{n}$},
% partition $\mc H$ into three equal-size disjoint subsets $\mc H^{(i)}$ for $i=1,2,3$,
% let $\wt Q_S$ be a matrix whose columns correspond to $\wt M_4(\mb e_{j_1},\mb
% e_{j_2},\mb e_{j_3},I)$ for $j_i\in \mc H^{(i)}$.
With high probability, the $k|\mc H|$-th singular value of $\wt Q_S$ is at least
$\Omega(\omega_o \rho^2 n)$.
\end{lemma}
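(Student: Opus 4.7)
The plan is to factor $\wt Q_S$ as a matrix product whose two factors can each be analyzed under smoothed analysis, and to exploit the disjointness of $\mc H = \mc H^{(1)} \sqcup \mc H^{(2)} \sqcup \mc H^{(3)}$ to decouple the randomness. By Claim~\ref{claim:M4-proj4}, each column $\wt M_4(\mb e_{j_1}, \mb e_{j_2}, \mb e_{j_3}, I)$ is a linear combination of the columns $\wt \Sigma^{(i)}_{[:, j_\ell]}$, so I will write $\wt Q_S = A B$, where $A \in \R^{n \times k|\mc H|}$ has columns $\wt \Sigma^{(i)}_{[:,j]}$ indexed by $(i,j) \in [k] \times \mc H$, and $B \in \R^{k|\mc H| \times (|\mc H|/3)^3}$ is the coefficient matrix with $B_{(i,j),(j_1,j_2,j_3)} = \wt \omega_i \, \wt \Sigma^{(i)}_{j_a, j_b}$ when $j = j_\ell$ (where $\{a,b,\ell\} = \{1,2,3\}$) and zero otherwise; disjointness of the $\mc H^{(\ell)}$'s makes this well-defined.

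The key structural observation is that the entries of $\wt \Sigma^{(i)}$ used in $A$ on rows outside $\mc H$ are \emph{disjoint} from those used in $B$. Concretely, let $A' \in \R^{(n - |\mc H|) \times k|\mc H|}$ be the submatrix of $A$ with rows $p \in [n] \setminus \mc H$; its entries are $\wt \Sigma^{(i)}_{p, j}$ with $p \notin \mc H$ and $j \in \mc H$, whereas the nonzero entries of $B$ involve only $\wt \Sigma^{(i)}_{j_a, j_b}$ with both indices in $\mc H$. These index sets are disjoint, so $A'$ and $B$ are statistically independent Gaussian-perturbed matrices. Since $A' B$ is a row-submatrix of $\wt Q_S$, I can chain
\[
\sigma_{k|\mc H|}(\wt Q_S) \ \ge\ \sigma_{k|\mc H|}(A' B) \ \ge\ \sigma_{\min}(A') \cdot \sigma_{\min}(B),
\]
where the last step is the standard product bound for the smallest nonzero singular values of independent factors.

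For $A'$, classical smoothed analysis of tall Gaussian-perturbed matrices applies: $A'$ has independent $\mc N(\mu,\rho^2)$ entries, with dimension $(n - |\mc H|) \times k|\mc H|$ and aspect ratio $\Omega(\sqrt{n}/k) \ge \sqrt{C_1}$. A Davidson--Szarek-type bound gives $\sigma_{\min}(A') \ge \Omega\bigl(\rho(\sqrt{n-|\mc H|}-\sqrt{k|\mc H|})\bigr) = \Omega(\rho\sqrt{n})$ with high probability. For $B$, I would first split rows according to which $\mc H^{(\ell)}$ contains $j$, obtaining $B^{(1)}, B^{(2)}, B^{(3)}$. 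Within a fixed $B^{(\ell)}$, rows with different $j \in \mc H^{(\ell)}$ have disjoint column supports (since $j_\ell = j$ is forced), so $B^{(\ell)}$ is block-diagonal with $|\mc H|/3$ identical blocks, each a $k \times (|\mc H|/3)^2$ matrix whose rows are $\wt \omega_i \, \vc\bigl(\wt \Sigma^{(i)}_{\mc H^{(a)}, \mc H^{(b)}}\bigr)$. Smoothed analysis on each such block yields a smallest singular value of order $\Omega(\omega_o \rho \sqrt{n})$.

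The hard part is then combining the three pieces $B^{(1)}, B^{(2)}, B^{(3)}$: although each is well-conditioned on its row block, their column supports overlap, so full row rank of $B$ does not immediately follow. The approach here is to use that a linear combination of rows across the three groups, evaluated at any column $(j_1, j_2, j_3)$, produces three terms whose coefficients are three distinct and mutually independent off-diagonal perturbations $\wt \Sigma^{(i)}_{j_2,j_3}$, $\wt \Sigma^{(i)}_{j_1,j_3}$, $\wt \Sigma^{(i)}_{j_1,j_2}$. An anti-concentration and decoupling argument (or an iterative Gaussian elimination across the three groups, peeling off one $\mc H^{(\ell)}$ at a time) should then give $\sigma_{\min}(B) = \Omega(\omega_o \rho \sqrt{n})$. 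Putting everything together yields $\sigma_{k|\mc H|}(\wt Q_S) \ge \Omega(\rho\sqrt{n}) \cdot \Omega(\omega_o \rho\sqrt{n}) = \Omega(\omega_o \rho^2 n)$, as claimed.
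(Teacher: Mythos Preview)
Your proposal is correct and follows essentially the same route as the paper: the factorization $\wt Q_S = \wt P_S\,(D_{\wt\omega}\ot_{kr} I_{|\mc H|})\,\wt B_S^\top$ (your $A$ and $B$, with the weights absorbed into $B$), the restriction of $A$ to rows outside $\mc H$ to obtain a genuinely Gaussian-perturbed rectangular matrix disjoint from the entries used in $B$, and the block-diagonal structure of each $B^{(\ell)}$ with identical blocks $\wt\Sigma_{\mc H^{(a)},\mc H^{(b)}}$ are exactly what the paper does. The ``hard part'' you flag---combining $B^{(1)},B^{(2)},B^{(3)}$ despite overlapping supports---is handled in the paper by an iterative projection lemma (Lemma~\ref{lem:prjdiag}): one peels off $B^{(1)}$ by projecting each of its diagonal blocks to the orthogonal complement of the corresponding slice of $[B^{(2)},B^{(3)}]$, exploiting that the random entries $\wt\Sigma^{(i)}_{\mc H^{(2)},\mc H^{(3)}}$ appearing in $B^{(1)}$ are independent of those in $B^{(2)},B^{(3)}$, and then recurses on $[B^{(2)},B^{(3)}]$---which is precisely your ``iterative Gaussian elimination across the three groups.''
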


The proof idea involves writing the matrix $\wt Q_S$ as a product of three matrices,
%each having a special structure. %$\wt P_S\in\R^{n\times k|\mc H|}$ and $\wt B_S\in\R^{k|\mc H| \times (|\mc
% H|/3)^3}$, and both factors depend on the perturbed covariance $\wt \Sigma^{(i)}$'s in %a
%linear way.
and using the results on spectral properties of random matrices
\cite{rudelson2009smallest} to show that with high probability the smallest singular value
of each factor is lower bounded.

% one is a perturbed rectangular matrix, and the other has a special block structure. Then
% we use spectral properties of rectangular matrices~\cite{rudelson2009smallest} and the
% special structure to show that both matrices have a large $k|\mc H|$-th singular value.

% We then apply standard matrix perturbation theory.
% %
% Together with smoothed analysis result of bounding $\sigma_{k|\mc H|} ( \wt Q_S)$, we can
% show this step of the algorithm is {\em robust}.

Since this step only involves the singular value decomposition of the matrix $\wt Q_S$, we
then use the standard matrix perturbation theory to show that this step is stable:
\begin{lemma}[Stability]
  \label{lem:1aperturb}
  Given the empirical estimator of the 4-th order moments $\wh M_4 = \wt M_4 + E_4$, suppose that
  the entries of $E_4$ have absolute value at most $\delta$.  Let the columns of matrix $\wt
  S\in\mbb R^{n\times k|\mc H|}$ be the left singular vector of $\wt Q_S$, and let $\wh S$ be the
  corresponding matrix obtained with $\wh M_4$. When $\delta$ is inverse polynomially small, the
  distance between the two projections $ \|\prj_{\wh S} - \prj_{\wt S}\|$ is upper bounded by $
  O\left({ n^{1.25} \delta / \sigma_{k|\mc H|} ( \wt Q_S)}\right)$.
\end{lemma}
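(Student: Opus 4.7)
The strategy is to (i) translate the entrywise perturbation of $\wt M_4$ into a spectral norm bound on the matrix perturbation $E_{Q_S} := \wh Q_S - \wt Q_S$, and then (ii) invoke Wedin's $\sin\Theta$ theorem to transfer this into a bound on $\|\prj_{\wh S} - \prj_{\wt S}\|$, using the lower bound on $\sigma_{k|\mc H|}(\wt Q_S)$ that appears in the statement (and was established for $\rho$-smoothed instances in Lemma~\ref{lem:1asmooth}).

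For step (i), the key observation is that every entry of $\wt Q_S$ is a single entry of the moment tensor $\wt M_4$: each column of $\wt Q_S$ is the one-dimensional slice $\wt M_4(\mb e_{j_1}, \mb e_{j_2}, \mb e_{j_3}, I)$, whose $j$-th coordinate is simply $[\wt M_4]_{j_1,j_2,j_3,j}$. Consequently, every entry of $E_{Q_S}$ is a single entry of $E_4$ and is bounded in absolute value by $\delta$. Since $\wt Q_S$ has $n$ rows and at most $(|\mc H|/3)^3 = O(n^{1.5})$ columns, the crude Frobenius bound yields
$$\|E_{Q_S}\| \;\le\; \|E_{Q_S}\|_F \;\le\; \sqrt{n \cdot O(n^{1.5})}\, \delta \;=\; O(n^{1.25}\, \delta).$$

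For step (ii), under the deterministic condition of Step 1(a) all the columns of $\wt Q_S$ lie in the $k|\mc H|$-dimensional subspace $\mc S$ and span it, so $\wt Q_S$ has rank exactly $k|\mc H|$ and $\sigma_{k|\mc H|+1}(\wt Q_S) = 0$. The gap between ``signal'' and ``noise'' singular values of $\wt Q_S$ is therefore simply $\sigma_{k|\mc H|}(\wt Q_S)$. Wedin's theorem for left singular subspaces, combined with the standard inequality $\|\prj_{\wh S} - \prj_{\wt S}\| \le \sqrt{2}\,\|\sin\Theta(\wt S,\wh S)\|$, then gives
$$\|\prj_{\wh S} - \prj_{\wt S}\| \;\le\; \frac{C\,\|E_{Q_S}\|}{\sigma_{k|\mc H|}(\wt Q_S)} \;=\; O\!\left(\frac{n^{1.25}\, \delta}{\sigma_{k|\mc H|}(\wt Q_S)}\right),$$
once $\delta$ is small enough that $\|E_{Q_S}\|$ is at most half the gap, which is guaranteed for inverse polynomially small $\delta$ by Lemma~\ref{lem:1asmooth}.

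The argument is essentially routine matrix perturbation once one observes that the entries of $E_{Q_S}$ are literally entries of $E_4$; the only place where there is slack is in using the Frobenius bound to control $\|E_{Q_S}\|$, which could in principle be tightened by a factor of roughly $\sqrt{\text{rank}}$ using sub-Gaussian concentration of the empirical moment error. Since only inverse polynomial accuracy is required, the Frobenius bound above suffices, and I do not anticipate any real obstacle.
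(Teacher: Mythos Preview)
Your proposal is correct and follows essentially the same route as the paper's own proof: bound $\|\wh Q_S-\wt Q_S\|$ by its Frobenius norm, which is at most $\sqrt{n\,(|\mc H|/3)^3}\,\delta = O(n^{1.25}\delta)$ since every entry of $E_{Q_S}$ is literally an entry of $E_4$, and then apply Wedin's theorem using the spectral gap $\sigma_{k|\mc H|}(\wt Q_S)$. One cosmetic remark: the relation between the projection distance and the canonical angles is actually $\|\prj_{\wh S}-\prj_{\wt S}\|=\|\sin\Theta\|$ (equality, not $\le \sqrt{2}\,\|\sin\Theta\|$); the factor $\sqrt{2}$ you need comes instead from Wedin's bound $\|\sin\Theta\|\le \sqrt{2}\,\|E_{Q_S}\|/\sigma_{k|\mc H|}(\wt Q_S)$, and is absorbed in the $O(\cdot)$ anyway.
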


\begin{remark}
  Note that we need the high dimension assumption ($n \gg k$) to guarantee the correctness of this
  step: in order to span the subspace $\mc S$, the number of distinct vectors should be equal or
  larger than the dimension of the subspace, namely $|\mc H|^3\ge k|\mc H| $; and the subspace
  should be non-trivial, namely $k|\mc H|< n$.  These two inequalities suggest that we need $n \ge
  \Omega(k^{1.5})$. However, we used the stronger assumption $n\ge \Omega(k^2)$ to obtain the lower
  bound of the smallest singular value in the proof.
\end{remark}

\paragraph{Step 1 (b). Find the Span of Projected Covariance Matrices. }
In this step, we continue to use the structural properties of the 4-th order moments. In
particular, we look at the two-dimensional slices of $M_4$ obtained by hitting it with two
basis vectors:
\begin{claim}
  \label{claim:M4-proj34}
  For a mixture of zero-mean Gaussians $\mc G =\{(\omega_i, 0,
  \Sigma^{(i)})\}_{i\in[k]}\in\mc G_{n,k}$, the two-dimensional slices of the 4-th order
  moments $M_4$ are given by: \vspace*{-0.1in}
  \begin{align}
    \label{eq:M4-proj34}
    M_4(\mb e_{j_1},\mb e_{j_2},I,I) = \sum_{i=1}^k \omega_i \left( \Sigma^{(i)}_{j_1,j_2}
      \Sigma^{(i)}+ \Sigma^{(i)}_{[:,j_1]} ( \Sigma^{(i)}_{[:,j_2]})^\top+ \Sigma^{(i)}_{[:,j_2]}
      ( \Sigma^{(i)}_{[:,j_1]})^\top\right),
    \quad\fa j_1,j_2\in [n].
  \end{align}
 % The two-dimensional slices of $\wt M_4$ of a zero-mean \mog~are given by:
 %  \begin{align}
 %    \label{eq:M4-proj34}
 %    \wt M_4(e_{j_1},e_{j_2},I,I) = \sum_{i=1}^k\wt \omega_i \left(\wt \Sigma^{(i)}_{j_1,j_2}
 %     \wt  \Sigma^{(i)}+\wt \Sigma^{(i)}_{[:,j_1]} (\wt \Sigma^{(i)}_{[:,j_2]})^\top+\wt \Sigma^{(i)}_{[:,j_2]}
 %      (\wt \Sigma^{(i)}_{[:,j_1]})^\top\right),
 %    \quad\fa j_1,j_2\in [n].
 %  \end{align}
\end{claim}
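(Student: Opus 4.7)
The plan is to derive the identity as a direct consequence of Isserlis' theorem applied component by component. Since the 4-th moment tensor of the mixture decomposes linearly as $M_4 = \sum_{i=1}^k \omega_i M_4^{(i)}$, where $M_4^{(i)}$ denotes the 4-th moment tensor of the single Gaussian $\mc N(0, \Sigma^{(i)})$, and since the multilinear contraction $M_4(\mb e_{j_1}, \mb e_{j_2}, I, I)$ is linear in $M_4$, it suffices to prove the identity for a single zero-mean Gaussian component and then aggregate with the weights $\omega_i$.

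First I would unpack the contraction: by the definition of the multilinear mapping in Section~\ref{sec:notations}, plugging $\mb e_{j_1}$ and $\mb e_{j_2}$ into the first two slots picks out the $(j_1,j_2)$ row-pair of the tensor, so $[M_4(\mb e_{j_1}, \mb e_{j_2}, I, I)]_{a,b} = [M_4]_{j_1, j_2, a, b}$ for every $a, b \in [n]$. The task thus reduces to identifying the entries of $M_4$ at index $(j_1, j_2, a, b)$. For a fixed component $y^{(i)} \sim \mc N(0, \Sigma^{(i)})$, Isserlis' theorem (Theorem~\ref{prop:isserlis}) expresses $\E[y^{(i)}_{j_1} y^{(i)}_{j_2} y^{(i)}_{a} y^{(i)}_{b}]$ as a sum over the three perfect matchings of the four-element multiset $\{j_1, j_2, a, b\}$: the matching $\{j_1 j_2\}\{a b\}$ contributes $\Sigma^{(i)}_{j_1, j_2}\,\Sigma^{(i)}_{a, b}$, the matching $\{j_1 a\}\{j_2 b\}$ contributes $\Sigma^{(i)}_{j_1, a}\,\Sigma^{(i)}_{j_2, b}$, and the matching $\{j_1 b\}\{j_2 a\}$ contributes $\Sigma^{(i)}_{j_1, b}\,\Sigma^{(i)}_{j_2, a}$.

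The remaining step is to reassemble each of these three scalar products as the $(a,b)$-entry of a matrix with the form written in the claim. The first product is manifestly the $(a,b)$-entry of the scalar multiple $\Sigma^{(i)}_{j_1, j_2}\,\Sigma^{(i)}$. Using the symmetry of $\Sigma^{(i)}$, the second product equals $\Sigma^{(i)}_{a, j_1}\,\Sigma^{(i)}_{b, j_2}$, which is the $(a,b)$-entry of the outer product $\Sigma^{(i)}_{[:,j_1]}\,(\Sigma^{(i)}_{[:,j_2]})^\top$; similarly, the third product gives the $(a,b)$-entry of $\Sigma^{(i)}_{[:,j_2]}\,(\Sigma^{(i)}_{[:,j_1]})^\top$. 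Summing over $i$ with mixing weights $\omega_i$ yields precisely the claimed identity. There is no real obstacle here — this is bookkeeping on top of Isserlis' theorem — and indeed the derivation runs in exact parallel to the proof of the preceding claim, which is obtained by contracting one more slot of the same object with a basis vector instead of the identity.
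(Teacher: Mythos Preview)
Your proposal is correct and follows essentially the same approach as the paper: compute the $(a,b)$-entry of the slice as $[M_4]_{j_1,j_2,a,b}$ via Isserlis' theorem, read off the three pairing terms, and recognize them as the entries of the three matrices in the claim. The paper's proof is terser but identical in substance.
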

Note that if we take the indices $j_1$ and $j_2$ in the index set $\mc H$, the slice $
M_4(\mb e_{j_1},\mb e_{j_2},I,I)$ is {\em almost} in the span of the covariance matrices,
except $2k$ additive rank-one terms in the form of $ \Sigma^{(i)}_{[:,j_1]} (
\Sigma^{(i)}_{[:,j_2]})^\top$. These rank-one terms can be eliminated by projecting the
slice to the subspace $\mc S^{\perp}$ obtained in Step 1 (a), namely, \vspace*{-0.1in}
\begin{align*}
  \vc(\prj_{S^{\perp}} M_4(\mb e_{j_1},\mb e_{j_2},I,I)) = \sum_{i=1}^k \omega_i
  \Sigma^{(i)}_{j_1,j_2} \vc(\prj_{S^{\perp}} \Sigma^{(i)} ), \quad\fa
  j_1,j_2\in\mc H,
\end{align*}
% \begin{align*}
%   \vc(\prj_{S^{\perp}}\wt M_4(e_{j_1},e_{j_2},I,I)) = \sum_{i=1}^k\wt \omega_i \wt \Sigma^{(i)}_{j_1,j_2}
%   \vc(\prj_{S^{\perp}} \wt \Sigma^{(i)} )\ \in\mc U_S,
%    \quad\fa j_1,j_2\in\mc H,
% \end{align*}
and this projected two-dimensional slice lies in the desired span $\mc U_S$ as defined in
\eqref{eq:def-mcUS}.
%(recall $\mc U_S = \tx{span}\lt\{\vc(\prj_{S^{\perp}}\wt \Sigma^{(i)}) : i\in[k]\rt\}$).
%
Moreover,  there are ${|\mc H|+1 \choose 2 } = \Omega(n)$ such projected two-dimensional
slices, while the dimension of the desired span $\mc U_S$ is at most $k$.

\begin{condition}[Deterministic condition for Step 1 (b)]
  Let $\wt Q_{U_S}\in\R^{n_2\times |\mc H|(|\mc H| + 1)/2}$ be a matrix whose $(j_1,j_2)$-th column
  for is equal to the projected two-dimensional slice $\vc(\prj_{S^{\perp}}\wt M_4(\mb e_{j_1},\mb
  e_{j_2},I,I))$, for $j_1\le j_2$ and $j_1,j_2\in \mc H$.
  If the matrix $\wt Q_{U_S}$ achieves its maximal column rank $k$, the desired span $\mc
  U_S$ defined in \eqref{eq:def-mcUS} is given by the column span of the matrix $\wt
  Q_{U_S}$.
\end{condition}

% \begin{claim}
%   Given $\wt M_4$, %the exact 4-th order moment of the $\rho$-smooth \mog,
%   let $\wt Q_{U_S}$ be a matrix whose $(j_1,j_2)$-th column $(j_1\le j_2,j_1,j_2\in \mc
%   H)$ is equal to $\vc(\prj_{S^{\perp}}\wt M_4(\mb e_{j_1},\mb e_{j_2},I,I))$. With
%   probability 1 (over the perturbation) $Q_{U_S}$ has rank exactly $k$ and its column
%   span is equal to $\mc U_S = \tx{span}\lt\{\vc(\prj_{S^{\perp}}\wt \Sigma^{(i)}) :
%   i\in[k]\rt\}$.
% \end{claim}

We show that this deterministic condition is satisfied by bounding the $k$-th singular
value of $\wt Q_{U_S}$ in the smoothed analysis setting:

\begin{lemma}[Correctness]%[smooth]
  \label{lem:1bsmooth}
  Given the exact 4-th order moments $\wt M_4$, with high probability, the $k$-th singular
  value of $\wt Q_{U_S}$ is at least $\Omega(\omega_o \rho^2 n^{1.5})$.
\end{lemma}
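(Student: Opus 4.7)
The plan is to exploit Claim~\ref{claim:M4-proj34} to factor $\wt Q_{U_S}$ as a product of three structured matrices whose smallest singular values can be lower bounded separately by smoothed analysis. Indexing the columns of $\wt Q_{U_S}$ by pairs $(j_1,j_2)\in\mc H\times\mc H$ with $j_1\le j_2$, the claim gives
\begin{align*}
\wt Q_{U_S}\bigl[:,(j_1,j_2)\bigr] = \sum_{i=1}^k \wt\omega_i\, \wt\Sigma^{(i)}_{j_1,j_2}\, \vc\bigl(\prj_{S^\perp}\wt\Sigma^{(i)}\bigr),
\end{align*}
so if we set $\Phi_{[:,i]} = \vc(\prj_{S^\perp}\wt\Sigma^{(i)})$, $D = \diag(\wt\omega_1,\ldots,\wt\omega_k)$, and $W_{(j_1,j_2),i} = \wt\Sigma^{(i)}_{j_1,j_2}$, we obtain $\wt Q_{U_S} = \Phi\, D\, W^\top$. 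Consequently $\sigma_k(\wt Q_{U_S}) \ge \omega_o\cdot \sigma_k(\Phi)\cdot \sigma_k(W)$, and it suffices to show $\sigma_k(W) = \Omega(\rho\sqrt{n})$ and $\sigma_k(\Phi) = \Omega(\rho n)$.

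The bound on $\sigma_k(W)$ is the easy one. After discarding the $|\mc H|$ adversarial diagonal rows, the remaining $\binom{|\mc H|}{2} = \Theta(n)$ rows of $W$ have entries $\wt\Sigma^{(i)}_{j_1,j_2} = \Sigma^{(i)}_{j_1,j_2} + (\Delta_i)_{j_1,j_2}$, which are independent Gaussians of variance $\rho^2$ across both $(j_1,j_2)$ and $i$, with deterministic mean shifts. Since $n\ge C_1 k^2 \gg k$, Rudelson--Vershynin style bounds on the smallest singular value of a tall Gaussian matrix give $\sigma_k(W) \ge \Omega\bigl(\rho(\sqrt{n}-\sqrt{k})\bigr) = \Omega(\rho\sqrt{n})$ with probability $1 - e^{-\Omega(n)}$.

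The delicate step is the bound on $\sigma_k(\Phi)$, and this is the main obstacle, because $\prj_{S^\perp}$ depends on the very same perturbations $\Delta_i$ that build the columns of $\Phi$. My plan is to decouple these correlations by conditioning on every entry $\wt\Sigma^{(i)}_{j_1,j_2}$ with at least one index in $\mc H$, together with the adversarial diagonal entries. This freezes $\mc S$ (hence $\prj_{S^\perp}$) while leaving the entries $\wt\Sigma^{(i)}_{j_1,j_2}$ for $j_1<j_2$ both in $[n]\setminus\mc H$ as independent $\mathcal N(0,\rho^2)$ perturbations, independently across $i$. For any unit $\alpha\in\R^k$ let $A_\alpha = \sum_i \alpha_i \wt\Sigma^{(i)}$, and split $A_\alpha = G_\alpha + D_\alpha$ where $G_\alpha$ is the symmetric Gaussian matrix supported on this free off-diagonal block (its entries have variance $\rho^2\|\alpha\|^2 = \rho^2$) and $D_\alpha$ is deterministic. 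A direct trace computation using independence of the $(\Delta_i)$'s yields
\begin{align*}
\E\bigl\|\prj_{S^\perp} G_\alpha\bigr\|_F^2 &= \rho^2\,(n-|\mc H|-1)\cdot\mathrm{tr}\bigl(\prj_{S^\perp}\prj_V\bigr) \\
&\ge \rho^2\bigl(n-|\mc H|-1\bigr)\bigl(n-k|\mc H|-|\mc H|\bigr) = \Omega(\rho^2 n^2),
\end{align*}
where $\prj_V$ is the coordinate projection onto $\mathrm{span}(e_j:j\notin\mc H)$ and the inequality $\mathrm{tr}(\prj_{S^\perp}\prj_V)\ge \dim(\mc S^\perp)+\dim(V)-n$ uses $\dim(\mc S)\le k|\mc H|$ together with $n\ge C_1 k^2$. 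Since $\|\prj_{S^\perp}(G_\alpha + D_\alpha)\|_F$ is an $O(\rho)$-Lipschitz function of the underlying standard Gaussian coordinates, Gaussian concentration promotes this expectation bound to $\|\prj_{S^\perp} A_\alpha\|_F \ge \Omega(\rho n)$ with probability $1-e^{-\Omega(n^2)}$ for each fixed $\alpha$. A standard $\epsilon$-net of cardinality $e^{O(k)}$ over $S^{k-1}$ then promotes this to the uniform bound $\sigma_k(\Phi) \ge \Omega(\rho n)$ by union bound.

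Chaining the two bounds, $\sigma_k(\wt Q_{U_S}) \ge \omega_o\cdot \Omega(\rho n)\cdot \Omega(\rho\sqrt{n}) = \Omega(\omega_o\rho^2 n^{1.5})$, as required. The conditioning above is what unlocks the argument: it cleanly separates the data pinning down $\mc S$ from the residual Gaussian noise that generates transverse mass in $\mc S^\perp$. The assumption $n\ge C_1 k^2$ is precisely what ensures that $\mc S^\perp \cap V$ retains dimension $\Omega(n)$, so that the trace term is genuinely of order $n$ and the final bound scales like $n^{1.5}$ rather than something weaker.
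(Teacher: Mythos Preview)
Your factorization $\wt Q_{U_S}=\Phi\,D\,W^\top$ and the bounds $\sigma_k(D)\ge\omega_o$, $\sigma_k(W)\ge\Omega(\rho\sqrt{n})$ match the paper's proof (Claim~\ref{claim:1bstructure} with $\Phi=\wt P_{U_S}$, $W=\wt\Sigma_J$). The difference is in how you bound $\sigma_k(\Phi)$, and there the conditioning step has a genuine gap.

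You condition on ``every entry $\wt\Sigma^{(i)}_{j_1,j_2}$ with at least one index in $\mc H$, together with the adversarial diagonal entries,'' and then assert that the remaining off-diagonal entries are still i.i.d.\ $\mc N(0,\rho^2)$. But in the smoothed model (Definition~\ref{def:smoothgaussian}, step~4) the adversary chooses the diagonals \emph{after} seeing the full perturbation $\Delta_i$; they can be arbitrary functions of all off-diagonal entries, including the ``free'' ones in $([n]\setminus\mc H)^2$. Conditioning on these diagonal values therefore conditions on a nontrivial event in the free Gaussians, destroying their independence and Gaussianity. Moreover, even the weaker claim that $\prj_{S^\perp}$ is frozen fails: $\mc S$ is spanned by columns $\wt\Sigma^{(i)}_{[:,j]}$ for $j\in\mc H$, and the $j$-th entry of such a column is the adversarial diagonal $\wt\Sigma^{(i)}_{j,j}$, so $\prj_{S^\perp}$ is correlated with $G_\alpha$ and your trace identity $\E\|\prj_{S^\perp}G_\alpha\|_F^2=\rho^2(n-|\mc H|-1)\,\tx{tr}(\prj_{S^\perp}\prj_V)$ is not justified.

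The paper sidesteps this by \emph{projecting out} the diagonal degrees of freedom rather than conditioning on them (Claim~\ref{claim:PUS}): it passes from $S$ to $S'=\tx{span}(S,\{e_j:j\in\mc H\})$, which is deterministic given only the off-diagonal $\mc L$-entries, and then projects to the orthogonal complement of a larger subspace $\mc Z\supseteq S'\ot_{kr} I_n$ that also absorbs the unperturbed $\Sigma$ (and, implicitly, the remaining diagonal directions $e_j\ot e_j$). On $\mc Z^\perp$ only the genuinely free off-diagonal Gaussians survive, and one can invoke Lemma~\ref{lem:prj-rand-gaussian} directly. Your expectation--concentration--net route would also work once you replace the illegitimate conditioning by this kind of further projection; the trace computation you wrote is correct for any \emph{fixed} projection and would then apply to $\prj_{(S')^\perp}$ after the diagonal directions are removed.
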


Similar to Lemma~\ref{lem:1asmooth}, the proof is based on writing the matrix $Q_{U_S}$ as
a product of three matrices, then bound their $k$-th singular values using random matrix
theory. The stability analysis also relies on the matrix perturbation theory.

%The stability of this step then follows from the lowerbound on smallest singular value:
% of two matrices of size $n^2\times k$ and size $k\times (|\mc H|+1)|\mc H|/2$, and both
% matrices depend on the perturbed parameters ($\wt\Sigma^{(i)}$'s) linearly. Then we use
% similar tools as in Step 1(a) to bound their $k$-th singular value of each of the two
% factor matrices.

%We then apply standard matrix perturbation theory.
%
%Together with smoothed analysis result of bounding $\sigma_{k} ( \wt Q_{U_S})$, we can
%show this step of the algorithm is {\em robust}.

\begin{lemma}[Stability]
  \label{lem:1bperturb}
  Given the empirical 4-th order moments $\wh M_4 = \wt M_4 + E_4$, assume that the absolute value
  of entries of $E_4$ are at most $\delta_2$.
  % for $\delta_2\le \|\wt Q_{U_S}\|_F/\sqrt{n^3}$.
  Also, given the output $\prj_{\wh S^\perp}$ from Step 1 (a), and assume that $\|\prj_{\wh
    S^\perp}- \prj_{\wt S^\perp}\|\le \delta_1$. When $\delta_1$ and $\delta_2$ are inverse
  polynomially small, we have $ \|\prj_{\wh U_S} - \prj_{\wt U_S}\| \le O\left({ n^{2.5} \left(
        \delta_2 + 2\delta_1 \right)/ \sigma_{k} ( \wt Q_{U_S}) }\right)$.
\end{lemma}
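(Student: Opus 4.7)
The plan is to mirror the stability argument for Lemma~\ref{lem:1aperturb}: define the empirical counterpart $\wh Q_{U_S}$ by substituting $\wh M_4$ for $\wt M_4$ and $\prj_{\wh S^\perp}$ for $\prj_{\wt S^\perp}$ in the definition of $\wt Q_{U_S}$, control the column-wise perturbation $\|\wh Q_{U_S} - \wt Q_{U_S}\|$, and then invoke Wedin's $\sin\Theta$ theorem to transfer this matrix perturbation bound into a bound on the top-$k$ left singular subspaces, which by construction equal $\wh U_S$ and $\wt U_S$ respectively.

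For each index pair $j_1 \le j_2$ in $\mc H$, the corresponding column perturbation splits cleanly into a moment-error piece and a projection-error piece,
\begin{align*}
\prj_{\wh S^\perp} \wh M_4(\mb e_{j_1},\mb e_{j_2},I,I) - \prj_{\wt S^\perp} \wt M_4(\mb e_{j_1},\mb e_{j_2},I,I) = \prj_{\wh S^\perp} E_4(\mb e_{j_1},\mb e_{j_2},I,I) + \lt(\prj_{\wh S^\perp} - \prj_{\wt S^\perp}\rt)\wt M_4(\mb e_{j_1},\mb e_{j_2},I,I).
\end{align*}
The first summand, since projection is non-expansive and $E_4$ has entries bounded by $\delta_2$, has Frobenius norm at most $n\delta_2$. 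For the second summand, Claim~\ref{claim:M4-proj34} together with the smoothed bounds $0 \preceq \wt \Sigma^{(i)} \preceq I_n$ (which give $\|\wt \Sigma^{(i)}\|_F = O(\sqrt n)$ and column norms at most $1$) yields $\|\wt M_4(\mb e_{j_1},\mb e_{j_2},I,I)\|_F = O(\sqrt n)$, so this summand is at most $O(\sqrt n\,\delta_1)$ in Frobenius norm. Summing the squared column norms over the $\Theta(|\mc H|^2) = \Theta(n)$ index pairs gives $\|\wh Q_{U_S} - \wt Q_{U_S}\| \le \|\wh Q_{U_S} - \wt Q_{U_S}\|_F = O(n^{1.5}\delta_2 + n\delta_1)$, which lies comfortably within the stated $O(n^{2.5}(\delta_2 + 2\delta_1))$.

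Finally, Wedin's theorem on the stability of singular subspaces yields
\begin{align*}
\|\prj_{\wh U_S} - \prj_{\wt U_S}\| \le \frac{C\,\|\wh Q_{U_S} - \wt Q_{U_S}\|}{\sigma_k(\wt Q_{U_S})}
\end{align*}
for a universal constant $C$, provided the perturbation is smaller than the relevant spectral gap. The main subtlety is the gap condition: because the column span of $\wt Q_{U_S}$ is exactly the $k$-dimensional subspace $\mc U_S$, its $(k+1)$-th singular value vanishes, so the Wedin gap is simply $\sigma_k(\wt Q_{U_S})$, which Lemma~\ref{lem:1bsmooth} lower bounds by $\Omega(\omega_o \rho^2 n^{1.5})$ with high probability. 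Once $\delta_1,\delta_2$ are sufficiently inverse-polynomially small, the column-wise perturbation is dominated by this gap, Wedin's theorem applies, and the conclusion follows. A second minor point to verify is that $\prj_{\wh S^\perp}$ is genuinely an orthogonal projector onto a subspace of the correct dimension, which follows from Step~1(a) being implemented by taking top-$k|\mc H|$ left singular vectors of $\wh Q_S$.
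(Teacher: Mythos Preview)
Your proposal is correct and follows essentially the same route as the paper: bound $\|\wh Q_{U_S}-\wt Q_{U_S}\|$ by splitting each column into a moment-error term and a projection-error term, then apply Wedin's theorem using that $\sigma_{k+1}(\wt Q_{U_S})=0$. Your two-term split $\prj_{\wh S^\perp}E_4(\cdot)+(\prj_{\wh S^\perp}-\prj_{\wt S^\perp})\wt M_4(\cdot)$ is a bit cleaner than the paper's three-term expansion, and your use of Claim~\ref{claim:M4-proj34} together with $0\preceq\wt\Sigma^{(i)}\preceq I_n$ to get $\|\wt M_4(\mb e_{j_1},\mb e_{j_2},I,I)\|_F=O(\sqrt n)$ yields a sharper overall bound $O(n^{1.5}\delta_2+n\delta_1)$ than the paper's $O(n^{2.5}(\delta_2+\delta_1))$, but this is a refinement within the same argument rather than a different approach.
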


%
%Similarly, we argue that under model perturbation, these slices of $\wt M_4$ span the entire subspace $\mc
%U_S$.  This argument will be formalized in Section~\ref{sec:step-1b}.

\paragraph{Step 1 (c). Merge $\mc U_1, \mc U_2$ to get the span of covariance matrices $\mc U$. }
%  , the span of $\wt\Sigma^{(i)}$'s.}
Note that for a given index set $\mc H$, the span $\mc U_S$ obtained in Step 1 (b) only
gives partial information about the span of the covariance matrices.
The idea of  getting the span of the full covariance matrices is to obtain two sets of
such partial information and then merge them.

In order to achieve that, we repeat Step 1 (a) and Step 1 (b) for two {\em disjoint} sets
$\mc H_1$ and $\mc H_2$, each of size $\sqrt{n}$. The two subspace $S_1$ and $ S_2$ thus
correspond to the span of two disjoint sets of covariance matrix columns. Therefore, we
can hope that $U_1$ and $U_2$, the span of covariance matrices projected to $S_1^\perp$
and $S_2^\perp$ contain enough information to recover the full span $U$.

In particular, we prove the following claim:
%
% Since the subspaces $S_1$ and $S_2$ are formed by disjoint columns (which in particular
% means $[S_1^\perp, S_2^\perp]$ has full rank with probability 1),

% we can hope the two
% projections $U_1$ and $U_2$ contain enough information to recover the span of covariance
% matrices. In particular we can prove the following claim using basic linear algebra:
% %
%<<<<<<< HEAD
%Let $S_3 \in \R^{n\times (n-2k|\mc H|)}$ be an orthogonal basis for $(S_1\cup S_2)^\perp$,
%then if we change the coordinate system to $(S_1,S_2,S_3)$, the two spans look like
%Figure~\ref{fig:merge}: $U_1$ provides information about space $S_1^\perp,S_3$, and $U_2$
%provides information about space $S_2^\perp,S_3$. Intuitively, we can use $S_3$ to
%``align'' the two subspaces, and merge them into a span of the entire covariance
%matrix. This step follows from basic linear algebra, and we defer the details to
%Appendix~\ref{sec:step-1c}.
%
%\qt{add the lemma stability here?}
%=======

\begin{condition}[Deterministic condition for Step 1 (c)]
  \label{lem:merge-simple}
  Let the columns of two (unknown) matrices $V_1\in\R^{n\times k}$ and $V_2\in\R^{n\times
    k}$ form two basis of the same $k$-dimensional (unknown) subspace $\mc
  U\subset\R^{n}$, and  let $U$ denote an arbitrary orthonormal basis of $\mc U$.
  Given two $s$-dimensional  subspaces $S_1$ and $ S_2$, denote
  $S_3 = S_1^\perp \cup S_2^\perp$.
  Given two projections of  $\mc U$ onto the two subspaces $S_1^\top$ and
  $S_2^\top$: $U_1 = \prj_{S_1^\perp}V_1$ and $U_2 = \prj_{S_2^{\perp}}V_2$.
  If $\sigma_{2s}([S_1, S_2])>0$ and $\sigma_{k}(\prj_{S_3} U)>0$, there is an algorithm
  for finding $\mc U$ robustly.
\end{condition}

The main idea in the proof is that since $s$ is not too large, the two subspaces
$S_1^\perp$ and $S_2^\perp$ have a large intersection.
%where we get information from both $U_1$ and $U_2$.
Using this intersection we can ``align'' the two basis $V_1$ and $V_2$ and obtain $V_1^\dag V_2$,
and then it is easy to merge the two projections of the same matrix (instead of a subspace).

Moreover, we show that when applying this result to the projected span of covariance matrices, we
have $s= k|\mc H| \le n/3$, and the two deterministic conditions $\sigma_{2 s}([S_1, S_2])>0$ and
$\sigma_{k}(\prj_{S_3} V_1)>0$ are indeed satisfied with high probability over the parameter
perturbation.
% We apply this claim to the subspaces $S_1,S_2$ given by step 1(a) and $U_1$, $U_2$ given
% by step 1(b). In this case with probability 1 over the perturbation $\sigma_{2 k
%   \sqrt{n}}([S_1, S_2])>0$, and $\sigma_{k}(\prj_{S_3} V_1)>0$ so the algorithm works.
% For more details see Section~\ref{sec:step-1c} in the appendix. The smoothed analysis
% lemmas appear in Lemma~\ref{lem:bound-B3-Sigma} and \ref{lem:bound-sig-S12}; the stability
% lemma appears in Lemma~\ref{lem:merge-deterministic}.
The detailed smoothed analysis (Lemma~\ref{lem:bound-B3-Sigma} and
\ref{lem:bound-sig-S12}) and the stability analysis (Lemma~\ref{lem:merge-deterministic})
are provided in Section~\ref{sec:step-1c} in the appendix.

\paragraph{Step 2.  Unfold the moments to get $\wt X_4$ and $\wt X_6$.}

We show that given the span of covariance matrices $\mc U$ obtained from Step 1, finding the
unfolded moments $\wt X_4$, $\wt X_6$ is reduced to solving two systems of linear equations.

Recall that the challenge of recovering $\wt X_4$ and $\wt X_6$ is that the two linear mappings
$\mc F_4$ and $\mc F_6$ defined in \eqref{eq:F4F6-1} are {\em not linearly
  invertible}. The key idea of this step is to make use of the span $\mc U$ to {\em reduce
  the number of variables}.
% change the variables from $X_4$ and $X_6$ to $Y_4$ and $Y_6$, which We will have $Y_4
% \in \R^{k\times k}_{sym}$ and $Y_6\in \R^{k\times k\times k}_{sym}$ as the new
% variables, and they are related to $X_4,X_6$ as $X_4 = \mc U Y_4 \mc U^\top$ and $X_6 =
% Y_6(\mc U^\top,\mc U^\top,\mc U^\top)$. Now $\ol M_4$ and $\ol M_6$ are also linear
% equations on $Y_4$ and $Y_6$, and this time we can show they are {\em linearly
% invertible}.% composition of mappings: $\mc F_4\circ \mc X_4^U$ and
%%$\mc F_6\circ \mc X_6^U$. By inverting the two mappings we can recover $Y_4,Y_6$ as well as $X_4,X_6$.
%We define $\mc X_4^U$ and $\mc X_6^U$ below.
%
Note that given the basis $U\in \R^{n_2\times k}$ of the span of the covariance matrices, we can
represent each vectorized covariance matrix as $\wt \Sigma^{(i)} = U\wt \sigma^{(i)}$. Now Let
$\wt Y_4\in\R^{k\times k}_{sym}$ and $\wt Y_4\in\R^{k\times k\times k}_{sym}$ denote the unfolded moments in
this new coordinate system: \vspace*{-0.1in}
\begin{align*}
  \wt Y_4 \defeq \sum_{i=1}^{k}\wt\omega_i \wt\sigma^{(i)}\ot^2, \quad \wt Y_6 = \sum_{i=1}^{k}\wt
  \omega_i \wt \sigma^{(i)}\ot^3.
\end{align*}
Note that once we know $\wt Y_4$ and $\wt Y_6$, the unfolded moments $\wt X_4$ and $\wt X_6$ are
given by $\wt X_4 = U \wt Y_4 U^\top$ and $\wt X_6 = \wt Y_6(U^\top, U^\top, U^\top)$.
Therefore, after changing the variable, we need to solve the two linear equation systems given in
\eqref{eq:F4F6-2} with the variables $\wt Y_4$ and $\wt Y_6$.

This change of variable significantly reduces the number of unknown variables.  Note that the number
of distinct entries in $\wt Y_4$ and $\wt Y_6$ are \qq{$k_2= {k+1\choose 2}$ and $k_3= {k+2\choose
    3}$}, respectively. Since $k_2\le n_4$ and $k_3\le n_6$, we can expect that the linear mapping
from $\wt Y_4$ to $\ol{\wt M}_4$ and the one from $\wt Y_6$ to $\ol {\wt M}_6$ are linearly
invertible.
%excluding some degenerate cases.
This argument is formalized below.

\begin{condition}[Deterministic condition for Step 2]
  Rewrite the two systems of linear equations in \eqref{eq:F4F6-2} in their canonical form and
  let $\wt H_4 \in \R^{n_4\times k_2}$ and $\wt H_6\in\R^{n_6\times k_3}$ denote the coefficient
  matrices. We can obtain the unfolded moments $\wt X_4$ and $\wt X_6$ if the coefficient matrices
  have full column rank.

  % With probability 1 (over the perturbation) both systems of linear equations have unique solution
  % $\wt Y_4$ and $\wt Y_6$. The corresponding moments $\wt X_4 = U\wt Y_4U^\top$ and $\wt X_6 = \wt
  % Y_6(U^\top, U^\top, U^\top)$ are exactly equal to the unfolded moments (defined in Equation
  % (\ref{eq:def-X4-X6})).
\end{condition}

We show with smoothed analysis that the smallest singular value of the two coefficient matrices are
lower bounded with high probability:

\begin{lemma}[Correctness]%[smooth]
\label{lem:2smooth}
% Rewrite the system of linear equations $\ol M_4 = \mc F_{4}(U\wt Y_4U^\top)$ in the canonical
% form and extract the coefficient matrix $\wt H_4 \in \R^{n_4\times k_2}$.
With high probability over the parameter random perturbation, the $k_2$-th singular value of the
coefficient matrix $\wt H_4$ is at least $\Omega(\rho^2n/k)$, and the $k_3$-th singular value of the
coefficient matrix $\wt H_6$ is at least $\Omega(\rho^3(n/k)^{1.5})$.
% the linear equation system $\ol M_6 = \mc F_{6}(\wt Y_6(U^\top,U^\top,U^\top))$
\end{lemma}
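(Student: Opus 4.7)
I would first observe that the coefficient matrix $\wt H_4$ implements the linear map $\mathrm{svec}(Y)\mapsto \sqrt{3}\,\mc F_4(UYU^\top)$ (with the scaled symmetric vectorization so $\|\mathrm{svec}(Y)\|=\|Y\|_F$), and that replacing the orthonormal basis $U$ by $UW$ for orthogonal $W$ only unitarily transforms $\wt H_4$. Hence $\sigma_{k_2}(\wt H_4)$ is an intrinsic quantity of the subspace $\mc U$, characterized variationally by
\begin{equation*}
\sigma_{k_2}(\wt H_4) \;=\; \sqrt{3}\min_{Y\in\R^{k\times k}_{sym},\,\|Y\|_F=1}\big\|\mc F_4(UYU^\top)\big\|.
\end{equation*}
The task is therefore to show that the $k_2$-dimensional subspace $\{UYU^\top:Y\in\R^{k\times k}_{sym}\}\subset\R^{n_2\times n_2}_{sym}$ is quantitatively far from $\ker(\mc F_4)$, with gap $\Omega(\rho^2 n/k)$.

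\textbf{Switch to the perturbed basis.} To expose the smoothed randomness, I would pass from the orthonormal $U$ to the natural (non-orthonormal) basis $V=[\vc(\wt\Sigma^{(1)}),\dots,\vc(\wt\Sigma^{(k)})]\in\R^{n_2\times k}$. A standard smoothed-analysis argument for Gaussian-perturbed $n_2\times k$ matrices (with $n_2\gg k$) gives $\sigma_k(V)\ge\Omega(\rho\sqrt{n})$ with high probability, so transferring $\sigma_{k_2}$ between the $U$-basis and the $V$-basis costs only the polynomial factor $\sigma_{\min}(V)^{-2}$. In the $V$-basis the resulting coefficient matrix $\wt H_4'$ has explicit entries: by Claim~\ref{claim:M4-proj4} (Isserlis), the column indexed by $(\ell,m)$ with $\ell\le m$ and the row indexed by $j_1<j_2<j_3<j_4$ carries the degree-two polynomial
\begin{equation*}
\wt\Sigma^{(\ell)}_{j_1,j_2}\wt\Sigma^{(m)}_{j_3,j_4}+\wt\Sigma^{(\ell)}_{j_1,j_3}\wt\Sigma^{(m)}_{j_2,j_4}+\wt\Sigma^{(\ell)}_{j_1,j_4}\wt\Sigma^{(m)}_{j_2,j_3}+(\ell\leftrightarrow m)
\end{equation*}
in the independent Gaussian perturbations $\{\Delta_i\}$. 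The problem is now a quantitative smoothed-analysis bound on $\sigma_{k_2}$ of this structured polynomial matrix.

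\textbf{Constructing a witness submatrix.} To lower bound $\sigma_{k_2}(\wt H_4')$ by $\Omega(\rho^2 n/k)$, I would select for each column $(\ell,m)$ a small family of row-tuples $(j_1,j_2,j_3,j_4)$ that isolate the ``$(\ell,m)$-contribution.'' Concretely, choosing $j_1,j_2$ to lie in indices where $\wt\Sigma^{(\ell)}$ was perturbed and $j_3,j_4$ to lie in indices where $\wt\Sigma^{(m)}$ was perturbed (using disjoint index sets across columns to the extent possible, which is feasible since $n\ge\Omega(k^2)$ gives a factor $n/k$ of combinatorial room per column), makes the ``diagonal'' monomial $\wt\Sigma^{(\ell)}_{j_1,j_2}\wt\Sigma^{(m)}_{j_3,j_4}$ dominate the cross-terms. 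A leave-one-out argument over the $k_2$ columns, combined with Carbery--Wright-type anticoncentration for degree-two Gaussian polynomials (applied to the projection of each column onto the orthogonal complement of the span of the other columns), gives a $k_2\times k_2$ well-conditioned submatrix with smallest singular value at least $\Omega(\rho^2 n/k)$, after a union bound over the $k_2=O(k^2)$ columns.

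\textbf{Main obstacle, and extension to $\wt H_6$.} The hardest step is negotiating the combinatorial symmetries built into $\mc F_4$ by Isserlis' theorem: the three perfect matchings on $\{j_1,j_2,j_3,j_4\}$ create algebraic identities among the rows of $\wt H_4'$, so naive row-selection can produce degenerate submatrices. Identifying index tuples that contribute ``matching-independent'' constraints (which is where the exponent $\rho^2$ and the combinatorial factor $n/k$ enter together) is the crux. For $\wt H_6$, the identical template applies: Isserlis gives each entry as a sum over the $15$ perfect matchings on $6$ elements, each contributing a degree-three monomial in the perturbations, so the anticoncentration step is for degree-three Gaussian polynomials (yielding $\rho^3$); moreover the row-index set now has size $n_6=\Theta(n^6)$ versus $n_4=\Theta(n^4)$ while the number of columns is only $k_3=\Theta(k^3)$, providing an additional combinatorial factor of $\sqrt{n/k}$ and producing the bound $\Omega(\rho^3(n/k)^{1.5})$.
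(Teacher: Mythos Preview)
Your setup is right and matches the paper: the change from the orthonormal basis $U$ to the natural basis $V=\wt\Sigma$ is exactly the paper's factorization $\wt H_4=\wt A_4\wt B_4$ (Lemma~\ref{lem:struct4}), with your $\sigma_k(V)$ bound playing the role of Claim~\ref{clm:h2}. The divergence is in how you bound $\sigma_{k_2}(\wt A_4)$.

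There is a genuine gap in your ``witness submatrix'' step. First, the phrase ``indices where $\wt\Sigma^{(\ell)}$ was perturbed'' does not pick out anything: every off-diagonal entry of every $\wt\Sigma^{(i)}$ is perturbed, so restricting to a row set $(j_1,j_2,j_3,j_4)$ does not isolate column $(\ell,m)$ from column $(\ell',m')$---both columns have random entries in every row, built from the same index tuple but different component pairs. The ``disjoint index sets across columns'' heuristic therefore does not decouple the columns. Second, the leave-one-out plus Carbery--Wright plan has a dependence problem you do not address: column $(\ell,m)$ shares the random matrix $\wt\Sigma^{(\ell)}$ with every column $(\ell,m')$, so the span of the remaining columns is not independent of the left-out one, and Carbery--Wright (which needs a fixed polynomial in fresh Gaussians) does not apply directly. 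Third, and most importantly, you never say how to neutralize the adversarial part $\Sigma$ inside $\wt\Sigma=\Sigma+\rho E$: your entries are degree-two in $\wt\Sigma$, hence contain $\Sigma\od\Sigma$, $\Sigma\od E$, $E\od\Sigma$ terms, and an adversary can arrange these to create near-dependencies among columns that a row-selection argument alone will not undo.

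The paper handles all three issues by a single move: it projects $\wt A_4$ onto a symmetric off-diagonal subspace $P$ chosen orthogonal to $\Sigma\ot_{kr}I_{n_2}$ and $I_{n_2}\ot_{kr}\Sigma$, which kills every term involving the adversarial $\Sigma$ and leaves exactly $\rho^2\,P^\top(E\ot_{kr}E)_{\mathrm{uniq}}$. It then lower-bounds $\sigma_{k_2}$ of this purely random object by Gershgorin, showing the columns are nearly orthogonal: the diagonal of the Gram matrix is $\Omega(n_2^2)$ by a projection-of-$K$-concentrated-vectors lemma, and the off-diagonal entries are $o(n_2^2)$ by Lata{\l}a's moment bounds for Gaussian chaoses together with decoupling. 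This is a concentration (not anticoncentration) argument, and it sidesteps both the column-dependence and the adversarial-$\Sigma$ issues at once. The $\wt H_6$ case is the same template with a $3$-fold Kronecker product; the only new wrinkle is a degree-$3$ chaos term with repeated indices that Lata{\l}a plus decoupling does not cover, handled by a separate tensor-norm lemma.
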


To prove this lemma we rewrite the coefficient matrix as product of two matrices and bound their
smallest singular values separately. One of the two matrices corresponds to a projection of the
Kronecker product $\wt \Sigma\ot_{kr}\wt \Sigma$.  In the smoothed analysis setting, this matrix is
not necessarily incoherent. In order to provide a lower bound to its smallest singular value, we
further apply a carefully designed projection to it, and then we use the concentration bounds for
Gaussian chaoses to show that after the projection its columns are incoherent, finally we apply
Gershgorin's Theorem to bound the smallest singular value
%have small inner-products with each other.
 \footnote{Note that the idea of unfolding using system of linear
  equations also appeared in the work of \cite{jain2014learning}. However, in order to
  show the system of linear equations in their setup is robust, i.e., the coefficient matrix has
  full rank, they heavily rely on the {\em incoherence} assumption, which we do not impose in the
  smoothed analysis setting. }.

% When there is noise, we cannot hope the systems of linear equations to have any solution. Instead we
% try to solve the least squares problem and show the optimal point is robuts.

When implementing this step with the empirical moments, we solve two least squares problems instead
of solving the system of linear equations. Again using results in matrix perturbation theory and
using the lower bound of the smallest singular values of the two coefficient matrices, we show the
stability of the solution to the least squares problems:
\begin{lemma}[Stability]
\label{lem:2perturb}
Given the empirical moments $\wh M_4 = \wt M_4 + E_4$, $\wh M_6 = \wt M_6 + E_6$, and suppose that
the absolute value of entries of $E_4$ and $E_6$ are at most $\delta_1$.  Let $\wh U$, the output of
Step 1, be the estimation for the span of the covariance matrices, and suppose that $\|\wh U - \wt
U\|\le \delta_2$.
Let $\wh Y_4$ and $\wh Y_6$ be the least squares solution respectively.  When $\delta_1$ and
$\delta_2$ are inverse polynomially small, we have $\|\wt Y_4 - \wh Y_4 \|_F \le
O(\sqrt{n_4}(\delta_1+\delta_2/\sigma_{min}(\wt{H}_4))$ and $\|\wt Y_6 - \wh Y_6 \|_F \le
O(\sqrt{n_6}(\delta_1+\delta_2/\sigma_{min}(\wt{H}_6))$.
\end{lemma}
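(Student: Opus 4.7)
The plan is to reduce this stability claim to a standard perturbation bound for overdetermined linear systems, after carefully writing down how the coefficient matrix and the right-hand side each respond to the empirical perturbations. First I would rewrite the two equations in \eqref{eq:F4F6-2} in canonical vectorized form: fixing bases for symmetric $k\times k$ matrices and symmetric $k\times k\times k$ tensors yields coefficient matrices $H_4(U)\in\R^{n_4\times k_2}$ and $H_6(U)\in\R^{n_6\times k_3}$ (each a fixed linear function of $U\ot_{kr} U$, resp.\ $U\ot_{kr} U\ot_{kr} U$) such that
\[ H_4(\wt U)\,\vc(\wt Y_4) = \ol{\wt M}_4/\sqrt{3},\qquad H_6(\wt U)\,\vc(\wt Y_6) = \ol{\wt M}_6/\sqrt{15}, \]
while $\wh Y_4,\wh Y_6$ are the least-squares solutions of the corresponding systems with $\wh U$ and $\ol{\wh M}$ in place of $\wt U$ and $\ol{\wt M}$.

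Next I would bound the two sources of error separately. Since the entries of $E_4, E_6$ are at most $\delta_1$ in absolute value, converting entry-wise to $\ell_2$ gives
\[ \|\ol{\wh M}_4-\ol{\wt M}_4\|_2\le \sqrt{n_4}\,\delta_1,\qquad \|\ol{\wh M}_6-\ol{\wt M}_6\|_2\le \sqrt{n_6}\,\delta_1. \]
For the coefficient matrix, writing $\wh U=\wt U+\Delta$ with $\|\Delta\|\le\delta_2$ and telescoping the Kronecker expansion, together with $\|\wt U\|=\|\wh U\|=1$, should give $\|H_4(\wh U)-H_4(\wt U)\|=O(\delta_2)$ and $\|H_6(\wh U)-H_6(\wt U)\|=O(\delta_2)$ in spectral norm.

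Third, I would invoke the classical perturbation bound for least squares: whenever $\sigma_{\min}(A)>\|\wh A-A\|$, the minimizer $\wh x$ of $\min_x\|\wh A x - \wh b\|$ satisfies
\[ \|\wh x-x^\star\|_2 \le \frac{\|\wh b-b^\star\|_2 + \|\wh A-A\|\cdot\|x^\star\|_2}{\sigma_{\min}(A)-\|\wh A-A\|}. \]
Specializing to $A=\wt H_4$ and invoking Lemma~\ref{lem:2smooth} to guarantee $\sigma_{\min}(\wt H_4)=\Omega(\rho^2 n/k)$, the hypothesis that $\delta_1,\delta_2$ are inverse polynomial makes $\|\wh H_4-\wt H_4\|\ll \sigma_{\min}(\wt H_4)$, so the denominator is $\Omega(\sigma_{\min}(\wt H_4))$. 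Since $\|\vc(\wt Y_4)\|_2\le\|\wt Y_4\|_F\le \poly(n,k,1/\omega_o)$ (each $\wt\sigma^{(i)}$ has bounded norm because $\wt\Sigma^{(i)}$ does and $U$ is orthonormal), the advertised bound on $\|\wh Y_4-\wt Y_4\|_F$ follows; the argument for $\wh Y_6$ is identical with $(n_4,k_2,\wt H_4)$ replaced by $(n_6,k_3,\wt H_6)$.

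The main technical obstacle I anticipate is establishing the dimension-free estimate $\|H_m(\wh U)-H_m(\wt U)\|=O(\delta_2)$ in spectral norm: a naive Lipschitz argument would introduce a factor of $\sqrt{n_m}$ and overwhelm the intended bound. The saving grace is that $\mc F_m$, as a symmetrization arising from Isserlis' theorem, is a coordinate averaging whose operator norm is controlled by a fixed combinatorial constant (the number of perfect matchings on $m$ points); composing this with the smoothness of $U\mapsto U\ot_{kr}\cdots\ot_{kr} U$ under the orthonormality of $\wt U,\wh U$ should supply the required dimension-free Lipschitz constant.
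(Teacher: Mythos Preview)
Your proposal is correct and follows essentially the same route as the paper's proof (Lemma~\ref{prop:f4f6perturbate}): write $\wh Y_4=\wh H_4^\dag\,\ol{\wh M}_4$, bound $\|\ol{\wh M}_4-\ol{\wt M}_4\|\le\sqrt{n_4}\,\delta_1$, bound $\|\wh H_4-\wt H_4\|=O(\delta_2)$ via the Kronecker/Khatri--Rao expansion of $U\mapsto U^{\odot 2}$, and finish with a pseudo-inverse/least-squares perturbation. The ``technical obstacle'' you flag is exactly the point the paper handles, and more simply than you suggest: by Lemma~\ref{prop:two-linear-mapping} the maps $\mc F_4,\mc F_6$ are genuine orthogonal projections, so their operator norm is $1$ and the dimension-free Lipschitz bound $\|\wh H_4-\wt H_4\|\le\|\wh U^{\odot 2}-\wt U^{\odot 2}\|\le 3\delta_2$ is immediate---no combinatorial matching count is needed.
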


\paragraph{Step 3. Tensor Decomposition.}
%This step mostly follows from Anandkumar et~al.\cite{anandkumar2012tensor}.

\begin{claim}
  Given $\wt Y_4$, $\wt Y_6$ and $\wt U$, the symmetric tensor decomposition algorithm can correctly
  and robustly find the mixing weights $\wt \omega_i$'s and the vectors $\wt \sigma_i$'s, up to some
  unknown permutation over $[k]$, with high probability over both the randomized algorithm and the
  parameter perturbation.

  % Given $\wt Y_4$, $\wt Y_6$ and $\wt U$, with probability 1 the symmetric  tensor decomposition algorithm can
  % find weights $\wt \omega_i$'s and vectors $\wt \sigma_i$'s, such that up to permutation $\wt
  % \omega_i$ is the correct weight and $\tx{mat}(\wt U\wt \sigma_i) = \wt \Sigma^{(i)}$.
\end{claim}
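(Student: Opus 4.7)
\medskip
\noindent\textbf{Proof proposal.}
The plan is to reduce this to an instance of symmetric orthogonal tensor decomposition and then invoke the robust power-method analysis of \cite{anandkumar2012tensor}. First I would observe that in the coordinate system induced by $U$, the vectors $\wt\sigma^{(i)} = U^\top \vc(\wt\Sigma^{(i)})$ play the role of the component vectors, and that
\[
\wt Y_4 = \sum_{i=1}^k \wt\omega_i \wt\sigma^{(i)}(\wt\sigma^{(i)})^\top, \qquad \wt Y_6 = \sum_{i=1}^k \wt\omega_i (\wt\sigma^{(i)})^{\otimes 3}
\]
is exactly the standard pair (symmetric second-moment matrix, symmetric third-order tensor) to which the whitening-plus-power-iteration pipeline applies.

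Next I would carry out whitening. Form a whitening matrix $W\in\R^{k\times k}$ from the eigendecomposition of $\wt Y_4$ so that $W^\top \wt Y_4 W = I_k$; then the vectors $v^{(i)} = \sqrt{\wt\omega_i}\, W^\top \wt\sigma^{(i)}$ are orthonormal and
\[
\wt Y_6(W,W,W) = \sum_{i=1}^k \frac{1}{\sqrt{\wt\omega_i}}\, (v^{(i)})^{\otimes 3}.
\]
Running the robust tensor power method on this orthogonally decomposable tensor recovers each $v^{(i)}$ and its scalar weight $1/\sqrt{\wt\omega_i}$, hence $\wt\omega_i$; inverting the whitening then gives each $\wt\sigma^{(i)}$ (up to a common permutation and sign which is fixed by requiring $\wt\omega_i>0$). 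Finally, since $U$ is a known orthonormal basis, $\vc(\wt\Sigma^{(i)}) = U\wt\sigma^{(i)}$ reconstructs the covariance matrices.

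For correctness and robustness I need two singular-value lower bounds. The first is $\sigma_k$ of the matrix $\wt\sigma=[\wt\sigma^{(1)}\ \cdots\ \wt\sigma^{(k)}]$, which governs the conditioning of $\wt Y_4$ (and hence of $W$); the second is a polynomial lower bound on $\min_i \wt\omega_i$, which is assumed via $\wt\omega_i\ge \omega_o$. For the first bound I would use smoothed analysis: since $U$ is an orthonormal basis of $\mc U=\mathrm{span}\{\vc(\wt\Sigma^{(i)})\}$, we have $\sigma_k(\wt\sigma) = \sigma_k(\wt\Sigma)$ where $\wt\Sigma$ is the matrix whose columns are the vectorized covariance matrices. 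The Gaussian perturbations $\Delta_i$ on the off-diagonals inject $\Omega(n^2)$ independent $\mc N(0,\rho^2)$ directions into the otherwise arbitrary initial $\vc(\Sigma^{(i)})$, so standard anti-concentration / smallest-singular-value bounds for structured random matrices (of the same flavor as Lemma~\ref{lem:1asmooth}) give $\sigma_k(\wt\sigma) \ge \poly(\rho, 1/n, 1/k)$ with high probability.

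The main obstacle I anticipate is the robustness, not the existence, of the decomposition. When $\wt Y_4$ and $\wt Y_6$ are only known up to the inverse-polynomial errors propagated from Step~2 (Lemma~\ref{lem:2perturb}), the whitening matrix $W$ is perturbed, and the resulting tensor is only approximately orthogonally decomposable; the error amplifies through $1/\sigma_k(\wt\sigma)^{O(1)}$ and $1/\omega_o^{O(1)}$ factors. To control this I would invoke the perturbation bounds for robust tensor power iteration from \cite{anandkumar2012tensor} together with the matrix-perturbation bound on $W$ derived from $\sigma_k(\wt Y_4) \ge \omega_o\, \sigma_k(\wt\sigma)^2$. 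Plugging in the polynomial lower bounds above and the stability bounds from Step~2 shows the final estimates of $\wt\omega_i$ and $\wt\sigma^{(i)}$ are within any desired accuracy $\epsilon$ given $\poly(n,k,1/\omega_o,1/\rho,1/\epsilon)$ samples, completing the proof.
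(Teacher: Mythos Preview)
Your proposal is correct and follows essentially the same approach as the paper: whiten $\wt Y_6$ using the eigendecomposition of $\wt Y_4$, apply the robust tensor power method of \cite{anandkumar2012tensor}, and control conditioning via $\sigma_k(\wt Y_4)\ge \omega_o\,\sigma_k(\wt\Sigma)^2$ together with a smoothed-analysis lower bound on $\sigma_k(\wt\Sigma)$. The only point worth noting is that the paper explicitly works out the combined stability of whitening plus power iteration (their Theorem~\ref{thm:tensor-decomp-23}), remarking that this end-to-end perturbation analysis was not carried out in \cite{anandkumar2012tensor}; you correctly flag this as the main obstacle and sketch the right ingredients, so your plan is sound.
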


The algorithm and its analysis mostly follow the algorithm of symmetric tensor decomposition in
\cite{anandkumar2012tensor}, and the details are provided in Section~\ref{sec:step-3-zero} in the
appendix.
%(and the fact that $\wt \Sigma^{(i)}$'s are perturbed).

\paragraph{Proof Sketch for the Main Theorem of Zero-mean Case. }
  %Theorem~\ref{thm:main-zero-mean}}

Theorem~\ref{thm:main-zero-mean} follows from the previous smoothed analysis and stability analysis
lemmas for each step.

First, exploiting the randomness of parameter perturbation, the smoothed analysis lemmas show that
the deterministic conditions, which guarantee the correctness of each step, are satisfied with high
probability.
% First, the correctness of each step is guaranteed by the smoothed analysis lemmas. Since all the
% smoothed analysis lemmas rely on the randomness of the parameter perturbation (independent of the
% randomness in the sampling), we can use union bound to conclude that with high probability all the
% steps work correctly.
%
Then using concentration bounds of Gaussian variables, we show that with high probability over the
random samples, the empirical moments $\wh M_4$ and $\wh M_6$ are entrywise $\delta$-close to the
exact moments $\wt M_4$ and $\wt M_6$. In order to achieve $\epsilon$ accuracy in the parameter
estimation, we choose $\delta$ to be inverse polynomially small, and therefore the number of samples
required will be polynomial in the relevant parameters.
The stability lemmas show how the errors propagate only ``polynomially'' through the steps of the
algorithm, which is visualized in Figure~\ref{fig:flow}.

A more detailed illustration is provided in Section~\ref{sec:mainproofzero} in the appendix.

\begin{figure}
\centering
\includegraphics[width =6in]{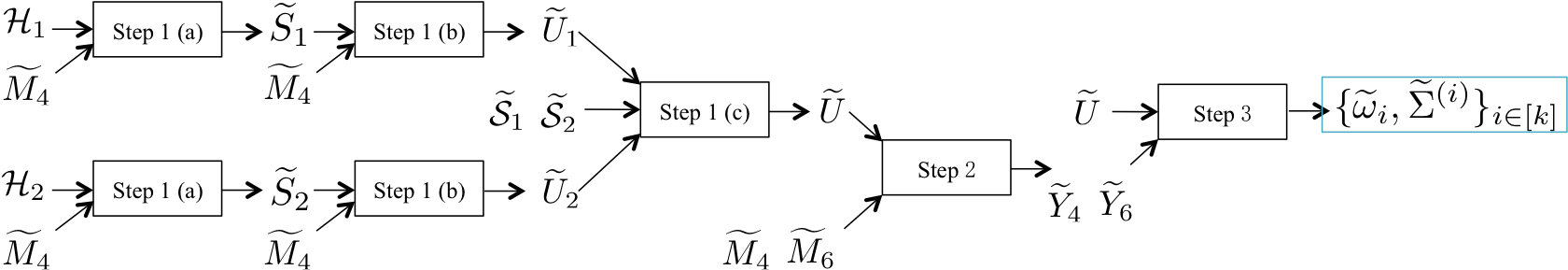}
\caption{Flow of the algorithm for learning mixture of  zero-mean Gaussians.}\label{fig:flow}
\end{figure}

\section{Algorithm Outline for Learning Mixture of General Gaussians}
\label{sec:general}

\begin{figure}
\centering
\includegraphics[width =7in]{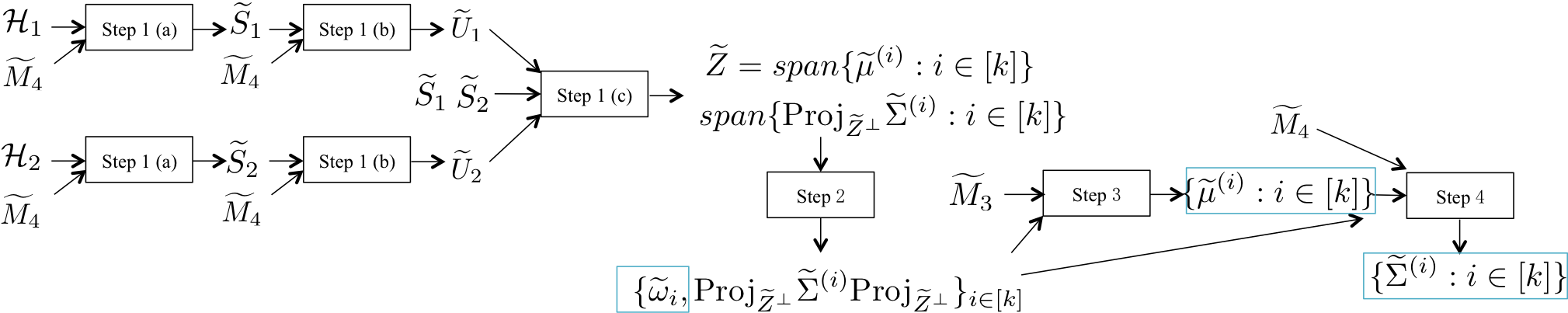}
\caption{Flow of the algorithm for learning mixtures of general Gaussians.}\label{fig:flow-general-maintext}
\end{figure}

In this section, we briefly discuss the algorithm for learning mixture of {\em general } Gaussians.
Figure~\ref{fig:flow-general-maintext} shows the inputs and outputs of each step in this algorithm.
Many steps share similar ideas to those of the algorithm for the zero-mean case in previous
sections. We only highlight the basic ideas and defer the details to Section~\ref{sec:general-case}
in the appendix.
% are very similar to the previous section.  The algorithm generalizes the insights obtained from
% the algorithm for the zero-mean case.

\paragraph{Step 1. Find $\wt Z = span \{\wt \mu^{(i)}:i\in[k]\}$ and $\wt \Sigma_o=span\{\prj_{\wt
    Z^\perp} \wt \Sigma^{(i)}\prj_{\wt Z^\perp}:i\in[k]\}$. }

Similar to Step 1 in the zero-mean case, this step makes use of the structure of the 4-th order
moments $\wt M_4$, and is achieved in three small steps:
 % However the one-dimensional slices of $\wt M_4$ now always contain vectors in
 % the span of $\wt \mu^{(i)}$'s.
 % Therefore the subspace from Step 1 (a) always contains the $\wt Z$, the span of $\wt\mu^{(i)}$'s.
%
%In Step 1 (b) we obtain the span of $\wt\Sigma^{(i)}$'s with columns projected to the orthogonal subspace of Step 1 (a),
%and we repeat 1 (a) and 1 (b) twice for two disjoint sets.
%
%However, even after merging the two projections, we can only get the span of $\wt \Sigma^{(i)}$'s projected to $\wt
%Z^\perp$.

\begin{enumerate}[label={\tb{(\alph*)}}, leftmargin=*,itemsep=0pt]
\item %Finding the span of columns in covariance matrices.\\
  For a \qq{subset $\mc H \subset [n]$ of size $|\mc H| =\sqrt{n}$},
  % \footnote{
  % Actually, we only need an upperbound estimate of $k$, the number of components, to
  % guide the selection of size of the set $\mc H$. The true $k$ can be obtained in Step
  % 1 (a), which is based on rank decomposition, and $k$ can be determined by the rank
  % which comes out naturally from the decomposition.  },
  find the span:
  \begin{align}
    \label{eq:def-mcS-general}
    \mc S = \tx{span}\lt\{\wt \mu^{(i)},\wt \Sigma^{(i)}_{[:,j]}: i\in[k],j\in \mc H\rt\} \subset
    \R^n.
  \end{align}
  % Let $k_s$ denote the dimension of the subspace $\mc S$.
  % Note that under Condition\ref{cond:1}, $|\mc S|\le k|\mc H| < n/3$.

\item Find the span of the covariance matrices with the columns projected onto $\mc S^{\perp}$,
  namely,
  \begin{align}
    \label{eq:def-mcUS-general}
    \mc U_S =\tx{span}\lt\{\vc(\prj_{S^{\perp}}\wt \Sigma^{(i)}) : i\in[k]\rt\} \subset \R^{n^2}.
  \end{align}

\item For disjoint subsets $\mc H_1$ and $\mc H_2$, repeat Step 1 (a) and Step 1 (b) to obtain $\mc
  U_1$ and $\mc U_2$, the span of the covariance matrices projected onto the subspaces $\mc
  S_1^{\perp}$ and $\mc S_2^{\perp}$.  The intersection of the two subspaces $\mc U_1$ and $\mc U_2$
  gives the span of the mean vectors $\wt Z = \tx{span}\lt\{\wt \mu^{(i)}, i \in [k]\rt\}$.  Merge
  the two subspaces $\mc U_1$ and $\mc U_2$ to obtain the span of the covariance matrices projected
  to the subspace orthogonal to $\wt Z$, namely $\wt \Sigma_o =\tx{span}\lt\{\prj_{\wt Z^\perp}\wt
  \Sigma^{(i)}\prj_{\wt Z^\perp}:i\in[k]\rt\}$.
  % of covariance matrices $\wt \Sigma^{(i)}$ . %, namely,
%    \begin{align}
%      \label{eq:def-mcU}
%      \mc U = \tx{span}\lt\{\wt \Sigma^{(i)}:i\in[k]\rt\}.
%    \end{align}
\end{enumerate}

\paragraph{Step 2. {Find the Covariance Matrices in the Subspace $\wt Z^{\perp}$} and the Mixing
  Weights $\wt \omega_i$'s.}
The key observation of this step is that when the samples are projected to the subspace orthogonal
to all the mean vectors, they are equivalent to samples from a mixture of zero-mean Gaussians with
covariance matrices $\wt \Sigma^{(i)}_o = \prj_{\wt Z^\perp} \wt \Sigma^{(i)}\prj_{\wt Z^\perp}$ and
with the same mixing weights $\wt \omega_i$'s.
Therefore, projecting the samples to $\wt Z^\perp$, the subspace orthogonal to the mean vectors, and
use the algorithm for the zero-mean case, we can obtain $\wt \Sigma^{(i)}_o$'s, the covariance
matrices projected to this subspace, as well as the mixing weights $\wt\omega_i$'s.

\paragraph{Step 3. {Find the means}}
With simple algebra, this step extracts the projected covariance matrices $\wt \Sigma_o^{(i)}$'s
from the $3$-rd order moments $\wt M_3$, the mixing weights $\wt \omega_i$ and the projected
covariance matrices $\wt \Sigma_o^{(i)}$'s obtained in Step 2.
% We use the $3$rd moment, the mixing weights and the span $\wt \Sigma_o$ obtained from Step
% 1 to recover all the means.
%
% The matricization of $3$rd moment tensor behaves like $\wt M_3 = \wt U D_{\wt \omega} \wt
% \Sigma_{o}^\top + \wt E$, where $\wt U\in \R^{n\times k}$ contains all the mean vectors $\wt
% \mu^{(i)}$, $D_{\wt \omega}=\diag(\wt\omega_i)$, and $\wt E$ is orthogonal to the span of
% $\Sigma_o^\top$.  We can simply compute the pseudo-inverse of $\wt \Sigma_o$, and $\wt M_3 (\wt
% \Sigma_o^\top)^\dag = \wt U D_{\wt \omega}$.  See Section~\ref{sec:general-case}.
%
% e show a matricization of the third moment, can be thought of as a product of means and
% covariances. Multiplying this matricization with a certain pseudoinverse of the covariance matrix
% gives the means.
% by the projected covariance matrices $\prj_{\mc Z^\perp} \Sigma^{(i)}\prj_{\mc Z^\perp}$ give enough information about the meansof The third moment tensor can be though of as outer-products of the means and the covariance matrices. We show it is possible to find a matrix $T^{(i)}$ that is orthogonal to all $\wt \Sigma^{(j)}$ but not $\wt \Sigma^{(i)}$, and when multiplying the third moment tensor with $T^{(i)}$ we directly get $\mu^{(i)}$.

\paragraph{Step 4. {Find the full covariance matrices}}
In Step 2, we obtained $\wt \Sigma^{(i)}_o$, the covariance matrices projected to the subspace
orthogonal to all the means. Note that they are equal to matrices $(\wt\Sigma^{(i)}+\wt
\mu^{(i)}(\wt\mu^{(i)})^\top)$ projected to the same subspace.
We claim that if we can find the span of these matrices ($(\wt\Sigma^{(i)}+\wt
\mu^{(i)}(\wt\mu^{(i)})^\top)$'s), we can get each matrix $(\wt\Sigma^{(i)}+\wt
\mu^{(i)}(\wt\mu^{(i)})^\top)$, and then subtracting the known rank-one component to find the
covariance matrix $\wt\Sigma^{(i)}$.
This is similar to the idea of merging two projections of the same subspace in Step 1 (c) for the
zero-mean case.
% In order to obtain the full matrices $(\wt\Sigma^{(i)}+\wt
% \mu^{(i)}(\wt\mu^{(i)})^\top)$'s % and thus the $\wt\Sigma^{(i)}$'s,
% we only need the information of $span\{(\wt\Sigma^{(i)}+\wt \mu^{(i)}(\wt\mu^{(i)})^\top):
% i\in[k]\}$

%is that given the output of the previous steps, we can
The idea of finding the desired span is to construct a $4$-th order tensor:
\begin{align*}
  \wt M'_4 = \wt M_4+2\sum_{i=1}^k \wt \omega_i(\wt \mu^{(i)}\otimes^{ 4}),
\end{align*}
which corresponds to the 4-th order moments of a mixture of zero-mean Gaussians with covariance
matrices $\wt \Sigma^{(i)}+\wt \mu^{(i)}(\wt \mu^{(i)})^\top$ and the same mixing weights
$\wt\omega_i$'s. Then we can then use Step 1 of the algorithm for the zero-mean case to obtain the
span of the new covariance matrices, i.e.  $span\{\wt\Sigma^{(i)}+\wt
\mu^{(i)}(\wt\mu^{(i)})^\top:i\in[k]\}$.

\section{Conclusion}

In this paper we give the first efficient algorithm for learning mixture of general Gaussians in the
smoothed analysis setting. In the algorithm we developed new ways of extracting information from
lower-order moment structure. This suggests that although the method of moments often involves
solving systems of polynomial equations that are intractable in general, for natural models there is
still hope of utilizing their special structure to obtain algebraic solution.
% \qt{can we comment on how heavily the procedures rely on the MOG assumption? same fragile
%   than moment matching?}

Smoothed analysis is a very useful way of avoiding degenerate examples in analyzing algorithms. In
the analysis, we proved several new results for bounding the smallest singular values of {\em
  structured} random matrices. We believe the lemmas and techniques can be useful in more general
settings.

Our algorithm uses only up to $6$-th order moments. We conjecture that using higher order moments
can reduce the number of dimension required to $n\ge \Omega(k^{1+\epsilon})$, or maybe even $n\ge
\Omega(k^{\epsilon})$.

\newpage
\subsection*{Acknowledgements}
We thank Santosh Vempala for many insights and for help in earlier
attempts at solving this problem.

% \bibliographystyle{plainnat}
% \bibliography{gmm-moments-smooth-v2}

% \bibliography{gmm.bib}

\appendix

\newpage
%\section{More Notations}
%\label{sec:more-notations}

\section{Moment Structures}
\label{sec:app:moment}

In this section we characterize the structure of the 3-rd, 4-th and 6-th moments of Gaussians mixtures.
%In particular, we give formulas for slices of the moment tensors, which are useful in future sections.

As described in Section~\ref{subsec:moment-structure-mog}, the $m$-th order moments of the Gaussian mixture model are given by the following $m$-th order
symmetric tensor $M\in\R^{n\times\dots\times n}_{sym}$:
\begin{align*}
  \lt[ M_m \rt]_{j_1,\dots, j_m}:= \mbb E\lt[x_{j_1}\dots x_{j_m}\rt] = \sum_{i=1}^{k}\omega_i  \mbb
  E\lt[y^{(i)}_{j_1}\dots y^{(i)}_{j_m}\rt], \quad \fa j_1,\dots,j_m\in[n],
\end{align*}
where $ y^{(i)}$ corresponds to the $n$-dimensional Gaussian distribution $\mc N(\mu^{(i)}, \Sigma^{(i)})$.
% $\mc N(\mu^{(i)}, \Sigma^{(i)})$.

Gaussian distribution is a highly symmetric distribution, and in the zero-mean case the higher moments are well-understood by Isserlis' Theorem:

\begin{theorem}[Isserlis]
\label{prop:isserlis}
Let $\mb y=(y_1,\dots, y_{2t})$ be a multivariate Gaussian random vector with mean zero and
covariance $\Sigma$, then
  \begin{align*}
    &\mbb E[y_1\dots y_{2t}] = \sum \prod \Sigma_{u,v},
    \\
    & \mbb E[y_{1}\dots y_{2t-1}] = 0,
  \end{align*}
  where the summation is taken over all distinct ways of partitioning $y_1,\dots, y_{2t}$ into $t$
  pairs, which correspond to all the perfect matchings in a complete graph. Thus there are $(2t-1)!!$
  terms in the sum, and each summand is a product of $t$ terms.
\end{theorem}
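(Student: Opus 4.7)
The plan is to prove Isserlis' theorem by induction on $t$ using the Gaussian integration-by-parts identity (Stein's lemma). Stein's lemma states that if $y \sim \mathcal{N}(0,\Sigma)$ and $f:\R^n \to \R$ is smooth with at most polynomial growth, then for each $i$,
\[
  \E[y_i \, f(y)] \;=\; \sum_{j=1}^n \Sigma_{ij}\,\E[\p_j f(y)].
\]
This identity is standard and follows from a direct computation after diagonalizing $\Sigma$: for a standard univariate Gaussian $Z$, integration by parts gives $\E[Z\, g(Z)] = \E[g'(Z)]$, and the multivariate version follows by writing $y = \Sigma^{1/2} Z$ and applying the chain rule.

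Given Stein's lemma, I would apply it with $f(y) = y_{i_2}\cdots y_{i_m}$ for indices $i_1,\dots,i_m \in [n]$ (with possible repetitions). Since $\p_j f$ is a sum over the positions $\ell \in \{2,\dots,m\}$ with $i_\ell = j$ of $y_{i_2}\cdots \widehat{y_{i_\ell}}\cdots y_{i_m}$, the identity produces the recurrence
\[
  \E[y_{i_1} y_{i_2} \cdots y_{i_m}] \;=\; \sum_{\ell=2}^{m} \Sigma_{i_1,i_\ell}\,\E\!\lt[\,\prod_{r \in \{2,\dots,m\}\setminus\{\ell\}} y_{i_r}\rt].
\]
This has the combinatorial interpretation of ``matching $y_{i_1}$ with some $y_{i_\ell}$ and then computing the expectation of the remaining $m-2$ factors.'' The base case is $\E[y_i y_j] = \Sigma_{ij}$, which follows directly from Stein applied to $f(y) = y_j$, together with $\E[y_i]=0$ (apply Stein with $f \equiv 1$).

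For the even case $m = 2t$, I would induct on $t$. Unrolling the recurrence, every term in $\E[y_{i_1}\cdots y_{i_{2t}}]$ corresponds to a choice of partner $i_{\ell_1}$ for $i_1$, then a partner for the smallest remaining index, and so on until all indices are paired. Each complete sequence of choices gives exactly one perfect matching $\{(u_1,v_1),\dots,(u_t,v_t)\}$ of $\{1,\dots,2t\}$, and the inductive hypothesis shows that the contribution equals $\prod_{s=1}^t \Sigma_{i_{u_s}, i_{v_s}}$. Summing over all matchings yields the formula $\E[y_1 \cdots y_{2t}] = \sum \prod \Sigma_{u,v}$. For the odd case $m = 2t-1$, the same recurrence expresses $\E[y_{i_1}\cdots y_{i_{2t-1}}]$ as a sum of expectations of products of $2t-3$ factors; by induction all such expectations vanish, with the base case $\E[y_i] = 0$.

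The only real subtlety is bookkeeping: I would need to verify that each perfect matching arises from exactly one execution of the recursive splitting (namely, the one that always pairs off the index with the smallest remaining label), so that no overcounting occurs. An equivalent and arguably cleaner route is to expand the moment generating function $M(s) = \E[e^{s^\top y}] = \exp(\tfrac12 s^\top \Sigma s)$ as a power series and identify coefficients of monomials $s_{i_1}\cdots s_{i_m}$; the expansion of the exponential automatically produces a sum over pairings, and the fact that $M(s)$ is even in $s$ immediately gives the vanishing of odd moments. Either route is routine; the combinatorial bookkeeping in matching up terms with perfect matchings is the only step requiring care.
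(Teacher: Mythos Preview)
Your proof is correct; both the Stein's-lemma recursion and the moment-generating-function expansion are standard routes to Isserlis' theorem, and your bookkeeping argument (always pairing off the smallest remaining index) correctly shows that each perfect matching is generated exactly once. However, the paper does not actually prove this theorem: it is stated as a classical result (attributed to Isserlis) and used as a black box throughout, so there is no ``paper's own proof'' to compare against.
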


The non-zero mean case is a direct corollary using Isserlis' Theorem and linearity of expectation.

\begin{corollary}
\label{prop:isserlisnonzero}
Let $\mb y=(y_1,\dots, y_{t})$ be a multivariate Gaussian random vector with mean $\mu$ and
covariance $\Sigma$, then
$$
    \mbb E[y_1\dots y_{t}] = \sum \prod \Sigma_{u,v}\prod \mu_w.
$$
where the summation is taken over all distinct ways of partitioning $y_1,\dots, y_{t}$ into $p$ pairs of $(u,v)$ and $s$ singletons of $(w)$, where
$p\ge 0$, $s\ge 0$ and $2p+s = t$.
%The terms $\Sigma_{u,v}$ correspond to the pairs and $\mu_w$ correspond to singletons.
\end{corollary}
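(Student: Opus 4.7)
The plan is to reduce to Isserlis' Theorem for the zero-mean case by writing each coordinate as mean plus a zero-mean Gaussian and expanding the product.

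Concretely, I would define $z_i = y_i - \mu_i$, so that $\mathbf{z} = (z_1,\dots,z_t)$ is a zero-mean multivariate Gaussian with covariance $\Sigma$ (recall that the covariance of $\mathbf{y}$ is defined as the covariance of $\mathbf{y} - \mu$). Then for each $i$, $y_i = \mu_i + z_i$, and distributing the product gives
\begin{align*}
\prod_{i=1}^{t} y_i \;=\; \prod_{i=1}^{t}(\mu_i + z_i) \;=\; \sum_{A \subseteq [t]} \Bigl(\prod_{j \notin A} \mu_j\Bigr)\Bigl(\prod_{i \in A} z_i\Bigr).
\end{align*}
Taking expectations and using linearity pulls the constants $\mu_j$ outside, yielding
\begin{align*}
\mathbb{E}\Bigl[\prod_{i=1}^{t} y_i\Bigr] \;=\; \sum_{A \subseteq [t]} \Bigl(\prod_{j \notin A} \mu_j\Bigr)\, \mathbb{E}\Bigl[\prod_{i \in A} z_i\Bigr].
\end{align*}

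Next I would apply Theorem \ref{prop:isserlis} to each inner expectation. When $|A|$ is odd, the contribution vanishes. When $|A| = 2p$ is even, Isserlis' Theorem expresses $\mathbb{E}[\prod_{i\in A} z_i]$ as the sum over all perfect matchings of the index set $A$, of the product $\prod \Sigma_{u,v}$ over the $p$ matched pairs $(u,v)$. Substituting back gives a double sum: over subsets $A \subseteq [t]$ of even size $2p$, and over perfect matchings of $A$, weighted by the singleton factors $\prod_{j \notin A} \mu_j$.

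The final step is just a bijection/regrouping: a choice of an even-sized subset $A$ together with a perfect matching on $A$ is exactly the same combinatorial object as a partition of $[t]$ into $p$ pairs $(u,v)$ and $s = t - 2p$ singletons $(w)$, where $A$ is the union of the pairs and $[t]\setminus A$ is the set of singletons. Re-indexing the sum over partitions of $[t]$ into pairs and singletons gives exactly
\begin{align*}
\mathbb{E}\Bigl[\prod_{i=1}^{t} y_i\Bigr] \;=\; \sum \prod \Sigma_{u,v} \prod \mu_w,
\end{align*}
as claimed. There is no real obstacle here beyond careful bookkeeping of the partition-counting; the content of the corollary is entirely driven by Isserlis' Theorem plus linearity of expectation, so the proof is a short computation rather than a technical argument.
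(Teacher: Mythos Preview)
Your proposal is correct and matches the paper's approach: the paper does not give a detailed proof but simply states that the non-zero mean case is a direct corollary using Isserlis' Theorem and linearity of expectation, which is exactly the expand-and-match argument you wrote out.
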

As an example, $\E[y_1y_2y_3] = \mu_1\mu_2\mu_3 +
\mu_1\Sigma_{2,3}+\mu_2\Sigma_{1,3}+\mu_3\Sigma_{1,2}$.

\subsection{Proof of Lemma~\ref{prop:two-linear-mapping}}
We shall first prove Lemma~\ref{prop:two-linear-mapping} in
Section~\ref{subsec:moment-structure-mog}. Recall that this lemma shows that for mixture
of zero-mean Gaussians, the 4-th moments $\ol M_4$ and the 6-th moments $\ol M_6$ with
distinct indices can be viewed as a linear projection of
the unfolded moment $X_4$ and $X_6$ defined in \eqref{eq:def-X4-X6}.

\begin{proof}
  (of Lemma~\ref{prop:two-linear-mapping})

  By Isserlis Theorem~\ref{prop:isserlis}, the mapping $\sqrt{3}\mc F_4$ is characterized by: ($\fa 1\le j_1< j_2<
  j_3< j_4\le n$)
  \begin{align*}
    [M_{4}]_{j_1,j_2,j_3,j_4} &= \sum_{i=1}^{k} \omega_i(
    \Sigma_{j_1,j_2}^{(i)}\Sigma_{j_3,j_4}^{(i)} + \Sigma_{j_1,j_3}^{(i)}\Sigma_{j_2,j_4}^{(i)} +
    \Sigma_{j_1,j_4}^{(i)}\Sigma_{j_2,j_3}^{(i)})
    \\
    &= [X_{4}]_{(j_1,j_2), (j_3,j_4)} + [X_{4}]_{(j_1,j_3), (j_2,j_4)} + [X_{4}]_{(j_1,j_4),
      (j_2,j_3)}.
  \end{align*}
  Therefore, with the normalization constant $\sqrt{3}$, the $(j_1,j_2,j_3,j_4)$-th
  mapping of $\mc F_4$ is a projection of the three elements in $X_4$.
  Similarly, we have for $\sqrt{15}\mc F_6$: ($\fa 1\le j_1< j_2<\dots< j_6\le n$)
  \begin{align*}
    &[M_{6}]_{j_1,j_2,j_3,j_4,j_5,j_6}
    \\
    =& [X_{6}]_{(j_1,j_2), (j_3,j_4), (j_5,j_6)} + [X_{6}]_{ (j_1,j_3, (j_2,j_4), (j_5,j_6)} +
    [X_{6}]_{ (j_1,j_4), (j_2,j_3), (j_5,j_6)} + [X_{6}]_{ (j_1,j_5), (j_2,j_3), (j_4,j_6)}
    \\
    &+ [X_{6}]_{ (j_1,j_2), (j_5,j_3), (j_4,j_6)} + [X_{6}]_{ (j_1,j_3), (j_2,j_5), (j_4,j_6)} +
    [X_{6}]_{ (j_1,j_2), (j_4,j_5), (j_3,j_6)} + [X_{6}]_{ (j_1,j_4), (j_2,j_5), (j_3,j_6)}
    \\
    &+ [X_{6}]_{ (j_1,j_5), (j_2,j_4), (j_3,j_6)} + [X_{6}]_{ (j_1,j_3), (j_4,j_5), (j_2,j_6)} +
    [X_{6}]_{ (j_1,j_4), (j_3,j_5), (j_2,j_6)} + [X_{6}]_{ (j_1,j_5), (j_3,j_2), (j_2,j_6)}
    \\
    &+ [X_{6}]_{ (j_2,j_3), (j_4,j_5), (j_1,j_6)} + [X_{6}]_{ (j_2,j_4), (j_3,j_5), (j_1,j_6)} +
    [X_{6}]_{ (j_2,j_5), (j_3,j_4), (j_1,j_6)}.
  \end{align*}
  Thus with the normalization constant $\sqrt{15}$, the mapping $\mc F_6$ is a linear projection.
\end{proof}

\subsection{Slices of Moments}

Next we shall characterize the slices of the moments of mixture of Gaussians.

For mixture of zero-mean Gaussians, a one-dimensional slice of the 4th moment tensor is a
vector in the span of corresponding columns of the covariance matrices:

\begin{claim}
[Claim~\ref{claim:M4-proj4} restated]
For a mixture of zero-mean Gaussians, the one-dimensional slices of the 4-th moments $M_4$
are given by:
  \begin{align*}
      M_4(e_{j_1},e_{j_2},e_{j_3},I) = \sum_{i=1}^k   \omega_i\left(  \Sigma^{(i)}_{j_1,j_2}
        \Sigma^{(i)}_{[:,j_3]}+  \Sigma^{(i)}_{j_1,j_3}
        \Sigma^{(i)}_{[:,j_2]}+  \Sigma^{(i)}_{j_2,j_3}   \Sigma^{(i)}_{[:,j_1]}\right),
    % \wt M_4(e_{j_1},e_{j_2},e_{j_3},I) = \sum_{i=1}^k\wt  \omega_i\left(\wt \Sigma^{(i)}_{j_1,j_2}
    %   \wt \Sigma^{(i)}_{[:,j_3]}+\wt \Sigma^{(i)}_{j_1,j_3}
    %  \wt  \Sigma^{(i)}_{[:,j_2]}+\wt \Sigma^{(i)}_{j_2,j_3} \wt \Sigma^{(i)}_{[:,j_1]}\right),
     \quad\fa j_1,j_2,j_3\in [n].
  \end{align*}
\end{claim}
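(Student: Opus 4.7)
The plan is to derive the identity directly from Isserlis' theorem (Theorem~\ref{prop:isserlis}) applied component-by-component, and then repackage the resulting entrywise expression as the vectorial slice on the left-hand side.

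First, I would unpack the definition. By the mixture structure,
\begin{align*}
[M_4]_{j_1,j_2,j_3,j_4} \;=\; \mbb E[x_{j_1}x_{j_2}x_{j_3}x_{j_4}] \;=\; \sum_{i=1}^k \omega_i\, \mbb E\bigl[y^{(i)}_{j_1}y^{(i)}_{j_2}y^{(i)}_{j_3}y^{(i)}_{j_4}\bigr],
\end{align*}
where $y^{(i)}\sim \mc N(0,\Sigma^{(i)})$. Since each $y^{(i)}$ is zero-mean Gaussian, Isserlis' theorem with $2t=4$ gives exactly $(2\cdot 2-1)!!=3$ perfect matchings on $\{1,2,3,4\}$, namely $\{(1,2),(3,4)\}$, $\{(1,3),(2,4)\}$ and $\{(1,4),(2,3)\}$. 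Hence
\begin{align*}
\mbb E\bigl[y^{(i)}_{j_1}y^{(i)}_{j_2}y^{(i)}_{j_3}y^{(i)}_{j_4}\bigr] \;=\; \Sigma^{(i)}_{j_1,j_2}\Sigma^{(i)}_{j_3,j_4} + \Sigma^{(i)}_{j_1,j_3}\Sigma^{(i)}_{j_2,j_4} + \Sigma^{(i)}_{j_1,j_4}\Sigma^{(i)}_{j_2,j_3}.
\end{align*}

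Next I would translate the entrywise identity into the vector slice. By the multilinear mapping definition in Section~\ref{sec:notations}, the vector $M_4(e_{j_1},e_{j_2},e_{j_3},I)\in\R^n$ has its $j_4$-th coordinate equal to $[M_4]_{j_1,j_2,j_3,j_4}$. Substituting the expression above and collecting terms that vary with $j_4$, I get
\begin{align*}
\bigl[M_4(e_{j_1},e_{j_2},e_{j_3},I)\bigr]_{j_4} \;=\; \sum_{i=1}^k \omega_i\Bigl(\Sigma^{(i)}_{j_1,j_2}\,\Sigma^{(i)}_{j_3,j_4} + \Sigma^{(i)}_{j_1,j_3}\,\Sigma^{(i)}_{j_2,j_4} + \Sigma^{(i)}_{j_2,j_3}\,\Sigma^{(i)}_{j_1,j_4}\Bigr).
\end{align*}
Since $\Sigma^{(i)}_{j,j_4}$ is the $j_4$-th entry of the column $\Sigma^{(i)}_{[:,j]}$, ranging $j_4$ over $[n]$ recovers the entire vector on the left, and each of the three summands turns into a scalar multiple of a column of $\Sigma^{(i)}$. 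This yields exactly the claimed identity.

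There is essentially no obstacle — the argument is just Isserlis' theorem plus a careful indexing step converting the entrywise formula into the column-vector formula. The only place to be careful is the identification of slicing with the multilinear map convention: hitting three modes of $M_4$ with basis vectors and leaving the fourth mode as $I$ must return the vector whose $j_4$-coordinate is $[M_4]_{j_1,j_2,j_3,j_4}$, which is immediate from the definition of $X(V_1,V_2,V_3)$ given in Section~\ref{sec:notations}.
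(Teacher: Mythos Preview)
Your proposal is correct and follows essentially the same approach as the paper: both apply Isserlis' theorem entrywise to compute $[M_4]_{j_1,j_2,j_3,j_4}$ as the sum over the three perfect matchings, then identify the $j_4$-th entry of the slice $M_4(e_{j_1},e_{j_2},e_{j_3},I)$ with this expression to recover the column-vector formula.
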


\begin{proof}
By the definition of multilinear map, $  M_4(e_{j_1},e_{j_2},e_{j_3},I)$ is a vector
whose $p$-th entry is equal to $  M_4(e_{j_1},e_{j_2},e_{j_3},e_p)$. We can compute this
entry by Isserlis' Theorem:
\begin{align*}
    M_4(e_{j_1},e_{j_2},e_{j_3},e_p) = \sum_{i=1}^k   \omega_i\left(  \Sigma^{(i)}_{j_1,j_2}
        \Sigma^{(i)}_{[p,j_3]}+  \Sigma^{(i)}_{j_1,j_3}
        \Sigma^{(i)}_{[p,j_2]}+  \Sigma^{(i)}_{j_2,j_3}   \Sigma^{(i)}_{[p,j_1]}\right),
\end{align*}
% $$\wt M_4(e_{j_1},e_{j_2},e_{j_3},e_p)\sum_{i=1}^k\wt  \omega_i\left(\wt \Sigma^{(i)}_{j_1,j_2}
%       \wt \Sigma^{(i)}_{[p,j_3]}+\wt \Sigma^{(i)}_{j_1,j_3}
%      \wt  \Sigma^{(i)}_{[p,j_2]}+\wt \Sigma^{(i)}_{j_2,j_3} \wt \Sigma^{(i)}_{[p,j_1]}\right),$$
this directly implies the claim.
\end{proof}

For mixture of zero-mean Gaussians, a two-dimensional slice of the 4th moment $M_4$ is a
matrix, and it is a linear combination of the covariance matrices with some additive rank
one matrices:

\begin{claim}
  [Claim~\ref{claim:M4-proj34} restated] For a mixture of zero-mean Gaussians, the
  two-dimensional slices of the 4-th moment $ M_4$ are given by:
  \begin{align*}
      M_4(e_{j_1},e_{j_2},I,I) = \sum_{i=1}^k  \omega_i \left(  \Sigma^{(i)}_{j_1,j_2}
        \Sigma^{(i)}+  \Sigma^{(i)}_{[:,j_1]} (  \Sigma^{(i)}_{[:,j_2]})^\top+  \Sigma^{(i)}_{[:,j_2]}
      (  \Sigma^{(i)}_{[:,j_1]})^\top\right),
    % \wt M_4(e_{j_1},e_{j_2},I,I) = \sum_{i=1}^k\wt \omega_i \left(\wt \Sigma^{(i)}_{j_1,j_2}
    %  \wt  \Sigma^{(i)}+\wt \Sigma^{(i)}_{[:,j_1]} (\wt \Sigma^{(i)}_{[:,j_2]})^\top+\wt \Sigma^{(i)}_{[:,j_2]}
    %   (\wt \Sigma^{(i)}_{[:,j_1]})^\top\right),
    \quad\fa j_1,j_2\in [n].
  \end{align*}
\end{claim}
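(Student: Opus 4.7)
The plan is to mirror the proof of the preceding Claim~\ref{claim:M4-proj4}, working entry-by-entry on the matrix $M_4(e_{j_1}, e_{j_2}, I, I)$ and then recognizing the resulting sum as the claimed combination of matrix-valued expressions. By the definition of the multilinear map, the $(p,q)$-th entry of this slice is $M_4(e_{j_1}, e_{j_2}, e_p, e_q) = \mathbb{E}[x_{j_1} x_{j_2} x_p x_q]$, which by linearity over the mixture decomposes as $\sum_{i=1}^k \omega_i \,\mathbb{E}[y^{(i)}_{j_1} y^{(i)}_{j_2} y^{(i)}_p y^{(i)}_q]$.

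Next I would apply Isserlis' theorem (Theorem~\ref{prop:isserlis}) to each component. Since each $y^{(i)}$ is a zero-mean Gaussian with covariance $\Sigma^{(i)}$, the fourth moment splits into exactly three pairings of the indices $\{j_1, j_2, p, q\}$, giving
\[
\mathbb{E}[y^{(i)}_{j_1} y^{(i)}_{j_2} y^{(i)}_p y^{(i)}_q] = \Sigma^{(i)}_{j_1,j_2}\Sigma^{(i)}_{p,q} + \Sigma^{(i)}_{j_1,p}\Sigma^{(i)}_{j_2,q} + \Sigma^{(i)}_{j_1,q}\Sigma^{(i)}_{j_2,p}.
\]

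The last step is the identification of each pairing with an entry of one of the three matrices on the right-hand side of the claim. The first term $\Sigma^{(i)}_{j_1,j_2}\Sigma^{(i)}_{p,q}$ is immediately the $(p,q)$-entry of the scalar multiple $\Sigma^{(i)}_{j_1,j_2}\Sigma^{(i)}$. The second term $\Sigma^{(i)}_{j_1,p}\Sigma^{(i)}_{j_2,q}$ equals the $(p,q)$-entry of $\Sigma^{(i)}_{[:,j_1]}(\Sigma^{(i)}_{[:,j_2]})^\top$, using the symmetry $\Sigma^{(i)}_{j_1,p} = \Sigma^{(i)}_{p,j_1}$. The third term $\Sigma^{(i)}_{j_1,q}\Sigma^{(i)}_{j_2,p}$ similarly matches the $(p,q)$-entry of $\Sigma^{(i)}_{[:,j_2]}(\Sigma^{(i)}_{[:,j_1]})^\top$. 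Summing over $i$ and noting that the identity holds for every $(p,q)$ yields the claimed matrix identity.

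There is no real obstacle here: the proof is essentially a bookkeeping exercise once Isserlis' theorem is applied, with the only mild subtlety being the correct use of the symmetry of $\Sigma^{(i)}$ to recognize the two outer-product terms and to account for the fact that swapping $j_1$ and $j_2$ in the outer product gives the transposed rank-one matrix, which is why both $\Sigma^{(i)}_{[:,j_1]}(\Sigma^{(i)}_{[:,j_2]})^\top$ and its transpose appear as separate summands rather than being combined.
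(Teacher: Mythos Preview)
The proposal is correct and follows essentially the same approach as the paper: compute the $(p,q)$-entry of the slice via the multilinear map, apply Isserlis' theorem to each mixture component to obtain the three pairings, and identify the resulting terms with the three matrix summands. Your write-up is in fact slightly more detailed than the paper's, which simply states the entrywise formula and notes that it ``directly implies the claim.''
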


\begin{proof}
Again this follows from Isserlis' theorem. By definition of multilinear map this is a matrix whose $(p,q)$-th entry is equal to
$$  M_4(e_{j_1},e_{j_2},e_p,e_q) = \sum_{i=1}^k   \omega_i\left(  \Sigma^{(i)}_{j_1,j_2}
        \Sigma^{(i)}_{[p,q]}+  \Sigma^{(i)}_{j_1,p}
        \Sigma^{(i)}_{[q,j_2]}+  \Sigma^{(i)}_{j_2,p}   \Sigma^{(i)}_{[q,j_1]}\right),$$
% $$\wt M_4(e_{j_1},e_{j_2},e_p,e_q)\sum_{i=1}^k\wt  \omega_i\left(\wt \Sigma^{(i)}_{j_1,j_2}
%       \wt \Sigma^{(i)}_{[p,q]}+\wt \Sigma^{(i)}_{j_1,p}
%      \wt  \Sigma^{(i)}_{[q,j_2]}+\wt \Sigma^{(i)}_{j_2,p} \wt \Sigma^{(i)}_{[q,j_1]}\right),$$
     and this directly implies the claim.

\end{proof}

%\rg{Will add similar proofs for other claims}
%\qt{here}

Similarly, for mixture of general Gaussians, we prove the following claims:
\begin{claim} [Claim~\ref{claim:M4-mean} restated]
  For a mixture of general Gaussians, the $ (j_1,j_2,j_3)$-th one-dimensional slice of
  $M_4$ is given by:
  \begin{align*}
    M_4(e_{j_1},e_{j_2},e_{j_3}, I) =
    \sum_{i=1}^{n} \omega_i\Big(
    \mu_{j_1}^{(i)}\mu_{j_2}^{(i)}\mu_{j_3}^{(i)}\mu^{(i)} +
     \sum_{\pi\in\lt\{\substack{(j_1,j_2,j_3),\\ (j_2,j_3,j_1),\\
        (j_3,j_1,j_2)}\rt\}}
\lt(    \Sigma_{\pi_1,\pi_2}^{(i)}
    \Sigma_{[:,\pi_3]}^{(i)} + \mu_{\pi_1}^{(i)}\mu_{\pi_2}^{(i)}\Sigma_{[:,\pi_3]}^{(i)} +
    \Sigma_{\pi_1,\pi_2}^{(i)} \mu_{\pi_3}^{(i)}\mu^{(i) }
    \rt)
    \Big).
   \end{align*}
\end{claim}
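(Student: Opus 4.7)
The plan is to derive the claim directly from Corollary~\ref{prop:isserlisnonzero} (the non-zero-mean Isserlis identity), mirroring the approach used for Claim~\ref{claim:M4-proj4} in the zero-mean case. First, I would reduce the statement to a scalar computation: by the definition of the multilinear mapping, the $p$-th coordinate of $M_4(e_{j_1},e_{j_2},e_{j_3},I)$ is $M_4(e_{j_1},e_{j_2},e_{j_3},e_p)=\sum_{i=1}^{k}\omega_i\,\mathbb{E}[y^{(i)}_{j_1}y^{(i)}_{j_2}y^{(i)}_{j_3}y^{(i)}_p]$, where $y^{(i)}\sim\mathcal{N}(\mu^{(i)},\Sigma^{(i)})$. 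It then suffices to prove the per-component identity and sum over $i$.

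Next, I would apply Corollary~\ref{prop:isserlisnonzero} with $t=4$ to expand $\mathbb{E}[y^{(i)}_{j_1}y^{(i)}_{j_2}y^{(i)}_{j_3}y^{(i)}_p]$ as a sum over partitions of the four-element multiset $\{j_1,j_2,j_3,p\}$ into $p$ pairs and $s$ singletons with $2p+s=4$. The cases are exhaustive: $(p,s)=(0,4)$ contributes the single all-singleton term $\mu^{(i)}_{j_1}\mu^{(i)}_{j_2}\mu^{(i)}_{j_3}\mu^{(i)}_{p}$; $(p,s)=(1,2)$ contributes $\binom{4}{2}=6$ terms of the form $\Sigma^{(i)}_{a,b}\mu^{(i)}_{c}\mu^{(i)}_{d}$ for each 2-subset $\{a,b\}\subset\{j_1,j_2,j_3,p\}$; and $(p,s)=(2,0)$ contributes the three perfect matchings $\Sigma^{(i)}_{j_1,j_2}\Sigma^{(i)}_{j_3,p}+\Sigma^{(i)}_{j_1,j_3}\Sigma^{(i)}_{j_2,p}+\Sigma^{(i)}_{j_1,p}\Sigma^{(i)}_{j_2,j_3}$. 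This gives $1+6+3=10$ scalar summands.

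The final step is bookkeeping: I would reassemble these 10 scalar terms (as functions of the free index $p$) into the vector expression displayed in the claim. The $(0,4)$ contribution immediately yields the vector $\mu^{(i)}_{j_1}\mu^{(i)}_{j_2}\mu^{(i)}_{j_3}\mu^{(i)}$. The remaining nine terms partition naturally into three groups of three indexed by the cyclic rotations $\pi\in\{(j_1,j_2,j_3),(j_2,j_3,j_1),(j_3,j_1,j_2)\}$: within each $\pi$, the two-matching term $\Sigma^{(i)}_{\pi_1,\pi_2}\Sigma^{(i)}_{\pi_3,p}$ assembles into the vector $\Sigma^{(i)}_{\pi_1,\pi_2}\Sigma^{(i)}_{[:,\pi_3]}$, the one-pair term involving the pair $\{\pi_1,\pi_2\}$ assembles into $\mu^{(i)}_{\pi_1}\mu^{(i)}_{\pi_2}\Sigma^{(i)}_{[:,\pi_3]}$, and the one-pair term involving the pair $\{\pi_3,p\}$ assembles into $\Sigma^{(i)}_{\pi_1,\pi_2}\mu^{(i)}_{\pi_3}\mu^{(i)}$ (using symmetry $\Sigma^{(i)}_{\pi_3,p}=\Sigma^{(i)}_{p,\pi_3}$). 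The three cyclic permutations exactly cover each of the six $(1,2)$ partitions and each of the three perfect matchings once, so the regrouping is a bijection.

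The only real obstacle is this last combinatorial matching; in particular one must verify that each of the $6$ singleton-pair terms appears in exactly one of the three cyclic groups (as either the ``middle'' summand or the ``last'' summand, depending on whether $p$ is a singleton or is paired), and that no term is double-counted. Since the indices $\{j_1,j_2,j_3\}$ are treated cyclically and $p$ plays the role of the free coordinate, a short case check confirms the bijection. Summing over $i$ with weights $\omega_i$ then yields the stated identity.
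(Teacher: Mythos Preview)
Your approach is correct and essentially identical to the paper's: apply Corollary~\ref{prop:isserlisnonzero} to $\mathbb{E}[y^{(i)}_{j_1}y^{(i)}_{j_2}y^{(i)}_{j_3}y^{(i)}_p]$, enumerate the $1+6+3=10$ partitions of $\{j_1,j_2,j_3,p\}$ into pairs and singletons, and reassemble entrywise into the displayed vector. One small slip in your bookkeeping: you swapped the two one-pair cases --- the partition with pair $\{\pi_1,\pi_2\}$ (singletons $\pi_3,p$) yields $\Sigma^{(i)}_{\pi_1,\pi_2}\mu^{(i)}_{\pi_3}\mu^{(i)}$, while the partition with pair $\{\pi_3,p\}$ (singletons $\pi_1,\pi_2$) yields $\mu^{(i)}_{\pi_1}\mu^{(i)}_{\pi_2}\Sigma^{(i)}_{[:,\pi_3]}$ --- but the bijection you claim still holds once this is corrected.
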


\begin{proof}
  This is very similar to Claim~\ref{claim:M4-proj4} and follows from the corollary of
  Isserlis's theorem (Corollary~\ref{prop:isserlisnonzero}). There are 10 ways to
  partition the indices $\{j_1,j_2,j_3, j_4\}$ into pairs and singletons:
  $((j_1),(j_2),(j_3),(j_4))$, $((j_1,j_2), (j_3), (j_4))$, $((j_1,j_3), (j_2), (j_4))$,
  $((j_1,j_4), (j_2), (j_3))$, $((j_2,j_3), (j_1), (j_4))$, $((j_2,j_4), (j_1), (j_3))$,
  $((j_3,j_4), (j_1), (j_2))$, $((j_1,j_2), (j_3,j_4))$, $((j_1,j_3), (j_2,j_4))$,
  $((j_1,j_4), (j_2,j_3))$. From this enumeration, we can specify each element in the
  vector of the one-dimensional slice.
\end{proof}

\begin{claim}[Claim~\ref{claim:M3-proj} restated]
  For a mixture of general Gaussians, let the matrix $M_{3(1)}\in\R^{n\times n^2}$ be the
  matricization of $M_3$ along the first dimension.  The $j$-th row of $M_{3(1)}$ is
  given by:
  \begin{align*}
    [ M_{3(1)} ]_{[j,:]}
%    &= \Big[\vc(\mbb E[x_{j} x x^\top])\Big]    \nonumber    \\
%    \label{eq:M3-proj}
    &= \sum_{i=1}^{k} \omega_i\lt( \mu_j^{(i)} \vc(\Sigma^{(i)}) + \mu_j^{(i)} \mu^{(i)} \od
    \mu^{(i)} + \Sigma^{(i)}_{[:,j]}\od \mu^{(i)} + \mu^{(i)} \od \Sigma^{(i)}_{[:,j]} \rt)^\top.
  \end{align*}
\end{claim}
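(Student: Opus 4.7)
The plan is to apply Corollary~\ref{prop:isserlisnonzero} componentwise to compute the entries of $M_3$ for each Gaussian, then re-index the resulting expression to match the definitions of matricization, vectorization, and the Katri-Rao product laid out in Section~\ref{sec:notations}. Since each component of the mixture is an independent Gaussian and the 3-rd moment is a linear statistic, the mixing weights factor out cleanly and we only need to analyze a single Gaussian $y^{(i)} \sim \mathcal{N}(\mu^{(i)}, \Sigma^{(i)})$.

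First I would invoke Corollary~\ref{prop:isserlisnonzero} with $t=3$: the only admissible partitions of $\{y_{j_1}, y_{j_2}, y_{j_3}\}$ into pairs and singletons are the all-singleton partition (yielding $\mu_{j_1}\mu_{j_2}\mu_{j_3}$) and the three one-pair-one-singleton partitions (yielding $\mu_{j_1}\Sigma_{j_2,j_3}$, $\mu_{j_2}\Sigma_{j_1,j_3}$, $\mu_{j_3}\Sigma_{j_1,j_2}$). Summing against the mixing weights gives
\[
[M_3]_{j_1,j_2,j_3} = \sum_{i=1}^k \omega_i\bigl(\mu^{(i)}_{j_1}\mu^{(i)}_{j_2}\mu^{(i)}_{j_3} + \mu^{(i)}_{j_1}\Sigma^{(i)}_{j_2,j_3} + \mu^{(i)}_{j_2}\Sigma^{(i)}_{j_1,j_3} + \mu^{(i)}_{j_3}\Sigma^{(i)}_{j_1,j_2}\bigr).
\]

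Second, I would specialize to $j_1 = j$ and read off the $j$-th row of the mode-$1$ matricization, whose column index is $(j_2, j_3)$. Term by term I match each summand to a vector in $\R^{n^2}$ indexed by $(j_2, j_3)$ in the column-stacking order used by $\vc(\cdot)$: the term $\mu_j^{(i)}\Sigma^{(i)}_{j_2,j_3}$ is precisely the $(j_2, j_3)$ coordinate of the vector $\mu_j^{(i)}\vc(\Sigma^{(i)})$; the term $\mu_j^{(i)}\mu^{(i)}_{j_2}\mu^{(i)}_{j_3}$ matches $\mu_j^{(i)}\,\mu^{(i)}\odot\mu^{(i)}$ by the definition of the column-wise Katri-Rao product $a\odot b$ whose $(p,q)$ entry is $a_p b_q$; and the two remaining terms, using symmetry of $\Sigma^{(i)}$ to write $\Sigma^{(i)}_{j,j_3} = [\Sigma^{(i)}_{[:,j]}]_{j_3}$ and similarly for $\Sigma^{(i)}_{j,j_2}$, correspond respectively to $\Sigma^{(i)}_{[:,j]}\odot \mu^{(i)}$ and $\mu^{(i)} \odot \Sigma^{(i)}_{[:,j]}$, depending on which index governs the ``outer'' versus the ``inner'' coordinate of the Katri-Rao ordering.

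Finally I would transpose to put the expression into row form and collect terms across $i$, which yields exactly the claimed identity. The only real obstacle here is bookkeeping with index conventions: one must verify that the chosen ordering of $(j_2, j_3)$ under mode-1 matricization agrees with the column-stacking convention of $\vc$, so that the two Katri-Rao factors end up in the correct order in the last two summands. Once this indexing is pinned down, the claim is a direct rewriting of the formula given by Isserlis' corollary, with no additional analytic content required.
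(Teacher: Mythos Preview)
Your proposal is correct and takes essentially the same approach as the paper: both apply Corollary~\ref{prop:isserlisnonzero} to enumerate the four partitions of $\{j_1,j_2,j_3\}$ into pairs and singletons, then match the resulting four terms to the four summands in the claimed expression. The paper's proof is terser (it just lists the partitions and asserts the correspondence), while you spell out the index bookkeeping for $\vc$ and $\od$, but the argument is the same.
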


\begin{proof}
  Note that $ [ M_{3(1)} ]_{[j,:]} = \Big[\vc(\mbb E[x_{j} x x^\top])\Big] = \vc(\mbb
  E[x_j x\od x])$.  Again following the corollary of Isserlis's theorem
  (Corollary~\ref{prop:isserlisnonzero}, there are 4 ways to partition the indices $\{j_1,
  j_2, j_3\}$  into pairs and singletons: $((j_1), (j_2,j_3))$, $((j_1), (j_2), (j_3))$, $((j_1,j_2),(j_3))$, $((j_2),(j_1,j_3)$, and they
  correspond to the four terms in the summation.)
\end{proof}

% \begin{claim} (Claim~\ref{claim:M-ab})

%   Let $M_{4a}$ denote the 4-th order folded moments of a general \mog~parameterized by $\mc G_a =
%   (\omega_i,\ \mu^{(i)},\ \Sigma^{(i)}:i\in[k])$, and let $M_{4b}$ denote the 4-th order folded moments of a
%   zero mean
%   \mog~parameterized by $\mc G_b= (\omega_i,\ 0,\ (\Sigma^{(i)} + \mu^{(i)}\ot\mu^{(i)}):i\in[k])$. Then,
%   \begin{align*}
%     M_{4a} - \sum_{i=1}^{n} \omega_i \mu^{(i)}\ot^{4}= M_{4b}.
%   \end{align*}
% \end{claim}
% \begin{proof}

% \end{proof}

\subsection{Two mixtures with same $M_4$ but different $X_4$}
\label{sec:two-mixtures-with}

Since $M_4$ gives linear observations on the symmetric low rank matrix $X_4$, it is natural to wonder whether we can use matrix completion techniques to recover $X_4$ from $M_4$. Here we show this is impossible by giving a counter example: there are two mixture of Gaussians that generates the same 4th moment $M_4$, but has different $X_4$ (even the span of $\Sigma^{(i)}$'s are different).

By $((a,b),(c,d))$ we denote a $5\times 5$ matrix $A$ which has $2$'s on diagonals, and the only nonzero off-diagonal entries are $A_{a,b} = A_{b,a} = A_{c,d} = A_{d,c} = 1$. For example, $((1,2), (4,5))$ will be the following matrix:
$$
\left(\begin{array}{ccccc} 2&1& & & \\1&2& & &  \\ & & 2 & & \\ & & & 2 & 1\\ & & & 1 & 2\end{array}\right),
$$
where all the missing entries are 0's. Now we construct two mixtures of 3 Gaussians, all with mean 0 and weight $1/3$. The covariance matrices are $((1,2),(4,5)),((1,3),(2,5)),((1,4),(3,5))$ for the first mixture and $((1,2),(3,5)), ((1,3),(4,5)), ((1,4),(2,5))$ for the second mixture. These are clearly different mixtures with different span of $\Sigma^{(i)}$'s: in the first mixture, $\Sigma^{(i)}_{1,2} = \Sigma^{(i)}_{4,5}$ for all matrices, but this is not true for the second mixture.

These two mixture of Gaussians have the same 4th moment $M_4$. This can be checked by using Isserlis' theorem to compute the moments. Intuitively, this is true because all the pairs $(1,i)$ and $(i,5)$ appeared exactly twice in the covariance matrices for both mixtures; also, every 4-tuple $(1,i,j,5)$ appeared exactly once in the covariance matrices for both mixtures.

\section{Step 1: Span Finding}

\label{sec:step-1}
Recall that in Step 1 of the algorithm for learning mixture of zero-mean Gaussians, we find the span
of the covariance matrices in three small steps.  In this section, we prove the
correctness and the robustness of each step with smoothed analysis.

For completeness we restate each substep  and highlight the  key properties we need,
followed by the detailed proofs.

\subsection{Step 1(a). Finding $\mc S$, the span of a subset of columns of $\wt\Sigma^{(i)}$'s.}
\label{sec:step-1a}

\begin{algorithm}[h]
  \caption{FindColumnSpan}
  \label{alg:columnspan}
  \DontPrintSemicolon

  \textbf{Input:}  4-th order moments $M_4$, set of indices $\mc H$.

  \tb{Output:} $ span\{\Sigma^{(i)}_j:i\in[k], j\in \mc H\}$, represented by an
  orthonormal matrix $S\in \R^{n\times |\mc H|k}$.

  \BlankLine

  % \STATE Partition $\mc H$ into three parts $\mc H^{(1)},\mc H^{(2)},\mc H^{(3)}$ with
  % equal size.

  \STATE Let $Q$ be a matrix  of dimension $n\times |\mc H|^3$ whose columns are all
  of $M_4(e_{i_1},e_{i_2},e_{i_3},I)$,
   for $i_1,i_2,i_3\in\mc H$.
 % for $i_1\in \mc H^{(1)},i_2\in \mc H^{(2)}, i_3\in \mc H^{(3)}$.

  \STATE  Compute the SVD of $Q$: $Q = UDV^\top$.

  \BlankLine

  \tb{Return:} The first $k|\mc H|$ left singular vectors $S = [U_{[:,1]},\dots, U_{[:,k|\mc H|]}]$.

\end{algorithm}

In Step 1 (a), for any set $\mc H$ of size \qq{$\sqrt{n}$}, we want to show that the
one-dimensional slices of $M_4$ span the entire subspace $\mc S = \tx{span}\lt\{\wt
\Sigma^{(i)}_{[:,j]}: i\in[k],j\in \mc H\rt\}$, which is the span of a subset of the
columns in the covariance matrices.

Recall that in Claim~\ref{claim:M4-proj4} we showed:
  \begin{align*}
    \wt M_4(e_{j_1},e_{j_2},e_{j_3},I) = \sum_{i=1}^k\wt  \omega_i\left(\wt \Sigma^{(i)}_{j_1,j_2}
      \wt \Sigma^{(i)}_{[:,j_3]}+\wt \Sigma^{(i)}_{j_1,j_3}
     \wt  \Sigma^{(i)}_{[:,j_2]}+\wt \Sigma^{(i)}_{j_2,j_3} \wt \Sigma^{(i)}_{[:,j_1]}\right),
     \quad\fa j_1,j_2,j_3\in [n].
  \end{align*}

This in particular means when $j_1,j_2,j_3\in \mc H$, the vector $\wt
M_4(e_{j_1},e_{j_2},e_{j_3},I)$ is in $\mc S$. We need to show that the columns of the
matrix $Q$  indeed span the entire subspace $\mc S$.

It is sufficient to show that a subset of the column span the entire subspace. Form a
three-way even partition of the set $\mc H$, i.e., $|\mc H^{(1)}|= |\mc H^{(2)}|=|\mc
H^{(3)}|=|\mc H|/3 = \qq{\sqrt{n}/3}$, and only consider the one-dimensional slices of
$\wt M_4$ corresponding to the indices $j_i\in\mc H^{(i)}$ for $i = 1,2,3$.
In particular, we define matrix $\wt Q_S$ with these one-dimensional slices of $\wt M_4$:
\begin{align}
  \label{eq:Qs-def}
  \wt Q_S = \left[
   \big[ [\wt M_4(e_{j_1},e_{j_2},e_{j_3},I) :j_3\in\mc H^{(3)}]: j_2\in\mc H^{(2)} \big] : j_1\in\mc H^{(1)}
  \right]
  \in\mbb  R^{n\times (|\mc H|/3)^3}.
\end{align}
Define matrix $\wt P_S$ with the corresponding columns of the covariance matrices, forming a basis (although not orthogonal) of the desired
subspace $\mc S$:
\begin{align}
  \label{eq:Ps-def}
  \wt P_S = \left[ \big[[\wt\Sigma_{[:,j]}^{(i)}:i\in[k]]\ : j\in\mc H^{(l)} \big]: l = 1,2,3 \right]
  = \lt[ \wt\Sigma_{[:,\mc H^{(1)}]},\wt\Sigma_{[:,\mc H^{(2)}]}, \wt\Sigma_{[:,\mc H^{(3)}]} \rt]\in
  \mbb R^{n\times k|\mc H|}.
\end{align}

In the following two lemmas, we show that with high probability over the random perturbation, the column span of $\wt Q_S$ is
exactly equal to the column span of $\wt P_S$, and robustly so.

\begin{lemma}[Lemma~\ref{lem:1asmooth} restated]
  \label{prop:find-S}
  Given $\wt M_4$, the exact 4-th order moment of the $\rho$-smooth mixture of zero-mean
  Gaussians, for any subset $\mc H\in[n]$ with cardinality $|\mc H| = \qq{\sqrt{n}}$, let $\wt
  Q_S$ be the matrix defined as in \eqref{eq:Qs-def} with the one-dimensional slices of
  $\wt M_4$.
  For any $\epsilon>0$, and for some absolute constant $C_1, C_2, C_3 >0$, with probability at least
  $1-(C_1\epsilon)^{C_2 n}$, the $k|\mc H|$-th singular value of $\wt Q_S$ is bounded below by:
  \begin{align}
    \label{eq:find-S}
    \sigma_{k|\mc H|} (\wt Q_S) \ge C_3\omega_o\epsilon^2\rho^2 n.
  \end{align}
\end{lemma}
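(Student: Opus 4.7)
}
The plan is to factor $\wt Q_S\in\R^{n\times(|\mc H|/3)^3}$ as a product of structured (partially random) matrices, apply smoothed-analysis lower bounds on the smallest nonzero singular value of each factor separately, and multiply them. By Claim~\ref{claim:M4-proj4} every column of $\wt Q_S$ lies in the $k|\mc H|$-dimensional subspace $\mc S$ spanned by the columns of $\wt P_S$, and expanding the identity gives
\[
[\wt Q_S]_{[:,(j_1,j_2,j_3)]}=\sum_{i=1}^{k}\wt\omega_i\sum_{l=1}^{3}\wt\Sigma^{(i)}_{j_{\pi_l(1)},j_{\pi_l(2)}}\,\wt\Sigma^{(i)}_{[:,j_{\pi_l(3)}]},
\]
so one may read off a three--factor decomposition $\wt Q_S=\wt P_S\cdot D_\omega\cdot\wt R$, where $\wt P_S\in\R^{n\times k|\mc H|}$ is the basis matrix defined in \eqref{eq:Ps-def}, $D_\omega\in\R^{k|\mc H|\times k|\mc H|}$ is the diagonal matrix repeating the weights $\wt\omega_i$, and $\wt R\in\R^{k|\mc H|\times(|\mc H|/3)^3}$ is a sparse matrix whose $((i,j,l),(j_1,j_2,j_3))$ entry is the off-diagonal entry $\wt\Sigma^{(i)}_{j_a,j_b}$ (with $\{j_a,j_b\}=\{j_1,j_2,j_3\}\setminus\{j\}$) when $j_l=j$ and zero otherwise. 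Then $\sigma_{k|\mc H|}(\wt Q_S)\ge \sigma_{k|\mc H|}(\wt P_S)\cdot\omega_o\cdot\sigma_{k|\mc H|}(\wt R)$, reducing the problem to two smallest--singular--value bounds.

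For the first factor, the columns of $\wt P_S$ are Gaussian perturbations (variance $\rho^2$) of the arbitrary base columns of the original covariance matrices along the off-diagonal coordinates. Since $|\mc H|=\sqrt{n}$ and $n\ge C_1 k^2$ imply $n\gg k|\mc H|$, the matrix is tall and I plan to apply the smoothed-analysis smallest-singular-value estimate of Rudelson and Vershynin \cite{rudelson2009smallest}: after restricting to the $\binom{n-|\mc H|}{2}$ rows whose indices lie outside $\mc H$ (so that the corresponding off-diagonal entries are genuinely independent Gaussians), the columns become fully independent Gaussian vectors of variance $\rho^2$, and one obtains $\sigma_{k|\mc H|}(\wt P_S)\ge C\,\epsilon\,\rho\sqrt{n}$ except with probability $(C_1\epsilon)^{C_2 n}$. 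For the third factor $\wt R$, note that each column has only $3k$ nonzero entries and that, after an appropriate row permutation, $\wt R$ becomes block-diagonal--like with blocks indexed by $l\in\{1,2,3\}$ and $j\in\mc H^{(l)}$; within each block the nonzero entries are again independent perturbed Gaussians of variance $\rho^2$. Another application of the Rudelson--Vershynin bound (or simply a block-wise lower bound plus Gershgorin across blocks) gives $\sigma_{k|\mc H|}(\wt R)\ge C\,\epsilon\,\rho\sqrt{n}$ with the same kind of probability. Multiplying the three bounds and absorbing constants yields $\sigma_{k|\mc H|}(\wt Q_S)\ge C_3\,\omega_o\,\epsilon^2\,\rho^2\,n$ as claimed.

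The main obstacle is that $\wt P_S$ and $\wt R$ are functions of the \emph{same} Gaussian perturbations $\{\Delta_i\}$, so a naive product of the two singular-value bounds double-counts randomness and the claimed tail probability could degrade. My plan to handle this is a decoupling / conditioning argument: the entries of $\wt R$ only involve $\wt\Sigma^{(i)}_{j_a,j_b}$ with \emph{both} indices in $\mc H$, whereas the lower bound on $\sigma_{k|\mc H|}(\wt P_S)$ only needs the rows of $\wt P_S$ indexed by $p\in [n]\setminus\mc H$, which depend on entries $\wt\Sigma^{(i)}_{p,j}$ with $p\notin\mc H,\,j\in\mc H$. These two sets of Gaussians are disjoint and hence independent, so conditioning on $\wt R$ first and then applying the smallest-singular-value bound to the conditional $\wt P_S$ gives the desired independent product bound. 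A second subtlety is that $\wt R$ has many more columns than rows, so ``smallest nonzero singular value'' is really the $k|\mc H|$-th largest and must be accessed via its Gram matrix $\wt R\wt R^\top$; here the sparsity pattern of $\wt R$ makes $\wt R\wt R^\top$ close to block-diagonal, and the off-diagonal cross terms can be controlled by standard Gaussian anti-concentration, closing the argument.
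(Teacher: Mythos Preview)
Your factorization $\wt Q_S=\wt P_S\,D_\omega\,\wt R$ and your treatment of $\wt P_S$ (restrict to rows indexed outside $\mc H$, then apply Rudelson--Vershynin) match the paper exactly, and your decoupling observation---that $\wt R$ uses only entries $\wt\Sigma^{(i)}_{j_a,j_b}$ with both indices in $\mc H$ while the restricted $\wt P_S$ uses entries with one index outside $\mc H$---is correct and is precisely how the paper separates the two sources of randomness.

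The gap is in your bound for $\sigma_{k|\mc H|}(\wt R)$. The matrix $\wt B_S=\wt R^\top$ splits as $[\wt B^{(1)},\wt B^{(2)},\wt B^{(3)}]$, and each $\wt B^{(l)}$ is block-diagonal, but with respect to a \emph{different} partition of the rows (the blocks of $\wt B^{(l)}$ are the slabs of the cube $\mc H^{(1)}\times\mc H^{(2)}\times\mc H^{(3)}$ orthogonal to the $l$-th axis). There is no single permutation making $\wt R$ block-diagonal, and the Gram matrix $\wt R\wt R^\top$ is not close to block-diagonal either: the cross-block entries between rows in level $l$ and level $l'\ne l$ are sums of products $\wt\Sigma^{(i)}_{j_a,j_b}\wt\Sigma^{(i')}_{j_c,j_d}$, and since $\wt\Sigma=\Sigma+\rho E$ with $\Sigma$ adversarial, these terms can be $\Theta(1)$ rather than $O(\rho^2)$. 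So Gershgorin, or anti-concentration of the off-diagonal Gram entries, does not give a useful lower bound in the smoothed setting.

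The paper handles $\wt B_S$ by a recursive projection argument (its Lemma~\ref{lem:prjdiag}). One first uses the block-diagonal structure of $\wt B^{(1)}$: within each block (a slab $\{j_1\}\times\mc H^{(2)}\times\mc H^{(3)}$) one projects out the columns of $[\wt B^{(2)},\wt B^{(3)}]$ restricted to that slab, and bounds the $k$-th singular value of the projected block, which is a projection of the perturbed matrix $\wt\Sigma_{\mc H^{(2)},\mc H^{(3)}}$. The crucial point is that the three pieces $\wt B^{(1)},\wt B^{(2)},\wt B^{(3)}$ involve disjoint sets of perturbed entries (respectively $\wt\Sigma_{\mc H^{(2)},\mc H^{(3)}}$, $\wt\Sigma_{\mc H^{(1)},\mc H^{(3)}}$, $\wt\Sigma_{\mc H^{(1)},\mc H^{(2)}}$), so conditioning on $\wt B^{(2)},\wt B^{(3)}$ leaves the Gaussian randomness in $\wt B^{(1)}$ intact, and Lemma~\ref{lem:prj-rand-gaussian} applies to the projected block. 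One then recurses on $[\wt B^{(2)},\wt B^{(3)}]$ and finally on $\wt B^{(3)}$ alone. This is the missing idea in your sketch; once you incorporate it, the rest of your argument goes through.
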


In order to prove this lemma, we first  write $\wt{Q}_S$ as the product of three matrices.

\begin{figure}
\centering
\includegraphics[height=3.5in]{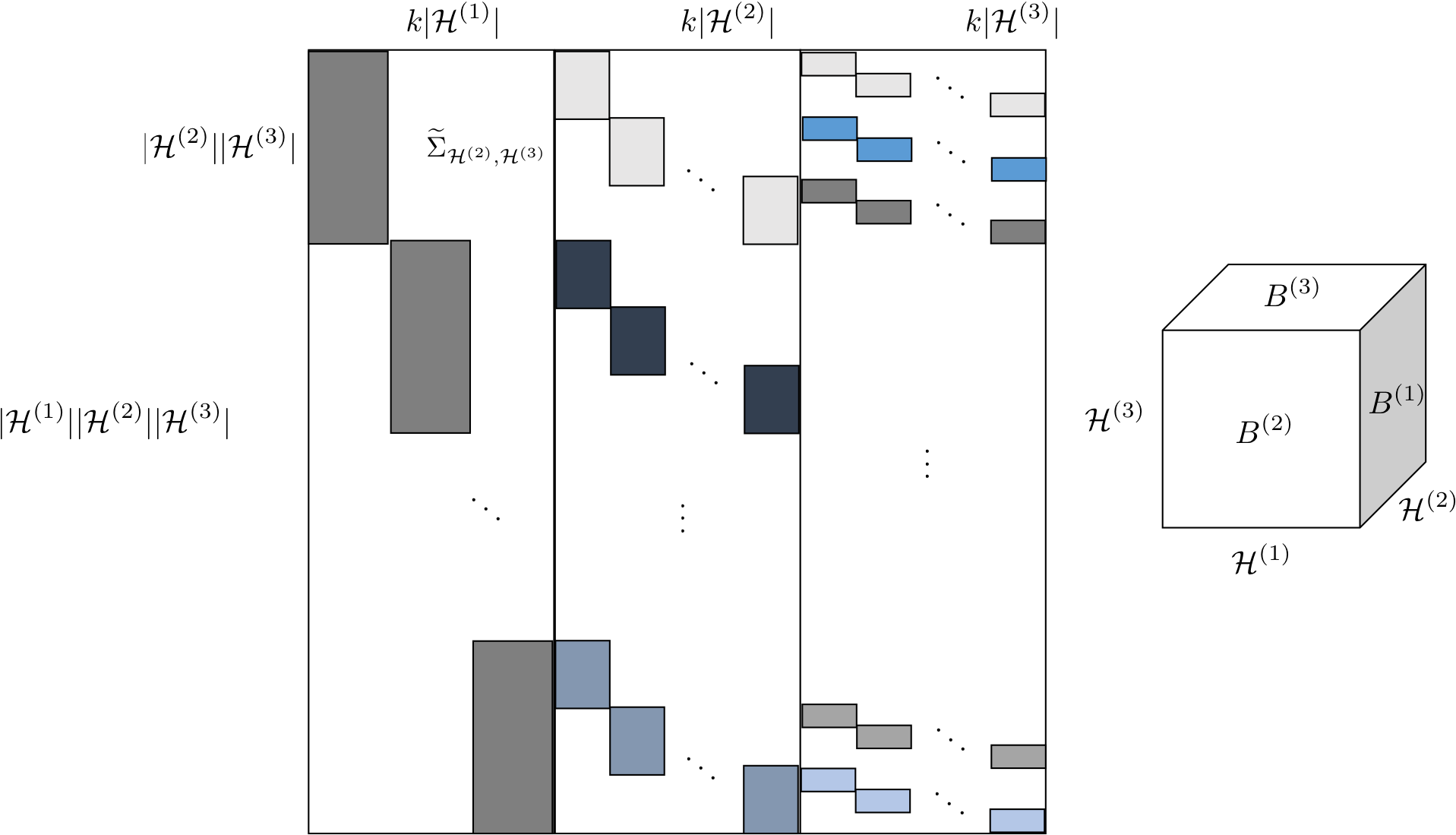}
\caption{Structure of the  matrix $B_S$}
\label{fig:bstruct}
\end{figure}

\begin{claim}[Structural]\label{claim:1astructure} Under the same assumptions of
  Lemma~\ref{prop:find-S}, the matrix $\wt Q_S$ can be written as
\begin{align}
  \label{eq:QsPsBs}
  \wt Q_S = \wt P_S \lt(D_{\wt \omega}\ot_{kr}I_{|\mc H|}\rt) (\wt B_S)^\top,
\end{align}
where $\wt P_S\in \R^{n\times k|\mc H|}$ as defined in Equation (\ref{eq:Ps-def} has
columns equal to the columns in $\wt \Sigma^{(i)}_{[:,\mc H]}$; the diagonal matrix in the
middle is the Kronecker product of two diagonal matrices and depends only on the mixing
weights $\wt \omega_i$'s.
%; the last matrix $\wt B_S$ has the special structure described as above.
\end{claim}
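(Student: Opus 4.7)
The plan is a direct structural verification that unpacks the Isserlis expansion from Claim~\ref{claim:M4-proj4} into matrix form. There is no analytic content here; the entire claim is bookkeeping that exhibits $\wt Q_S$ as the product of the column-span matrix $\wt P_S$, a diagonal weighting by the mixing weights, and a block-sparse coefficient matrix $\wt B_S$ whose nonzero entries are off-diagonal entries of the covariance matrices.

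First I would invoke Claim~\ref{claim:M4-proj4} to write the column of $\wt Q_S$ indexed by $(j_1,j_2,j_3)\in\mc H^{(1)}\times\mc H^{(2)}\times\mc H^{(3)}$ as
\[
\sum_{i=1}^k \wt\omega_i\Bigl(\wt\Sigma^{(i)}_{j_2,j_3}\,\wt\Sigma^{(i)}_{[:,j_1]} + \wt\Sigma^{(i)}_{j_1,j_3}\,\wt\Sigma^{(i)}_{[:,j_2]} + \wt\Sigma^{(i)}_{j_1,j_2}\,\wt\Sigma^{(i)}_{[:,j_3]}\Bigr).
\]
The crucial observation is that each of the $3k$ summands is a distinct column of $\wt P_S$: the factor $\wt\Sigma^{(i)}_{[:,j_\ell]}$ lives in the $\ell$-th horizontal block of $\wt P_S$ precisely because $j_\ell \in \mc H^{(\ell)}$ and the three index sets $\mc H^{(1)},\mc H^{(2)},\mc H^{(3)}$ are pairwise disjoint. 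Thus every column of $\wt Q_S$ is expressed as a linear combination of columns of $\wt P_S$, with coefficients equal to $\wt\omega_i$ times one off-diagonal entry of $\wt\Sigma^{(i)}$.

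Next I would peel off the two kinds of coefficients. The factor $D_{\wt\omega}\otimes_{kr} I_{|\mc H|}$, which is diagonal of dimension $k|\mc H|$, rescales each column of $\wt P_S$ (labelled by a component index $i$ and a column index $j$) by the mixing weight $\wt\omega_i$. The remaining covariance-entry coefficients are encoded in $\wt B_S$: I would index the rows of $\wt B_S$ by the triples $(j_1,j_2,j_3)$ and the columns by the triples $(\ell,i,j)$ with $\ell\in\{1,2,3\}$, $i\in[k]$, $j\in\mc H^{(\ell)}$ that label the columns of $\wt P_S$, and set
\[
(\wt B_S)_{(j_1,j_2,j_3),(\ell,i,j)} \;=\; \wt\Sigma^{(i)}_{j_a,j_b}\quad\text{if } j=j_\ell, \quad\text{and } 0\text{ otherwise,}
\]
where $\{a,b\}=\{1,2,3\}\setminus\{\ell\}$. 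This is exactly the three-block sparse structure drawn in Figure~\ref{fig:bstruct}: a horizontal concatenation of three blocks (one per choice of $\ell$), each of which is zero unless $j=j_\ell$.

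Finally I would multiply out the right-hand side of \eqref{eq:QsPsBs} and compare term by term with the expansion above: after identifying the ordering of the columns of $\wt P_S$ with the natural $(\ell,i,j)$-ordering of the Kronecker product $D_{\wt\omega}\otimes_{kr} I_{|\mc H|}$, the three factors reproduce exactly the Isserlis expansion of $\wt M_4(e_{j_1},e_{j_2},e_{j_3},I)$. The only obstacle is purely notational---keeping the index orderings of the three factors compatible---and once this is done the claim follows by inspection. The utility of the decomposition is that it prepares for Lemma~\ref{prop:find-S}, where bounding $\sigma_{k|\mc H|}(\wt Q_S)$ can be reduced to lower-bounding the smallest singular value of each of the three factors separately using the tools of random matrix theory.
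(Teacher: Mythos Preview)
Your proposal is correct and follows essentially the same approach as the paper: both invoke Claim~\ref{claim:M4-proj4}, read off each column of $\wt Q_S$ as a linear combination of $3k$ columns of $\wt P_S$, then factor the coefficients into the diagonal weight matrix and the block-sparse matrix $\wt B_S$. Your explicit indexing of the entries of $\wt B_S$ is if anything more detailed than the paper's own proof, which simply points to the structure and leaves the reader to verify; as you note, the only thing to be careful about is matching the column orderings of the three factors.
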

With the observation that the columns of $\wt P_S$ form a basis of the subspace $\mc S$,
and each column of $\wt Q_S$ is a linear combination of the columns in $\wt P_S$, the
rows  of $\wt B_S\in\R^{ (|\mc H|/3)^3 \times k|\mc H|}$ can be viewed as the
coefficients for the linear combinations, and has some special structures.
In particular, it consists of three blocks: $\wt B_S = \lt[ \wt B^{(1)}, \wt B^{(2)}, \wt
B^{(3)} \rt]$. The first tall matrix $\wt B^{(1)} \in\R^{(|\mc H|/3)^3 \times k(|\mc
  H|/3)}$, corresponding to the coefficient of the linear combinations on the subset of
basis $\wt \Sigma_{[:,\mc H^{(1)}]}$. By the indexing order of the columns in $\wt Q_S$,
the matrix $\wt B^{(1)}$ is block diagonal with identical blocks equal to $\wt \Sigma_{\mc
  H^{(2)}, \mc H^{(3)}}$, defined as follows:
\begin{align*} \wt \Sigma_{\mc H^{(2)},\mc
    H^{(3)}} = \lt[[\wt \Sigma^{(i)}_{j_1,j_2}: j_1\in\mc H^{(2)}, j_2\in\mc
  H^{(3)}]^\top: i\in[k] \rt] \in\R^{(|\mc H|/3)^2\times k}.
\end{align*}
With some fixed and known row permutation $\pi^{(2)}$ and $\pi^{(3)}$, the matrix $\wt
B^{(2)}$ and $\wt B^{(3)}$ can be made block diagonal with identical blocks equal to
$\wt\Sigma_{\mc H^{(1)}, \mc H^{(3)}}$ and $\wt \Sigma_{\mc H^{(1)}, \mc H^{(2)}}$,
respectively. Note that the three parts $\wt B^{(1)}, \wt B^{(2)}, \wt B^{(3)}$ do not
have any common entry, nor do they involve any diagonal entry of the covariance matrices,
therefore the three parts are independent when the covariances are randomly perturbed in
the smoothed analysis.

It is easier to understand the structure by picture, see Figure~\ref{fig:bstruct}. The
rows of the matrix should be indexed by $(j_1,j_2,j_3)\in\mc H^{(1)}\times \mc H^{(2)}
\times \mc H^{(3)}$, which can also be interpreted as a cube (in the right). The block structure in the first part $\wt B^{(1)}$ correspond to a slice in $\mc H^{(2)}\times \mc H^{(3)}$ direction (for each block, the element in $\mc H^{(1)}$ is fixed, the elements in $\mc H^{(2)}$ and $\mc H^{(3)}$ take all possible values). Similarly for $\wt B^{(2)}$ and $\wt B^{(3)}$ (as shown in figure).

\begin{proof} (of Claim~\ref{claim:1astructure} ) The proof of this claim is using
  Claim~\ref{claim:M4-proj4}, the definition of matrices and the rule of matrix
  multiplication. Consider the column in $\wt Q_S$ corresponding to the index
  $(j_1,j_2,j_3)$ for $j_1\in \mc H^{(1)}, j_2\in \mc H^{(2)}, j_3\in \mc H^{(3)}$, and
  the row of $\wt B_S$ together with the mixing wights specifies how this column is formed
  as a  linear combination
  of $3k$  columns of  $\wt P_S$.
  By the structure of $M_4$ in Claim~\ref{claim:M4-proj4}, the $(j_1,j_2,j_3)$-th row of $\wt
  B^{(1)}$ has exactly $k$ entries corresponding to $\wt \Sigma^{(i)}_{j_2,j_3}$ for $i\in [k]$,
  these entries are multiplied by $\wt \omega_i$ in the middle (diagonal) matrix. Therefore, these
  directly correspond to the $k$ terms in Claim~\ref{claim:M4-proj4}. Similarly the entries in $\wt
  B^{(2)}$ and $\wt B^{(3)}$ correspond to the other $2k$ terms.
\end{proof}

Using Claim~\ref{claim:1astructure}, we need to bound the smallest singular value for each of the
matrices in order to bound the $k|\mc H|$-th singular value of $\wt Q_S$, this is deferred to the
end of this part. The most important tool is a corollary (Lemma~\ref{lem:robust-sig-min}) of the
random matrix result proved in \cite{rudelson2009smallest}, which gives a
lowerbound on the smallest singular value of perturbed rectangular matrices.

By Lemma~\ref{prop:find-S}, we know $\wt Q_S$ has exactly rank $k|\mc H|$, and is robust
in the sense that its $k|\mc H|$-th singular value is large (polynomial in the amount of
perturbation $\rho$). By standard matrix perturbation theory, if we get $\wh Q_S$ close to
$ \wt Q_S$ up to a high accuracy (inverse polynomial in the relevant parameters), the top
$k|\mc H|$ singular vectors will span a subspace that is very close to the span of $\wt
Q_S$. We formalize this in the following lemma.

\begin{lemma}
[Lemma~\ref{lem:1aperturb} restated]
  \label{prop:bound-S-prj}
  Given the empirical estimator of the 4-th order moments $\wh M_4 = \wt M_4 + E_4$.  and suppose that the
  absolute value of entries of $E_4$ are at most $\delta$.  Let the columns of matrix $\wt S\in\mbb R^{n\times
    k|\mc H|}$ be the left singular vector of $\wt Q_S$, and let $\wh S$ be the corresponding matrix obtained
  with $\wh M_4$.
  Conditioned on the high probability event $\sigma_{k|\mc H|} ( \wt Q_S) >0$, for some
  absolute constant $C$ we have:
  \begin{align}
    \label{eq:bound-S-prj}
    %\sqrt{2n (|\mc H|/3)^3
    \|\prj_{\wh S} - \prj_{\wt S}\| \le { C n^{1.25} \over \sigma_{k|\mc H|} ( \wt Q_S)} \delta.
  \end{align}
\end{lemma}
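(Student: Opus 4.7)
\medskip
\noindent\textbf{Proof proposal.} The plan is to reduce this to a standard application of Wedin's $\sin\Theta$ theorem for singular subspaces, after controlling the spectral norm of the perturbation of $\widetilde{Q}_S$ induced by the entrywise error in $\widehat{M}_4$.

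First I would build $\widehat{Q}_S$ from $\widehat{M}_4$ in exactly the same way $\widetilde{Q}_S$ was built from $\widetilde{M}_4$, i.e. by stacking the one-dimensional slices $\widehat{M}_4(e_{j_1},e_{j_2},e_{j_3},I)$ for $(j_1,j_2,j_3)\in \mathcal{H}^{(1)}\times \mathcal{H}^{(2)}\times \mathcal{H}^{(3)}$. Then every entry of $E_Q := \widehat{Q}_S - \widetilde{Q}_S$ is a single entry of $E_4 = \widehat{M}_4 - \widetilde{M}_4$, hence has absolute value at most $\delta$. Since $E_Q$ has size $n\times (|\mathcal{H}|/3)^3 = n \times \Theta(n^{1.5})$, the crude bound $\|E_Q\|\le \|E_Q\|_F \le \delta\sqrt{n\cdot n^{1.5}/27}$ gives $\|E_Q\| = O(n^{1.25}\delta)$.

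Next I would invoke Wedin's $\sin\Theta$ theorem to the pair $(\widetilde{Q}_S, \widehat{Q}_S)$ with the splitting at the $k|\mathcal{H}|$-th singular value. By Lemma~\ref{prop:find-S}, $\widetilde{Q}_S$ has rank exactly $k|\mathcal{H}|$, so $\sigma_{k|\mathcal{H}|+1}(\widetilde{Q}_S)=0$ and the spectral gap between the top-$k|\mathcal{H}|$ and remaining singular values of $\widetilde{Q}_S$ equals $\sigma_{k|\mathcal{H}|}(\widetilde{Q}_S)$. Provided $\|E_Q\| < \sigma_{k|\mathcal{H}|}(\widetilde{Q}_S)/2$ (which holds for inverse polynomially small $\delta$, using the lower bound $\Omega(\omega_o\rho^2 n)$ from Lemma~\ref{prop:find-S}), Wedin's theorem yields
\begin{equation*}
\|\mathrm{Proj}_{\widehat{S}} - \mathrm{Proj}_{\widetilde{S}}\| \;\le\; \frac{C'\,\|E_Q\|}{\sigma_{k|\mathcal{H}|}(\widetilde{Q}_S)} \;\le\; \frac{C\,n^{1.25}\,\delta}{\sigma_{k|\mathcal{H}|}(\widetilde{Q}_S)},
\end{equation*}
which is exactly the claimed bound.

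I do not expect any genuine obstacle here: the work is entirely in (i) converting an entrywise error into a spectral-norm error on a rectangular matrix of the right aspect ratio, which costs the $n^{1.25}$ factor, and (ii) verifying that the gap condition needed by Wedin's theorem reduces to the singular value $\sigma_{k|\mathcal{H}|}(\widetilde{Q}_S)$ itself because $\widetilde{Q}_S$ is exactly low rank. The only mildly delicate point is that $\sin\Theta$ is usually stated for the distance of subspaces rather than the operator-norm distance of projections; these differ only by a constant, so are absorbed in the constant $C$.
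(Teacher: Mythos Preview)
Your proposal is correct and matches the paper's proof essentially line for line: bound $\|\widehat Q_S-\widetilde Q_S\|$ by its Frobenius norm $\le \sqrt{n\,(|\mathcal H|/3)^3}\,\delta = O(n^{1.25}\delta)$, then apply Wedin's theorem using that $\widetilde Q_S$ has exact rank $k|\mathcal H|$ so the relevant gap is $\sigma_{k|\mathcal H|}(\widetilde Q_S)$. The paper packages the last step as a direct corollary (its Lemma~\ref{cor:perturb-prj-bound}), but the content is identical to what you wrote.
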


\begin{proof}
  Note that the columns of $S$ are the leading left singular vectors of $Q_{S}$. We apply
  the standard matrix perturbation bound of singular vectors.
  Recall that $S$ is defined to be the first $k|\mc H|$ left singular vector of $Q_S$, and we have
  \begin{align*}
    \|\wh Q_S-\wt Q_S\|\le \|\wh Q_S-\wt Q_S\|_F &\le \sqrt { n (|\mc H|/3)^3\delta^2}.
  \end{align*}
  Therefore by Wedin's Theorem (in particular the corollary Lemma~\ref{cor:perturb-prj-bound}), we can conclude
  \eqref{eq:bound-S-prj}.
\end{proof}

Next, we prove Lemma~\ref{prop:find-S}.

\paragraph{Proof of Lemma~\ref{prop:find-S}}

We first use Claim~\ref{claim:1astructure} to write $ \wt Q_S = \wt P_S \lt(D_{\wt
  \omega}\ot_{kr}I_{|\mc H|}\rt) (\wt B_S)^\top$, note that the matrix $(D_{\wt
  \omega}\ot_{kr}I_{|\mc H|})$ has dimension $k|\mc H|\times k|\mc H|$, therefore we just
need to show with high probability each of the three factor matrix has large $k|\mc H|$-th
singular value, and that implies a bound on the $k|\mc H|$-th singular value of $\wt Q_S$
by union bound. The smallest singular value of $\wt P_S$ and $\wt B_S$ are bounded below by the
following two Claims.

\begin{claim}
With high probability $\sigma_{k|\mc H|} (\wt P_S) \ge \Omega(\rho\sqrt{n})$.
\end{claim}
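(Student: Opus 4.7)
The plan is to exploit the fact that almost all entries of $\wt P_S$ are independently perturbed Gaussians, and invoke the robust smallest singular value bound for rectangular Gaussian matrices (Lemma~\ref{lem:robust-sig-min} based on \cite{rudelson2009smallest}). The only obstruction is that the covariance matrices are symmetric and that their diagonal entries are not freshly perturbed; both issues touch a small set of rows.

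Concretely, I would restrict attention to the submatrix $\wt P_S^{\mc H^c}$ of $\wt P_S$ obtained by keeping only the rows indexed by $[n] \setminus \mc H$. This submatrix has $n - |\mc H| \ge n - \sqrt{n} \ge n/2$ rows and $k|\mc H| = k\sqrt{n}$ columns. Its $(r, (i,j))$-entry (with $r \in [n] \setminus \mc H$ and $j \in \mc H$) equals $\wt \Sigma^{(i)}_{r, j}$. Since $r \neq j$, this is a strictly off-diagonal entry of $\wt \Sigma^{(i)}$, hence by Definition~\ref{def:smoothgaussian} it is an independent Gaussian with variance $\rho^2$ centered at the adversarial value $\Sigma^{(i)}_{r, j}$. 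Different columns $(i, j)$ and $(i', j')$ correspond either to different covariance matrices ($i \neq i'$, hence independent) or to different off-diagonal entries of the same matrix ($j \neq j'$ while $r = r$, and the off-diagonal entries of a single $\Delta_i$ were drawn independently). Thus every entry of $\wt P_S^{\mc H^c}$ is an independent $\mathcal{N}(\cdot, \rho^2)$ variable with an arbitrary deterministic mean.

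Now the smoothed singular-value lemma (Lemma~\ref{lem:robust-sig-min}) applies: for a rectangular matrix of size $(n/2) \times (k\sqrt{n})$ with independently $\rho$-perturbed Gaussian entries, with probability $1 - (C_1\epsilon)^{C_2 n}$ the smallest singular value is at least
\begin{equation*}
C_3\, \epsilon\, \rho\, \bigl(\sqrt{n/2} - \sqrt{k\sqrt{n}-1}\bigr).
\end{equation*}
Under the standing assumption $n \ge C_1 k^2$, we have $k\sqrt{n} \le n/\sqrt{C_1}$, so for $C_1$ large enough the bracket is $\Omega(\sqrt{n})$; hence $\sigma_{k|\mc H|}(\wt P_S^{\mc H^c}) \ge \Omega(\epsilon\, \rho\, \sqrt{n})$ with the desired high probability.

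Finally, since discarding rows of a matrix can only decrease its singular values (by the min-max characterization, any unit vector $v$ satisfies $\|\wt P_S v\| \ge \|\wt P_S^{\mc H^c} v\|$), we conclude $\sigma_{k|\mc H|}(\wt P_S) \ge \sigma_{k|\mc H|}(\wt P_S^{\mc H^c}) = \Omega(\rho\sqrt{n})$, which is the claim. The main conceptual step is verifying that restricting to rows outside $\mc H$ indeed eliminates all duplicated entries caused by symmetry and all diagonal entries (which were not freshly perturbed); the rest is a direct application of the random-matrix input lemma.
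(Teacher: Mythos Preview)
Your argument is correct and matches the paper's proof essentially line for line: both restrict $\wt P_S$ to the rows indexed by $[n]\setminus\mc H$, observe that this removes all diagonal (unperturbed) entries and any duplication from symmetry (since row indices and column indices $j\in\mc H$ are disjoint), and then invoke the perturbed-rectangular-matrix lemma (Lemma~\ref{lem:robust-sig-min}). Your write-up is actually a bit more explicit than the paper's in verifying independence of the entries, but the strategy is identical.
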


\begin{proof}
  This claim is easy as $\wt P_S\in\R^{n\times k|\mc H|}$ is a tall matrix with $\qq{n \ge
    5k|\mc H|}$ rows. In particular, let $\wt P'_S$ be the block of $\wt P_S$ with rows
  restricted to $\mc H^C = [n]\backslash \mc H$. Note that $\wt P'_S$ is a linear
  projection of $\mc P_S$, and by basic property of singular values in
  Lemma~\ref{lem:sigv-prj}, the $k|\mc H|$ singular values of $\wt P'_S$ provide lower
  bounds for the corresponding ones of $\wt P_S$. We only consider the restricted rows so
  that $\wt P'_S$ does not involve any diagonal elements of the covariance matrices, which
  are not randomly perturbed in our smoothed analysis framework.

% let $\mc H^C = [n]\backslash \mc H$, we will restrict our attention to the
% rows in $\wt P_S$ that correspond to $\mc H^C$.

  % By the assumptions of smoothed analysis, since $\wt P'_S$ is not related to any diagonal
  % entries, each entry in $\wt P'_S$ is some original value plus a perturbation.
  Now $\wt P'_S$ is a randomly perturbed rectangular matrix, whose smallest singular value
  can be lower bounded %
%  It is easy to bound for $\wt P_S$.  Note that with probability one over the model parameter perturbation,
%  the matrix $\wt P_S$ achieves its maximal column rank $ k|\mc H| = c_2 k^2$.
  using Lemma~\ref{lem:robust-sig-min}, and we conclude that with probability at least $1- (C\epsilon)^{ 0.25
    n}$,
  \begin{align*}
    \sigma_{ k|\mc H|} (\wt P_S) \ge \epsilon\rho\sqrt{ n}.
  \end{align*}
\end{proof}

Next, we bound the smallest singular value of $\wt B_S$.

\begin{claim}
With high probability $\sigma_{k|\mc H|}(\wt B_S) \ge \Omega(\rho\sqrt{n})$.
\end{claim}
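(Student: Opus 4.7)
The structural decomposition in Claim~\ref{claim:1astructure} gives $\wt B_S = [\wt B^{(1)}, \wt B^{(2)}, \wt B^{(3)}]$, where after a row permutation each $\wt B^{(l)}$ equals $I_{|\mc H|/3} \otimes \wt\Sigma_{\mc H^{(a)}, \mc H^{(b)}}$ for $\{a,b\} = \{1,2,3\}\setminus\{l\}$, with $\wt\Sigma_{\mc H^{(a)}, \mc H^{(b)}} \in \R^{(|\mc H|/3)^2 \times k}$ collecting the off-diagonal covariance entries $\wt\Sigma^{(i)}_{j_a, j_b}$ for $j_a \in \mc H^{(a)}$, $j_b \in \mc H^{(b)}$, $i \in [k]$. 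Since $\mc H^{(a)}$ and $\mc H^{(b)}$ are disjoint, none of these are diagonal entries, so they are independent $\mc N(\cdot, \rho^2)$ Gaussian perturbations of arbitrary base values; and since the three index pairs $(\mc H^{(2)}, \mc H^{(3)}), (\mc H^{(1)}, \mc H^{(3)}), (\mc H^{(1)}, \mc H^{(2)})$ are pairwise distinct, the three matrices $\wt B^{(l)}$ are populated by disjoint sets of random perturbations and are therefore mutually independent.

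The first step is a per-block bound. Each $\wt\Sigma_{\mc H^{(a)}, \mc H^{(b)}}$ is extremely tall, with $(|\mc H|/3)^2 = n/9$ rows and only $k \le \sqrt{n/C_1}$ columns (using $n \ge C_1 k^2$); its entries are $\rho$-perturbed Gaussians. Lemma~\ref{lem:robust-sig-min} then yields $\sigma_k(\wt\Sigma_{\mc H^{(a)}, \mc H^{(b)}}) \ge \Omega(\rho\sqrt n)$ with exponentially high probability, and the tensor-product structure gives $\sigma_{\min}(\wt B^{(l)}) \ge \Omega(\rho\sqrt n)$ for each $l$. Taking a union bound over $l \in \{1,2,3\}$ preserves this.

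The second and harder step is combining the three pieces into $\sigma_{k|\mc H|}(\wt B_S)$. For any unit vector $v = (v^{(1)}, v^{(2)}, v^{(3)}) \in \R^{k|\mc H|}$, by pigeonhole some $\|v^{(l)}\| \ge 1/\sqrt{3}$; WLOG $l=1$. Let $W = \tx{span}([\wt B^{(2)}, \wt B^{(3)}])$, a random subspace of dimension at most $2k|\mc H|/3$. Then $\|\wt B_S v\| \ge \|\Pi_{W^\perp} \wt B^{(1)} v^{(1)}\|$, and independence of $\wt B^{(1)}$ from $W$ lets us condition on $W$ and apply the per-block bound $\|\wt B^{(1)} v^{(1)}\| \ge \Omega(\rho\sqrt n)\|v^{(1)}\|$. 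The main obstacle is bounding $\|\Pi_W \wt B^{(1)} v^{(1)}\|$ uniformly in $v^{(1)}$: the entries of $\wt B^{(1)} v^{(1)}$ are correlated (any two triples sharing $(j_2,j_3)$ share the underlying perturbations $\wt\Sigma^{(i)}_{j_2,j_3}$), so the covariance is non-isotropic and standard Gaussian projection bounds do not directly apply. My plan is to recast $\|\Pi_W \wt B^{(1)} v^{(1)}\|^2$ as a Gaussian chaos in the independent perturbations of $\wt B^{(1)}$, control its typical value and tail via the inequalities of Lata{\l}a~\cite{latala2006estimates} (together with the decoupling machinery of \cite{de1995decoupling}), and then take a union bound over an $\epsilon$-net of unit $v^{(1)} \in \R^{k|\mc H|/3}$ of size $\exp(O(k\sqrt n))$. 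Since $\dim W = O(k\sqrt n)$ is tiny compared to the ambient dimension $(|\mc H|/3)^3 = \Theta(n^{1.5})$, the projection loss should be negligible, yielding $\|\Pi_{W^\perp} \wt B^{(1)} v^{(1)}\| \ge \tfrac12 \|\wt B^{(1)} v^{(1)}\|$ uniformly, and hence $\sigma_{k|\mc H|}(\wt B_S) \ge \Omega(\rho\sqrt n)$ with high probability.
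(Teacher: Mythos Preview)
Your per-block step is fine, but the combination step has a real gap. You want to conclude $\|\Pi_{W^\perp}\wt B^{(1)}v^{(1)}\|\ge\tfrac12\|\wt B^{(1)}v^{(1)}\|$ by treating $\|\Pi_W\wt B^{(1)}v^{(1)}\|^2$ as a Gaussian chaos in the perturbations of $\wt B^{(1)}$. But $\wt B^{(1)}v^{(1)}$ has a large \emph{deterministic} part $B^{(1)}_0 v^{(1)}$ coming from the unperturbed $\Sigma$, and the Lata{\l}a bounds only control the fluctuation around the mean. The mean of $\|\Pi_W\wt B^{(1)}v^{(1)}\|^2$ contains $\|\Pi_W B^{(1)}_0 v^{(1)}\|^2$, which you have no handle on: since $W=\tx{span}[\wt B^{(2)},\wt B^{(3)}]$ is built from the \emph{same} adversarial $\Sigma$, the column spans of $B^{(1)}_0$ and $[B^{(2)}_0,B^{(3)}_0]$ can overlap heavily, so $\|\Pi_W B^{(1)}_0 v^{(1)}\|$ may be of order $\|B^{(1)}_0 v^{(1)}\|\gg\rho\sqrt n$. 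Then $\|\Pi_W\wt B^{(1)}v^{(1)}\|/\|\wt B^{(1)}v^{(1)}\|$ can be close to $1$, and your multiplicative inequality fails. The ``$\dim W$ tiny compared to ambient'' heuristic is only valid for near-isotropic random vectors, which $\wt B^{(1)}v^{(1)}$ is not.

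The paper avoids this entirely by exploiting the block-diagonal structure of $\wt B^{(1)}$ through a deterministic lemma (Lemma~\ref{lem:prjdiag}): for $A=[B,\diag(C^{(j)})]$ one has $\sigma_{\min}(A)\ge\min\{\sigma_{\min}(B),\ \min_j\sigma_{\min}(\prj_{(B^{(j)})^\perp}C^{(j)})\}$, where $B^{(j)}$ is the $j$-th row-block of $B$. Applied with $B=[\wt B^{(2)},\wt B^{(3)}]$ and each $C^{(j)}=\wt\Sigma_{\mc H^{(2)},\mc H^{(3)}}$, this reduces the problem to bounding, for each $j\in\mc H^{(1)}$, the quantity $\sigma_k$ of a single $(|\mc H|/3)^2\times k$ perturbed block after projecting away (i) the corresponding row-block of $[\wt B^{(2)},\wt B^{(3)}]$ (only $2k|\mc H|/3$ columns) and (ii) the unperturbed $\Sigma_{\mc H^{(2)},\mc H^{(3)}}$. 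Projection (ii) is exactly what kills the deterministic part that breaks your argument; after it, the block is a pure Gaussian matrix projected to a subspace of codimension $O(k\sqrt n)$ inside $\R^{n/9}$, and Lemma~\ref{lem:prj-rand-gaussian} gives $\sigma_k\ge\Omega(\rho\sqrt n)$ directly. A union bound over the $|\mc H|/3$ blocks replaces your $\epsilon$-net. The same trick is then applied recursively to $[\wt B^{(2)},\wt B^{(3)}]$ (using the block structure of $\wt B^{(2)}$) and finally to $\wt B^{(3)}$ alone. If you want to salvage an $\epsilon$-net route, the fix is to project away $\tx{colspan}(B^{(1)}_0)$ as well as $W$ before lower-bounding, but at that point you are essentially re-deriving Lemma~\ref{lem:prjdiag} the hard way.
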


\begin{proof}
We make use of the special structure of the three blocks of $\wt B_S$ to lower bound its
smallest singular value.

First, we prove that the block diagonal matrix $\wt B^{(1)}$ has large singular
values, even after projecting to the orthogonal subspace of the column span of $\wt
B^{(2)}$ and $\wt B^{(3)}$.  This idea appeared several times in our proof and is
abstracted in Lemma~\ref{lem:prjdiag}. Apply the lemma and we have:
  \begin{align}
    \sigma_{ k|\mc H|} (\wt B_S)
    &\ge
    \min \lt\{
    \sigma_{k(2|\mc H|/3)}([\wt B^{(2)},\wt B^{(3)}]), \
    \sigma_{k} (\prj_{([\wt B^{(2)},\wt B^{(3)}]_{
        % [[j(|\mc H|/3)^2+ 1: (j+1)(|\mc H|/3)^2], :]
        \{j\} \times \mc H^{(2)}\times \mc H^{(3)}
      })^\perp} \wt \Sigma_{\mc H^{(2)}, \mc H^{(3)}} ): j    \in \mc H^{(1)}  \rt\}
    \label{eq:C5-1}
    \\
    &\ge
        \min \lt\{
    \sigma_{k(2|\mc H|/3)}([\wt B^{(2)},\wt B^{(3)}]), \
    \sigma_{k} (\prj_{([\wt B^{(2)},\wt B^{(3)}]_{
        % [[j(|\mc H|/3)^2+ 1: (j+1)(|\mc H|/3)^2], :]
                \{j\} \times \mc H^{(2)}\times \mc H^{(3)}
      })^\perp} \prj_{\Sigma_{\mc H^{(2)}, \mc H^{(3)}}^\perp}\wt \Sigma_{\mc H^{(2)}, \mc H^{(3)}}
    ):j    \in \mc H^{(1)}  \rt\},
    \nonumber
  \end{align}
  where the $j$-th block of $[\wt B^{(2)},\wt B^{(3)}]$ has dimension $ (|\mc H|/3)^2 \times 2k|\mc
  H|/3$. Since
  \begin{align*}
    (|\mc H|/3)^2- k - 2k|\mc H|/3 = \Omega(n/9-k-2kn^{0.5}/3) \ge \Omega(n),
  \end{align*}
  this means for each block, even after projection it has more than $3k$ rows.
  Note that by definition the three blocks $\wt B^{(1)}$, $\wt B^{(2)}$ and $\wt B^{(3)}$ are independent and do not involve any diagonal elements of
  the covariance matrices, so each block after the two projections is again a rectangular random matrix.  We can apply
  Lemma~\ref{lem:prj-rand-gaussian}, for any $j$, for some absolute constant $C_1,C_2,C_3$ (not fixed throughout the discussion), with probability at
  least $1-(C_1\epsilon)^{C_2n}$ over the randomness of $\wt \Sigma_{\mc H^{(2)}, \mc H^{(3)}}$, we have:
  \begin{align}
    \label{eq:bound-k-prj-rand-gaussian}
  \sigma_{k} (\prj_{([\wt B^{(2)},\wt B^{(3)}]_{
        % [[j(|\mc H|/3)^2+ 1: (j+1)(|\mc H|/3)^2], :]
        \{j\} \times \mc H^{(2)}\times \mc H^{(3)} })^\perp} \prj_{\Sigma_{\mc H^{(2)}, \mc H^{(3)}}^\perp}\wt \Sigma_{\mc H^{(2)}, \mc
    H^{(3)}} ) &\ge \epsilon\rho\sqrt{C_3 n}.
  \end{align}

  Now we can take a union bound over the blocks and conclude that with high probability,
  the smallest singular value of each block is large.

  In order to bound $\sigma_{k(2|\mc H|/3)}([\wt B^{(2)},\wt B^{(3)}])$, we use the same strategy. Note that $\wt B^{(2)}$ also has a block structure that corresponds to the $\mc H^{(1)}\times \mc H^{(3)}$ faces (see Figure~\ref{fig:bstruct}). %note that with some row permutation
%  to $[\wt B^{(2)},\wt B^{(3)}]$, we can make the matrix $\wt B^{(2)}$ block diagonal.
  Again check the condition on dimension $ (|\mc H|/3)^2- k - k|\mc H|/3 \ge \Omega(n)> 3k$, we can
  apply Lemma~\ref{lem:prjdiag} again to show that for any $j$, with probability at least
  $1-(C_1\epsilon)^{C_2 n}$ over the randomness of $\wt \Sigma_{\mc H^{(1)}, \mc H^{(3)}}$, we have:
  \begin{align}
    \label{eq:C5-2}
    \sigma_{k(2|\mc H|/3)}([\wt B^{(2)},\wt B^{(3)}])\ge
    \min \{
    \sigma_{k(|\mc H|/3)}(\wt B^{(3)}), \
    \sigma_{k} (\prj_{([\wt B^{(3)}]_{
        \mc H^{(1)}\times \{j\} \times \mc H^{(3)}
      })^\perp} \prj_{\Sigma_{\mc H^{(1)}, \mc H^{(3)}}^\perp}\wt \Sigma_{\mc H^{(1)}, \mc H^{(3)}} ): j
    \in \mc H^{(2)}
    \}.
  \end{align}
  Again by Lemma~\ref{lem:prj-rand-gaussian}, for any $j$, with probability at least
  $1-(C_1\epsilon)^{C_2n}$ over the randomness of $\wt \Sigma_{\mc H^{(1)}, \mc H^{(3)}}$, we have:
  \begin{align}
    \label{eq:bound-k-prj-rand-gaussian-2}
    \sigma_{k} (\prj_{([\wt B^{(3)}]_{ \mc H^{(1)}\times \{j\} \times \mc H^{(3)} })^\perp} \prj_{\Sigma_{\mc H^{(1)}, \mc
        H^{(3)}}^\perp}\wt \Sigma_{\mc H^{(1)}, \mc H^{(3)}} ) &\ge \epsilon\rho\sqrt{C_3n}.
  \end{align}

  Finally, for $\wt B^{(3)}$ it is a block diagonal structure with blocks correspond to $\mc H^{(1)}\times \mc H^{(2)}$ faces (see Figure~\ref{fig:bstruct}).
    Each block is a perturbed rectangular matrix,
 therefore we apply Lemma~\ref{lem:prj-rand-gaussian} to have that with high probability over the randomness of $\wt \Sigma_{\mc
    H^{(1)}, \mc H^{(2)}}$,
  \begin{align}
    \label{eq:bound-k-prj-rand-gaussian-3}
    \sigma_{k(|\mc H|/3)}(\wt B^{(3)}) \ge \sigma_{k}(\wt \Sigma_{\mc H^{(1)}, \mc H^{(2)}}) \ge
    \epsilon\rho\sqrt{n}.
  \end{align}

  Now plug in the lower bounds in \eqref{eq:bound-k-prj-rand-gaussian}
  \eqref{eq:bound-k-prj-rand-gaussian-2} \eqref{eq:bound-k-prj-rand-gaussian-3} into the
  inequalities in \eqref{eq:C5-1} and \eqref{eq:C5-2}.  By union  bound we conclude that with high probability:
  \begin{align*}
    \sigma_{k|\mc H|} (\wt B_S) \ge \epsilon\rho \sqrt{C_3 n}.
  \end{align*}
\end{proof}

Finally, the diagonal matrix in the middle is given by the Kronecker product of $I_{|\mc H|}$ and
$D_{\wt \omega}$.  Recall that $D_{\wt \omega}$ is the diagonal matrix with the mixing
weights $\wt \omega_i$'s on its diagonal.  By property of Kronecker product and the
assumption on the mixing weights, the smallest diagonal element of $D_{\wt
  \omega}\ot_{kr}I_{|\mc H|}$ is at least $\omega_0$. Therefore $\sigma_{k|\mc H|} (D_{\wt
  \omega}\otimes_{kr} I_{|\mc H|}) \ge \omega_0$.

We have shown that the smallest singular value of all the three factor matrices are large
with high probability.  Therefore, apply union bound, we conclude that with probability at
least $1-\exp(-\Omega(n))$,
  \begin{align*}
    \sigma_{k|\mc H|} (\wt Q_S) \ge\sigma_{k|\mc H|} (\wt P_S)\sigma_{k|\mc H|} (D_{\wt \omega}\otimes_{kr} I_{|\mc H|})\sigma_{k|\mc H|} (\wt B_S) \ge O(\omega_o\rho^2 n).
  \end{align*}

\subsection{Step 1 (b). Finding $\mc U_S$, the span of $\wt\Sigma^{(i)}$'s with columns projected to $\mc
  S^\perp$.}
\label{sec:step-1b}

\begin{algorithm}[h!]
  \caption{FindProjectedSigmaSpan}
  \label{alg:projsigma}
  \DontPrintSemicolon

  \tb{Input:} 4-th order moments $M_4$, set of indices $\mc H$, subspace $S\subset \R^n$

  \tb{Output:}  $span\{\vc(\Proj_{S^\perp} \Sigma^{(i)}):i\in[k]\}$, represented
  by an orthonormal matrix $U_S\in\R^{n^2 \times k}$.

  \BlankLine

  \STATE Let $Q$ be a matrix whose columns are $\vc(\Proj_{S^\perp}M_4(e_i,e_j,I,I))$ for
  all $i,j\in \mc H$, $i\ne j$.

  \STATE Compute the SVD of $Q$: $Q = UDV^\top$.

  \BlankLine

  \tb{Return:} The first $k$ left singular vectors $U_S=[U_{[:,1]},\dots, U_{[:,k]}]$.

\end{algorithm}

In Step 1 (b), given the subset of indices $\mc H$ and the subspace $\mc S$ obtained in Step 1 (a), we want to
show that the projected two-dimensional slices of $\wt M_4$ span the subspace $\mc U_S$ defined in
\eqref{eq:def-mcUS}, which is the span of the covariance matrices with the columns projected the subspace $\mc
S^{\perp}$:
\begin{align*}
   \mc U_S =\tx{span}\lt\{\vc(\prj_{S^{\perp}}\wt \Sigma^{(i)}) : i\in[k]\rt\} \subset \R^{n^2}.
 \end{align*}

Recall that in Claim~\ref{claim:M4-proj34}, we characterized the two dimensional
slices of the 4-th moments $M_4$ of mixture of zero-mean Gaussians as below:
  \begin{align}
    \label{eq:C2-1}
    \wt M_4(e_{j_1},e_{j_2},I,I) = \sum_{i=1}^k\wt \omega_i \left(\wt \Sigma^{(i)}_{j_1,j_2}
     \wt  \Sigma^{(i)}+\wt \Sigma^{(i)}_{[:,j_1]} (\wt \Sigma^{(i)}_{[:,j_2]})^\top+\wt \Sigma^{(i)}_{[:,j_2]}
      (\wt \Sigma^{(i)}_{[:,j_1]})^\top\right),
    \quad\fa j_1,j_2\in [n].
  \end{align}

For notational convenience, we let $\mc J$ denote the set $ \mc J = \{(j_1,j_2): j_1\le j_2, \ j_1,j_2 \in\mc
H\}$, and note that the cardinality is $|\mc J| = {|\mc H| + 1\choose 2} = \qq{(n+\sqrt{n})/2}$.
First, we define the matrix $\wt Q_{U_S}\in\R^{n^2\times |\mc J|}$ whose columns are the vectorized
two-dimensional slices of $\wt M_4$ with the columns projected to the subspace $\mc S^{\perp}$:
\begin{align}\label{eq:QUs-def}
\wt   Q_{U_S} &= \lt[\vc(\prj_{S^{\perp}}\wt M_4(e_{j_1},e_{j_2},I,I)) : (j_1,j_2) \in\mc J \rt].
\end{align}
Similarly we define $\wt Q_{U_0}\in\R^{n^2\times |\mc J|}$ with the slices without the projection:
\begin{align*}
  \wt Q_{U_0} =\lt[\vc(\wt M_4(e_{j_1},e_{j_2},I,I)) :(j_1,j_2)\in\mc J \rt].
\end{align*}

Observe the structure in \eqref{eq:C2-1} and we see the columns of $\wt Q_{U_0}$ is ``almost'' in
the span of covariance matrices, except for some additive rank one terms. Note that all the rank one
terms lie in the subspace $\mc S$ obtained from Step 1 (a), and they vanish if we project the slice
to the orthogonal subspace $\mc S^\perp$.  In particular, $\prj_{\mc S^\perp} \wt
\Sigma^{(i)}_{[:,j]} = 0$ for all $j\in S$.  Let the columns of the matrix $\wt
P_{U_S}\in\R^{n^2\times k}$ be the vectorized and projected covariance matrices as below:
\begin{align}
  \label{eq:PUs-def}
  \wt P_{U_S} = \left[ \vc(\prj_{S^{\perp}} \wt \Sigma^{(i)}):i\in[k]\right].
\end{align}
In the following claim, we show that the columns of $\wt Q_{U_S}$ indeed lie in the column span of
$\wt P_{U_S}$:
\begin{claim}
\label{claim:projected34}
Given $S$ obtained in Step 1(a), the span of $\wt \Sigma_{[:,j]}^{(i)}$ for $j\in\mc H$ and for all
$i$, then for $j_1,j_2\in \mc H$, we have:
  \begin{align*}
    \prj_{ S^\perp} \wt M_4(e_{j_1},e_{j_2},I,I) = \sum_{i=1}^k\wt \omega_i \wt \Sigma^{(i)}_{j_1,j_2}
  \prj_{ S^\perp}   \wt  \Sigma^{(i)},    \quad\fa j_1,j_2\in [n].
  \end{align*}
\end{claim}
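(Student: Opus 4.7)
The plan is to apply Claim~\ref{claim:M4-proj34} directly and then invoke the defining property of $\mc S$ to kill the two ``unwanted'' rank-one terms. Concretely, Claim~\ref{claim:M4-proj34} expresses the two-dimensional slice $\wt M_4(e_{j_1},e_{j_2},I,I)$ as a sum of three kinds of matrix contributions: a scalar multiple $\wt \Sigma^{(i)}_{j_1,j_2}\wt\Sigma^{(i)}$ of the full covariance, plus two rank-one outer products $\wt\Sigma^{(i)}_{[:,j_1]}(\wt\Sigma^{(i)}_{[:,j_2]})^\top$ and $\wt\Sigma^{(i)}_{[:,j_2]}(\wt\Sigma^{(i)}_{[:,j_1]})^\top$ built out of the $j_1$-th and $j_2$-th columns of the covariance matrices.

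Next, I would apply the column-projection $\prj_{S^\perp}$ to both sides. Since $\prj_{S^\perp}$ acts linearly from the left (i.e., on the column direction of the matrix slice), it distributes through the sum and preserves the scalar coefficients $\wt \omega_i \wt\Sigma^{(i)}_{j_1,j_2}$, while acting as $\prj_{S^\perp}$ on each of $\wt\Sigma^{(i)}$, $\wt\Sigma^{(i)}_{[:,j_1]}$, and $\wt\Sigma^{(i)}_{[:,j_2]}$. The right-hand vectors $(\wt\Sigma^{(i)}_{[:,j_2]})^\top$ and $(\wt\Sigma^{(i)}_{[:,j_1]})^\top$ in the rank-one terms are untouched because the projection is applied only on the left.

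The key observation, and the only non-mechanical step, is that since we are assuming $j_1, j_2 \in \mc H$, the vectors $\wt\Sigma^{(i)}_{[:,j_1]}$ and $\wt\Sigma^{(i)}_{[:,j_2]}$ lie inside the subspace $\mc S = \tx{span}\{\wt\Sigma^{(i)}_{[:,j]} : i\in[k], j\in\mc H\}$ by the very definition of $\mc S$ in \eqref{eq:def-mcS}. Consequently $\prj_{S^\perp}\wt\Sigma^{(i)}_{[:,j_1]} = 0$ and $\prj_{S^\perp}\wt\Sigma^{(i)}_{[:,j_2]} = 0$, so both rank-one terms drop out identically. What remains is exactly
\[
\prj_{S^\perp}\wt M_4(e_{j_1},e_{j_2},I,I) \;=\; \sum_{i=1}^{k} \wt\omega_i\, \wt\Sigma^{(i)}_{j_1,j_2}\, \prj_{S^\perp}\wt\Sigma^{(i)},
\]
which is the claim. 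There is no real obstacle here; the whole proof is a one-line consequence of Claim~\ref{claim:M4-proj34} combined with the column-containment property $\wt\Sigma^{(i)}_{[:,j]} \in \mc S$ for $j \in \mc H$. The only thing to verify carefully is that $\prj_{S^\perp}$ applied to a matrix means projecting each column onto $S^\perp$, which is consistent with the convention used when writing $\vc(\prj_{S^\perp}\wt\Sigma^{(i)})$ in the definition of $\mc U_S$ in \eqref{eq:def-mcUS}.
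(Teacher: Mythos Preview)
Your proposal is correct and matches the paper's own argument exactly: apply Claim~\ref{claim:M4-proj34}, project by $\prj_{S^\perp}$ on the left, and observe that for $j_1,j_2\in\mc H$ the rank-one terms $\wt\Sigma^{(i)}_{[:,j_1]}(\wt\Sigma^{(i)}_{[:,j_2]})^\top$ and $\wt\Sigma^{(i)}_{[:,j_2]}(\wt\Sigma^{(i)}_{[:,j_1]})^\top$ vanish because their left factors lie in $\mc S$. The paper does not even write out a separate proof for this claim; the surrounding text simply notes that $\prj_{S^\perp}\wt\Sigma^{(i)}_{[:,j]}=0$ for $j\in\mc H$, which is precisely your key step.
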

Similar as in Step 1(a), in the next lemma we show that the columns of $\wt Q_{U_S}$ indeed span the
entire column span of $\wt P_{U_S}$. Since the dimension of the column span of $\wt P_{U_S}$ is no
larger than $k$,  it is enough to  the $k$-th singular value of $\wt Q_{U_S}$:
% By Claim~\ref{claim:projected34}, it is clear that the the columns of $\wt Q_{U_S}$ are all in the correct span, we show they indeed span the entire space by the following lemma:
%
%Then we define the matrix $P_{U_S}\in\mbb R^{n^2\times k}$ whose columns are the vectorized covariance
%matrices with the columns projected to the subspace $\mc S^{\perp}$, forming a basis of the desired subspace
%$\mc U_S$:
%The goal is to show that $Q_{U_S}$ and $P_{U_S}$ have the same column span.
%
\begin{lemma} [Lemma~\ref{lem:1bsmooth} restated]
  \label{prop:bound-sig-QUs}
  Given $\wt M_4$, the exact 4-th order moment of the $\rho$-smooth \mog,  define the matrix  $\wt
  Q_{U_S}$ as  in \eqref{eq:QUs-def} with the two-dimensional slices of $\wt M_4$.  For any $\epsilon>0$, and
  for some absolute constant $C_1,C_2,C_3>0$, with probability at least $1- 2 (C_1\epsilon)^{C_2 n}$, the $k$-th
  singular value  of $\wt Q_{U_S}$ is bounded below by:
  \begin{align*}
    \sigma_{k} ( \wt Q_{U_S}) \ge C_3 \omega_o (\epsilon\rho)^2 n^{1.5}.
  \end{align*}
\end{lemma}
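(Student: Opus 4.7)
}
My plan is to mirror the strategy used for Lemma~\ref{prop:find-S} in Step 1(a): write $\wt Q_{U_S}$ as a product of three factors and lower-bound the $k$-th singular value of each factor separately, then combine via submultiplicativity of singular values. Using Claim~\ref{claim:projected34}, the $(j_1,j_2)$-column of $\wt Q_{U_S}$ equals $\sum_{i=1}^k \wt\omega_i \wt\Sigma^{(i)}_{j_1,j_2} \vc(\prj_{S^\perp}\wt\Sigma^{(i)})$, which yields the factorization
\begin{align*}
  \wt Q_{U_S} \;=\; \wt P_{U_S}\, D_{\wt\omega}\, \wt B_{U_S}^\top,
\end{align*}
where $\wt P_{U_S} \in \R^{n^2 \times k}$ is the matrix defined in \eqref{eq:PUs-def}, $D_{\wt\omega}=\diag(\wt\omega_1,\dots,\wt\omega_k)$, and $\wt B_{U_S} \in \R^{|\mc J|\times k}$ has entries $[\wt B_{U_S}]_{(j_1,j_2),i}=\wt\Sigma^{(i)}_{j_1,j_2}$. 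The middle factor contributes $\sigma_k(D_{\wt\omega})\ge \omega_o$ by the weight assumption, so the whole task reduces to bounding the $k$-th singular values of the two outer factors.

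For $\wt B_{U_S}$ I would restrict attention to the rows indexed by pairs with $j_1<j_2$: on these rows the entries $\wt\Sigma^{(i)}_{j_1,j_2}$ are independent $\mc N(\mu_{ij_1j_2},\rho^2)$ random variables, independent across $i$ and across pairs. The restricted matrix is $\binom{|\mc H|}{2}\times k = \Omega(n)\times k$, so the smoothed Rudelson--Vershynin bound in Lemma~\ref{lem:robust-sig-min} yields $\sigma_k \ge \Omega(\epsilon\rho\sqrt n)$ with probability at least $1-(C_1\epsilon)^{C_2 n}$; the monotonicity of singular values under row deletion (Lemma~\ref{lem:sigv-prj}) transfers this bound to $\wt B_{U_S}$ itself.

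For $\wt P_{U_S}$ the goal is the stronger bound $\sigma_k(\wt P_{U_S})\ge \Omega(\epsilon\rho\, n)$, which multiplied with the $\Omega(\epsilon\rho\sqrt n)$ from $\wt B_{U_S}$ and the $\omega_o$ from the diagonal factor gives exactly $\Omega(\omega_o(\epsilon\rho)^2 n^{1.5})$ as required. The plan is to condition on all ``$S$-determining'' entries (the columns of the $\wt\Sigma^{(i)}$'s indexed by $\mc H$, together with their symmetric partners and all diagonals): this freezes the subspace $S$, after which the remaining off-diagonal entries $\wt\Sigma^{(i)}_{r,c}$ with $r,c\notin \mc H$ are independent Gaussians of variance $\rho^2$. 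Restricting the vectorized projected matrices to the coordinate block indexed by rows and columns both in $R=[n]\setminus\mc H$ (of size $n-\sqrt n=\Omega(n)$) exposes an $|R|^2\times k$ submatrix of $\wt P_{U_S}$ whose columns are linear images of the free entries; since $\dim(S)\le k|\mc H|=O(k\sqrt n)\ll |R|$, the projection $\prj_{S^\perp}$ kills at most an $o(1)$ fraction of the Frobenius mass. Applying a projected-Gaussian bound of the type already developed in Step 1(a) (namely Lemma~\ref{lem:prj-rand-gaussian}) then gives $\sigma_k(\wt P_{U_S})\ge \Omega(\epsilon\rho\, n)$ with probability at least $1-(C_1\epsilon)^{C_2 n}$.

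The main obstacle is this last step: the projection $\prj_{S^\perp}$ mixes coordinates across $\mc H$ and $R$, so I need to isolate the contribution of the free block $\wt\Sigma^{(i)}_{[R,R]}$ cleanly before invoking the random-matrix lemma. The cleanest route, which I would follow, is to decompose $\prj_{S^\perp}\wt\Sigma^{(i)}$ into the part coming from $\wt\Sigma^{(i)}_{[R,R]}$ and a part measurable with respect to the conditioning, then argue that the low-rank perturbation introduced by $SS^\top$ cannot cancel more than a negligible fraction of the randomness because $S$ lives in an $O(k\sqrt n)$-dimensional subspace while the free block has $\Omega(n^2)$ independent entries. Combining the three singular-value bounds by a union bound then gives $\sigma_k(\wt Q_{U_S})\ge \sigma_k(\wt P_{U_S})\cdot\sigma_k(D_{\wt\omega})\cdot\sigma_k(\wt B_{U_S})\ge \Omega(\omega_o(\epsilon\rho)^2 n^{1.5})$, with failure probability at most $2(C_1\epsilon)^{C_2 n}$, which is exactly the statement.
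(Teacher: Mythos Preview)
Your factorization $\wt Q_{U_S}=\wt P_{U_S}\,D_{\wt\omega}\,\wt B_{U_S}^\top$ and the strategy of bounding each factor separately is exactly what the paper does (their $\wt\Sigma_J$ is your $\wt B_{U_S}$), and your treatment of $D_{\wt\omega}$ and $\wt B_{U_S}$ matches the paper's. The only place your plan diverges is the hard factor $\wt P_{U_S}$, and there your proposed fix has a genuine technical gap.

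The problem is the sentence ``condition on \dots\ all diagonals: this freezes the subspace $S$''. In the $\rho$-smooth model the diagonal entries are chosen adversarially \emph{after} seeing every off-diagonal perturbation, so they are functions of the entire random draw. Conditioning on their values does not leave the remaining off-diagonal entries $\wt\Sigma^{(i)}_{r,c}$ with $r,c\in R$ as independent Gaussians, and consequently $\prj_{S^\perp}$ is not a deterministic object in the residual probability space. For the same reason, your decomposition of $\prj_{S^\perp}\wt\Sigma^{(i)}$ into ``a part coming from $\wt\Sigma^{(i)}_{[R,R]}$'' and ``a part measurable with respect to the conditioning'' is not clean: both pieces are multiplied by $\prj_{S^\perp}$, which is itself coupled to the free block through the diagonals.

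The paper resolves this with a single move that replaces conditioning on $S$ by projecting past a slightly larger, genuinely measurable subspace. Enlarge $S$ to $S'=\tx{span}\bigl(S,\{e_j:j\in\mc H\}\bigr)$; because adding the $e_j$'s absorbs whatever the adversary does on the $(j,j)$ diagonals for $j\in\mc H$, the span $S'$ depends only on the \emph{off-diagonal} entries in $\mc L=\{(j_1,j_2):j_1\in\mc H\text{ or }j_2\in\mc H\}$. Now set $\mc Z=\tx{span}\bigl(\Sigma,\;S'\ot_{kr}I_n\bigr)\subset\R^{n^2}$. This $\mc Z$ contains the null space of $D_{S^\perp}=\prj_{S^\perp}\ot_{kr}I_n$, so $\sigma_k(\wt P_{U_S})=\sigma_k(D_{S^\perp}\wt\Sigma)\ge\sigma_k(\prj_{\mc Z^\perp}\wt\Sigma)$; it contains the unperturbed $\Sigma$, so $\prj_{\mc Z^\perp}\wt\Sigma=\prj_{\mc Z^\perp}E$; and it is fixed once you condition on the off-diagonal $\mc L$-entries, after which the $\Omega(n^2)$ remaining off-diagonals in $E$ are still i.i.d.\ Gaussians and Lemma~\ref{lem:prj-rand-gaussian} applies directly to give $\sigma_k(\prj_{\mc Z^\perp}E)\ge\Omega(\epsilon\rho\,n)$. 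This is the missing ingredient; once you have it, your combination step goes through verbatim.
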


Similar as before, we first examine the structure of the matrix $\wt Q_{U_S}$:

\begin{claim} [Structural]\label{claim:1bstructure} Under the same assumption as
  Lemma~\ref{prop:bound-sig-QUs}, we can write $\wt Q_{U_S}$ in the following matrix product form:
\begin{align}
  \label{eq:QUs-PUs}
  & \wt Q_{U_S} = \wt P_{U_S} D_{\wt \omega} {\wt \Sigma_J}^\top.
\end{align}
The columns of the matrix $\wt P_{U_S}\in\R^{n^2\times k}$ are the vectorized and projected
covariance matrices as defined in (\ref{eq:PUs-def}); $D_{\wt \omega}$ is the diagonal matrix with
the mixing weights $\wt\omega_i$ on its diagonal; and the  matrix  $\wt \Sigma_J$ is defined as:
\begin{align*}
  {\wt \Sigma_J} &= \lt[ \vc[ \wt \Sigma_{(j_1,j_2)}^{(i)}: (j_1,j_2)\in\mc J]:i\in[k] \rt]
  \in\R^{|\mc J|\times k}.
\end{align*}
\end{claim}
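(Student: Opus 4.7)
The plan is to derive the factorization by directly vectorizing the identity in Claim~\ref{claim:projected34} and reading off the coefficients column by column.

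First, I would invoke Claim~\ref{claim:projected34}, which tells us that for any pair $(j_1,j_2)\in\mc J$ (so that both $j_1,j_2\in\mc H$, hence $\wt\Sigma^{(i)}_{[:,j_1]},\wt\Sigma^{(i)}_{[:,j_2]}\in\mc S$), the two cross rank-one terms $\wt\Sigma^{(i)}_{[:,j_1]}(\wt\Sigma^{(i)}_{[:,j_2]})^\top$ and $\wt\Sigma^{(i)}_{[:,j_2]}(\wt\Sigma^{(i)}_{[:,j_1]})^\top$ in \eqref{eq:C2-1} are annihilated by $\prj_{S^\perp}$ on the left, leaving
\[
  \prj_{S^\perp}\wt M_4(e_{j_1},e_{j_2},I,I) \;=\; \sum_{i=1}^k \wt\omega_i\, \wt\Sigma^{(i)}_{j_1,j_2}\,\prj_{S^\perp}\wt\Sigma^{(i)}.
\]

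Next I would vectorize this equality. Since $\vc(\cdot)$ is a linear map and the scalars $\wt\omega_i\wt\Sigma^{(i)}_{j_1,j_2}$ pull outside,
\[
  \vc\!\big(\prj_{S^\perp}\wt M_4(e_{j_1},e_{j_2},I,I)\big) \;=\; \sum_{i=1}^k \big(\wt\omega_i\, \wt\Sigma^{(i)}_{j_1,j_2}\big)\,\vc\!\big(\prj_{S^\perp}\wt\Sigma^{(i)}\big).
\]
The left-hand side is exactly the $(j_1,j_2)$-indexed column of $\wt Q_{U_S}$ by its definition in \eqref{eq:QUs-def}. The vectors $\vc(\prj_{S^\perp}\wt\Sigma^{(i)})$ on the right are the columns of $\wt P_{U_S}$ by the definition in \eqref{eq:PUs-def}. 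Thus the $(j_1,j_2)$-column of $\wt Q_{U_S}$ is a linear combination of the columns of $\wt P_{U_S}$ with coefficient vector $c_{(j_1,j_2)}\in\R^k$ whose $i$-th entry is $\wt\omega_i\,\wt\Sigma^{(i)}_{j_1,j_2}$.

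It then remains to assemble these coefficient vectors into a matrix and recognize the factorization. By construction $\wt\Sigma_J$ has its $(j_1,j_2)$-row equal to $[\wt\Sigma^{(i)}_{j_1,j_2}:i\in[k]]$, so the $(j_1,j_2)$-column of $\wt\Sigma_J^\top$ is precisely $[\wt\Sigma^{(i)}_{j_1,j_2}:i\in[k]]^\top$; multiplying on the left by the diagonal matrix $D_{\wt\omega}$ scales the $i$-th entry by $\wt\omega_i$, giving exactly $c_{(j_1,j_2)}$. Hence column-wise
\[
  [\wt Q_{U_S}]_{:,(j_1,j_2)} \;=\; \wt P_{U_S}\,D_{\wt\omega}\,[\wt\Sigma_J^\top]_{:,(j_1,j_2)},
\]
which packaged across all $(j_1,j_2)\in\mc J$ is the asserted identity $\wt Q_{U_S}=\wt P_{U_S}D_{\wt\omega}\wt\Sigma_J^\top$.

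There is no real obstacle here beyond careful bookkeeping: the only thing to double-check is that the ordering of columns of $\wt Q_{U_S}$ matches the ordering of rows of $\wt\Sigma_J$ (both indexed by $\mc J=\{(j_1,j_2): j_1\le j_2,\ j_1,j_2\in\mc H\}$), and that the left-projection in Claim~\ref{claim:projected34} really does kill both cross rank-one terms in Claim~\ref{claim:M4-proj34} (it does, because each such term has its left factor in $\mc S$). The substantive work of Step 1(b), namely lower-bounding $\sigma_k(\wt Q_{U_S})$, is then done by bounding the three factors on the right separately using this clean structural form.
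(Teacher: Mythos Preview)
Your proposal is correct and follows essentially the same approach as the paper: invoke Claim~\ref{claim:projected34} to identify each column of $\wt Q_{U_S}$ as a linear combination of the columns of $\wt P_{U_S}$ with coefficient vector $(\wt\omega_i\wt\Sigma^{(i)}_{j_1,j_2})_{i\in[k]}$, then recognize these coefficient vectors as the columns of $D_{\wt\omega}\wt\Sigma_J^\top$. The paper's proof is just a terser version of exactly this.
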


\begin{proof}
  This claim follows from Claim~\ref{claim:projected34}, and the rule of matrix product.
  % Let the
  % rows of ${\wt \Sigma_J}$ and the columns of $\wt Q_{U_S}$ be indexed by $(j_1,j_2)\in \mc
  % J$. The
  % row in ${\wt \Sigma_J}$ contains exactly the coefficients $\wt \Sigma^{(i)}_{j_1,j_2}$. These
  The coefficients $\wt \omega_i\wt \Sigma_{j_1,j_2}^{(i)}$ for the linear combinations of $\vc(\prj_{\mc S^\perp} \wt \Sigma^{(i)})$ are given by the
  columns of the product $D_{\wt\omega}\wt \Sigma_J^\top$. The coefficients are then multiplied by $\wt P_{U_S}$ to select the correct columns.
\end{proof}

To prove Lemma~\ref{prop:bound-sig-QUs}, similar
% (and in fact easier due to the simpler structure)
to the proof ideas of Lemma~\ref{prop:find-S}, we lower bound the $k$-th singular value of all the three factors.

\paragraph{Proof of Lemma~\ref{prop:bound-sig-QUs}}

By the structural Claim~\ref{claim:1bstructure}, we know the matrix $\wt Q_{U_S}$ can be written as
a product of the three matrices as $\wt Q_{U_S} = \wt P_{U_S}D_{\wt \omega} {\wt \Sigma_J}^\top$.

We lower bound the $k$-th singular value of each of the three factors.  It is easy for the last two matrices. Note that by assumption $\sigma_k
(D_{\wt \omega}) \ge \omega_o$, and since ${\wt \Sigma_J}^\top$ is just a perturbed rectangular matrix,  we can  apply
Lemma~\ref{lem:prj-rand-gaussian} and  with high probability we have $\sigma_k( {\wt \Sigma_J}) \ge \Omega(\rho \sqrt{n})$.

The first matrix $\wt P_{U_S}$ is more subtle. Let us define the projection $D_{ S^\perp}= \prj_{ S^{\perp}} \ot_{kr} I_n \in\R^{n^2\times n^2}$. This
is just a way of saying ``apply the projection $\prj_{ S^{\perp}}$ to all columns'' and then vectorize the matrix. In particular, for any matrix $A$
we have $D_{\mc S^\perp} \vc(A) = \vc(\prj_{\mc S^\perp} A)$, therefore by definition of $\wt P_{U_S}$ we can write $\wt P_{U_S} = D_{ S^\perp} \wt
\Sigma$.

However, we cannot apply the same trick to directly bound the smallest singular value of $D_{S^\perp}$ and $\prj_{D_{S^\perp}}\wt \Sigma$
separately. The problem here is that $D_{\mc S^\perp}$ and $\wt \Sigma$ are not independent, as the subspace $S$ obtained in Step 1(a) also depends on
the perturbation on $\wt \Sigma$, therefore $\prj_{D_{S^\perp}}\wt \Sigma$ is not simply a projected perturbed matrix.
% Although for any fixed projection it is easy to argue the projected $\wt \Sigma$ has large singular values,
%We need to carefully argue about the dependency here.
Instead, we show that even conditioned on the part of randomness that is common in $S$ and $\wt
\Sigma$, $\wt \Sigma$ still has sufficient randomness due to the high dimensions, and we can still extract a tall random matrix out of it.
This is elaborated in the following claim:

\begin{claim}
\label{claim:PUS}
Under the assumptions of Lemma~\ref{prop:bound-sig-QUs}, with high probability the matrix $\wt P_{U_S} = D_{\mc S^\perp} \wt \Sigma$ has smallest singular value at least $\Omega(\rho n)$.
\end{claim}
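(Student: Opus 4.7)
The approach is to decouple the randomness in $\wt\Sigma$ into a part that defines $\mc S$ and a part that is independent of $\mc S$ conditional on that, then to apply a smoothed lower bound on the smallest singular value of the remaining column-independent random matrix. Let $\mc F$ be the $\sigma$-algebra generated by all diagonals of the $\wt\Sigma^{(i)}$ and by all off-diagonal entries $\wt\Sigma^{(i)}_{a,b}$ with $\{a,b\}\cap \mc H\neq \emptyset$. By Claim~\ref{claim:M4-proj4}, every one-dimensional slice $\wt M_4(e_{j_1},e_{j_2},e_{j_3},I)$ used in Step~1(a) depends only on $\mc F$-measurable entries, so $\mc S$ and $\prj_{S^\perp}$ are $\mc F$-measurable. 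I will condition on $\mc F$ throughout.

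Set $T:=[n]\setminus \mc H$, $|T|=n-\sqrt n$. After conditioning, the remaining randomness lives in symmetric Gaussian matrices $\Delta^{(i)}$ supported on the off-diagonal of $T\times T$ with independent $\mathcal N(0,\rho^2)$ entries above the diagonal, mutually independent across $i\in[k]$. Writing $\wt\Sigma^{(i)}=\Sigma^{(i),\mc F}+\Delta^{(i)}$ and using that $\Delta^{(i)}$ is supported off-diagonally on $T\times T$ yields, for any $a,b\in T$,
\[
\big[\prj_{S^\perp}\wt\Sigma^{(i)}\big]_{a,b} \;=\; \big[\prj_{S^\perp}\Sigma^{(i),\mc F}\big]_{a,b} \;+\; \big[P\,\Delta^{(i)}\big]_{a,b},
\]
where $P:=(\prj_{S^\perp})\big|_{T\times T}$ is the $|T|\times |T|$ principal submatrix. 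Since row selection only decreases $\sigma_k$, it suffices to lower bound the $k$-th singular value of the row submatrix of $\wt P_{U_S}$ indexed by $(a,b)\in T\times T$ with $a\neq b$. By the display above its columns are, conditional on $\mc F$, independent Gaussian vectors with $\mc F$-measurable means and covariances.

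The crux is then to show that these columns carry enough Gaussian variance in enough directions. Using $n\geq C_1 k^2$, one has $\dim(\mc S)\leq k|\mc H|=k\sqrt n\leq n/\sqrt{C_1}$, so $(\prj_S)|_{T\times T}$ has rank at most $n/\sqrt{C_1}$, and therefore $P=I_T-(\prj_S)|_{T\times T}$ has at least $|T|-k\sqrt n=\Omega(n)$ singular values equal to $1$. A short direct computation of $\mathrm{Cov}(\vc(P\Delta^{(i)}))$ restricted to the off-diagonal coordinates, tracked through the symmetry identification $\Delta^{(i)}_{c,b}=\Delta^{(i)}_{b,c}$, should show that this conditional covariance has at least $\Omega(n^2)$ eigenvalues of order $\rho^2$; intuitively, $P\Delta^{(i)}$ behaves like $P$ times a GOE-scale matrix, and $P$ preserves an $\Omega(n)$-dimensional subspace exactly. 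I expect this covariance verification to be the most delicate step, because the symmetry constraint on $\Delta^{(i)}$ introduces correlations that must be handled without losing the $\rho^2$ scale.

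With that spectral lower bound on each column's conditional covariance, the random part of the row-restricted $\wt P_{U_S}$ is a tall matrix of $k$ independent Gaussian columns in $\R^{\Theta(n^2)}$ whose per-column covariance has smallest nontrivial eigenvalue $\Omega(\rho^2)$ on an $\Omega(n^2)$-dimensional subspace, added to a deterministic matrix. Applying Lemma~\ref{lem:robust-sig-min} in the smoothed form yields $\sigma_k(\wt P_{U_S,T\times T})\geq \Omega(\rho n)$ with probability at least $1-(C\epsilon)^{cn}$, and the row-restriction inequality $\sigma_k(\wt P_{U_S})\geq \sigma_k(\wt P_{U_S,T\times T})$ completes the claim.
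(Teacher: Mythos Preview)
Your high-level plan---condition on the randomness that pins down $\mc S$, then exploit the leftover Gaussian perturbations---matches the paper's, but your conditioning has a genuine gap. In Definition~\ref{def:smoothgaussian} the adversary sets the diagonals of each $\wt\Sigma^{(i)}$ \emph{after} observing all off-diagonal perturbations, so the diagonals are functions of the $T\times T$ off-diagonal block as well. Your $\mc F$ includes the diagonals; conditioning on $\mc F$ therefore conditions on a function of the $T\times T$ entries, and the assertion that ``the remaining randomness lives in $\Delta^{(i)}$ with independent $\mc N(0,\rho^2)$ entries'' is unjustified for a general adversary. You cannot simply drop the diagonals from $\mc F$ either, since $\mc S$ genuinely depends on $\wt\Sigma^{(i)}_{j,j}$ for $j\in\mc H$. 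Separately, the covariance lower bound on $\vc(P\Delta^{(i)})$ is never carried out, and Lemma~\ref{lem:robust-sig-min} as stated requires independently perturbed entries, not anisotropic Gaussian columns.

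The paper sidesteps all of this with one extra projection. It enlarges $S$ to $S'=\mathrm{span}\bigl(S,\{e_j:j\in\mc H\}\bigr)$; since $\wt\Sigma^{(i)}_{[:,j]}$ differs from its off-diagonal part by a multiple of $e_j$, the span $S'$ depends only on the off-diagonal entries touching $\mc H$ and is insensitive to the adversarial diagonals. With $\mc Z=\mathrm{span}\bigl(\Sigma,\,S'\ot_{kr}I_n\bigr)$ one has $\mc Z^\perp$ contained in the range of $D_{S^\perp}$ because $S\subseteq S'$, hence $\sigma_k(D_{S^\perp}\wt\Sigma)\ge\sigma_k(\prj_{\mc Z^\perp}\wt\Sigma)=\sigma_k(\prj_{\mc Z^\perp}E)$, the last step using $\Sigma\subseteq\mc Z$. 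Conditioned only on the off-diagonal entries in $\mc L$, the subspace $\mc Z$ is fixed with $\dim\mc Z\le n^2/10$, and $E$ still carries $\Omega(n^2)$ fresh independent Gaussian coordinates; Lemma~\ref{lem:prj-rand-gaussian} then gives $\sigma_k\ge\Omega(\rho n)$ directly, with no covariance analysis at all. The trick is to trade a few dimensions (projecting to a strict subspace of the range of $D_{S^\perp}$) for a projection that is deterministic under the right conditioning, reducing to the standard rectangular-Gaussian bound.
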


Let $\mc L$ be the set of the $(j_1,j_2)$-th entries of $\wt\Sigma^{(i)}$ for all $i$ and one of $j_1,j_2$ is in the set $\mc H$. By Step 1(a), the
subspace $\mc S' =\tx{span}( S, e_j:j\in \mc H)$ is only dependent on the entries in $\mc L$. Here we need to include the span of
$e_j$'s for $j\in \mc H$ because the {\em diagonal} entries can depend on the other random perturbations. By adding the span of the vector $e_j$'s for $j\in
\mc H$ the subspace remains invariant no matter how the diagonal entries change.

Let $\mc Z = \tx{span}(\Sigma, S' \ot_{kr}I_n)$, and recall that the columns of $\Sigma$ are
the factorization of the unperturbed covariance matrices. The subspace $\mc Z$ has dimension no larger than $|\mc H|(k+1)n+k \le n^2/10$, and depends
on the randomness of $\mc L$.

Let $\wt\Sigma = \Sigma + E $ where $E$ is the random perturbation matrix. Now we condition on the randomness in $\mc L$. By definition the subspace
$\mc Z$ is deterministic conditional on $\mc L$. However, even if we only consider entries of $E\backslash\mc L$ there are still at least ${n-k|\mc H|
  \choose 2} \ge n^2/4$ independent random variables.  We shall show the randomness is enough to guarantee that the smallest singular value of
$\prj_{D_{\mc S^\perp}} \wt \Sigma$ is lower bounded with high probability conditioned on $\mc L$:
  \begin{align*}
    \sigma_k(\wt P_{U_S}) &= \sigma_k(D_{\mc S^\perp}{\wt\Sigma})\\
    &\ge \sigma_k(\prj_{\mc Z^\perp} \wt \Sigma)
    \\
    & = \sigma_k(\prj_{\mc Z^\perp}\Sigma + \prj_{\mc Z^\perp} E)\\& = \sigma_k(\prj_{\mc Z^\perp} E).
    \end{align*}
    Here we used the fact that projection to a subspace cannot increase the singular
    values (Lemma~\ref{lem:sigv-prj}).

%Now we further remove all the $k$ directions of $D'_{\mc S^\perp}\Sigma$ from the space. Which means we define $\mc Z = \tx{span}(D'_{\mc S^\perp}\Sigma, \mc S' \ot_{kr}I_n)$ (which has dimension $|\mc H|(k+1)n+k = n^2/10$) and do projection to $\mc Z^\perp$. So we have
%
%    \begin{align*}
%        \sigma_k(\wt P_{U_S}) & \ge \sigma_k(D'_{\mc S^\perp}\Sigma + D'_{\mc S^\perp} E)\\
%    & \ge \sigma_k(\prj_{\mc Z^\perp}\Sigma + \wt D_{S^\perp} E))
%    \\
%    & = \sigma_k(\prj_{(\wt D_{S^\perp}\Sigma)^\perp }\wt D_{S^\perp} E)
%  \end{align*}
%  Note that $\prj_{(\wt D_{S^\perp}\Sigma)^\perp } \wt D_{S^\perp}$ is a projection which only depend on the
%  randomness of $P_JE$. Note that the rank of $D_{S^\perp}$ is equal to $ |D_{S^\perp}| = n|\mc S^{\perp}| =
%  n( n-k|\mc H|).$ We have
%  \begin{align*}
%    n_2 -|\mc J|-  k - nk|\mc H|
%    &\ge (n+1)n/2 - (|\mc H|+1)|\mc H|/2 - k - nk|\mc H|
%    \\
%    % &\ge n^2/2 - nk|\mc H| - k - (|\mc H|+1)|\mc H|/2
%    % \\
%    &= \Omega(n^2/2 - n^{1.5}k)
%    \\
%    &>2k.
%  \end{align*}
    Conditioned on the randomness of entries in $\mc L$, $E\backslash \mc L$ still has at least $n^2/4$ random directions, while the dimension of the
    deterministic subspace $\mc Z$ is at most $n^2/10$.  Therefore we can apply Lemma~\ref{lem:prj-rand-gaussian} again to argue that conditionally,
    for every $\epsilon>0$, with probability at least $1-(C_1\epsilon)^{C_2 n^2}$ we have:
  \begin{align*}
    \sigma_{k}(\wt P_{U_S}) \ge \epsilon\rho\sqrt{C_3 n^2}.
  \end{align*}

  In summary, apply union bound and we can conclude that with probability at least $1- (C_1\epsilon)^{C_2 n }$,
  % over the randomness of $E_J$ and $E_{J^c}$,
  \begin{align*}
    \sigma_{k} ( \wt Q_{U_S}) = \sigma_k(\wt P_{U_S}) \sigma_k(D_{\wt\omega}) \sigma_{k}(\wt{\Sigma}_J) \ge C_3
    \omega_o (\epsilon\rho)^2 n^{1.5}.
  \end{align*}

  % By Lemma~\ref{lem:robust-sig-min}, we have that for $\tau_1= \epsilon\rho\sqrt{k|\mc H|}$,
  % \begin{align*}
  %   \sigma_{k} (\wt P_{U_S}) \ge \sigma_{|\mc S^{\perp}|}\tau_1
  % \end{align*}
  % with probability at least $1- (c\epsilon)^{k_s+1}$.
\qed

%In order to see how these two matrices are related, we first define $\bs\Sigma_J$ as below:
%%a sub-block of $\bs\Sigma$:
%Then, define the projection $D_{S^\perp}= \prj_{S^{\perp}} \ot_{kr} I_n \in\R^{n^2\times n^2}$. Since
%$\vc(\prj_{S} A) = (\prj_{S}\ot_{kr} I_{n})\vc(A)$, we can write $Q_{U_S}= D_{S^\perp}Q_{U_0}$ and $P_{U_S} =
%D_{S^\perp}\bs\Sigma$.
%%
%Also, note that by Claim~\ref{claim:M4-proj34} we can factorize $Q_{U_0}$ as $Q_{U_0} = \bs\Sigma
%D_{\omega}\bs\Sigma_J^\top$.
%%
%Finally, we have:
%\begin{align}
%  \label{eq:QUs-PUs}
%  & Q_{U_S} = P_{U_S}D_{\omega} \bs\Sigma_J^\top.
%\end{align}
%
%
%Therefore, given the above factorization form, a deterministic condition which guarantees that $Q_{U_S}$ and
%$P_{U_S}$ have the same column span is that the $k$-th singular value of $Q_{U_S}$ is greater than zero.
%
%First, in Lemma~\ref{prop:bound-sig-QUs}, we show that with high probability over the perturbation of
%the covariances, the $k$-th singular value of $\wt Q_{U_S}$ is lower-bounded by $poly(\rho,n,k)$.  Second, in
%Lemma~\ref{prop:bound-prj-Us} we show that the matrices $\wh U_S$ ((of the $\rho$-smooth model) and $\wt
%U_S$ (obtained with empirical moments) are close. As a corollary, the sample complexity for this step is
%polynomial in $n$ and $k$.

Next, we  again use matrix perturbation bounds to prove the robustness of this step, which depends on the singular value decomposition of the
matrix $\wt Q_{U_S}$.

\begin{lemma}
[Lemma~\ref{lem:2perturb} restated]
  \label{prop:bound-prj-Us}
  Given the empirical 4-th order moments $\wh M_4 = \wt M_4 + E_4$, and given the output $\prj_{\wh S^\perp}$
  from Step 1 (a).  Suppose that $\|\prj_{\wh S^\perp}- \prj_{\wt S^\perp}\|\le \delta_1$, and suppose that
  the absolute value of entries of $E_4$ are at most $\delta_2$ for $\delta_2\le \|\wt Q_{U_S}\|_F/\sqrt{n^3}$.
  Conditioned on the high probability event $\sigma_{k} ( \wt Q_{U_S}) >0$, we have:
  \begin{align}
    \label{eq:bound-prj-U-S}
    \|\prj_{\wh U_S} - \prj_{\wt U_S}\| \le { n^{2.5} \left( 1 + 2\delta_1/\delta_2 \right)\over \sigma_{k} (
      \wt Q_{U_S}) } \delta_2.
  \end{align}
\end{lemma}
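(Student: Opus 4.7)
The plan is to treat $\prj_{\wh U_S}$ and $\prj_{\wt U_S}$ as the projections onto the top-$k$ left singular subspaces of $\wh Q_{U_S}$ and $\wt Q_{U_S}$, and then reduce the problem to a standard matrix perturbation application (Wedin's theorem, in the form of the corollary Lemma~\ref{cor:perturb-prj-bound} that was used for Step~1(a)). For this, the only real task is to bound the matrix-norm distance $\|\wh Q_{U_S} - \wt Q_{U_S}\|$ coming from the two independent sources of error: the empirical moment error $E_4$ and the error in the estimate of $\prj_{S^\perp}$ supplied by Step~1(a).

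First I would fix a column index $(j_1,j_2)\in \mc J$ and write the column-wise difference by adding and subtracting $\prj_{\wh S^\perp} \wt M_4(e_{j_1},e_{j_2},I,I)$, giving
\begin{align*}
\vc\bigl(\prj_{\wh S^\perp} \wh M_4(\cdot) - \prj_{\wt S^\perp}\wt M_4(\cdot)\bigr)
 = \vc\bigl(\prj_{\wh S^\perp}(\wh M_4-\wt M_4)(\cdot)\bigr)
 + \vc\bigl((\prj_{\wh S^\perp}-\prj_{\wt S^\perp})\wt M_4(\cdot)\bigr).
\end{align*}
For the first piece, since $\prj_{\wh S^\perp}$ is a contraction and each two-dimensional slice $(\wh M_4-\wt M_4)(e_{j_1},e_{j_2},I,I)$ is an $n\times n$ matrix with entries bounded by $\delta_2$, its Frobenius norm is at most $n\delta_2$. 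For the second piece, I would use $\|(\prj_{\wh S^\perp}-\prj_{\wt S^\perp})X\|_F\le \delta_1\|X\|_F$ and the assumption $\delta_2\le \|\wt Q_{U_S}\|_F/\sqrt{n^3}$ together with the trivial bound $\|\wt M_4(e_{j_1},e_{j_2},I,I)\|_F$ in terms of the overall $\|\wt Q_{U_S}\|_F$, so that the second piece contributes at most a $\delta_1/\delta_2$ factor times the first.

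Summing over the $|\mc J|=O(n)$ columns then yields a Frobenius-norm bound of the shape $\|\wh Q_{U_S}-\wt Q_{U_S}\|_F \le O(n^{2.5})\,(\delta_2+2\delta_1)$, where the polynomial factor in $n$ tracks the vectorization dimension ($n^2$) and the number of columns. Plugging this into Wedin's theorem in the form of Lemma~\ref{cor:perturb-prj-bound}, using the hypothesis $\sigma_k(\wt Q_{U_S})>0$ to control the singular-value gap between the $k$-th and the $(k{+}1)$-st singular values (the latter is essentially zero because $\wt Q_{U_S}$ has rank at most $k$), one arrives at exactly \eqref{eq:bound-prj-U-S}.

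The main obstacle I anticipate is the bookkeeping of norms rather than any deep new idea: one must verify that the auxiliary inequality $\|\wt M_4(e_{j_1},e_{j_2},I,I)\|_F \le \sqrt{n}\,\|\wt Q_{U_S}\|_F / (\text{something})$ is tight enough that the hypothesis $\delta_2 \le \|\wt Q_{U_S}\|_F/\sqrt{n^3}$ really does convert the $\delta_1\|\wt M_4(\cdot)\|_F$ terms into the clean $\delta_1/\delta_2$ rescaling appearing in \eqref{eq:bound-prj-U-S}, and that all polynomial factors in $n$ collapse into the single $n^{2.5}$ appearing in the bound. The rest — the reduction to SVD perturbation and the conclusion via Wedin — follows the same template as Lemma~\ref{prop:bound-S-prj}.
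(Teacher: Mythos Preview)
Your approach is essentially the paper's: bound $\|\wh Q_{U_S}-\wt Q_{U_S}\|_F$ by splitting into a moment-error piece and a projection-error piece, then invoke Wedin via Lemma~\ref{cor:perturb-prj-bound}. One clarification on the bookkeeping you flagged: you should not (and cannot) bound $\|\wt M_4(e_{j_1},e_{j_2},I,I)\|_F$ in terms of $\|\wt Q_{U_S}\|_F$, since the latter consists of \emph{projected} slices which can only be smaller; instead the paper bounds the unprojected $\|\wt Q_{U_0}\|_F\le\sqrt{n^2|\mc J|}$ directly from the model assumption $\|\wt\Sigma^{(i)}\|\le 1$, and the hypothesis $\delta_2\le\|\wt Q_{U_S}\|_F/\sqrt{n^3}$ is used only to absorb the cross term $(\prj_{\wh S^\perp}-\prj_{\wt S^\perp})(\wh Q_{U_0}-\wt Q_{U_0})$ in the paper's three-term split $\wh D\wh Q_0-\wt D\wt Q_0=\wt D(\wh Q_0-\wt Q_0)+(\wh D-\wt D)\wt Q_0+(\wh D-\wt D)(\wh Q_0-\wt Q_0)$ --- your two-term split $a b-c d=a(b-d)+(a-c)d$ avoids that cross term entirely, so with the direct bound on the unprojected slices your argument goes through without ever invoking that hypothesis.
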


%The proof follows from standard matrix perturbation bounds.
\paragraph{Proof of Lemma~\ref{prop:bound-prj-Us}}

  Note that the columns of $U_S$ are the leading left singular vectors of $\wt Q_{U_S}$. We want to apply the
  perturbation bound of singular vectors.

  Similar to the proof of Lemma~\ref{prop:bound-S-prj}, we first need to bound the spectral distance
  between $\wh Q_{U_S}$ and $\wt Q_{U_S}$. In fact we will even bound the Frobenius norm difference:
  \begin{align*}
    \|\wh Q_{U_S} - \wt Q_{U_S}\|_F & = \|\wh D_{S^\perp}\wh Q_{U_0} - \wt D_{S^\perp} \wt
    Q_{U_0} \|_F
    \\
    &= \|\wt D_{S^\perp}(\wh Q_{U_0} - \wt Q_{U_0}) + (\wh D_{S^\perp} - \wt D_{S^\perp} )
    \wt Q_{U_0}
+ (\wh D_{S^\perp} - \wt D_{S^\perp} ) (\wh Q_{U_0} - \wt Q_{U_0})
    \|_F
    \\
    &\le \|\wt D_{S^\perp}\|_F \|\wh Q_{U_0} - \wt Q_{U_0} \|_F + 2\|\wh D_{S^\perp} - \wt
    D_{S^\perp} \|_F \|\wt Q_{U_0} \|_F
    \\
    &\le \sqrt{n^2}\|\wt D_{S^\perp}\|_2 \|\wh Q_{U_0} - \wt Q_{U_0} \|_F +
    2\sqrt{n}\|\prj_{\wh S^\perp} - \prj_{\wt S^\perp}\|_F \|\wt Q_{U_0} \|_F
    \\
    &\le n \sqrt{n^2|\mc J| \delta_2^2 } +2\sqrt{n} \sqrt{n^2 |\mc J|} \|\prj_{\wh S^\perp} -
    \prj_{\wt S^\perp}\|_F
    \\
    &\le n^2{|\mc H|\over \sqrt{2}} (1+2 \|\prj_{\wh S^\perp} - \prj_{\wt
      S^\perp}\|_2/\delta_2) \delta_2,
    % \\
    % &\le{ n^2|\mc H|\over \sqrt{2}} \left(1+ { 2  \|\prj_{\wh S^\perp} - \prj_{\wt
    %   S^\perp}\|_2\right) \delta
  \end{align*}
  where we used the assumption $\|\wt \Sigma^{(i)}\|\le 1$ to bound $\|\wt Q_{U_0} \|_F$, used the upperbound on $\|\wh Q_{U_0} - \wt Q_{U_0}\|_F$ to
  bound the term $\| (\wh D_{S^\perp} - \wt D_{S^\perp} ) (\wh Q_{U_0} - \wt Q_{U_0})\|_F\le \|(\wh D_{S^\perp} - \wt D_{S^\perp} )\|_F \delta_2
  \sqrt{n^2 |\mc J|}\le \|(\wh D_{S^\perp} - \wt D_{S^\perp} )\|_F\|\wt Q_{U_0}\|_F$, and used the fact that Frobenius norm is sub-multiplicative.
  Apply Wedin's Theorem (in particular the corollary Lemma~\ref{cor:perturb-prj-bound}), we can conclude \eqref{eq:bound-prj-U-S}.  \qed

%\subsection{Proofs for Step 1(b)}

% \begin{corollary}
% Then for any $j\in[k]$, we have
% \begin{align*}
%   \|\wh U_{S,{[:,j]}} - \wt U_{S,{[:,j]}}\|_F \le {c\delta\over \sigma_{k} ( \wt Q_{U_S})}
% \end{align*}
% for some absolute constant $c$, with probability at least $1-?? $.
% \end{corollary}

% \rg{ If we are given $\hat{M}_4$ that is entry-wise $\delta$ close to the true moments, subspace is
%   $\delta_2$ close, then $\hat{Q} = Q + E_Q$ where $E_Q$ is a matrix with spectral norm bounded by
%   $??$.  }

\subsection{Step 1 (c).   Finding $\mc U$ by Merging the Two Projected Span}
\label{sec:step-1c}

\begin{algorithm}[h!]
  \caption{MergeProjections}
  \label{alg:mergeproj}
  \DontPrintSemicolon

  \tb{Input:} two subspaces $S_1,S_2\in \R^{n\times ks}$, two subspaces $U_1,
  U_2\in\R^{n^2\times k}$ (the span of covariance matrices projected to the corresponding
  $S_1^\perp,S_2^\perp$).

  \tb{Output:} $ span\{\Sigma^{(i)}:i\in[k]\}$, represented by an
  orthonormal matrix $U\in\R^{n^2 \times k}$.

  \BlankLine

  \STATE Let $A$ be the first $2ks$ left singular vectors of $[S_1,S_2]$.

  \STATE Let $S_3$ be the first $(n-2ks)$ left singular vectors of $I- AA^\top$.

  \STATE Let $Q = [I_{n^2},\prj_{(S_3\ot_{kr}I_n)} \prj_{U_1}]^\top U_2$, compute the SVD
  of $Q$.

  \BlankLine

  \tb{Return:} matrix $U$, whose columns are the first $k$ left singular vectors $Q$.

\end{algorithm}

\begin{figure}
\centering
\includegraphics[height =1.5in]{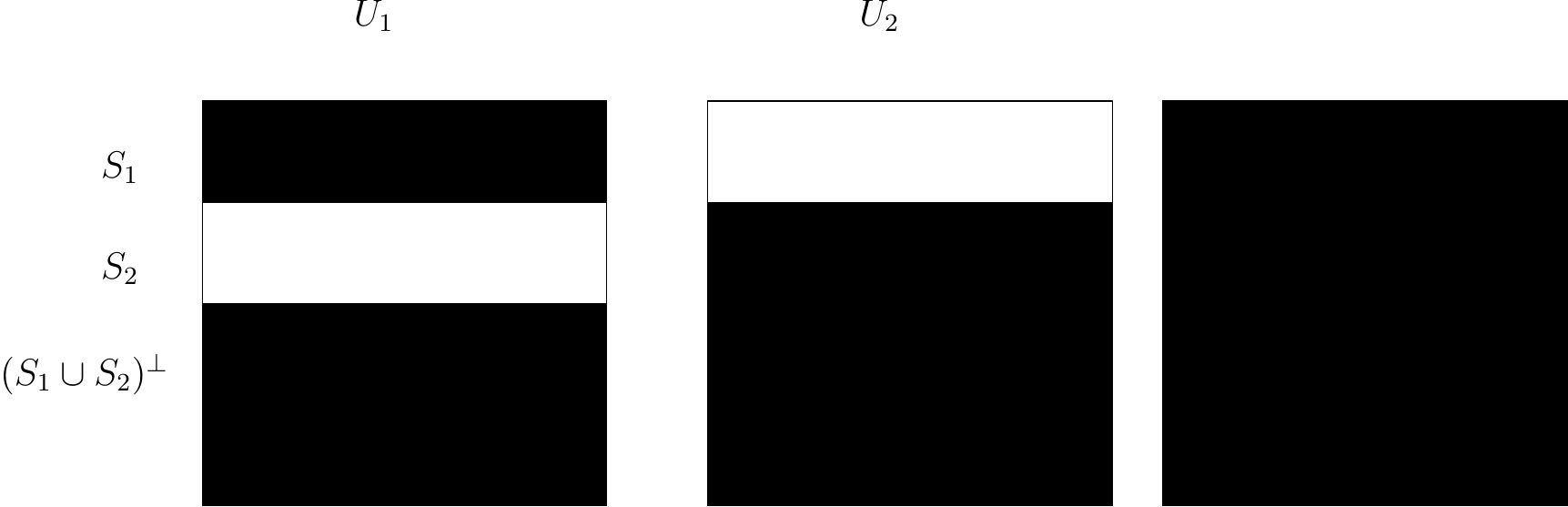}
\caption{Step 1(c): Merging two subspaces.}\label{fig:merge}
\end{figure}

Pick two disjoint sets of indices $\mc H_1, \mc H_2$, and repeat Step 1 (a) and Step 1 (b) on each of them to
get $\wt S_j^{\perp}$ and $\wt U_j$ for $j=1,2$.
In Step 1 (c), we merge the two span $\wt U_{1}$ and $\wt U_{2}$ to get $\mc U$.

If we are given two projections $\prj_{S_1^\perp}U$ and $\prj_{S_2^\perp}U$ of a {\em matrix} $U$, and if the union of the two subspaces $S_1^\perp$
and $S_2^\perp$ have full rank, namely $\tx{dim}(S_1\cup S_2) = n$, then we can  recover $U$ by:
\begin{align*}
  U = \lt[
  \begin{array}{c}{}
    \prj_{S_1^\perp}\\ \prj_{S_2^\perp}
  \end{array}
  \rt]^{\dag}
  \lt[
  \begin{array}{c}{}
    \prj_{S_1^\perp}U\\ \prj_{S_2^\perp}U
  \end{array}
  \rt].
\end{align*}
However, it is slightly different if we are given two projections of a {\em subspace} $\mc U$, since a
subspace can be equivalently represented by different orthonormal basis up to linear transformation.

In particular, in our setting for $j=1,2,$ we can write $ \wt U_j = (\prj_{ S_j^\perp}\ot_{kr}I_n) \wt{
  \Sigma} W_j$ for some fixed but unknown full rank matrix $W_j$ (which makes the columns of matrix
$\wt{\Sigma} W_j$ an orthonormal basis of $\mc U$). Recall that we define $\wt{ \Sigma}\equiv
[\vc(\wt\Sigma^{(i)}):i\in[k]]$, and $D_{S_j^\perp}\equiv \prj_{ S_j^\perp}\ot_{kr}I_n$ for $j=1,2$.

The following Lemma shows that we can still { \em robustly} recover the {\em subspace} $\mc U$ if the two projections have sufficiently large
overlapping. The basic idea is to use the overlapping part to align the two basis of the subspace which the two projections act on.
\begin{lemma}[Robustly merging two projections of an unknown subspace]
  \label{lem:merge-deterministic}
  This is the detailed statement of Condition~\ref{lem:merge-simple}.

  Let the columns of two fixed but unknown matrices $V_1\in\R^{n\times k}$ and $V_2\in\R^{n\times k}$ form
  two basis (not necessarily orthonormal) of the same $k$-dimensional fixed but unknown subspace $\mc U$ in $\R^{n}$.

  For two $s$-dimensional known subspaces $S_1$ and $ S_2$,
  Let the columns of $A$ be the first $2s$ singular vectors of $[S_1, S_2]$, and let the columns of $S_3$ correspond
  to the first $(n-2s)$ singular vectors of $(I_n - \prj_{ A})$, therefore $S_3 \subset( S_1 \cup S_2)^\perp$.
  Suppose that $\sigma_{k}(\prj_{S_3} U)>0$ and that $\sigma_{2s}([S_1, S_2])>0$.
  Define matrices $U_1 = \prj_{S_1^\perp}V_1$ and $U_2 = \prj_{S_2^{\perp}}V_2$ and we
  know that $U_1^\top U_1 = U_2^\top U_2 = I_{k}$.

  We are given $\wh S_1, \wh S_2$ and $\wh U_1, \wh U_2$, and suppose that for $j=1,2,$ we
  have $\|\wh S_j-S_j\|_F\le \delta_s$ and $\|\wh U_j-U_j\|_F\le \delta_u$, for
  $\delta_s\le 1, \delta_u\le 1$.

  Let the columns of $\wh A$ be the first
  $2s$ singular vectors of $[\wh S_1,\wh S_2]$, and let the columns of $\wh S_3$ be the
  first $(n-2s)$ singular vectors of $(I_n - \prj_{\wh A})$.
  Define matrix $\wh U\in\R^{n\times 2k}$ to be:
  \begin{align}
    \label{eq:find-Uhat}
   \wh U = \lt[
    \begin{array}[c]{c}
      \wh U_2,\quad
       \wh U_1  (\wh S_3^\top \wh U_1)^\dag (\wh S_3^\top \wh U_2)
    \end{array}
    \rt]
  \end{align}
  % where $(\cdot)^\dag$ denotes the Moore-Penrose inverse and for a tall matrix $X$, $X^\dag =
  % (X^\top X)^{-1}X^\top$, and $\prj_{\wh U}$ denotes the projection to the column span of $\wh U$.
  %
  If $ \sigma_k(\prj_{S_3}U)>0$ and $\sigma_{2s}([S_1, S_2])>0$, then for some absolute
  constant $C$ we have:
  \begin{align*}
    \|\prj_{\wh U} - \prj_{U}\|
        \le { C\sqrt{k}( \delta_u +  \delta_s/  \sigma_{2s}([S_1,S_2]) ) \over
      \sigma_k(\prj_{S_3}U)^2 \sigma_{2s}([S_1, S_2])^3 }.
   %  4\sqrt{n}\lt(
   % { 3\delta_u \over \sigma_{2s} ([S_1,S_2])^3}+ {2\delta_s\over \sigma_{2s} ([S_1,S_2])^4}
   %  \rt).
  \end{align*}

\end{lemma}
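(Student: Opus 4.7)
The plan is to first confirm that the noiseless version of the construction recovers $\mc U$, and then propagate the perturbations $\delta_u, \delta_s$ through the linear-algebraic steps using standard matrix perturbation tools (Wedin's theorem and Stewart's pseudoinverse bound).

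For the noiseless analysis, let $W$ denote the unique invertible $k\times k$ matrix with $V_2 = V_1 W$, which exists because $\tx{col}(V_1) = \tx{col}(V_2) = \mc U$. By construction of $S_3$, we have $S_3 \perp (S_1 \cup S_2)$, so
\[
  S_3^\top U_j \;=\; S_3^\top \prj_{S_j^{\perp}} V_j \;=\; S_3^\top V_j, \quad j=1,2.
\]
The hypothesis $\sigma_k(\prj_{S_3} U) > 0$ (with $U$ an orthonormal basis of $\mc U$) implies that $S_3^\top V_1$ has rank $k$, so $(S_3^\top U_1)^\dag (S_3^\top U_1) = I_k$ and hence
\[
  T \defeq (S_3^\top U_1)^\dag (S_3^\top U_2) \;=\; (S_3^\top V_1)^\dag (S_3^\top V_1 W) \;=\; W.
\]
Therefore $U_1 T = \prj_{S_1^{\perp}} V_1 W = \prj_{S_1^{\perp}} V_2$, so both blocks $U_2$ and $U_1 T$ of the noiseless $\wh U$ become projections of the \emph{same} basis $V_2$, onto $S_1^{\perp}$ and $S_2^{\perp}$ respectively. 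Since $\sigma_{2s}([S_1, S_2]) > 0$ forces $S_1 \cap S_2 = \{0\}$ and hence $S_1^{\perp} + S_2^{\perp} = \R^n$, the stacked map $[\prj_{S_1^{\perp}}; \prj_{S_2^{\perp}}]$ has trivial kernel; therefore $V_2$ is determined by the pair $(U_2, U_1 T)$ and $\mc U$ is recovered.

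For the perturbation propagation I would proceed in four composed steps. (i) Since $\|[\wh S_1, \wh S_2] - [S_1, S_2]\|_F \le \sqrt{2}\,\delta_s$ and the relevant spectral gap at index $2s$ is $\sigma_{2s}([S_1, S_2])$, Wedin's theorem gives $\|\prj_{\wh S_3} - \prj_{S_3}\| = O(\delta_s / \sigma_{2s}([S_1, S_2]))$. (ii) It follows that $\|\wh S_3^\top \wh U_j - S_3^\top U_j\| = O(\delta_u + \delta_s / \sigma_{2s}([S_1, S_2]))$ for $j=1,2$; using $\sigma_k(S_3^\top U_1) \ge \sigma_k(\prj_{S_3} U)$ (because $U_1$ has orthonormal columns) together with Stewart's pseudoinverse bound yields
\[
  \|\wh T - T\| \;=\; O\!\left( \frac{\delta_u + \delta_s/\sigma_{2s}([S_1, S_2])}{\sigma_k(\prj_{S_3} U)^2} \right),
\]
the square reflecting the joint perturbation of $(S_3^\top U_1)^\dag$ and of $S_3^\top U_2$. (iii) Submultiplicativity of the spectral norm then gives $\|\wh U_1 \wh T - U_1 T\| = O(\|\wh T - T\| + \delta_u \|T\|)$, and hence a bound on $\|\wh U - U\|_F$. (iv) Applying Wedin once more converts $\|\wh U - U\|$ into $\|\prj_{\wh U} - \prj_U\|$ divided by the smallest nonzero singular value of $U$; this last singular value is controlled from below using both $\sigma_k(\prj_{S_3} U)$ and $\sigma_{2s}([S_1, S_2])$, producing the announced denominator $\sigma_k(\prj_{S_3} U)^2\,\sigma_{2s}([S_1, S_2])^3$.

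The main obstacle is step (ii). In the noiseless case the identity $S_3^\top U_j = S_3^\top V_j$ relies on $S_3 \perp S_j$, but the perturbed $\wh S_3$ is only \emph{approximately} orthogonal to $\wh S_j$, so this identity becomes approximate and introduces cross terms of the form $\Delta^\top V_j$ with $\Delta = \wh S_3 - S_3$ (after a common orthogonal alignment of the two subspaces). I would write $\wh S_3 = S_3 + \Delta$ explicitly, bound $\|\Delta\|$ via step (i), and absorb these cross terms into the final estimate. Carefully tracking how this loss of exact orthogonality interacts with the pseudoinverse in $\wh T$ and propagates into the two-projection recovery is where the cubic dependence on $\sigma_{2s}([S_1, S_2])$ in the denominator accumulates.
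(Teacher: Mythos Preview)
Your approach matches the paper's proof essentially step for step: noiseless identification of $T=W$ via $S_3^\top U_j = S_3^\top V_j$, then Wedin for $\wh S_3$, Stewart's pseudoinverse bound, and Wedin again for $\prj_{\wh U}$. One clarification on the bookkeeping: your ``main obstacle'' in step (ii) is handled by a one-line product-perturbation bound $\|\wh S_3^\top \wh U_1 - S_3^\top U_1\| \le 2(\|\wh S_3 - S_3\| + \|\wh U_1 - U_1\|)$ with no need to track cross terms, and the cubic factor $\sigma_{2s}([S_1,S_2])^3$ arises entirely in step (iv), from the algebraic lower bound $\sigma_n([\prj_{S_1^\perp}, \prj_{S_2^\perp}]) \ge \sigma_{2s}([S_1, S_2])^3$ used to control $\sigma_k$ of the noiseless $[U_2,\ U_1 T]$.
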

\begin{proof}
  The proof will proceed in two steps, we first show that if we are given the exact
  inputs, namely $\delta_s=\delta_u=0$, then the column span of $\wh U$ defined in
  \eqref{eq:find-Uhat} is identical to the desired subspace $\mc U$.  Then we give a
  stability result using matrix perturbation bounds.

  {\noindent\em 1. Solving the problem using exact inputs.}

  Given the exact inputs $S_1,S_2$, $U_1,U_2$, first we show that under the conditions
  $\sigma_{2s}([S_1, S_2])>0$ and $\sigma_{k}(\prj_{S_3} U)>0$, then the column span of
  the matrix $\lt[U_2,\  U_1 (S_3^\top U_1)^\dag (S_3^\top U_2)\rt]$ is indeed identical to $\mc
  U = span(V_1) = span(V_2)$.
  % \begin{align}
  %   \label{eq:find-U}
  %   U = \lt[
  %   \begin{array}[c]{c}
  %    U_2, \quad
  %     U_1  (S_3^\top U_1)^\dag (S_3^\top U_2)
  %   \end{array}
  %   \rt],
  % \end{align}

  \begin{claim}
    \label{claim:merge-with-V}
    Under the same assumptions of Lemma~\ref{lem:merge-deterministic}, given a matrix
    $V\in\R^{k\times k}$ such that $V = V_1^\dag V_2$, let $\prj_{U_0}$ be the projection
    to the column span of $U_0= [U_2,\ U_1 V]$, then we have $\prj_{U_0} = \prj_{U}$.
  \end{claim}
  \begin{proof}
    Given $V = V_1^\dag V_2$, then $U_1 V = \prj_{S_1^\perp} V_1 V = \prj_{S_1^\perp}
    V_2$. Recall that by definition $U_2 = \prj_{S_2^\perp} V_2$, then the problem is now
    reduced to the simple problem of merging two projections ($ U_2=\prj_{S_2^\perp} V_2$
    and $U_1V=\prj_{S_1^\perp} V_2$) of the same matrix ($V_2$).
    %
    % Note that now $U = [\prj_{S_2^\perp} V_2, \quad \prj_{S_1^\perp} V_2]$.
    Therefore, to show that the columns of $U_0=[U_2, U_1V]$ indeed span $V_2$ and thus the
    desired subspace $U$, we only need to show that $[\prj_{S_1^\perp}, \prj_{S_2^\perp}]$
    has full column span. We show this by bounding the smallest singular value of it:
    \begin{align}
      \sigma_{n} ([\prj_{S_2^\perp}, \prj_{S_1^\perp}]) \ge& \sigma_{2s}([ \prj_{
        S_2^\perp}, \prj_{ S_1^\perp}] \left[
      \begin{array}[c]{cc}
          S_1 & 0 \\ 0 &   S_2
      \end{array}
    \right] )
    \nonumber\\
    =&\sigma_{2s}(\left[
      \begin{array}[c]{c}
        (I_n -   S_2   S_2^\top)S_1 , \quad (I_n -   S_1   S_1^\top)S_2
      \end{array}
    \right])
    \nonumber \\
    =& \sigma_{2s}(\left[
      \begin{array}[c]{c}
           S_1 ,   S_2
      \end{array}
    \right]\left[
      \begin{array}[c]{cc}
        I_{s}& -  S_1^\top   S_2  \\  -  S_2^\top   S_1 & I_{s}
      \end{array}
    \right])
    \nonumber    \\
    =&\sigma_{2s}(\left[
      \begin{array}[c]{c}
           S_1 ,   S_2
      \end{array}
    \right]\left[
      \begin{array}[c]{c}
             S_1^\top \\  -  S_2^\top
      \end{array}
    \right]\left[
      \begin{array}[c]{c}
           S_1 ,  -  S_2
      \end{array}
    \right])
     \nonumber\\
    =&\sigma_{2s}(\left[
      \begin{array}[c]{c}
          S_1 ,   S_2
      \end{array}
    \right]\left[
      \begin{array}[c]{c}
           S_1 ,  -  S_2
      \end{array}
    \right]^\top\left[
      \begin{array}[c]{c}
          S_1 ,  -   S_2
      \end{array}
    \right])
     \nonumber\\
     =& \sigma_{2s}([S_1, S_2])^3
     \nonumber\\
     >&0
    \label{eq:bound-s1-s2-sv},
    \end{align}
    where the last inequality is by the assumption that $\sigma_{2s}([S_1,S_2])>0$.
    % , we know that $\sigma_{n}(
    % [S_1^\perp, S_2^\perp] )>0$ and thus the column span of $U$ is equal to the column
    % span of $V_2$, and is equal to $\mc U$ by definition.
  \end{proof}

  Next, we show that in the exact case, the matrix $V=V_1^{\dag} V_2$ can be computed by
  $V = (S_3^\top U_1)^\dag (S_3^\top U_2)$.  The basic idea is to use the overlapping part
  of the two projections $U_1$ and $U_2$ to align the two basis $V_1$ and $V_2$.  Recall
  that by its construction, $S_3 = (S_1\cup S_2)^\perp = S_1^\perp \cap S_2^\perp$, and
  $\prj_{S_3} = \prj_{ S_1^\perp \cap S_2^\perp}$.
  Then for $j= 1$ and $2$, we have:
  \begin{align*}
    S_3^\top U_j = S_3^\top \prj_{S_j^\perp} V_j = S_3^\top ( \prj_{S_3^{\perp}}\prj_{S_j^\perp} +
    \prj_{S_3}\prj_{S_j^\perp})V_j = S_3^{\top} (0+\prj_{S_3}) V_j = S_3^\top V_j.
  \end{align*}
  Moreover, since $U_j = \prj_{S_j^\perp}V_j$ is an orthonormal matrix, we have that all
  singular values of $V_j$ are equal or greater than 1. Also note that $U$ is an
  orthonormal matrix, so we have that $\sigma_{k}(\prj_{S_3}
  V_j)\ge\sigma_{k}(\prj_{S_3}U)>0$. In other words, $S_3^\top V_j$ has full column rank
  $k$. Therefore,
  \begin{align*}
    V &= (S_3^\top U_1)^\dag (S_3^\top U_2)
    \\ &= (S_3^\top V_1)^\dag (S_3^\top V_2)
    \\ &= (V_1^\top S_3 S_3^\top V_1)^{-1} V_1^\top S_3 (S_3^\top V_2)
    \\ &= (V_1^\top S_3 S_3^\top V_1)^{-1} V_1^\top S_3 S_3^\top V_1  V_1^\dag V_2
    \\ &= V_1^\dag V_2
    %= V_1^\top S_3 S_3^\top V_2 = V_1^\top V_2 = V_1^\dag V_2.
%     (S_3^\top V_1)^{\dag} S_3^\top V_2 = V_1^{\dag} V_2.
  \end{align*}
  where the third equality is the Moore-Penrose definition, the fourth equality is because
  $V_1$ and $V_2$ are basis of the same subspace, there exists some full rank matrix
  $X\in\R^{k\times k}$ such that $V_2 = V_1X$, so we have $V_1V_1^\dag V_2 = V_1 V_1^\dag
  V_1 X = V_1 X = V_2$.

  {\noindent\em  2. Stability result.}

  Given $\wh S_1, \wh S_2$ and $\wh U_1, \wh U_2$ which are close to the exact $S_1,S_2,
  U_1$ and $U_2$, we then need to bound the distance $\|\prj_{\wh U} - \prj_{U}\|$. This
  follows the standard perturbation analysis. In order to apply
  Lemma~\ref{cor:perturb-prj-bound} we need to bound the distance between $\|\wh U -
  U_0\|_F$, and lower bound the smallest singular value of $U_0$, namely $\sigma_{k}( U_0)
  $. Recall that we define $U_0$ same as in \eqref{eq:find-Uhat} for the exact case with
  $\delta_s=\delta_u=0$.

  First, we bound $\|\wh U- U_0\|_F$. Note that we can write $U_0^\top$ as
  $U_0^\top=U_2B$, where $B= [I,\quad U_1(S_3^\top U_1)^{\dag} S_3]^\top$.
  % So we first
  % bound $\|\wh B - B\| = \| \wh U_1(\wh S_3^\top \wh U_1)^{\dag} \wh S_3 - U_1(S_3^\top
  % U_1)^{\dag} S_3\|$ as below:

  Recall that $S_3 = (S_1\cup S_2)^\perp$, apply Lemma~\ref{cor:perturb-prj-bound} and we
  have:
  \begin{align*}
    \|{\wh S_3} - {S_3}\|\le \|\prj_{\wh S_1\cup \wh S_2} - \prj_{S_1\cup S_2}\| \le
    \sqrt{2} {\|[\wh S_1, \wh S_2] - [S_1, S_2]\|_F \over \sigma_{2s}([S_1,S_2])} \le
    {2\sqrt{2}\delta_s \over \sigma_{2s}([S_1,S_2])}.
  \end{align*}
  % Next, note that $ \|{\wh S_3} - {S_3}\|_F\le \sqrt{n}\|{\wh S_3} - {S_3}\|<1$ and $\|\wh U_1 -
  % U_1\|_F\le \delta_u<1$,
  Next, note that $ \|{\wh S_3} - {S_3}\|<1$ and $\|\wh U_1 - U_1\|\le \delta_u<1$, apply
  Lemma~\ref{lem:prod-perturb} we have:
  \begin{align*}
    \|\wh S_3^\top \wh U_1 - S_3^\top U_1 \| \le 2( \|{\wh S_3} - {S_3}\| + \|\wh U_1 - U_1\|).
  \end{align*}
  Next, note that $\sigma_k(S_3^\top U_1) = \sigma_k(\prj_{S_3}V_1)>0 $ by assumption.  Apply
  Lemma~\ref{lem:perturb-pseudo-inverse}, we have:
  \begin{align*}
    \| (\wh S_3^\top \wh U_1)^\dag - (S_3^\top U_1)^\dag \| \le {2\sqrt{2}  \|\wh S_3^\top \wh U_1 - S_3^\top U_1 \| \over \sigma_k(\prj_{S_3}V_1)^2}.
  \end{align*}
  Next, apply   Lemma~\ref{lem:prod-perturb}  again we can bound the perturbation of matrix product:
  \begin{align*}
    \|\wh U - U_0\| &= \|\wh U_2 \wh B - U_2 B\|
    \\ & \le 2 (\|\wh U_2 - U_2\| + \|\wh B - B\|)
    \\ & = 2 (\|\wh U_2 - U_2\| +  \|\wh U_1(\wh S_3^\top \wh U_1)^{\dag} \wh S_3 - U_1(S_3^\top U_1)^{\dag} S_3
    \|)
    \\ & \le 2 (\|\wh U_2 - U_2\|  + 4 (\|\wh U_1 - U_1\| +  \|(\wh S_3^\top \wh U_1)^\dag - (S_3^\top U_1)^\dag \|  +
    \|\wh S_3 - S_3\| )).
    \\ & \le {  C(\delta_u +  \delta_s/  \sigma_{2s}([S_1,S_2]) ) \over
      \sigma_k(\prj_{S_3}V_1)^2},
  \end{align*}
  where $C$ is some absolute constant, and the last inequality summarizes the previous three
  inequalities, and used the fact that $\sigma_k(\prj_{S_3}V_1) <1$. Note that $\|\wh U - U_0\|_F\le
  \sqrt{k} \|\wh U - U_0\|$.
  % Therefore, we can bound
  % \begin{align*}
  %   \|\wh U - U \|_F  = \lt\| \wh B \wh U_2 - BU_2\rt\|_F
  %   \le 2(  \|\wh B- B\|_F +  \|\wh U_2- U_2\|_F )
  %   \le
  %   \sqrt{n}\cdot 2\sqrt{2}\lt(
  %   3\delta_u+ {2\delta_s\over \sigma_{2s} ([S_1,S_2])}
  %   \rt)
  % \end{align*}

  We are left to bound $\sigma_k(U_0)$.  Recall that $\sigma_{k}(V_2)\ge \sigma_k(U_2)=1$,
  and we have shown that in the exact case $U_0 = [\prj_{S_2^\perp} V_2, \quad
  \prj_{S_1^\perp} V_2]$. Then we can bound the smallest singular value of $U_0$ following
  the inequality in \eqref{eq:bound-s1-s2-sv}:
  \begin{align*}
    \sigma_{k}(U_0) \ge& \sigma_{n} ([\prj_{S_2^\perp}, \prj_{S_1^\perp}]) \ge  \sigma_{2s}([S_1, S_2])^3.
  \end{align*}

  Finally we can apply Lemma~\ref{cor:perturb-prj-bound} to bound the distance between the projections by:
  \begin{align*}
    \|\prj_{\wh U} - \prj_{U_0}\| \le {\sqrt{2}\|\wh U- U_0\|_F\over \sigma_{k}(U_0)} \le
    { C\sqrt{k}( \delta_u + \delta_s/ \sigma_{2s}([S_1,S_2]) ) \over
      \sigma_k(\prj_{S_3}V_1)^2 \sigma_{2s}([S_1, S_2])^3 }.
   %  \le 4\sqrt{n}\lt(
   % { 3\delta_u \over \sigma_{2s} ([S_1,S_2])^3}+ {2\delta_s\over \sigma_{2s} ([S_1,S_2])^4}
   %  \rt).
  \end{align*}

\end{proof}

In Step 1 (c), we are given the output $\wt U_1$ and $\wt U_2$ from Step 1 (b), as well as the
output $\wt S_1^{\perp}$ and $\wt S_2^\perp$ from Step 1 (a).
Recall that $\mc U = span\{\vc(\wt \Sigma^{(i)}):i\in[k]\}$, and for $j=1,2$, the matrix $\wt U_j$ given by Step 1 (b)
corresponds to the subspace $\mc U$ projected to the subspace $\wt B_j = \wt S_j^\perp\ot_{kr}I_n$.

Let matrix $\wt S_3= \wt S_1^{\perp} \cap \wt S_2^{\perp} = (\wt S_1\cup \wt S_2)^\perp$ (obtained by taking the singular vectors of $(I_n -
AA^\top)$, where $A$ corresponds to the first $2k|\mc H|$ singular vectors of $[\wt S_1, \wt S_2]$), and
denote $\wt B_3 = \wt S_3\ot_{kr} I_n$. Define the matrix $\wt Q_U$ to be:
\begin{align}
  \label{eq:QU-def}
  \wt Q_U = \left[
    \begin{array}[c]{c}
     \wt U_2,\quad
      \wt U_1 (\wt B_3\wt U_1)^\dag \wt B_3\wt  U_2)
    \end{array}
  \right],
\end{align}
and similarly define the perturbed version $\wh Q_U$ to be:
\begin{align*}
  \wh Q_U = \left[
    \begin{array}[c]{c}
     \wh U_2,\quad
      \wh U_1 (\wh B_3\wh U_1)^\dag \wh B_3\wh  U_2)
    \end{array}
  \right].
\end{align*}

% Note that we can also write it as:
% \begin{align*}
% \wt Q_U  =\left[
%     \begin{array}[c]{c}
%      I_{n^2}
%       \\
%       \prj_{\wt U_1}\prj_{\wt B_3}
%     \end{array}
%   \right]\wt U_2
%   = \left[
%     \begin{array}[c]{c}
%      I_{n}\ot_{kr}I_n
%       \\
%       \prj_{\wt U_1}\prj_{\wt B_3}
% %      (\prj_{\wt S_3}\ot_{kr}I_n)
%     \end{array}
%   \right]\wt{\Sigma}W_2.
% \end{align*}

Now we want to apply Lemma~\ref{lem:merge-deterministic} to show that $\prj_{\wt Q_U} = \prj_{\wt \Sigma}$ and bound the distance $\|\prj_{\wh Q_U} -
\prj_{\wt \Sigma}\|$. In order to use the lemma, we first use smoothed analysis to show (in Lemma~\ref{lem:bound-B3-Sigma} and
Lemma~\ref{lem:bound-sig-S12} )that the conditions required by the lemma are all satisfied with high probability over the $\rho$-perturbation of the
covariance matrices, then conclude the robustness of Step 1 (c) in Lemma~\ref{prop:perturb-QU}.

\begin{lemma}
  \label{lem:bound-B3-Sigma}
  With high probability, for some constant $C$
  \begin{align*}
    \sigma_k(\prj_{\wt B_3}{\wt\Sigma}) \ge C\epsilon\rho n.
  \end{align*}
\end{lemma}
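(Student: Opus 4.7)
The plan is to mirror the conditioning argument used for Claim~\ref{claim:PUS} in Step~1(b): first freeze just enough of the perturbation randomness to pin down $\wt B_3$, and then use the remaining independent Gaussian coordinates of $\wt\Sigma$ to invoke Lemma~\ref{lem:prj-rand-gaussian}. The key observation is that $\wt S_1$ (resp.\ $\wt S_2$) depends only on the entries of $\wt\Sigma^{(i)}$ whose row or column index lies in $\mc H_1$ (resp.\ $\mc H_2$), which is only an $O(k/\sqrt{n})$ fraction of the entries.

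Concretely, let $E^{(i)} := \wt\Sigma^{(i)} - \Sigma^{(i)}$ and let $\mc L$ denote the set of entries $E^{(i)}_{p,q}$ with $p\in\mc H$ or $q\in\mc H$ (where $\mc H = \mc H_1\cup\mc H_2$), together with all diagonal entries of every $\wt\Sigma^{(i)}$. After conditioning on $\mc L$, the subspaces $\wt S_1,\wt S_2,\wt S_3=(\wt S_1\cup \wt S_2)^\perp$, and hence $\wt B_3 = \wt S_3\ot_{kr} I_n$, all become deterministic, while the remaining entries $E^{(i)}_{p,q}$ for $p,q\in\mc H^c$, $p\neq q$, are independent $\mc N(0,\rho^2)$ variables (up to the symmetry $E^{(i)}_{p,q}=E^{(i)}_{q,p}$). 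Define
\[
\mc Z \;:=\; \wt B_3^\perp \,+\, \tx{span}\lt\{\vc(\Sigma^{(i)} + E^{(i)}_{\mc L}) : i\in[k]\rt\} \;\subset\; \R^{n^2},
\]
where $E^{(i)}_{\mc L}$ is $E^{(i)}$ with entries outside $\mc L$ zeroed out. The conditional dimension of $\mc Z$ is at most $n\cdot\dim(\wt S_1\cup\wt S_2)+k \le 2kn^{1.5}+k$, which is at most $n^2/10$ under the assumption $n\ge C_1k^2$ for a sufficiently large constant $C_1$. Since $\mc Z\supseteq \wt B_3^\perp$ we have $\mc Z^\perp\subseteq \wt B_3$, so $\prj_{\mc Z^\perp}=\prj_{\mc Z^\perp}\prj_{\wt B_3}$ and consequently
\[
\sigma_k(\prj_{\wt B_3}\wt\Sigma) \;\ge\; \sigma_k(\prj_{\mc Z^\perp}\wt\Sigma) \;=\; \sigma_k\!\lt(\prj_{\mc Z^\perp}\lt[\vc(E^{(i)}_{\mc L^c}):i\in[k]\rt]\rt),
\]
where the last equality uses $\vc(\Sigma^{(i)}+E^{(i)}_{\mc L})\in\mc Z$ for every $i$.

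It then remains to lower-bound this last quantity. Conditionally on $\mc L$, the entries of $E^{(i)}_{\mc L^c}$ furnish $k\cdot\binom{|\mc H^c|}{2}=\Omega(kn^2)$ independent Gaussian random variables (counting the upper triangle only), living inside $\R^{n^2}$ with the deterministic subspace $\mc Z$ of dimension $O(kn^{1.5})$ being removed. This is exactly the regime of Lemma~\ref{lem:prj-rand-gaussian}: invoking it (with the same trivial adaptation for the upper-triangular symmetry used in Claim~\ref{claim:PUS}) gives that for every $\epsilon>0$, with conditional probability at least $1-(C_1\epsilon)^{C_2 n^2}$,
\[
\sigma_k\!\lt(\prj_{\mc Z^\perp}\lt[\vc(E^{(i)}_{\mc L^c}):i\in[k]\rt]\rt) \;\ge\; C_3\,\epsilon\rho\,\sqrt{n^2} \;=\; C_3\,\epsilon\rho\,n.
\]
Integrating out the conditioning on $\mc L$ preserves this tail bound and yields the claim.

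The main obstacle is decoupling the randomness that defines $\wt B_3$ from the randomness used to drive the concentration bound on $\wt\Sigma$, since both come from the same perturbation matrices. Conditioning on $\mc L$ resolves this by extracting the minimal information needed to determine $\wt S_1$ and $\wt S_2$, while leaving $\Omega(kn^2)$ independent Gaussian perturbation entries free to provide the $\Omega(\epsilon\rho n)$ lower bound. This is the same maneuver executed in the proof of Claim~\ref{claim:PUS} for $\wt P_{U_S}$; the only novelty here is that $\mc L$ now covers the union of two disjoint index sets rather than a single one, so the dimensional slack $\dim(\mc Z)\ll n^2$ still holds thanks to $n\ge C_1k^2$.
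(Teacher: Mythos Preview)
Your proposal is correct and follows essentially the same approach as the paper, which explicitly states ``This is in fact exactly the same as Claim~\ref{claim:PUS}'' and then carries out the identical conditioning-on-$\mc L$ argument you describe. Your write-up is in fact more careful than the paper's terse version (e.g.\ you explicitly include all diagonal entries in $\mc L$ and spell out why $\dim(\mc Z)\le 2kn^{1.5}+k$), but the key maneuver---freeze the entries touching $\mc H_1\cup\mc H_2$ to make $\wt B_3$ deterministic, then apply Lemma~\ref{lem:prj-rand-gaussian} to the remaining $\Omega(n^2)$ independent Gaussian coordinates---is the same.
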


\begin{proof}
This is in fact exactly the same as Claim~\ref{claim:PUS}.

Given ${\wt\Sigma} = \Sigma+E$, by the definition of $\wt S_3$ and $\wt B_3$ we know that $\wt B_3$
  only depends on the randomness of $P_{J}E$ for $i=1,2$, where $$\mc J=\{(j_1,j_2):j_1\in\mc H_1\cup \mc H_2,
  \tx{ or } j_1\in\mc H_1\cup \mc H_2\},$$ and $P_{J}$ denotes the mapping that only keeps the coordinates
  corresponding to the set $\mc J$.  Therefore, we have:
  \begin{align*}
     \sigma_k(\prj_{\wt B_3} {\wt\Sigma}) \ge \sigma_k(\prj_{(\wt
      B_3^\top\Sigma)^\perp} \prj_{\wt B_3} E).
  \end{align*}
  Note that the rank of $\wt B_3^\perp$ is $2nk|\mc H|)$ and $|\mc J| = 2n |\mc H|$,
  thus $ n_2 - |\mc J| - 2nk|\mc H| - k = \Omega(n^2) > 2k$. So we can apply
  Lemma~\ref{lem:prj-rand-gaussian} to conclude that for some absolute constants $C_1, C_2, C_3$, with
  probability at least $1-(C_1\epsilon)^{C_2 n^2}$, $ \sigma_k(\wt B_3^\top {\wt\Sigma}) \ge
  \epsilon\rho\sqrt{C_3 n^2}.$
\end{proof}
%   \qt{changed to fix the H bug already}

%   Given $\bs{\wt\Sigma} = \bs\Sigma+E$, by the definition of $\wt S_3$ and $\wt B_3$ we know that $\wt B_3$
%   only depends on the randomness of $P_{J}E$ for $i=1,2$, where $$\mc J=\{(j_1,j_2):j_1\in\mc H_1\cup \mc H_2,
%   \tx{ or } j_1\in\mc H_1\cup \mc H_2\},$$ and $P_{J}$ denotes the mapping that only keeps the coordinates
%   corresponding to the set $\mc J$.  Therefore, we have:
%   \begin{align*}
%      \sigma_k(\prj_{\wt B_3} \bs{\wt\Sigma}) \ge \sigma_k(\prj_{(\wt
%       B_3^\top\Sigma)^\perp} \prj_{\wt B_3} E).
%   \end{align*}
%   Note that the rank of $\wt B_3^\perp$ is $2nk|\mc H|)$ and $|\mc J| = 2n |\mc H|$,
%   thus $ n_2 - |\mc J| - 2nk|\mc H| - k = \Omega(n^2) > 2k$. So we can apply
%   Lemma~\ref{lem:prj-rand-gaussian} to conclude that for some absolute constants $C_1, C_2, C_3$, with
%   probability at least $1-(C_1\epsilon)^{C_2 n^2}$, $ \sigma_k(\wt B_3^\top \bs{\wt\Sigma}) \ge
%   \epsilon\rho\sqrt{C_3 n^2}.$
% \end{proof}

\begin{lemma}
  \label{lem:bound-sig-S12}
  With high probability, for some constant $C$,
  \begin{align*}
    \sigma_{2k|\mc H|}([\wt S_1,\wt S_2 ])\ge  C\omega_o (\epsilon\rho)^2 n^{-0.25}.
  \end{align*}
\end{lemma}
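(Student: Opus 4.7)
The plan is to reduce the lower bound on $\sigma_{2k|\mc H|}([\wt S_1,\wt S_2])$ to a smoothed analysis of a tall submatrix of the covariance columns themselves, and then apply the perturbed rectangular matrix bound (Lemma~\ref{lem:robust-sig-min}) after restricting to rows outside $\mc H_1\cup\mc H_2$ in order to expose genuinely independent Gaussian perturbations.

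First, I recall from Step~1~(a) that $\wt S_j$ is an orthonormal basis of the column span of $\wt P_{S_j}\in\R^{n\times k|\mc H|}$, whose columns are $\wt\Sigma^{(i)}_{[:,j]}$ for $i\in[k]$ and $j\in\mc H_j$. Writing $\wt S_j=\wt P_{S_j}R_j$ with $R_j=\wt P_{S_j}^\dag\wt S_j$ invertible, the identity $\wt S_j^\top\wt S_j=I$ gives $\sigma_{\min}(R_j)=1/\|\wt P_{S_j}\|$. Since $[\wt S_1,\wt S_2]=[\wt P_{S_1},\wt P_{S_2}]\,\tx{diag}(R_1,R_2)$ and the diagonal block has smallest singular value $\min_j 1/\|\wt P_{S_j}\|$, one has
\[
  \sigma_{2k|\mc H|}([\wt S_1,\wt S_2])\;\ge\;\sigma_{2k|\mc H|}([\wt P_{S_1},\wt P_{S_2}])\cdot\min_{j}\frac{1}{\|\wt P_{S_j}\|}.
\]
Each column of $\wt P_{S_j}$ has norm at most $\|\wt\Sigma^{(i)}\|\le 1$ on the high-probability event of Definition~\ref{def:smoothgaussian}, so $\|\wt P_{S_j}\|\le\|\wt P_{S_j}\|_F\le\sqrt{k|\mc H|}$. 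Union-bounding this event with the other randomness controls the denominator.

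The main step is to lower bound $\sigma_{2k|\mc H|}([\wt P_{S_1},\wt P_{S_2}])$. Crucially, since $\mc H_1$ and $\mc H_2$ are disjoint, the $2k|\mc H|$ columns of this matrix are indexed by distinct pairs $(i,j)$ with $j\in\mc H_1\cup\mc H_2$. Restricting to the $n-2|\mc H|$ rows with index $p\in[n]\setminus(\mc H_1\cup\mc H_2)$, every entry $\wt\Sigma^{(i)}_{p,j}$ is strictly off-diagonal; and for distinct triples $(i,p,j)$ the corresponding perturbation variables are pairwise distinct because $p\notin\mc H_1\cup\mc H_2$ precludes any symmetric collision $\{p,j\}=\{p',j'\}$ with $p'\ne p$. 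Hence the restricted matrix is a perturbed rectangular matrix with entries $\mc N(\cdot,\rho^2)$ independent across rows and columns. Because $n\ge C_1k^2$ with $C_1$ sufficiently large, its row-to-column gap $n-2|\mc H|-2k|\mc H|$ is $\Omega(n)$, so Lemma~\ref{lem:robust-sig-min} applies and yields $\sigma_{2k|\mc H|}$ of the restricted matrix at least $\Omega(\epsilon\rho\sqrt n)$ with probability $1-(C_1\epsilon)^{C_2 n}$. Since row restriction is a projection, by Lemma~\ref{lem:sigv-prj} the same lower bound lifts to $[\wt P_{S_1},\wt P_{S_2}]$.

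Combining the two estimates gives $\sigma_{2k|\mc H|}([\wt S_1,\wt S_2])\ge\Omega(\epsilon\rho\sqrt n)/\sqrt{k|\mc H|}=\Omega(\epsilon\rho\,k^{-1/2}n^{1/4})$, which, under $n\ge C_1k^2$ and $\omega_o\le 1$, is at least as strong as the claimed $C\omega_o(\epsilon\rho)^2 n^{-0.25}$. The main technical point to be careful about is the independence audit for the restricted submatrix: the disjointness of $\mc H_1$ and $\mc H_2$ plus discarding rows in $\mc H_1\cup\mc H_2$ is precisely what rules out both the adversarial diagonal entries (unperturbed) and every symmetric collision that would create duplicate random variables, so the restricted block is genuinely a perturbed rectangular Gaussian matrix and Rudelson--Vershynin applies cleanly.
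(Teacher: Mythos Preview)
Your argument is correct and in fact takes a cleaner route than the paper, yielding a stronger bound. The paper works through $\wt Q_{S_j}$: it writes $\wt S_j=\wt Q_{S_j}\wt V_j\wt D_j^{-1}$ from the SVD, bounds $\sigma_{2k|\mc H|}([\wt Q_{S_1},\wt Q_{S_2}])\ge C\omega_o(\epsilon\rho)^2 n$ by rerunning the full factorization $\wt P_S(D_{\wt\omega}\ot_{kr}I)\wt B_S^\top$ of Lemma~\ref{prop:find-S} on the concatenated matrix (this is where the $\omega_o$ and the second $\epsilon\rho$ enter), and then pays $\|\wt Q_{S_j}\|^{-1}=\Omega(n^{-1.25})$ for the $\wt V_j\wt D_j^{-1}$ block. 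You instead exploit that, on the high-probability event of Step~1(a), $\wt S_j$ is already an orthonormal basis of $\mathrm{col}(\wt P_{S_j})$; writing $[\wt S_1,\wt S_2]=[\wt P_{S_1},\wt P_{S_2}]\,\diag(R_1,R_2)$ lets you bypass $\wt Q_S$ entirely and treat $[\wt P_{S_1},\wt P_{S_2}]$ directly as a perturbed rectangular matrix after the row restriction. The result, $\Omega(\epsilon\rho\,k^{-1/2}n^{1/4})\ge\Omega(\epsilon\rho)$ under $n\ge C_1k^2$, dominates the paper's $\Omega(\omega_o(\epsilon\rho)^2 n^{-0.25})$. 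Your independence audit is the only delicate step and it is exactly right: discarding rows in $\mc H_1\cup\mc H_2$ simultaneously removes the adversarial diagonal entries and rules out every symmetric collision $\{p,j\}=\{p',j'\}$, since row indices and column indices now live in disjoint index sets.
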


\begin{proof}
  For $i=1,2$, recall that $\wt S_i$ is the singular vectors of $\wt Q_{S_i}$,  where $\wt Q_{S_i}$ is defined with the
  set $\mc H_i$ as in \eqref{eq:Qs-def}.
  We can write the singular value decomposition of $\wt Q_{S_i}$ as $\wt Q_{S_i} = \wt S_i\wt D_i
 \wt  V_i^\top$ for some diagonal matrix $\wt D_i$ and orthonormal matrix $\wt V_i$, and
  \begin{align*}
    [\wt S_1,\wt S_2] = [\wt Q_{S_1}, \wt Q_{S_2}] \lt[
    \begin{array}[c]{cc}
     \wt  V_1\wt D_1^{-1} & 0
      \\
      0 &\wt V_2\wt D_2^{-1}
    \end{array}
    \rt].
  \end{align*}
  Note that we can write $[\wt Q_{S_1}, \wt Q_{S_2}] = [\wt P_{S_1}, \wt P_{S_2}] (\diag( B_{\wt
    S_1}, B_{\wt S_2}))^{\top}$,
  and following almost exactly with the proof of Lemma~\ref{prop:find-S}, we can argue that,
  with probability at least $1-(C_1\epsilon)^{C_2 n}$,
  \begin{align*}
    \sigma_{2k|\mc H|}([\wt Q_{S_1}, \wt Q_{S_2}]) \ge C\omega_o (\epsilon\rho)^2 n.
  \end{align*}
  Moreover, by the structure of $M_4$ and the bounds on $\wt \Sigma^{(i)}\prec {1\over 2} I$, we can bound $\|\wt
  Q_{S_i}\|\le 3\sqrt{n(|\mc H|/3)^3}$, and thus:
  \begin{align*}
    \sigma_{k|\mc H|}( V_i\wt D_i^{-1}) = {1\over \sigma_{max}(\wt Q_{S_i})} \ge {1\over 3\sqrt{n(|\mc
        H|/3)^3}} = \Omega(n^{-1.25}).
  \end{align*}
  Therefore, we can conclude that, for some absolute constant $C$, we have:
  \begin{align*}
    \sigma_{2k|\mc H|} ([\wt S_1, \wt S_2]) \ge C\omega_o(\epsilon\rho)^2 n^{-0.25}.
  \end{align*}
\end{proof}

In the next lemma, we apply Lemma~\ref{lem:merge-deterministic} to show that under perturbation, with
high probability the column span of $\prj_{\wt Q_U} = \prj_{\wt \Sigma} $ and this step is robust.

\begin{lemma}
  \label{prop:perturb-QU}
  Given the output $\wh S_1, \wh S_2$ and $\wh U_1, \wh U_2$ from Step 1 (a) and (b) based on the empirical
  moments $\wh M_4$. Suppose that for $i=1,2$, $\| {\wh S_i} - {\wt S_i}\|_F \le \delta_s $, $\| {\wh U_i} -
  {\wt U_i} \|_F\le \delta_u$ for $\delta_s,\delta_u<1$.
%  and suppose that $\delta_2 \le \sqrt{k}$.
  %
  Let the columns of $\wt U\in\R^{n^2\times k}$ be the $k$ leading singular vectors of $\wt Q_U$ defined in
  \eqref{eq:QU-def}. Then for some absolute constants $C$, with high probability,
  \begin{align}
    \label{eq:bound-prj-U}
    \|\prj_{\wh U} - \prj_{\wt U}\| \le
    {C\sqrt{k} (\delta_u + \delta_s n^{0.75}/(\omega_o\epsilon^2\rho^2) )  \over \omega_o^3
      \epsilon^8 \rho^8 n^{1.25} }.
    % { C_1n\delta_u \over (\omega_o (\epsilon\rho)^2 n)^3} + { C_2
    %   n^2 \delta_s \over(\omega_o (\epsilon\rho)^2 n)^4 }.
  \end{align}
\end{lemma}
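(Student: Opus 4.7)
The plan is to apply the deterministic merging lemma (Lemma~\ref{lem:merge-deterministic}) on the high-probability event where the smoothed-analysis bounds of Lemma~\ref{lem:bound-B3-Sigma} and Lemma~\ref{lem:bound-sig-S12} both hold, and then substitute these bounds into its conclusion. Because the subspace $\mc U = \text{span}\{\vc(\wt\Sigma^{(i)})\}$ lives in $\R^{n^2}$ rather than $\R^n$, the merging lemma must be applied not to $\wt S_j$ directly but to their Kronecker lifts $\wt B_j = \wt S_j \otimes_{kr} I_n$. Under this identification, the parameter $s$ of Lemma~\ref{lem:merge-deterministic} is played by $nk|\mc H|$, the bases $V_1,V_2$ are any two bases of $\mc U$ derived from $\wt \Sigma$, and the roles of $U_1,U_2$ are played by the orthonormal matrices $\wt U_1, \wt U_2$ produced by Step~1(b).

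First I would verify the two nondegeneracy conditions required by Lemma~\ref{lem:merge-deterministic}. Since the singular values of $A\otimes I_n$ are those of $A$ each repeated $n$ times, we have $\sigma_{2nk|\mc H|}([\wt B_1,\wt B_2]) = \sigma_{2k|\mc H|}([\wt S_1,\wt S_2])$, which Lemma~\ref{lem:bound-sig-S12} lower-bounds by $C\omega_o\epsilon^2\rho^2 n^{-0.25}$. The second condition, $\sigma_k(\prj_{\wt B_3}V_1) > 0$, is supplied by Lemma~\ref{lem:bound-B3-Sigma} after absorbing the change-of-basis between $V_1$ and $\wt\Sigma$ into constants; this costs only a condition-number factor that is $O(1)$ since $\|\wt\Sigma^{(i)}\|\le 1/2$ by the normalization in Definition~\ref{def:smoothgaussian}.

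Second, I would track how $\delta_s$ propagates through the lift: if $\|\wh S_j-\wt S_j\|_F\le \delta_s$, then $\|\wh B_j-\wt B_j\|_F = \sqrt{n}\,\delta_s$, so when plugging into the conclusion of Lemma~\ref{lem:merge-deterministic} the effective perturbation of the subspaces to be merged becomes $\sqrt{n}\,\delta_s$. Substituting this along with $\sigma_k(\prj_{\wt B_3}\wt\Sigma)^2 \ge C\epsilon^2\rho^2 n^2$ and $\sigma_{2k|\mc H|}([\wt S_1,\wt S_2])^3 \ge C\omega_o^3\epsilon^6\rho^6 n^{-0.75}$, the denominator collapses to $C\omega_o^3\epsilon^8\rho^8 n^{1.25}$ and the factor $\sqrt{n}/\sigma_{2k|\mc H|}([\wt S_1,\wt S_2])$ appearing in front of $\delta_s$ collapses to $n^{0.75}/(C\omega_o\epsilon^2\rho^2)$, which matches \eqref{eq:bound-prj-U} exactly, once a union bound is taken over the two high-probability events.

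The main obstacle, I expect, is not arithmetic but bookkeeping around the Kronecker lift: one must carefully verify that the singular values supplied by the two smoothed-analysis lemmas really do correspond to the quantities $\sigma_{2s}([S_1,S_2])$ and $\sigma_k(\prj_{S_3}V_1)$ in the hypothesis of Lemma~\ref{lem:merge-deterministic}, including both the dimensionality change from $\R^n$ to $\R^{n^2}$ and the $\sqrt{n}$ blow-up of the Frobenius norm when lifting $\wh S_j - \wt S_j$ to $\wh B_j - \wt B_j$. Once this accounting is done correctly, the remaining work is a direct substitution of the two lower bounds into Lemma~\ref{lem:merge-deterministic}.
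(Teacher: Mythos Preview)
Your proposal is correct and follows essentially the same route as the paper: apply Lemma~\ref{lem:merge-deterministic} after lifting $\wt S_j$ to $\wt B_j=\wt S_j\otimes_{kr}I_n$, use $\sigma_{2nk|\mc H|}([\wt B_1,\wt B_2])=\sigma_{2k|\mc H|}([\wt S_1,\wt S_2])$ and $\|\wh B_j-\wt B_j\|_F\le\sqrt n\,\delta_s$, and then substitute the bounds from Lemma~\ref{lem:bound-B3-Sigma} and Lemma~\ref{lem:bound-sig-S12}. The paper's proof sketch records exactly these two observations and then declares the result follows; your arithmetic reproducing \eqref{eq:bound-prj-U} is precisely the intended substitution.
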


Note that $\sigma_{2k|\mc H|n}([\wt B_1, \wt B_2]) = \sigma_{2k|\mc H|}([\wt S_1,\wt S_2]) $, and for
$i=1,2$, we have $\|\wh B_i - \wt B_i\|_F \le \sqrt{n}\|\wh S_i- \wt S_i\|_F\le \sqrt{n}\delta_s$.
Therefore, with the above two smoothed analysis Lemmas showing polynomial bound of $\sigma_{2k|\mc
  H|}([\wt S_1,\wt S_2])$ and $\sigma_{k}(\prj_{\wt B_3}(\wt {\Sigma}))$,
the proof of Lemma~\ref{prop:perturb-QU} follows by applying Lemma~\ref{lem:merge-deterministic}.

\section{Step 2. Unfolding the Moments}
\label{sec:step-2}

\begin{algorithm}[h!]
  \caption{Estimate$Y_4Y_6$}
  \label{alg:estimate46}
  \DontPrintSemicolon

  \tb{Input:} 4-th order moments $\ol M_4\in\R^{n_4}$, 6-th order moments $\ol
  M_6\in\R^{n_6}$, the span of (vectorized with distinct entries) covariance matrices
  $U\in\R^{n_2\times k}$.

  \tb{Output:} Unfolded moments in the coordinate system of $U$:  $Y_4\in \R^{k\times
    k}_{sym},Y_6\in \R^{k\times k\times k}_{sym}$.

  \BlankLine

    \STATE Let $Y_4$ be  the solution to $\min_{Y_4\in\R^{k\times k}_{sym}} \|\sqrt{3}\mc
    F_4(UY_4U^\top) - \ol M_4\|_F^2$.

    \STATE Let $Y_6$ be the solution to
    $\min_{Y_6\in\R^{k\times k \times k}_{sym}} \|\sqrt{15}\mc F_6Y_6(U^\top, U^\top,U^\top) - \ol
    M_6\|_F^2$.

  \BlankLine

  \tb{Return:}  $Y_4, Y_6$.

\end{algorithm}

In the second step of the algorithm, we solve two systems of linear equations to recover the unfolded moments.

\subsection{Unfolding the $4$-th Order Moments}

Recall the first system of linear equations is
\[
\ol M_4 = \sqrt{3}\mc F_{4}\circ \mc X_4^U (Y_4).
\]
In the equation, $Y_4\in \R^{k\times k}_{sym}$ is the unknown variable which can be viewed as a $k\times k$ symmetric
matrix.
Given $U\in \R^{n_2\times k}$, the column span of ${\wt \Sigma}$ that we learned in Step 1, the
first linear transformation $\mc X_4^U$ is simply $\mc X_4^U(Y_4) = UY_4 U^\top$. It is supposed to
transform $Y_4$ into the unfolded moments $X_4\in \R^{n_2\times n_2}_{sym}$, which is defined to be
$\sum_{i=1}^k w_i \vc(\wt \Sigma^{(i)})\vc(\wt \Sigma^{(i)})^\top$.
The next transformation $\sqrt{3}\mc F_{4}$ maps the unfolded moments $X_4$ to the folded moments
$\ol M_4 \in \R^{n_4}$.  As we showed in Lemma~\ref{prop:two-linear-mapping}, the mapping $\mc F_4$
is a projection.

Since $U$ is the column span matrix of $\wt {\Sigma}$, there must exist a $Y_4$ such that $X_4 = \wt
\Sigma D_{\wt \omega} \wt \Sigma^\top = UY_4 U^\top$ (recall that $D_{\wt \omega}$ is the diagonal
matrix with entries $\wt \omega_i$), so the system must have at least one solution.

Rewrite the system of linear equations $\ol M_4/\sqrt{3} = \mc F_{4}\circ \mc X_4^U (Y_4)$ in the
canonical form: $\ol M_4\sqrt{3} = H_4 \mbox{vec}(Y_4)$ where the variable $
\mbox{vec}(Y_4)\in\R^{k_2}$, and the coefficient matrix $ H_4 \in \R^{n_4\times k_2}$ is a function
of $U$ and therefore also a function of the parameter $\Sigma$ (recall \qq{$n_4 = {n \choose 4}$ and
  $k_2 = {k+1\choose 2}$}). The system has a {\em unique} solution if the smallest singular value of
the coefficient matrix $H_4$ is greater than zero.

The main theorem of this section shows that with high probability over the $\rho$-perturbation
the system has a {\em unique} solution:

\begin{theorem}
\label{thm:unfold4}
With high probability over the $\rho$-perturbation of $\wt \Sigma$, the smallest singular value of the coefficient matrix
$\wt H_4$ is lower bounded by  $\sigma_{min}(\wt H_4) \ge \Omega(\rho^2n/k)$. As a corollary, the system has a unique
solution.
\end{theorem}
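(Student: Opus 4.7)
The plan is to factor $\wt H_4$ as the product of a deterministic linear projection and a Kronecker-structured random map built from $\wt\Sigma$, and then lower bound the smallest singular value of the product by composing with one further carefully chosen deterministic projection and applying Gershgorin's theorem to a $k_2\times k_2$ Gram matrix whose columns are rendered incoherent by the random perturbation. Since the columns of $U$ form an orthonormal basis for the column span of $\wt\Sigma\in\R^{n_2\times k}$, we have $U=\wt\Sigma C^{-1}$ for some invertible $C\in\R^{k\times k}$, and so $U Y_4 U^\top=\wt\Sigma(C^{-1} Y_4 C^{-\top})\wt\Sigma^\top$. Thus, up to a change-of-basis cost controlled by the smoothed bound $\|\wt\Sigma\|=O(\sqrt n)$, it suffices to lower bound $\sigma_{\min}(F_4\cdot T)$, where $T$ represents the symmetric Kronecker-type map $Y'\mapsto\vc(\wt\Sigma Y'\wt\Sigma^\top)$ (essentially a symmetrized restriction of $\wt\Sigma\otimes_{kr}\wt\Sigma$) and $F_4$ is the fixed matrix of $\sqrt 3\,\mc F_4$. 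A direct factor-by-factor bound is unavailable because $F_4$ is merely a projection, and its kernel may intersect the range of $T$ nontrivially.

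The main idea is to compose $F_4 T$ with a carefully chosen deterministic projection $P$ onto a subset of the $n_4$ output coordinates indexed by $4$-tuples $(j_1,j_2,j_3,j_4)$ drawn from four disjoint blocks $\mc H^{(1)},\dots,\mc H^{(4)}\subseteq[n]$; this is valid because $\|\mc F_4(X)\|_F\ge\|P\mc F_4(X)\|_F$, so any lower bound on $\sigma_{\min}(P F_4 T)$ transfers (up to the change-of-basis cost) to $\sigma_{\min}(\wt H_4)$. On such block-separated tuples, the Isserlis pairing simplifies and involves only off-diagonal, hence independently $\rho$-perturbed, entries of the covariances. The columns of $\wt K:=P F_4 T$, indexed by symmetric pairs $(i,\ell)$, then take the form of symmetrized sums of products $\wt\Sigma^{(i)}_{j_a j_b}\wt\Sigma^{(\ell)}_{j_c j_d}$ across the four disjoint coordinate blocks. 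The choice of $P$ (together, if needed, with a further refinement that projects away the span of the adversarial baseline covariances $\Sigma^{(i)}$) is made so that the resulting columns are polynomials dominated by the fresh Gaussian perturbations rather than by deterministic content.

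With this setup, Gershgorin applied to the Gram matrix $\wt K^\top\wt K$ delivers the desired bound: the diagonal entries $\|\wt K_{[:,(i,\ell)]}\|^2$ concentrate sharply around their expectations via standard chi-squared-type concentration, while the off-diagonal inner products are mean-zero (or mean-small) Gaussian chaoses, whose tails are controlled by the Lata{\l}a moment bounds~\cite{latala2006estimates} combined with decoupling~\cite{de1995decoupling}. Under the assumption $n\ge\Omega(k^2)$, the sum of off-diagonal contributions over the $k_2=O(k^2)$ columns is strictly smaller than the diagonal with high probability, so Gershgorin yields a polynomial lower bound on $\sigma_{\min}(\wt K)$; tracking the exponents in $\rho$ and $n$ carefully and dividing by the $O(n)$ change-of-basis cost produces the claimed $\sigma_{\min}(\wt H_4)\ge\Omega(\rho^2 n/k)$, from which uniqueness of the linear system's solution follows immediately. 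The most delicate point will be the concentration of the off-diagonal inner products in the degenerate case when $(i,\ell)$ and $(i',\ell')$ share a component index: there the chaos drops to second order, and the adversarial baselines can in principle contribute large deterministic terms. Exploiting the block-disjointness of $P$ and the exclusion of diagonal covariance entries is precisely what is needed to keep these degenerate contributions bounded and complete the Gershgorin estimate.
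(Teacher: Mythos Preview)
Your proposal is essentially the paper's proof. The paper factors $\wt H_4=\wt A_4\wt B_4$ where $\wt B_4$ is the change of basis $Y_4\mapsto (\wt\Sigma^\dag U)Y_4(\wt\Sigma^\dag U)^\top$ (your $C^{-1}$), bounds $\sigma_{\min}(\wt B_4)\ge 1/\|\wt\Sigma\|^2$, and then bounds $\sigma_{\min}(\wt A_4)$ by Gershgorin on the Gram matrix of the projected Kronecker columns, with Lata{\l}a/decoupling for the off-diagonal chaoses---exactly your outline.

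Two places where the paper's execution is tighter than your sketch. First, the choice of projection: the paper does not restrict to four disjoint coordinate blocks. It takes $P$ to be the orthogonal complement, inside the (symmetric off-diagonal) range of $\mc F_4$, of $\mbox{span}(\Sigma\otimes_{kr}I_{n_2},\,I_{n_2}\otimes_{kr}\Sigma)$. Projecting onto this $P$ annihilates every term containing a factor of the adversarial $\Sigma$ exactly, so $P^\top\wt A_4=\rho^2 P^\top(E\otimes_{kr}E)_{uniq}$ with no lower-order contamination. Your four-block restriction by itself does not kill the adversarial terms; once you also apply your ``further refinement that projects away the adversarial baseline,'' the block restriction becomes superfluous. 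Dropping it keeps $\dim P=\Omega(n_2^2)$, which is what drives the diagonal lower bound.

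Second, the diagonal concentration is not chi-squared: the entries of $E_{[:,i]}\odot E_{[:,j]}$ are products of Gaussians, not Gaussians. The paper handles $\|P^\top(E_{[:,i]}\odot E_{[:,j]})\|^2$ via the projection lemma of Vu--Wang for $K$-concentrated vectors (showing the off-diagonal coordinates of $E_{[:,i]}\odot E_{[:,j]}$ are $O(\sqrt{n_2})$-concentrated after conditioning on $\|E_{[:,i]}\|,\|E_{[:,j]}\|\le 2\sqrt{n_2}$). You should invoke that machinery rather than chi-squared.
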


In order to prove this theorem, we first need the following structural lemma:

\begin{lemma}
\label{lem:struct4}
The coefficient matrix $\wt H_4$ is equal to $\wt A_4 \wt B_4$. The first matrix $\wt A_4\in \R^{n_4\times k_2}$ has
columns indexed by pair $\{(i,j):1\le i\le j\le k\}$, and the $(i,j)$-th column is equal to $C_{i,j} \mc
F_4(\mbox{vec}(\wt \Sigma^{(i)})\od \mbox{vec}(\wt \Sigma^{(j)}))$. Here $C_{i,j} = 1$ if $i=j$ and $C_{i,j} = 2$ if $i
< j$. The second matrix $\wt B_4\in\R^{k_2\times k_2}$ transforms a $k\times k$ symmetric matrices $Y_4$ into:
$$\wt B_4 \vc(Y_4) = \vc((\wt \Sigma^\dag U) Y_4 (\wt \Sigma^\dag U)^\top). $$
\end{lemma}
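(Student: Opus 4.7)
The plan is a direct algebraic verification: trace the map $Y_4 \mapsto \mc F_4(U Y_4 U^\top)$ in two stages, corresponding to the two factors $\wt B_4$ and $\wt A_4$. First, exploit that the columns of $U$ lie in the column span of $\wt \Sigma$ (the matrix whose columns are the $\vc(\wt \Sigma^{(i)})$), so the pseudo-inverse identity gives $\wt \Sigma\, \wt \Sigma^\dag U = U$. Setting $T := \wt \Sigma^\dag U \in \R^{k \times k}$ yields
\[
U Y_4 U^\top \;=\; \wt \Sigma\, T Y_4 T^\top\, \wt \Sigma^\top \;=\; \wt \Sigma\, Z\, \wt \Sigma^\top, \qquad Z := T Y_4 T^\top,
\]
and $Z$ is symmetric since $Y_4$ is. Expanding column by column,
$\wt \Sigma\, Z\, \wt \Sigma^\top = \sum_{i,j=1}^k Z_{ij}\, \vc(\wt \Sigma^{(i)})\, \vc(\wt \Sigma^{(j)})^\top$.

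Next, apply $\mc F_4$ linearly and collapse the $k^2$ terms using the symmetry $Z_{ij}=Z_{ji}$. The idea is to pair the $(i,j)$ and $(j,i)$ contributions for $i<j$ into a single symmetrized rank-two outer product; under the paper's convention, this symmetrized object corresponds (with factor $2$) to $\mc F_4(\vc(\wt \Sigma^{(i)})\od \vc(\wt \Sigma^{(j)}))$, while the diagonal $i=j$ term carries weight $1$. This gives
\[
\mc F_4(U Y_4 U^\top) \;=\; \sum_{1 \le i \le j \le k} C_{ij}\, Z_{ij}\, \mc F_4\!\bigl(\vc(\wt \Sigma^{(i)}) \od \vc(\wt \Sigma^{(j)})\bigr),
\]
which by construction equals $\wt A_4$ applied to the upper-triangular vectorization of $Z$, i.e.\ $\vc(Z)\in\R^{k_2}$. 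By the defining property of $\wt B_4$ in the lemma, $\vc(Z)=\wt B_4\vc(Y_4)$ (since $Z = T Y_4 T^\top$ with $T=\wt \Sigma^\dag U$). Composing, $\mc F_4(U Y_4 U^\top) = \wt A_4 \wt B_4 \vc(Y_4)$, so in canonical form $\wt H_4 = \wt A_4 \wt B_4$.

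The only step requiring care is the collapse to upper-triangular indexing with the weights $C_{ij}$; this is routine bookkeeping rather than a genuine mathematical obstacle. The identity $\wt \Sigma\, \wt \Sigma^\dag U = U$ needs only that the columns of $U$ lie in the range of $\wt \Sigma$, which is guaranteed because Step~1 constructed $U$ as a basis of exactly that span. Everything else reduces to linearity of $\mc F_4$ together with standard outer-product and vectorization manipulations.
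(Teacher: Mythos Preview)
Your proof is correct and follows essentially the same approach as the paper's: both arguments introduce the intermediate variable $Z = (\wt\Sigma^\dag U) Y_4 (\wt\Sigma^\dag U)^\top$ so that $U Y_4 U^\top = \wt\Sigma Z \wt\Sigma^\top$, then expand $\mc F_4(\wt\Sigma Z \wt\Sigma^\top)$ as a linear combination of the columns $\mc F_4(\vc(\wt\Sigma^{(i)})\od\vc(\wt\Sigma^{(j)}))$ and collapse to upper-triangular indexing using the symmetry $\mc F_4(v\od w)=\mc F_4(w\od v)$. The paper states the latter identity as a consequence of $\mc F_4$ being symmetric off-diagonal, which is the justification behind what you called ``the paper's convention''; otherwise the two arguments are the same.
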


Next we need to prove the bounds on the smallest singular values for $\wt A_4$ and $\wt B_4$. The first matrix $\wt A_4$
is essentially a projection of the Kronecker product $(\wt \Sigma \ot_{kr} \wt \Sigma)$. In particular, this projection
satisfy the ``symmetric off-diagonal'' property defined below:

\begin{definition}[symmetric off-diagonal]
  \label{def:sym-off-diag-4}
  Let the columns of matrix $P \in \R^{n_2^2\times d_2}$ form an (arbitrary) basis of the subspace $\mc P$, and index the
  rows of $P$ by pair $(i,j)\in [n_2]\times [n_2]$.
  The subspace $\mc P$ and the matrix $P$ is called symmetric off-diagonal, if $(i,i)$-th row of $P$ is $0$
  (``off-diagonal''), and the $(i,j)$-th row and $(j,i)$-th row are identical (``symmetric'').
\end{definition}

\begin{remark}\label{rmk:subspace}
  Since symmetric off-diagonal is a property on the structure of rows of the basis $P$. If one basis of the subspace
  $\mc P$ is symmetric off-diagonal, then any basis is too.
  Moreover, any orthogonal basis of the subspace $\mc P$ will still be
  symmetric off-diagonal.
\end{remark}

Consider a Kronecker product of the same matrix $E\in\R^{n_2\times k}$.  The columns of $E\ot_{kr}E$
are indexed by pair $(i,j)\in[k]\times [k]$.  Consider applying a symmetric off-diagonal projection
$P^\top$ to the Kronecker product.  By the property of symmetry the projection will map two columns
$E_{[:,i]}\od E_{[:,j]}$ and $E_{[:,j]}\od E_{[:,i]}$ to the same vector. Therefore the projected
Kronecker product $P^\top (E\otimes_{kr}E)$ will not have full column rank $k^2$. However, we will show
 that the $k_2$ ``unique'' columns after the projection are linearly independent.

To formalize this, we define the matrix $ (E\ot_{kr} E)_{uniq}\in \R^{n_2^2\times k_2}$ with the ``unique''
columns of $E\ot_{kr}E$ labeled by pairs $\{(i,j):1\le i\le j\le k\}$. In particular,
\begin{align*}
[ (E\ot_{kr} E)_{uniq}]_{[:,(i,j)]} =  E_{[:,i]}\od E_{[:,j]}.
\end{align*}
In the following main lemma, we show even after projection to any symmetric off-diagonal space with sufficiently many
dimensions, the ``unique'' columns of a Kronecker product of random matrices still has good condition number.

\begin{lemma}
\label{lem:projkron4}
Let $E \in \R^{n_2\times k}$ be a Gaussian random matrix (each entry distributed as $\mc N(0,1)$). Let $P\in \R^{n_2^2
  \times d_2}$ be a symmetric off-diagonal subspace of dimension $d_2 = \Omega(n_2^2)$.
Then for any constant $C>0$, when $n_2 \ge k^{2+C}$ we have with high probability $\sigma_{min} (P^\top (E\ot_{kr}
E)_{uniq}) \ge \Omega(n_2)$.
\end{lemma}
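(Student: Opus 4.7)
The plan is to bound $\sigma_{\min}$ of $P^\top (E\otimes_{kr} E)_{uniq}$ by a Gershgorin argument applied to the Gram matrix
\[
M \;=\; \big(P^\top (E\otimes_{kr} E)_{uniq}\big)^\top \big(P^\top (E\otimes_{kr} E)_{uniq}\big) \;\in\; \R^{k_2\times k_2},
\]
whose entries are quadratic forms in the columns $v_{ij} = E_{[:,i]}\odot E_{[:,j]}$, namely $M_{(ij),(i'j')} = v_{ij}^\top Q v_{i'j'}$ with $Q=PP^\top$ the orthogonal projection onto the column span of $P$. Since $P$ is symmetric off-diagonal (see Remark~\ref{rmk:subspace}, this property transfers from $P$ to $Q$), one has $Q_{(a,a),\cdot}=0$ and $Q_{(a,b),\cdot}=Q_{(b,a),\cdot}$, which drives all the mean computations below. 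It will suffice to show $M_{(ij),(ij)} \ge \Omega(n_2^2)$ and $|M_{(ij),(i'j')}| \le \tilde O(n_2)$ uniformly with high probability, because then Gershgorin gives $\lambda_{\min}(M)\ge M_{(ij),(ij)} - \sum_{(i'j')\ne(ij)}|M_{(ij),(i'j')}| \ge \Omega(n_2^2) - O(k^2)\tilde O(n_2) = \Omega(n_2^2)$ in the regime $n_2\ge k^{2+C}$, and $\sigma_{\min} = \sqrt{\lambda_{\min}(M)} \ge \Omega(n_2)$ follows.

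The diagonal step proceeds by first computing the mean. Writing $M_{(ij),(ij)} = \sum_{(a,b),(a',b')} Q_{(a,b),(a',b')} E_{a,i}E_{b,j}E_{a',i}E_{b',j}$ and using Isserlis' theorem together with the two structural properties of $Q$ (zero diagonal rows, and row-symmetry), the contributions from pairings that would otherwise produce $\|v_{ij}\|^2 \sim n_2^2$ are annihilated, leaving $\E[M_{(ij),(ij)}] = d_2$ (or $2d_2$ when $i=j$), which is $\Omega(n_2^2)$ by the hypothesis $d_2=\Omega(n_2^2)$. Concentration then follows from the Hanson--Wright/Lata\l a tail bound for Gaussian chaoses \citep{latala2006estimates}: since $Q$ is a rank-$d_2$ projection, $\|Q\|_{op}=1$ and $\|Q\|_F^2=d_2$, so deviations of $M_{(ij),(ij)}$ around its mean are $O(\sqrt{d_2}\log^{O(1)} n_2)$, which is dominated by $d_2$.

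For the off-diagonal step, take $(i,j)\ne(i',j')$ with $i\le j$, $i'\le j'$. Separating cases on $|\{i,j\}\cap\{i',j'\}|\in\{0,1,2\}$, each unordered-pair case forces at least one factor $E_{\cdot,\ell}$ to appear an odd number of times in the fourth moment $\E[E_{a,i}E_{b,j}E_{a',i'}E_{b',j'}]$, so by independence and mean-zero of Gaussian entries $\E[M_{(ij),(i'j')}]=0$. (The case $i=i',j=j'$ is excluded by assumption; the case $\{i,j\}=\{j',i'\}$ collapses to equality under $i\le j$, $i'\le j'$.) Then Lata\l a's tail bound applied to the bilinear/quadratic chaos $v_{ij}^\top Q v_{i'j'}$ yields $|M_{(ij),(i'j')}|\le \tilde O(\sqrt{d_2}) = \tilde O(n_2)$ with probability $1-n^{-\omega(1)}$; union bounding over the $O(k_2^2)=O(k^4)\le \poly(n_2)$ pairs preserves the high-probability guarantee.

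The main obstacle is the careful bookkeeping in the diagonal mean computation: without the symmetric off-diagonal hypothesis on $P$, the pairing $(a,b)=(a',b')$ inside the fourth moment $\E[v_{ij}^\top Q v_{ij}]$ could contribute a term of order $\|v_{ij}\|^2 \sim n_2^2$ to $M_{(ij),(ij)}$ — much larger than $d_2$ — but would also produce correspondingly large cross correlations $\E[M_{(ij),(i'j')}]$ that wreck Gershgorin. The symmetric off-diagonal condition is precisely what kills the "bad" pairings (they are supported on diagonal pairs $(a,a)$ or depend on $Q_{(a,b),(a,b)}-Q_{(a,b),(b,a)}$, both of which vanish), so that only the "good" trace-like $d_2$ contribution survives on the diagonal and nothing survives off the diagonal. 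Making this cancellation precise and then showing the Lata\l a bound gives fluctuations small enough ($\tilde O(n_2)$ per off-diagonal, times $O(k^2)$ of them, all dominated by $n_2^2$ when $n_2\ge k^{2+C}$) is the technical crux.
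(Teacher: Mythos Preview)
Your overall plan---Gershgorin on the Gram matrix $M$---is exactly the paper's approach, and your mean computations for both the diagonal and off-diagonal entries are correct. For the diagonal concentration, note that the entries of $v_{ij}$ are products of Gaussians (sub-exponential, not sub-Gaussian), so a bare Hanson--Wright invocation with $\|Q\|_F=\sqrt{d_2}$ is not justified; the paper instead shows that $v_{ij}$ (on the off-diagonal coordinates, which are all that $Q$ sees) is $O(\sqrt{n_2})$-concentrated in the sense of \cite{vu2013random} and applies their random-projection lemma. This yields deviations of order $n_2^{3/2}$ rather than your claimed $\sqrt{d_2}$, but that is still $o(d_2)$ and the conclusion $M_{(ij),(ij)}\ge \Omega(n_2^2)$ survives.

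The genuine gap is in the off-diagonal bound. Your uniform claim $|M_{(ij),(i'j')}|\le \tilde O(\sqrt{d_2})=\tilde O(n_2)$ is false when $\{i,j\}\cap\{i',j'\}\ne\emptyset$. Take $i'=i$ with $i,j,j'$ distinct: the chaos $v_{ij}^\top Q\,v_{ij'}$ then contains terms in $E_{a,i}^2$, so decoupling does not apply, and a direct variance computation picks up the contribution $\|\mathrm{tr}_1 Q\|_F^2$ (partial trace over the first tensor factor), which for a generic symmetric off-diagonal projection of rank $\Omega(n_2^2)$ is $\Theta(n_2^3)$---e.g.\ for the full symmetric off-diagonal projection one has $\mathrm{tr}_1 Q=\tfrac{n_2-1}{2}I$. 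Hence these overlap terms are of order $n_2^{3/2}$, not $\tilde O(n_2)$. The paper handles this by conditioning on the shared column(s) to reduce to a lower-degree chaos, obtaining $O(n_2^{3/2+\epsilon})$ for the $O(k)$ overlap terms per row and $O(n_2^{1+\epsilon})$ for the $O(k^2)$ disjoint terms; both contributions are then $o(n_2^2)$ under $n_2\ge k^{2+C}$. Without this case split your Gershgorin row sum becomes $O(k^2)\cdot\tilde O(n_2^{3/2})$, which is \emph{not} $o(n_2^2)$ for small $C>0$, so the split is essential and not merely a matter of bookkeeping.
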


Let us first see how Theorem~\ref{thm:unfold4} follows from the two lemmas (Lemma~\ref{lem:struct4}
and Lemma~\ref{lem:projkron4} ).

\begin{proof} (of Theorem~\ref{thm:unfold4})
Using the structural Lemma~\ref{lem:struct4}, we know we only need to bound the smallest singular value of $\wt A_4$ and $\wt B_4$ separately.
The following two claims directly imply the theorem.

\begin{claim}
\label{clm:h1}
$\sigma_{min}(\wt A_4) \ge \Omega(\rho^2 n_2).$
\end{claim}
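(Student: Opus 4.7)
The plan is to reduce the claim to Lemma~\ref{lem:projkron4}. By Lemma~\ref{lem:struct4}, each column of $\wt A_4$ has the form $C_{i,j}\,\mc F_4(\vc(\wt\Sigma^{(i)})\od\vc(\wt\Sigma^{(j)}))$. Identifying $\mc F_4$ with its matrix representation $F_4 \in \R^{n_4\times n_2^2}$ (viewing an input symmetric $n_2\times n_2$ matrix through $\vc$), I can write
\[
\wt A_4 \;=\; F_4\, D_C\, (\wt \Sigma\ot_{kr}\wt\Sigma)_{\mathrm{uniq}},
\]
where $D_C$ is the $k_2\times k_2$ diagonal matrix with entries $C_{i,j}\in\{1,2\}$ and $\wt\Sigma = [\vc(\wt\Sigma^{(i)}):i\in[k]]\in\R^{n_2\times k}$. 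Setting $P=F_4^\top\in\R^{n_2^2\times n_4}$, this is exactly the object $P^\top (E\ot_{kr}E)_{\mathrm{uniq}}$ of Lemma~\ref{lem:projkron4}, up to the harmless diagonal $D_C$ which has $\sigma_{\min}(D_C)\ge 1$.

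Next I would verify the hypotheses of Lemma~\ref{lem:projkron4} for $P$. For the dimension, $n_4=\binom{n}{4}$ and $n_2^2=\binom{n+1}{2}^2$ are both $\Theta(n^4)$, so the image of $P$ has dimension $n_4=\Omega(n_2^2)$. For the symmetric off-diagonal property, Lemma~\ref{prop:two-linear-mapping} (the Isserlis-theorem computation) shows that each row of $F_4$, indexed by a tuple $j_1<j_2<j_3<j_4$, is the sum of three indicators at positions $((j_a,j_b),(j_c,j_d))$ with $\{a,b,c,d\}=\{1,2,3,4\}$. Translating to columns of $P$: rows of $P$ at index $(p,p)$ vanish (since the indicator positions always involve two disjoint pairs and hence never repeat a pair), and rows at $(p,q)$ and $(q,p)$ coincide (since $F_4$ sees only the symmetric part of its input). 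Hence $P$ is symmetric off-diagonal. The dimensional hypothesis $n_2\ge k^{2+C}$ of Lemma~\ref{lem:projkron4} follows from the standing assumption $n\ge C_1 k^2$, which gives $n_2=\Theta(n^2)=\Omega(k^4)$.

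To apply the lemma to $\wt\Sigma=\Sigma+\rho E'$ rather than a centered standard Gaussian, observe that $F_4$ never touches the diagonal entries of any $\wt\Sigma^{(i)}$ (the 4-tuples are strictly increasing), so one may restrict to off-diagonal coordinates, where $\wt\Sigma$ equals a deterministic shift plus an i.i.d.\ $\mc N(0,\rho^2)$ matrix. The anti-concentration arguments underlying Lemma~\ref{lem:projkron4} are translation-invariant, so the deterministic shift $\Sigma$ is absorbed, and the scaling $E'\mapsto \rho E'$ passes through the Kronecker product as a factor $\rho^2$, converting the conclusion $\Omega(n_2)$ into $\Omega(\rho^2 n_2)$. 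Combining with $\sigma_{\min}(D_C)\ge 1$ gives $\sigma_{\min}(\wt A_4)\ge \Omega(\rho^2 n_2)$, as claimed. The main obstacle is the clean verification of the symmetric off-diagonal property together with handling the non-centered, partially-adversarial perturbation; both reduce to bookkeeping once the reduction to Lemma~\ref{lem:projkron4} is in place, so the real technical content of the bound is deferred to the proof of that lemma.
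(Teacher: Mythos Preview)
Your reduction to Lemma~\ref{lem:projkron4} is the right overall shape, and your verification that the range of $\mc F_4$ is symmetric off-diagonal with dimension $\Omega(n_2^2)$ is fine. There is a minor slip in the matrix factorization: since $C_{i,j}$ multiplies columns indexed by $(i,j)\in[k_2]$, the diagonal $D_C$ should sit on the right, i.e.\ $\wt A_4 = F_4\,(\wt\Sigma\ot_{kr}\wt\Sigma)_{\mathrm{uniq}}\,D_C$; as written, $F_4 D_C$ is not even defined.

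The real gap is the sentence ``the anti-concentration arguments underlying Lemma~\ref{lem:projkron4} are translation-invariant, so the deterministic shift $\Sigma$ is absorbed.'' The proof of Lemma~\ref{lem:projkron4} is not an anti-concentration argument: it shows the Gram matrix is diagonally dominant by proving (Claim~\ref{clm:diag}) a \emph{lower} bound on diagonal entries and (Claim~\ref{clm:offdiag}) an \emph{upper} bound on the off-diagonal row sums via Gaussian-chaos concentration. The off-diagonal step is \emph{not} translation invariant. If the adversary sets (say) all $\Sigma^{(i)}$ equal, then each column $P^\top(\wt\Sigma_{[:,i]}\od\wt\Sigma_{[:,j]})$ contains the common deterministic summand $P^\top(\Sigma_{[:,1]}\od\Sigma_{[:,1]})$ of norm $\Theta(n_2)$, so every off-diagonal inner product picks up a $\Theta(n_2^2)$ contribution and the Gershgorin row sum blows up to $\Theta(k^2 n_2^2)$. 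In other words, with $P=F_4^\top$ you cannot simply ``absorb'' $\Sigma$ and then rescale; the cross terms $\Sigma\od E$, $E\od\Sigma$, and especially $\Sigma\od\Sigma$ do not cancel and can be adversarially large.

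The paper's fix is exactly the missing idea: instead of $P=F_4^\top$, it passes to the smaller subspace
\[
P=\bigl(\mbox{span}(S^\perp,\;\Sigma\ot_{kr}I_{n_2},\;I_{n_2}\ot_{kr}\Sigma)\bigr)^\perp,
\]
where $S$ is the range of $\mc F_4$. This $P$ is still a subspace of $S$ (hence symmetric off-diagonal by Remark~\ref{rmk:subspace}) and still has dimension $\Omega(n_2^2)$, but now $P^\top$ annihilates every term containing a factor $\Sigma_{[:,i]}$. Consequently $P^\top[\wt A_4]_{[:,(i,j)]}=C_{i,j}\rho^2\,P^\top(E_{[:,i]}\od E_{[:,j]})$ exactly, and Lemma~\ref{lem:projkron4} applies to the genuine centered Gaussian $E$, giving $\sigma_{\min}(\wt A_4)\ge\sigma_{\min}(P^\top\wt A_4)=\rho^2\,\Omega(n_2)$. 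You should replace your translation-invariance paragraph with this projection step.
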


\begin{claim}
\label{clm:h2}
$\sigma_{min}(\wt B_4) \ge 1/(4\|\wt \Sigma\|^2) \ge 1/(4nk).$
\end{claim}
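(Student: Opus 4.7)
The plan is to identify $\wt B_4$ with the congruence map $Y\mapsto WYW^\top$ (restricted to symmetric matrices) for a $k\times k$ invertible matrix $W$, then bound $\sigma_{\min}(W)$ in terms of $\|\wt\Sigma\|$ using orthonormality of $U$, and finally relate $\sigma_{\min}(\wt B_4)$ to $\sigma_{\min}(W)^2$.

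First I would pin down $W := \wt\Sigma^\dag U$. Since both $\wt\Sigma$ and $U$ have the same column span of dimension $k$, there is a unique invertible $C\in\R^{k\times k}$ with $U=\wt\Sigma C$, and a direct computation gives $\wt\Sigma^\dag U = (\wt\Sigma^\top\wt\Sigma)^{-1}\wt\Sigma^\top\wt\Sigma C = C$, so $W=C$. The key identity comes from orthonormality: $I_k = U^\top U = W^\top(\wt\Sigma^\top\wt\Sigma)W$, which rearranges to $\wt\Sigma^\top\wt\Sigma = W^{-\top}W^{-1}$. Taking operator norms,
$$\|W^{-1}\|^2 = \|W^{-\top}W^{-1}\| = \|\wt\Sigma^\top\wt\Sigma\| = \|\wt\Sigma\|^2,$$
which yields $\sigma_{\min}(W) = 1/\|W^{-1}\| = 1/\|\wt\Sigma\|$.

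Next, I would bound $\sigma_{\min}$ of the map $\mc B:Y\mapsto WYW^\top$ on the space of symmetric $k\times k$ matrices equipped with the Frobenius inner product. Writing the SVD $W = U_W\Sigma_W V_W^\top$ and setting $Y' = V_W^\top Y V_W$ (which is symmetric with $\|Y'\|_F = \|Y\|_F$),
$$\|WYW^\top\|_F^2 = \|\Sigma_W Y' \Sigma_W\|_F^2 = \sum_{i,j}\sigma_i(W)^2\sigma_j(W)^2 (Y'_{i,j})^2 \ge \sigma_{\min}(W)^4 \|Y\|_F^2.$$
Hence the map $\mc B$ has smallest singular value at least $\sigma_{\min}(W)^2$ with respect to the Frobenius norm.

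Finally, I would translate this Frobenius-norm bound to the half-vec parameterization used to define $\wt B_4\in\R^{k_2\times k_2}$. The half-vec and Frobenius norms on symmetric matrices agree up to a factor of at most $2$, which after squaring gives at most a factor of $4$, so
$$\sigma_{\min}(\wt B_4) \ge \tfrac14\sigma_{\min}(W)^2 = \tfrac{1}{4\|\wt\Sigma\|^2}.$$
The second inequality follows from $\|\wt\Sigma\|^2 \le \|\wt\Sigma\|_F^2 = \sum_{i=1}^{k}\|\wt\Sigma^{(i)}\|_F^2 \le kn$, since each smoothed covariance satisfies $\wt\Sigma^{(i)}\preceq I_n$ and so $\|\wt\Sigma^{(i)}\|_F^2\le n$. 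There is no essential obstacle here; the entire argument is elementary linear algebra, and the only subtlety is to keep track of the harmless constant arising from the half-vec convention.
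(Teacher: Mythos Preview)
Your proposal is correct and follows essentially the same approach as the paper: both arguments identify $\wt B_4$ with the congruence map $Y\mapsto WYW^\top$ for $W=\wt\Sigma^\dag U$, use $\|WYW^\top\|_F\ge\sigma_{\min}(W)^2\|Y\|_F$, compute $\sigma_{\min}(W)=1/\|\wt\Sigma\|$ from orthonormality of $U$, and then account for the $\sqrt{2}$ discrepancy between the half-vec norm and the Frobenius norm (the paper simply cites $\|AB\|_F\ge\sigma_{\min}(A)\|B\|_F$ and the $\sqrt{2}$ factor, while you spell out the SVD and the $W^{-\top}W^{-1}=\wt\Sigma^\top\wt\Sigma$ identity explicitly). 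Your justification of the constant $4$ is a bit loose---in fact the half-vec/Frobenius comparison only costs a single factor of $\sqrt{2}$ here, not $4$---but this only makes the stated bound weaker than necessary, not wrong.
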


% For the corollary part, we only need to show there exists a solution. That is because $U$ is the
% column span of $\wt \Sigma$ and a solution is $Y_4 = U^\top X_4 U$.

Next we prove the two claims.

We apply Lemma~\ref{lem:projkron4} to prove Claim~\ref{clm:h1}. Note that the $\rho$-perturbed
covariances $\wt \Sigma$ is not a random Gaussian matrix, yet it is equal to the unperturbed matrix
$\Sigma$ plus a random Gaussian matrix $E_{\Sigma} = \rho E$\footnote{Note that the diagonal entries
  are then arbitrarily perturbed, but we will project on a symmetric off-diagonal subspace so
  changes on diagonal entries do not change the result.}.  Since we consider arbitrary $\Sigma$, the
columns of $\wt \Sigma$ as well as the columns $\wt A_4$ may not be incoherent.

Instead, we project $\wt A_4$ to a subspace to strip away the terms involving the original matrix $\Sigma$.
Let $S$ be the range space corresponding to the projection $\mc F_4$. Recall that $|S|= n_4 = \Omega(n_2^2)$, and by
the definition of $\mc F_4$, $S$ is symmetric off-diagonal.
Define the subspace $S' = \mbox{span}(S^\perp, \Sigma\ot_{kr} I_{n_2}, I_{n_2}\ot_{kr} \Sigma)$. Let $P= (S')^\perp$. By
construction $ |P| \ge| S| - 2kn_2 = \Omega(n_2^2)$. Also, since $P=(S')^\perp$ is a subspace of $S$, it must also be
symmetric off-diagonal (see Remark~\ref{rmk:subspace}).
After  projecting $\wt A_4$ to $P$, we know that the $(i,j)$-th column $(1\le i\le j\le k)$ of $P^{\top}\wt A_4$ is given by:
\begin{align*}
  P^\top [\wt A_4]_{[:,(i,j)]} &= C_{i,j} P^\top ( \Sigma_{[:,i]}\od \Sigma_{[:,j]}+\rho E_{[:,i]}\od
  \Sigma_{[:,j]}+\rho\Sigma_{[:,i]}\od E_{[:,j]}+\rho^2E_{[:,i]}\od E_{[:,j]})
  \\
  &= C_{i,j}\rho^2P^\top E_{[:,i]}\od E_{[:,j]}.
\end{align*}
Thus in $P^{\top}\wt A_4$ all the terms involving $\Sigma$ disappears. Therefore
\begin{align*}
\sigma_{min}(\wt A_4) \ge \sigma_{min}(P^\top \wt A_4) = \sigma_{min}(P^\top (\wt \Sigma\ot_{kr} \wt \Sigma)_{uniq}) = \rho^2\sigma_{min}(P^\top (E \ot_{kr} E)_{uniq}) \ge \Omega(\rho^2 n_2),
\end{align*}
where the first inequality is because the smallest singular value cannot become larger after projection, the first
equality is by definition, the second equality is by the property  of $P$, and the final step uses Lemma~\ref{lem:projkron4}\footnote{Note that although diagonal entries are not perturbed, we also have $P_{[i,i]} = 0$ so we can still apply the lemma.}.

For Claim~\ref{clm:h2}. Pick any $Y_4\in\R^{k\times k}_{sym}$, we have
$$
\|\wt B_4(Y_4)\| = \|\vc((\wt \Sigma^\dag U) Y_4 (\wt \Sigma^\dag U)^\top)\| = \|(\wt \Sigma^\dag U) Y_4 (\wt \Sigma^\dag U)^\top\|_F \ge \|Y_4\|_F \sigma_{min}(\wt \Sigma^\dag U)^2 = \|Y_4\|_F/\|\wt \Sigma\|^2,
$$
where the inequality is because $\|AB\|_F \ge\sigma_{min}(A) \|B\|_F$ if $A\in \R^{m\times n}$ and $m\ge n$.
Since $\|\vc(Y_4)\|$ is within a factor of $\sqrt{2}$ to $\|Y_4\|_F$, and by the assumption
$\wt\Sigma^{(i)}\prec {1\over 2} I$ we
can bound $\|\wt\Sigma\|\le \Omega(\sqrt{nk})$, we have the desired bound for $\sigma_{min}(\wt B_4)$.
\end{proof}

\paragraph{Structure of the Coefficient Matrix} In this part we prove the structural Lemma~\ref{lem:struct4}.

\begin{proof}(of Lemma~\ref{lem:struct4}) First, assume we know the true $\wt \Sigma$ matrix, then
  in order to get the unfolded moments $X_4$, we only need to solve the equation $\mc F_4(\wt \Sigma
  D_4 \wt \Sigma^\top)= \ol M_4$ with the $k\times k$ symmetric variable $D_4$, and the solution
  should be equal to the diagonal matrix $D_{\wt \omega}$.

  However, we only know $U$ which is the column span of $\wt \Sigma$, so we can only use $U Y_4
  U^\top$ and let $U Y_4 U^\top =\wt \Sigma D_4 \wt \Sigma^\top$. Note that there is a one-to-one
  correspondence between $Y_4$ and $ D_4$. In particular we know $D_4 = (\wt \Sigma^\dag U) Y_4 (\wt
  \Sigma^\dag U)^\top$, this is exactly the second part $\wt B_4$.

  Now the first matrix $\wt A_4$ should map $\mbox{vec}(D_4)$ to $M_4$. By construction,
  %the $(i,j)$-th entry $(i < j)$ of $D_4$ correspond to a
  the $(i,j)$-th column $(i<j)$ of $\wt A_4$ is equal to $\mc F_4(\wt \Sigma^{(i)}\od \wt
  \Sigma^{(j)} + \wt \Sigma^{(j)}\od \wt \Sigma^{(i)}) = 2\mc F_4(\wt \Sigma^{(i)}\od \wt
  \Sigma^{(j)})$, since $\mc F_4$ is symmetric off-diagonal we know $\mc F_4 (v_1\od v_2) = \mc F_4
  (v_2\od v_1)$ for any two vectors $v_1, v_2$.  For the $(i,i)$-th column, by construction they are
  equal to $\mc F_4(\wt \Sigma^{(i)}\od \wt \Sigma^{(i)})$ as we wanted.
\end{proof}

\paragraph{Main Lemma on Projection of Kronecker Product}
In this part we prove  Lemma~\ref{lem:projkron4}.

The singular values of Kronecker Product between two matrices are well-understood: they are just the products of the singular values of the two matrices. Therefore, the Kronecker product of two rank $k$ matrices will have rank $k^2$. However, in our case the problem becomes more complicated because we only look at a projection of the resulting matrix. The projected Kronecker product may no longer have rank $k^2$ because of symmetry. Here we are able to show that even with projection to a low dimensional space, the rank of the new matrix is still as large as ${k+1 \choose 2}$.

The basic idea of the proof is to consider the inner-products between columns, and show that the columns are incoherent even after projection.

\begin{proof} (of Lemma~\ref{lem:projkron4})
Consider the matrix $(E\ot_{kr} E)_{uniq}^\top PP^\top (E\ot_{kr} E)_{uniq}$, we shall show the matrix is {\em diagonally dominant} and hence its smallest singular value must be large. In order to do that we need to prove the following two claims:

\begin{claim}
\label{clm:diag}
For any $i,j\le k$, $i\le j$, with high probability $\|P^\top (E_{[:,i]}\od E_{[:,j]})\|^2 \ge \Omega(n_2^2).$
\end{claim}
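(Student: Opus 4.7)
The plan is to prove the claim by a first moment computation followed by a concentration inequality for Gaussian chaoses. The key structural input is that $P$ is symmetric off-diagonal in the sense of Definition~\ref{def:sym-off-diag-4}; all other ingredients are moment computations for products of i.i.d.\ standard Gaussians and a standard deviation bound.

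First I would compute the expectation $\E[\|P^\top v\|^2]$ where $v=E_{[:,i]}\od E_{[:,j]}\in\R^{n_2^2}$ is indexed by pairs $(a,b)\in[n_2]\times[n_2]$, so that $v_{(a,b)}=E_{a,i}E_{b,j}$. Writing $\E[\|P^\top v\|^2]=\tr(PP^\top\,\E[vv^\top])$, the matrix $\E[vv^\top]$ decomposes differently in the two cases. When $i\ne j$ the entries of $E_{[:,i]}$ and $E_{[:,j]}$ are independent, so $\E[v_{(a,b)}v_{(c,d)}]=\delta_{ac}\delta_{bd}$ and $\E[vv^\top]=I_{n_2^2}$, giving $\E[\|P^\top v\|^2]=\tr(PP^\top)=d_2=\Omega(n_2^2)$. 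When $i=j$, Wick's formula yields
\[
\E[v_{(a,b)}v_{(c,d)}]=\delta_{ac}\delta_{bd}+\delta_{ad}\delta_{bc}+\delta_{ab}\delta_{cd},
\]
so $\E[vv^\top]=I+K+D$, where $K$ is the swap operator $K_{(a,b),(c,d)}=\delta_{ad}\delta_{bc}$ and $D=\vc(I_{n_2})\vc(I_{n_2})^\top$. The off-diagonal property $P_{(a,a),\ell}=0$ forces $P^\top\vc(I_{n_2})=0$, so $\tr(PP^\top D)=0$; and the symmetry property $P_{(a,b),\ell}=P_{(b,a),\ell}$ gives $KP=P$, so $\tr(PP^\top K)=\tr(P^\top P)=d_2$. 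Hence $\E[\|P^\top v\|^2]=2d_2=\Omega(n_2^2)$ in this case as well. Thus in both cases the expectation is of the desired order.

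Next I would upgrade the expectation to a high-probability lower bound by concentration. The quantity $\|P^\top v\|^2=v^\top(PP^\top)v$ is a Gaussian chaos of order at most $4$ in the independent standard Gaussians $(E_{a,\ell})$. For $i\ne j$ it is a bilinear form in two independent Gaussian vectors, and the Hanson--Wright inequality applied to the matrix $M\in\R^{n_2\times n_2}$ defined by $M_{a,b}=\sum_\ell (P_{(a,b),\ell})^2$-type reshaping yields a sub-exponential lower tail with parameters controlled by $\|PP^\top\|_F^2\le d_2$ and $\|PP^\top\|\le 1$. For $i=j$ it is a degree-$4$ Gaussian chaos, and Latala's bound \cite{latala2006estimates} applied to the corresponding tensor gives analogous sub-Gaussian/sub-exponential tails in the four regimes governed by the operator, Hilbert--Schmidt, and mixed norms of $PP^\top$ viewed as a symmetric $4$-tensor. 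In either case, since $\|PP^\top\|\le 1$ and $\E[\|P^\top v\|^2]=\Theta(d_2)=\Omega(n_2^2)$, the standard deviation is $O(n_2)$ while the mean is $\Omega(n_2^2)$, so a one-sided deviation inequality gives $\Pr[\|P^\top v\|^2 < c n_2^2]\le \exp(-\Omega(n_2))$ for a small absolute constant $c$.

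The main obstacle is the expectation calculation for the $i=j$ case, where extra terms arise from $4$th moments of a single Gaussian; the two auxiliary terms (the swap $K$ and the rank-one $D$) are exactly neutralized by the symmetric off-diagonal hypothesis on $P$, and I expect the concentration step to then be routine via \cite{latala2006estimates}. Taking a union bound over the $O(k^2)$ pairs $(i,j)$ (which is dominated by the $\exp(-\Omega(n_2))$ tail since $n_2=\Omega(k^2)$) completes the claim.
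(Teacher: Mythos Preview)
Your approach is correct in outline and genuinely different from the paper's. The paper does not compute $\E[\|P^\top v\|^2]$ at all; instead it shows (Lemma~\ref{lem:Kconcentrated}) that, conditioned on $\|E_{[:,i]}\|,\|E_{[:,j]}\|\le 2\sqrt{n_2}$, the off-diagonal entries of $E_{[:,i]}\od E_{[:,j]}$ form an $O(\sqrt{n_2})$-concentrated vector in the sense of \cite{vu2013random}, and then applies the Vu--Wang projection lemma (Lemma~\ref{lem:vanvu}) to get concentration of $\|P^\top v\|^2$ around $d_2$ with tail $\exp(-\Omega(n_2))$. This handles $i=j$ and $i\ne j$ uniformly and avoids any explicit moment computation; the symmetric off-diagonal hypothesis enters only to justify discarding the $(p,p)$ coordinates. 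Your route---compute the mean exactly via Wick and then concentrate---is more hands-on; the $i=j$ calculation where $K$ and $D$ are killed precisely by the symmetric and off-diagonal properties is clean and arguably more informative than the paper's argument (it explains why the mean is $2d_2$ rather than $d_2$ in that case).

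One point of imprecision: for $i\ne j$, the quantity $\|P^\top v\|^2=(x\otimes y)^\top PP^\top(x\otimes y)$ with $x=E_{[:,i]},\,y=E_{[:,j]}$ is \emph{not} a bilinear form---it is degree two in each of $x$ and $y$, hence a degree-four chaos---so Hanson--Wright does not apply directly. You can salvage this either by a two-stage argument (Hanson--Wright in $x$ conditional on $y$, after first showing the conditional matrix $A_y^\top A_y$ has trace close to $d_2$ by Hanson--Wright in $y$), or simply by invoking Latala \cite{latala2006estimates} for the centered degree-four chaos, exactly as you do for $i=j$. In either case the Frobenius norm of the coefficient tensor is $\|PP^\top\|_F=\sqrt{d_2}=\Theta(n_2)$ and its operator norm is at most $1$, so the deviation $t=d_2/2=\Theta(n_2^2)$ lands in the regime giving $\exp(-\Omega(n_2))$. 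For the coupled case $i=j$ you should also note that the centered chaos has diagonal terms (e.g.\ $a=c,\,b\ne d$), so Theorem~\ref{thm:decoupling} does not apply directly; one either uses the coupled version of Latala's bound or splits off the lower-degree pieces separately. These are routine fixes and your overall strategy goes through with the same $\exp(-\Omega(n_2))$ tail as the paper's.
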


\begin{claim}
\label{clm:offdiag}
For any $i,j\le k$, $i\le j$, with high probability $$\sum_{1\le i'\le j'\le k, (i,j)\ne (i',j')} |\left<P^\top (E_{[:,i]}\od E_{[:,j]}), P^\top (E_{[:,i']}\od E_{[:,j']})\right>| \le o(n_2^2).$$
\end{claim}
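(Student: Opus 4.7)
The plan is to view each inner product as a Gaussian chaos of degree four in the i.i.d.\ entries of $E$, show the chaos has expectation zero in every off-diagonal case, apply a Gaussian-chaos tail bound, and take a union bound over the $O(k^2)$ choices of $(i',j')$. Writing $v_{ij} := P^\top(E_{[:,i]}\od E_{[:,j]}) \in \R^{d_2}$ and $Q := PP^\top$, an orthogonal projection of rank $d_2$ with $\|Q\|_F = \sqrt{d_2} \le n_2$, we have
\[
\langle v_{ij},\,v_{i'j'}\rangle
= \sum_{(a,b),(a',b')\in[n_2]^2} Q_{(a,b),(a',b')}\,E_{a,i}E_{b,j}E_{a',i'}E_{b',j'}.
\]

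First I would verify that $\E[\langle v_{ij},v_{i'j'}\rangle]=0$ whenever $(i,j)\ne(i',j')$. Because the entries of $E$ are independent centered Gaussians, the expectation splits into a sum over pairings of the four factors, and only pairings that match every factor contribute. If the column multisets $\{i,j\}$ and $\{i',j'\}$ share at most one element, at least one factor stays unpaired and the mean is zero; the ``swap'' pairing $i=j',\,i'=j$ with $(i,j)\ne(i',j')$ is ruled out by the conventions $i\le j$ and $i'\le j'$ (a quick check shows this forces $(i,j)=(i',j')$). The only remaining off-diagonal case is $i=j$ together with $i'=j'$ and $i\ne i'$, where the mean equals $\sum_{a,a'}Q_{(a,a),(a',a')}$; this vanishes because $P$ is off-diagonal, so rows of $P$ (and hence of $Q$) indexed by diagonal pairs $(a,a)$ are zero.

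Next I would apply a Gaussian-chaos tail bound. A direct variance computation gives $\mathrm{Var}(\langle v_{ij},v_{i'j'}\rangle) \le \|Q\|_F^2 \le n_2^2$, and after decoupling the homogeneous chaos (e.g.\ via \cite{de1995decoupling}), Lata{\l}a's estimates \cite{latala2006estimates} or degree-$4$ Gaussian hypercontractivity yield
\[
\Pr\bigl[\,|\langle v_{ij},v_{i'j'}\rangle| > t\,\|Q\|_F\,\bigr] \le C\exp(-c\,t^{1/2}),
\]
so each inner product is $O(n_2 \log^{O(1)} n)$ in absolute value with probability $1 - n^{-\omega(1)}$. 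Taking a union bound over the $k_2-1 = O(k^2)$ remaining choices of $(i',j')$ gives, with high probability,
\[
\sum_{(i',j')\ne(i,j)}\bigl|\langle v_{ij},v_{i'j'}\rangle\bigr| \;=\; O\bigl(k^2\,n_2\log^{O(1)} n\bigr).
\]
Under the standing assumption $n\ge C_1 k^2$ we have $k^2 = O(n)$, so the right-hand side is $O(n\cdot n_2\log^{O(1)} n) = o(n_2^2)$, as required.

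The main technical obstacle I foresee is the mean-zero case analysis: its validity relies on combining the off-diagonal property of $P$ (for the $(a,a)$ rows) with the ordering conventions $i\le j$ and $i'\le j'$ (to rule out the swap pairing), and it is precisely this structural input---absent in a generic order-$4$ chaos---that prevents the inner products from carrying a bias of order $\|Q\|_F^2 = d_2 = \Omega(n_2^2)$, which would defeat the whole argument. Once the mean vanishes, the concentration and union-bound steps are routine applications of standard Gaussian-chaos machinery.
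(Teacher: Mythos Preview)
Your overall strategy---view each inner product as a Gaussian chaos, center it, invoke a tail bound, and union-bound over $O(k^2)$ pairs---matches the paper's. The gap is the sentence ``a direct variance computation gives $\mathrm{Var}(\langle v_{ij},v_{i'j'}\rangle)\le\|Q\|_F^2\le n_2^2$.'' This is false when the column multisets $\{i,j\}$ and $\{i',j'\}$ overlap. Take for instance $i'=i$, $j'\notin\{i,j\}$: expanding $\E[X^2]$ by Isserlis produces, in addition to the $\|Q\|_F^2$ term, the partial-trace term
\[
\sum_{b,b'}\Bigl(\sum_{a} Q_{(a,b),(a,b')}\Bigr)^{2},
\]
and for natural choices of $P$ (e.g.\ the full symmetric off-diagonal subspace) this is $\Theta(n_2^3)$, not $O(n_2^2)$. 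So the standard deviation in these overlap cases is $\Theta(n_2^{3/2})$, and hypercontractivity only gives $|\langle v_{ij},v_{i'j'}\rangle|=O(n_2^{3/2+\epsilon})$, consistent with the paper's Case~2/3 bound. (Relatedly, decoupling via \cite{de1995decoupling} does not apply directly here: when $i=i'$ the coefficient array has nonzero diagonal entries $Q_{(a,b),(a,b')}$.)

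This matters for the final accounting. If you use the corrected per-term bound $O(n_2^{3/2+\epsilon})$ uniformly across all $O(k^2)$ pairs, the sum is $O(k^2 n_2^{3/2+\epsilon})$, which is \emph{not} $o(n_2^2)$ under the lemma's hypothesis $n_2\ge k^{2+C}$ for arbitrary $C>0$. What saves the argument---and what the paper does---is to separate the cases: only $O(k)$ pairs $(i',j')$ share a column with $(i,j)$ and incur the $n_2^{3/2+\epsilon}$ bound, while the remaining $O(k^2)$ disjoint pairs really do have variance $O(\|Q\|_F^2)$ and hence size $O(n_2^{1+\epsilon})$. Then $k\,n_2^{3/2+\epsilon}+k^2 n_2^{1+\epsilon}=o(n_2^2)$. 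Your mean-zero analysis is fine; you just need this case split on the tail side as well.
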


With this two claims, we can apply Gershgorin's Disk Theorem~\ref{thm:gershgorin} to conclude that $\sigma_{min}((E\ot_{kr} E)_{uniq}^\top PP^\top(E\ot_{kr} E)_{uniq}) \ge \Omega(n_2^2)$. Therefore $\sigma_{min}(P^\top(E\ot_{kr} E)_{uniq}) \ge \Omega(n_2)$.

Now we prove the two claims. For Claim~\ref{clm:diag}, it essentially says the projection of a random vector to a fixed subspace should have large norm. If the vector has independent entries, this is first shown in \cite{tao2006random}. Recently \cite{vu2013random} generalized the result to $K$-concentrated vectors, see Lemma~\ref{lem:vanvu}. By Lemma~\ref{lem:Kconcentrated} we know conditioned on $\|E_{[:,i]}\|,\|E_{[:,j]}\| \le 2\sqrt{n_2}$, $(E_{[:,i]}\od E_{[:,j]})_{p,q} (p\ne q)$ is $O(\sqrt{n_2})$-concentrated. By assumption $P$ ignores all the $(E_{[:,i]}\od E_{[:,j]})_{p,p}$ entries. Therefore $\Pr[| \|P^\top (E_{[:,i]}\od E_{[:,j]})\|^2 - d_2| \ge 2t\sqrt{d_2}+t^2] \le Ce^{-\Omega(t^2/n_2)} + e^{-\Omega(n_2)}$. We then pick $t = \sqrt{d_2}/5 \ge \Omega(n_2)$, which implies $\Pr[\|P (E_{[:,i]}\od E_{[:,j]})\|^2 \le d_2/2] \le Ce^{-\Omega(n_2)}$. This is what we need for Claim~\ref{clm:diag}.

For Claim~\ref{clm:offdiag}, we need to bound terms of the form $\left<P^\top (E_{[:,i]}\od E_{[:,j]}), P^\top (E_{[:,i']}\od E_{[:,j']})\right>$. These are degree-4 Gaussian chaoses and are well-studied in \cite{latala2006estimates}.

We break the terms according to how many of $i',j'$ appears in $i,j$.

\noindent {\bf Case 1:} $i',j'\not\in\{i,j\}$. In this case we first randomly pick $E_{[:,i]},E_{[:,j]}$, and condition on the high probability event that $\|E_{[:,i]}\|,\|E_{[:,j]}\| \le 2\sqrt{n_2}$. In this case the inner-product can be rewritten as $\left<P P^\top (E_{[:,i]}\od E_{[:,j]}), (E_{[:,i']}\od E_{[:,j']})\right>$, and we know $\|PP^\top (E_{[:,i]}\od E_{[:,j]})\| \le 4n_2$. Also, since $P$ is symmetric off-diagonal we know in this degree-2 Gaussian chaos (only $E_{[:,i']}$ and $E_{[:,j']}$ are random now) there are no ``diagonal'' terms. Therefore the Decoupling Theorem~\ref{thm:decoupling} shows without loss of generality we can assume $i'\ne j'$. Apply Theorem~\ref{thm:chaos} we know this term is bounded by $O(n_2^{1+\epsilon})$ with high probability for any $\epsilon > 0$.

\noindent {\bf Case 2:} One of $i',j'$ is in $\{i,j\}$. Without loss of generality assume $i'\in \{i,j\}$ (the other case is symmetric). Again we first randomly pick $E_{[:,i]},E_{[:,j]}$ and condition on the high probability event that $\|E_{[:,i]}\|,\|E_{[:,j]}\| \le 2\sqrt{n_2}$ (but this will also determine $E_{[:,i']}$). After the conditioning, only $E_{[:,j']}$ is still random, and the inner-product can be rewritten as $\left<\mbox{mat}(PP^\top (E_{[:,i]}\od E_{[:,j]}) E_{[:,i']}, E_{[:,j']}\right>$ where the fixed vector $\mbox{mat}(PP^\top (E_{[:,i]}\od E_{[:,j]})) E_{[:,i']}$ has norm bounded by $\|PP^\top (E_{[:,i]}\od E_{[:,j]})\|\|E_{[:,i']}\| \le 8n_2^{3/2}$. By property of Gaussian with high probability the inner-product is bounded by $O(n_2^{3/2+\epsilon})$ for any $\epsilon > 0$.

\noindent {\bf Case 3:} $i',j'\in \{i,j\}$. Since $i',j'$ cannot be equal to $i,j$, there is only
one possibility: $i', j'$ are both equal to one of $i,j$ and $i\ne j$. Without loss of generality
assume $i'=j'=i\ne j$. We can swap $i,j$ with $i',j'$ and this actually becomes Case 2. By the same
argument we know this term is bounded by $O(n_2^{3/2+\epsilon})$ for any $\epsilon > 0$.

There are $O(k^2)$ terms in Case 1, $O(k)$ terms in Case 2 and $O(1)$ terms in Case 3. Therefore by union bound we know the sum is bounded by $O(kn_2^{3/2+\epsilon} + k^2n_2^{1+\epsilon})$ with high probability. Recall we are assuming $n_2\ge k^{2+C}$ (which only requires $n\ge k^{1+C/2}$). Choose $\epsilon$ to be a small enough constant depending on $C$ gives the result.
\end{proof}

\subsection{Unfolding $ 6 $-th Order Moments}
Recall the second system of linear equations is
\begin{align*}
  \ol M_6 / \sqrt{15} = \mc F_{6}\circ \mc X_6^U (Y_6).
\end{align*}

In the equation, $Y_6\in \R^{k\times k\times k}_{sym}$ is the unknown variable which can be viewed
as a $k\times k\times k$ symmetric tensro. The first linear transformation $\mc X_6^U$ transforms
$Y_6$ into the unfolded moments $X_6\in \R^{n_2\times n_2\times n_2}_{sym}$, which is supposed to be
equal to $\sum_{i=1}^k \wt w_i \vc(\wt \Sigma^{(i)})\ot^{3}$. The transformation is simply $X_6 =
\mc X_6^U(Y_6) = Y_4(U^\top, U^\top, U^\top)$ where $U\in \R^{n_2\times k}$ is the column span of
${\wt \Sigma}$ that we learned in the previous section.

The next transformation $\mc F_{6}$ maps the unfolded moments $X_6$ to the folded moments $\ol M_6
\in \R^{n_6}$, which as we showed in Lemma~\ref{prop:two-linear-mapping} is a projection. Recall
that $n_6 = {n\choose 6}$.

Rewrite the system of linear equations $\ol M_6/\sqrt{15} = \mc F_{6}\circ \mc X_6^U (Y_6)$ in the canonical
form: $\ol M_6/\sqrt{15} = \wt H_6 \mbox{vec}(Y_6)$ where the coefficient matrix $\wt H_6 \in
\R^{n_6\times k_3}$ is a function of $U$ and therefore is a function of $\wt \Sigma$ (recall $k_3 =
{k+2\choose 3}$).

The second system of linear equations tries to unfold the $6$-th order moment $\ol M_6$ to get
$Y_6$. Similar to Theorem~\ref{thm:unfold4} the following theorem  guarantees that with high
probability over the perturbation the system has a unique solution.

\begin{theorem}
\label{thm:unfold6}
 With high probability over the perturbation, the coefficient matrix $\wt H_6$ has smallest singular value $\sigma_{min}(\wt H_6) \ge \Omega(\rho^3 (n/k)^{1.5})$. As a corollary, the system has a unique solution.
\end{theorem}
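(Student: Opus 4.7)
The plan is to mirror the structure of the proof of Theorem~\ref{thm:unfold4} one order up. First I would establish a structural decomposition $\wt H_6 = \wt A_6 \wt B_6$ analogous to Lemma~\ref{lem:struct4}. Here $\wt B_6 \in \R^{k_3 \times k_3}$ implements the change of basis from $Y_6$ to the diagonal tensor $D_6$ via $D_6 = Y_6(\wt\Sigma^\dag U, \wt\Sigma^\dag U, \wt\Sigma^\dag U)$, and $\wt A_6 \in \R^{n_6 \times k_3}$ has columns indexed by multisets $\{(i,j,\ell) : 1 \le i \le j \le \ell \le k\}$ with entries $C_{i,j,\ell}\, \mc F_6(\vc(\wt\Sigma^{(i)}) \od \vc(\wt\Sigma^{(j)}) \od \vc(\wt\Sigma^{(\ell)}))$, where $C_{i,j,\ell}$ is the appropriate multinomial coefficient accounting for the symmetry collapse across modes. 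Bounding $\wt B_6$ is the easy part: by the same argument as Claim~\ref{clm:h2}, $\sigma_{\min}(\wt B_6) \ge 1/\|\wt\Sigma\|^3 \ge \Omega((nk)^{-1.5})$.

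The heart of the proof is the lower bound on $\sigma_{\min}(\wt A_6)$, where I would generalize Lemma~\ref{lem:projkron4} to the threefold Kronecker product. I would introduce a subspace $P$ defined as the orthogonal complement, inside the range of $\mc F_6$, of everything of the form $\Sigma\otimes_{kr} I_{n_2} \otimes_{kr} I_{n_2}$ (and the two analogous cyclic variants). Since $\mc F_6$ has range dimension $n_6 = \Theta(n_2^3)$ while the subtracted subspaces contribute only $O(kn_2^2)$ dimensions, $|P| = \Theta(n_2^3)$. Writing $\wt\Sigma = \Sigma + \rho E$ and expanding $\wt\Sigma_i \od \wt\Sigma_j \od \wt\Sigma_\ell$ into $2^3$ terms, every term containing at least one $\Sigma$ factor is annihilated by $P^\top$, so after projection only $\rho^3 P^\top(E_{[:,i]} \od E_{[:,j]} \od E_{[:,\ell]})$ survives. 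Thus $\sigma_{\min}(\wt A_6) \ge \rho^3 \sigma_{\min}(P^\top (E \ot_{kr} E \ot_{kr} E)_{uniq})$, and the task reduces to lower bounding this last quantity by $\Omega(n_2^{1.5})$.

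For that reduction, I would again use Gershgorin's disk theorem applied to the Gram matrix $(E\ot_{kr} E\ot_{kr} E)_{uniq}^\top P P^\top (E\ot_{kr} E\ot_{kr} E)_{uniq}$, and prove diagonal dominance. The diagonal entries $\|P^\top(E_{[:,i]} \od E_{[:,j]} \od E_{[:,\ell]})\|^2$ concentrate around $|P|= \Theta(n_2^3)$ by Lemma~\ref{lem:vanvu} (the random vector, after conditioning on bounded column norms, is $O(\sqrt{n_2})$-concentrated in the off-diagonal coordinates, which is exactly what $P$ keeps). The off-diagonal entries are degree-six Gaussian chaoses, which I would bound via decoupling (Theorem~\ref{thm:decoupling}) and the moment tail bounds for Gaussian chaoses (Theorem~\ref{thm:chaos}), conditioning on whichever factors are shared between the two multisets $(i,j,\ell)$ and $(i',j',\ell')$. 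The sharing splits into four cases by the size $|\{i',j',\ell'\} \cap \{i,j,\ell\}|$, each yielding an upper bound of the form $O(n_2^{3-\gamma+\epsilon})$ for some constant $\gamma > 0$ with high probability.

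The main obstacle will be this last case analysis: the number of cross terms is $O(k^6)$, so diagonal dominance requires $k^6 \cdot n_2^{3-\gamma+\epsilon} = o(n_2^3)$, which constrains how small $\gamma$ can be taken and ties directly to the dimension hypothesis $n \ge C_1 k^2$. I expect this to work out with room to spare under the $n \ge \Omega(k^2)$ regime, but tracking the exponents through all four overlap patterns and verifying that the chaos-tail estimates still leave a polynomial gap is the delicate step. Once these bounds are in place, combining $\sigma_{\min}(\wt A_6) \ge \Omega(\rho^3 n_2^{1.5}) = \Omega(\rho^3 n^3)$ with $\sigma_{\min}(\wt B_6) \ge \Omega((nk)^{-1.5})$ yields $\sigma_{\min}(\wt H_6) \ge \Omega(\rho^3 (n/k)^{1.5})$, and uniqueness of the solution follows immediately.
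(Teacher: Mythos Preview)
Your approach is the same as the paper's: structural decomposition $\wt H_6 = \wt A_6 \wt B_6$ via Lemma~\ref{lem:struct6}, the same projection trick to reduce $\wt A_6$ to $\rho^3 P^\top(E\ot_{kr} E\ot_{kr} E)_{uniq}$, and Gershgorin on the Gram matrix via analogues of Claims~\ref{clm:diag} and~\ref{clm:offdiag}. The paper likewise identifies the off-diagonal case analysis as the one place where sixth order is genuinely harder than fourth.

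There is, however, one substantive case you have not accounted for, and it is exactly the new difficulty relative to Theorem~\ref{thm:unfold4}. Your plan is to condition on the shared columns and then apply decoupling (Theorem~\ref{thm:decoupling}) plus Theorem~\ref{thm:chaos} to whatever randomness remains. This works whenever at least one of $i',j',\ell'$ lies outside $\{i,j,\ell\}$. But it breaks down when both multisets draw from the \emph{same} two columns with different multiplicities, e.g.\ $(i,i,j)$ versus $(i,j,j)$ with $i<j$: the inner product is then $T(E_{[:,i]},E_{[:,i]},E_{[:,j]},E_{[:,i]},E_{[:,j]},E_{[:,j]})$ with $T=PP^\top$ viewed as a $6$-tensor, and no matter which column you condition on, the remaining cubic chaos involves the other column three times and therefore has diagonal terms, so Theorem~\ref{thm:decoupling} does not apply. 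The paper handles this case separately with Lemma~\ref{lem:gaussian3rd} (which tolerates diagonal terms at the cost of an extra $n_2^{1/2}$), applied once to contract the three $E_{[:,i]}$ modes, bounding the Frobenius norm of the resulting $3$-tensor by $O(n_2^{2+\epsilon})$, and then a second time on the three $E_{[:,j]}$ modes, giving $O(n_2^{2.5+\epsilon})$ overall --- still $o(n_2^3)$ as needed.

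Two minor corrections that do not affect the outcome: the triple product $E_{[:,i]}\od E_{[:,j]}\od E_{[:,\ell]}$ is $O(n_2)$-concentrated, not $O(\sqrt{n_2})$, since its Lipschitz constant in any one column scales with the product of the other two column norms; and the Gershgorin row sum has $O(k^3)$ off-diagonal terms, not $O(k^6)$.
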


The proof of this theorem is very similar to the proof of Theorem~\ref{thm:unfold4}. Here we list the important steps and highlight the differences.

As before the theorem relies on a structural lemma (Lemma~\ref{lem:struct6}), and a main lemma about
the symmetric off-diagonal projection of a Kronecker product of three identical matrices
(Lemma~\ref{lem:projkron6}).

\begin{lemma}
\label{lem:struct6}
The coefficient matrix $\wt H_6$ is equal to $\wt A_6 \wt B_6$. The first matrix $\wt A_6\in
\R^{n_6\times k_3}$ has columns indexed by triples $(i_1,i_2,i_3)$ for $1\le i_1\le i_2\le i_3\le k$,  and
are given by:
\begin{align*} [\wt A_6]_{[:,(i_1,i_2,i_3)]} = C_{i_1,i_2,i_3} \mc F_6(\mbox{vec}(\wt \Sigma^{(i_1)})\od
  \mbox{vec}(\wt \Sigma^{(i_2)})\od \mbox{vec}(\wt \Sigma^{(i_3)}) ),
\end{align*}
where $C_{i_1,i_2,i_3}$ is a constant depending only on multiplicity of the indices
$(i_1,i_2,i_3)$. The second matrix $\wt B_6\in\R^{k_3\times k_3}$ transforms a  $k\times k\times k$
symmetric tensor $Y_6$ into:
\begin{align*}
  \wt B_6(Y_6) = Y_6( (\wt \Sigma^\dag U)^\top, (\wt \Sigma^\dag U)^\top, (\wt \Sigma^\dag U)^\top).
\end{align*}

\end{lemma}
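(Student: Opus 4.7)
The plan is to mirror, step by step, the structural decomposition of $\wt H_4$ in the proof of Lemma~\ref{lem:struct4}: factor the linear map $\vc(Y_6)\mapsto \ol{\wt M}_6/\sqrt{15}$ as a change of basis from the $U$-coordinate tensor $Y_6$ to a ``diagonal'' coefficient tensor $D_6$ in the $\wt\Sigma$-basis, followed by the natural map that sends a symmetric coefficient tensor over the $\wt\Sigma^{(i)}$'s to $\ol{\wt M}_6$.

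For the inner factor $\wt B_6$, I would use the fact that $U$ is an orthonormal basis of the column span of $\wt\Sigma$, so $\wt\Sigma = U\,(U^\dag\wt\Sigma)$ and hence $\wt\Sigma^\top = (U^\dag\wt\Sigma)^\top\,U^\top$. If $D_6 \in \R^{k\times k\times k}_{sym}$ is the unique tensor with $X_6 = D_6(\wt\Sigma^\top,\wt\Sigma^\top,\wt\Sigma^\top)$ (which, for the true parameters, is $D_{\wt\omega}$), then associativity of the multilinear action on matrix products gives
\[
D_6(\wt\Sigma^\top,\wt\Sigma^\top,\wt\Sigma^\top)
= D_6\bigl((U^\dag\wt\Sigma)^\top,(U^\dag\wt\Sigma)^\top,(U^\dag\wt\Sigma)^\top\bigr)(U^\top,U^\top,U^\top).
\]
Comparing with $X_6 = Y_6(U^\top,U^\top,U^\top)$ and using that $U^\top$ has full row rank $k$, one strips $U^\top$ to obtain $Y_6 = D_6((U^\dag\wt\Sigma)^\top,\ldots)$. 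Since $\wt\Sigma^\dag U = (U^\dag\wt\Sigma)^{-1}$ (both are $k\times k$ and $\wt\Sigma\wt\Sigma^\dag U = U$ because $U$ lies in the column span of $\wt\Sigma$), inversion yields $D_6 = Y_6((\wt\Sigma^\dag U)^\top,(\wt\Sigma^\dag U)^\top,(\wt\Sigma^\dag U)^\top) = \wt B_6(Y_6)$, establishing the inner factor.

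For the outer factor $\wt A_6$, I would expand $X_6 = D_6(\wt\Sigma^\top,\wt\Sigma^\top,\wt\Sigma^\top) = \sum_{i_1,i_2,i_3\in[k]} [D_6]_{i_1,i_2,i_3}\,v_{i_1}\otimes v_{i_2}\otimes v_{i_3}$ where $v_i = \vc(\wt\Sigma^{(i)})$, apply $\mc F_6$, and group by unordered triples. Since $D_6$ is symmetric and $\mc F_6$ is invariant under permuting the three rank-one factors (the folded moments are symmetric, so by Lemma~\ref{prop:two-linear-mapping} $\mc F_6(v_a\otimes v_b\otimes v_c)$ depends only on the multiset $\{a,b,c\}$), for each $1\le i_1\le i_2\le i_3\le k$ the contribution collects to $C_{i_1,i_2,i_3}\,[D_6]_{i_1,i_2,i_3}\,\mc F_6(v_{i_1}\od v_{i_2}\od v_{i_3})$, where the combinatorial constant $C_{i_1,i_2,i_3}$ is the number of distinct permutations of the multiset $\{i_1,i_2,i_3\}$: it equals $1$, $3$, or $6$ according to whether all indices coincide, exactly two coincide, or all three are distinct. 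Writing $\vc(Y_6), \vc(D_6)\in\R^{k_3}$ as vectors indexed by such unordered triples, the two steps realize $\vc(D_6)=\wt B_6\vc(Y_6)$ and $\ol{\wt M}_6/\sqrt{15} = \wt A_6\vc(D_6)$, so $\wt H_6=\wt A_6\wt B_6$. The main obstacle is purely bookkeeping: tracking the composition rule for the multilinear notation $X(V_1,V_2,V_3)$ and verifying that the grouping into unordered triples produces exactly the claimed combinatorial weights; once this is set up, the rest is an identity.
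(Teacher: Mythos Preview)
Your proposal is correct and follows essentially the same approach as the paper, which simply states that the proof of Lemma~\ref{lem:struct6} is ``essentially the same as Lemma~\ref{lem:struct4}'' and provides no further detail. Your write-up is in fact more careful than the paper's, spelling out the inversion $\wt\Sigma^\dag U=(U^\dag\wt\Sigma)^{-1}$ and the combinatorial constants $C_{i_1,i_2,i_3}\in\{1,3,6\}$ explicitly.
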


Before stating the main lemma, we  update the definition of symmetric off-diagonal subspace.

\begin{definition}
  Let the columns of matrix $P \in \R^{n_2^3\times d_3}$ form a basis of a subspace $\mc P$. Index
  the rows of $P$ by triples $(i_1,i_2,i_3)\in [n_2]\times [n_2]\times [n_2]$. The matrix $P$ and
  the subspace $\mc P$ are called symmetric off-diagonal if: whenever $i_1,i_2,i_3$ are not distinct
  the corresponding row is $0$ (``off-diagonal''); and for any permutation $\pi$ over $\{1,2,3\}$,
  the rows corresponding to $(i_1,i_2,i_3)$ and $(i_{\pi(1)},i_{\pi(2)},i_{\pi(3)})$ are identical
  (``symmetric'').
\end{definition}

It is easy to verify that since the moments in $\ol M_6$ all have indices corresponding to distinct
variables, the projection $\mc F_6$ is indeed symmetric off-diagonal. The constraints in this
definition is closely related to the decoupling Theorem~\ref{thm:decoupling} of Gaussian chaoses.

Similarly, we define the ``unique''  columns in the 3-way Kronecker product to be the matrix $(E\ot_{kr}
E\ot_{kr} E)_{uniq} \in \R^{n_2^2\times k_3}$ whose columns are labeled by triples
$(i_1,i_2,i_3):1\le i_1\le i_2\le i_3\le k$, and $(E\ot_{kr} E\ot_{kr}
E)_{uniq})_{[:,(i_1,i_2,i_3)]} = E_{[:,i_1]}\od E_{[:,i_2]}\od E_{[:,i_3]}$.

%\begin{definition}
%  A matrix $P \in \R^{d_3\times n_2^3}$ is a symmetric off-diagonal projection matrix if it is a projection matrix, and
%  satisfies the following properties: when the columns are indexed by $i_1,i_2,i_3 (1\le i_1,i_2,i_3 \le n_2)$, for any
%  permutation $\pi$ of triples, $P_{(i_1,i_2,i_3)} = P_{\pi((i_1,i_2,i_3))}$. Also, $P_{i_1,i_2,i_3} = 0$ whenever
%  there exists $p\ne q$ such that $i_p = i_q$.
%\end{definition}

\begin{lemma}
\label{lem:projkron6}
Let $E \in \R^{n_2\times k}$ be a Gaussian random matrix. Let $P\in \R^{n_2^3\times d_3}$ be a symmetric off-diagonal
subspace of dimension $d_3 \ge \Omega(n_2^3)$. For any constant $C>0$, if $n_2 \ge k^{2+C}$, with high probability
$\sigma_{min} (P^\top (E\ot_{kr} E\ot_{kr} E)_{uniq}) \ge \Omega(n_2^{3/2})$.
\end{lemma}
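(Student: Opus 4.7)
The strategy mirrors the proof of Lemma~\ref{lem:projkron4}: we will show that the Gram matrix $G = (E\ot_{kr}E\ot_{kr}E)_{uniq}^\top PP^\top (E\ot_{kr}E\ot_{kr}E)_{uniq} \in \R^{k_3\times k_3}$ is diagonally dominant with high probability, and then invoke Gershgorin's Disk Theorem~\ref{thm:gershgorin} to conclude $\sigma_{\min}(G) \ge \Omega(n_2^3)$, which yields $\sigma_{\min}(P^\top (E\ot_{kr}E\ot_{kr}E)_{uniq}) \ge \Omega(n_2^{3/2})$ as desired. Thus we need a lower bound on each diagonal entry of $G$ and a matching upper bound on the sum of off-diagonal entries in each row.

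For the diagonal entries, fix a triple $(i_1,i_2,i_3)$ with $i_1\le i_2\le i_3$ and bound $\|P^\top (E_{[:,i_1]}\od E_{[:,i_2]}\od E_{[:,i_3]})\|^2$ from below by $\Omega(n_2^3)$. Condition on $\|E_{[:,i_t]}\|\le 2\sqrt{n_2}$ for $t=1,2,3$, which holds with high probability. By Lemma~\ref{lem:Kconcentrated} (extended to triple products), the entries of $E_{[:,i_1]}\od E_{[:,i_2]}\od E_{[:,i_3]}$ indexed by triples of distinct coordinates $(p,q,r)$ are $O(n_2)$-concentrated. Since $P$ zeros out all rows where the coordinate triple has any repeated index, we can apply Lemma~\ref{lem:vanvu} on the subvector living on distinct-index coordinates. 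The dimension bound $d_3 \ge \Omega(n_2^3)$ then gives $\|P^\top v\|^2 \ge d_3/2 \ge \Omega(n_2^3)$ with overwhelming probability.

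For the off-diagonal entries, fix $(i_1,i_2,i_3)\ne (i'_1,i'_2,i'_3)$ and bound the inner product $\langle P^\top(E_{[:,i_1]}\od E_{[:,i_2]}\od E_{[:,i_3]}),\, P^\top(E_{[:,i'_1]}\od E_{[:,i'_2]}\od E_{[:,i'_3]})\rangle$ by $o(n_2^3/k^3)$. This is a degree-6 Gaussian chaos, which we control by cases based on $r = |\{i'_1,i'_2,i'_3\}\cap \{i_1,i_2,i_3\}|$ (counted with multiplicity). In each case, condition on the variables that appear in both tuples, so that the resulting expression becomes a Gaussian chaos of degree $6-2r$ in the remaining independent columns of $E$, with a coefficient tensor whose norm is controlled by the conditioned values. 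The key structural input is that $P$ is symmetric off-diagonal, so in the reduced chaos all ``diagonal'' monomials involving repeated independent indices vanish; this lets us invoke the Decoupling Theorem~\ref{thm:decoupling} to replace the remaining chaos by a decoupled one with independent copies, and then apply the Gaussian chaos tail bound of Theorem~\ref{thm:chaos} (Lata{\l}a). Case by case this gives a bound of the form $\tilde O(n_2^{3-r/2})$ up to factors of $n_2^{\epsilon}$ for any $\epsilon>0$.

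Finally, summing the off-diagonal bounds over the at most $O(k^r)$ competing triples in case $r$ and taking a union bound, the total row sum is $\tilde O(\sum_{r} k^r n_2^{3-r/2}) = \tilde O(k^3 n_2^{3/2} + k^2 n_2^2 + k n_2^{5/2})$. Under the assumption $n_2 \ge k^{2+C}$, each summand is $o(n_2^3)$, so the off-diagonal row sums are dominated by the diagonal lower bound $\Omega(n_2^3)$, and Gershgorin's Theorem closes the argument. The main obstacle is the decoupling step: since $E$ appears three times with possibly repeated column indices, one must carefully verify that the symmetric off-diagonal structure of $P$ cancels exactly the ``bad'' terms that prevent the decoupling inequality from applying directly — this is the analogue of the $4$-th order argument but now requires checking all $r\in\{0,1,2,3\}$ overlap patterns and, within each, all partial coincidences among the non-overlapping indices.
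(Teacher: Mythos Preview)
Your overall strategy---Gershgorin via a diagonal lower bound from Lemma~\ref{lem:vanvu} (with $O(n_2)$-concentration) and off-diagonal upper bounds from Gaussian chaos estimates---is exactly the paper's approach, and your final row-sum arithmetic $\tilde O(k^3 n_2^{3/2}+k^2 n_2^2+kn_2^{5/2})=o(n_2^3)$ under $n_2\ge k^{2+C}$ is correct.

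The gap is in your claim that the symmetric off-diagonal structure of $P$ ``cancels exactly the bad terms that prevent the decoupling inequality from applying.'' This fails in one pattern that has no $4$th-order analogue: when both triples draw from the same two columns with different multiplicities, e.g.\ $(i_1,i_2,i_3)=(i,i,j)$ versus $(i_1',i_2',i_3')=(i,j,j)$. The inner product is then $T(E_{[:,i]},E_{[:,i]},E_{[:,j]},E_{[:,i]},E_{[:,j]},E_{[:,j]})$ with $T=PP^\top$ viewed as a $6$th-order tensor indexed by $(p_1,p_2,p_3,p_4,p_5,p_6)$. The off-diagonal property of $P$ forces $p_1,p_2,p_3$ distinct and $p_4,p_5,p_6$ distinct, but places \emph{no} constraint across the two halves; in particular $p_1=p_4$ is allowed. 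Since $E_{[:,i]}$ occupies positions $1,2$ on one side and position $4$ on the other, a collision $p_1=p_4$ yields a genuine $E_{p_1,i}^2$ factor, so the cubic form in $E_{[:,i]}$ has nonzero diagonal coefficients and Theorem~\ref{thm:decoupling} does not apply. (In the $4$th-order proof the only ``all-shared'' case, $(i,j)$ vs.\ $(i,i)$ or $(j,j)$, reduces by swapping to a case with one fresh column; here swapping $(i,i,j)\leftrightarrow(i,j,j)$ returns the same configuration.) Conditioning on both columns and using only $\|PP^\top\|\le 1$ gives the trivial $O(n_2^3)$, which is useless. The paper handles this case with Lemma~\ref{lem:gaussian3rd}, a direct bound on cubic Gaussian forms that tolerates diagonal terms at the price of an extra $\sqrt{n_2}$: applying it once to contract the three $E_{[:,i]}$-slots bounds the resulting $3$-tensor by $O(n_2^{2+\epsilon})$ in Frobenius norm, and a second application to the three $E_{[:,j]}$-slots gives $O(n_2^{5/2+2\epsilon})$, which is comfortably $o(n_2^3)$. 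You should add this case and this tool to your plan.
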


The proofs of Theorem~\ref{thm:unfold6} are based on the above two lemmas. The proof of
Lemma~\ref{lem:struct6} is essentially the same as Lemma~\ref{lem:struct4}.
The proof of Lemma~\ref{lem:projkron6} is very similar to that of Lemma~\ref{lem:projkron4}, and we
highlight the only different case below:

\begin{proof} (of Lemma~\ref{lem:projkron6})

As before we try to prove that the columns of $P^\top (E\ot_{kr} E\ot_{kr} E)_{uniq}$ are incoherent. Recall we needed the following two claims:

\begin{claim}
\label{clm:diag6}
For any $1\le i_1\le i_2\le i_3\le k$, with high probability $\|P^\top (E_{[:,i_1]}\od E_{[:,i_2]}\od E_{[:,i_3]})\|^2 \ge \Omega(n_2^3).$
\end{claim}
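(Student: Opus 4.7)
The plan is to mirror the argument used in the proof of Claim~\ref{clm:diag}, extending it from the two-fold Khatri-Rao product to the three-fold product. Set $v := E_{[:,i_1]} \od E_{[:,i_2]} \od E_{[:,i_3]} \in \R^{n_2^3}$, indexed by triples $(p,q,r) \in [n_2]^3$, with $v_{p,q,r} = E_{p,i_1} E_{q,i_2} E_{r,i_3}$. Since $P$ is symmetric off-diagonal, it annihilates every coordinate $(p,q,r)$ where two of the indices coincide; equivalently, $P^\top v$ depends only on the $\Theta(n_2^3)$ entries of $v$ indexed by distinct triples. On these coordinates, regardless of whether $i_1, i_2, i_3$ are distinct or not, each $v_{p,q,r}$ is a product of three independent standard Gaussians (the indices $p,q,r$ being distinct ensures independence of the three factors even when, say, $i_1 = i_2 = i_3$).

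The first step is to condition on the high-probability event $\mc E := \{\|E_{[:,i_j]}\| \le 2\sqrt{n_2} : j = 1,2,3\}$, which holds with probability at least $1 - e^{-\Omega(n_2)}$. Under this conditioning I would then invoke an analogue of Lemma~\ref{lem:Kconcentrated} to show that $v$, restricted to distinct-triple coordinates, is $K$-concentrated with $K = O(n_2)$: each coordinate is a product of three bounded Gaussians, and the total variance contribution across coordinates can be controlled by the product of column norms, which on $\mc E$ is at most $O(n_2^{3/2})$. The parameter $K = O(n_2)$ is the natural third-order analogue of the $O(\sqrt{n_2})$ bound used at fourth order.

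Next I would apply Lemma~\ref{lem:vanvu} (the Vu-Wang random projection inequality for $K$-concentrated vectors) to the fixed projection $PP^\top$ of rank $d_3 = \Omega(n_2^3)$, yielding
\[
\Pr\!\left[\,\bigl|\|P^\top v\|^2 - d_3\bigr| \ge 2t\sqrt{d_3} + t^2\,\right] \le C e^{-\Omega(t^2/K^2)} + e^{-\Omega(n_2)}.
\]
Choosing $t = \sqrt{d_3}/5 = \Omega(n_2^{3/2})$ makes the deviation at most $d_3/2$, so $\|P^\top v\|^2 \ge d_3/2 = \Omega(n_2^3)$; the failure probability is at most $Ce^{-\Omega(n_2^3/n_2^2)} + e^{-\Omega(n_2)} = e^{-\Omega(n_2)}$, as desired. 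Adding the conditioning event's failure probability only costs another $e^{-\Omega(n_2)}$.

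The main obstacle is verifying the $K$-concentration bound $K = O(n_2)$ in the coincidence cases where some of the $i_j$'s are equal, since then $v$ is not a fully ``decoupled'' tensor product of independent columns of $E$. The saving grace is that the symmetric off-diagonal projection $P$ throws away exactly the coordinates where the Gaussian factors would become dependent; on the surviving distinct-triple coordinates, the three relevant Gaussian coefficients $E_{p,i_1}, E_{q,i_2}, E_{r,i_3}$ are independent even if the column indices agree. Making this precise reduces to a moment/Orlicz-norm calculation of the same flavor as Lemma~\ref{lem:Kconcentrated}, which I expect to be routine but is the step that must be done carefully.
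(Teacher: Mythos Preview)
Your proposal is correct and follows exactly the paper's approach: the paper proves Claim~\ref{clm:diag6} by noting that the three-fold Khatri--Rao vector is $O(n_2)$-concentrated (via the obvious generalization of Lemma~\ref{lem:Kconcentrated}) and then applying Lemma~\ref{lem:vanvu} with $d_3 = \Omega(n_2^3)$ and $t = \Theta(\sqrt{d_3})$. Your treatment of the coincidence cases $i_1=i_2$ etc.\ is more explicit than the paper's, but the underlying Lipschitz/Gaussian-concentration argument is the same and goes through with $K=O(n_2)$ as you claim.
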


\begin{claim}
\label{clm:offdiag6}
For any $1\le i_1\le i_2\le i_3\le k$, with high probability $$\sum_{1\le i_1'\le i_2'\le i_3', (i_1,i_2,i_3)\ne (i_1',i_2',i_3')} \left|\left<P^\top (E_{[:,i_1]}\od E_{[:,i_2]}\od E_{[:,i_3]}), P^\top (E_{[:,i_1']}\od E_{[:,i_2']}\od E_{[:,i_3']})\right>\right| \le o(n_2^3).$$
\end{claim}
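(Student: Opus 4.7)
The plan is to prove Claim~\ref{clm:offdiag6} by a case analysis completely analogous to the one used for Claim~\ref{clm:offdiag}, now with four cases indexed by $s = |\{i_1,i_2,i_3\}\cap\{i_1',i_2',i_3'\}|\in\{0,1,2,3\}$ (intersection as sets, not multisets). For each fixed triple $(i_1',i_2',i_3')$, I will condition on the three columns $E_{[:,i_1]},E_{[:,i_2]},E_{[:,i_3]}$ together with any $E_{[:,i_j']}$ whose index happens to equal one of $i_1,i_2,i_3$; this leaves a Gaussian chaos of degree at most $3-s$ in the remaining, fully independent columns. Throughout I work on the high-probability event $\|E_{[:,i]}\|\le 2\sqrt{n_2}$ for all $i$, and I write the inner product as
\[
\bigl\langle PP^\top(E_{[:,i_1]}\odot E_{[:,i_2]}\odot E_{[:,i_3]}),\ E_{[:,i_1']}\odot E_{[:,i_2']}\odot E_{[:,i_3']}\bigr\rangle,
\]
with $\|PP^\top(E_{[:,i_1]}\odot E_{[:,i_2]}\odot E_{[:,i_3]})\|\le O(n_2^{3/2})$.

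After the conditioning, the remaining degree-$(3-s)$ chaos has coefficient tensor equal to the contraction of $PP^\top(E_{[:,i_1]}\odot E_{[:,i_2]}\odot E_{[:,i_3]})$ against the $s$ conditioned columns, each of norm $O(\sqrt{n_2})$, so its Frobenius norm is at most $O(n_2^{3/2+s/2})$. The essential structural point is that because $P$ is \emph{symmetric off-diagonal}, every row of $PP^\top$ indexed by $(j_1,j_2,j_3)$ with a repeated $j$-coordinate is zero; equivalently, the resulting coefficient tensor vanishes on every diagonal face. This is exactly the hypothesis needed to invoke the decoupling inequality (Theorem~\ref{thm:decoupling}), which lets me replace the possibly-repeated random columns $E_{[:,i_j']}$ (the primed indices not in $\{i_1,i_2,i_3\}$ could themselves coincide among each other) with independent Gaussian copies. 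Latała's concentration bound for Gaussian chaoses (Theorem~\ref{thm:chaos}) then yields, for any $\epsilon>0$, a tail bound of the form $O(n_2^{3/2+s/2+\epsilon})$ with probability $\ge 1-e^{-\Omega(n_2^{\epsilon'})}$.

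To assemble the total, I count: the number of sorted triples $(i_1',i_2',i_3')$ with exactly $s$ shared distinct indices is $O(k^{3-s})$. A union bound over all $O(k^3)$ primed triples (and the four cases) gives
\[
\sum_{s=0}^{3} O\bigl(k^{3-s}\, n_2^{3/2+s/2+\epsilon}\bigr) \;=\; O\bigl(k^{3}n_2^{3/2+\epsilon}+k^{2}n_2^{2+\epsilon}+k\,n_2^{5/2+\epsilon}+n_2^{5/2+\epsilon}\bigr).
\]
Using $n_2\ge k^{2+C}$, each of these four terms is $o(n_2^3)$ once $\epsilon$ is taken small enough relative to $C$, which is the claimed bound. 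The $s=3$ case (all primed indices lie in $\{i_1,i_2,i_3\}$ as a set but the two multisets differ) is handled by swapping the roles of primed and unprimed triples, reducing to a Case-$1$ or Case-$2$ instance from the other side.

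The main obstacle is honestly the $s\ge 1$ cases where the primed indices can repeat among themselves, so that the inner product is not obviously a chaos in independent Gaussians; the symmetric off-diagonal property of $P$ is precisely what kills the diagonal terms and enables decoupling, and one has to verify carefully that the diagonal vanishing survives all the partial contractions performed during conditioning. Everything else is bookkeeping: norm bounds on conditioned columns via $\|E_{[:,i]}\|\le 2\sqrt{n_2}$, Frobenius-norm bounds on contracted tensors via submultiplicativity, and the standard Gaussian-chaos tail inequality.
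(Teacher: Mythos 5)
Your proposal captures the paper's strategy for the ``generic'' terms, but there is a genuine gap at the case that the paper singles out as new for the sixth moment: $(i_1,i_2,i_3)=(i,i,j)$ and $(i_1',i_2',i_3')=(i,j,j)$. Under your scheme this term lands in the $s=2$ bucket (the set intersection $\{i,j\}\cap\{i,j\}$ has size $2$), but after conditioning on $E_{[:,i]}$ and $E_{[:,j]}$ \emph{nothing} is left random, so the ``remaining degree-$(3-s)$ chaos'' is a constant and the only bound available is the deterministic Cauchy--Schwarz bound $\|PP^\top(E_{[:,i]}\odot E_{[:,i]}\odot E_{[:,j]})\|\cdot\|E_{[:,i]}\odot E_{[:,j]}\odot E_{[:,j]}\|\le O(n_2^{3/2})\cdot O(n_2^{3/2})=O(n_2^3)$, not the claimed $O(n_2^{5/2+\epsilon})$. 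Your escape hatch (swap primed and unprimed roles) does not help here either, since the pair $(i,i,j),(i,j,j)$ is symmetric under the swap and remains in the same bad situation; and the $s=3$ bucket to which you attribute this swap is actually empty, since two distinct \emph{sorted} triples cannot have $|\{i_1,i_2,i_3\}\cap\{i_1',i_2',i_3'\}|=3$.

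The reason this case is stubborn is that if you condition on fewer columns (say only $E_{[:,i]}$), the remaining cubic chaos in $E_{[:,j]}$ has coefficient $c_{q,r,r'}=\sum_{p,p',q'}(PP^\top)_{(p,q,r),(p',q',r')}(E_{[:,i]})_p(E_{[:,i]})_{p'}(E_{[:,i]})_{q'}$, and the symmetric off-diagonal property of $P$ kills only the same-side repeats like $q=r$ or $q'=r'$, not the \emph{cross-side} repeats such as $r'=q$; so the resulting coupled chaos genuinely has diagonal terms and the decoupling theorem cannot be invoked directly. The paper resolves this with a different tool: it regards $T=PP^\top$ as a $6$-tensor of Frobenius norm $O(n_2^{3/2})$ and applies Lemma~\ref{lem:gaussian3rd} (which tolerates diagonal terms at the cost of an extra $n_2^{1/2+\epsilon}$) twice, first to the partial contraction $T(E_{[:,i]},E_{[:,i]},I,E_{[:,i]},I,I)$ and then to its further contraction against $E_{[:,j]}^{\otimes 3}$, obtaining $O(n_2^{2.5+2\epsilon})=o(n_2^3)$. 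Without some such argument your proof cannot close the hard case, so this step needs to be added.
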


The first claim can still be proved by the projection Lemma~\ref{lem:vanvu}, except the vector $E_{[:,i_1]}\od E_{[:,i_2]}\od E_{[:,i_3]}$ is now $O(n_2)$-concentrated (the proof is an immediate generalization of Lemma~\ref{lem:Kconcentrated}).

The second claim can be proved using similar ideas, however there is one new case. We again separate the terms according to the number of $i_1',i_2',i_3'$ that do not appear in $\{i_1,i_2,i_3\}$.

\noindent{\bf Case 1:} At least one of $i_1',i_2',i_3'$ does not appear in $\{i_1,i_2,i_3\}$. Suppose there are $t$ of $i_1',i_2',i_3'$ that do not appear in $\{i_1,i_2,i_3\}$, similar to before we first sample $E_{i_1},E_{i_2},E_{i_3}$ and condition on the event that they all have norm at most $2\sqrt{n_2}$. The inner-product then becomes an order $t$ Gaussian chaos with Frobenius norm $n_2^{6-t/2}$. By Theorem~\ref{thm:decoupling} and Theorem~\ref{thm:chaos} we know with high probability all these terms are bounded by $n_2^{6-t/2+\epsilon}$ for any constant $\epsilon>0$.

\noindent{\bf Case 2:} All of $i_1',i_2',i_3'$ appear in $\{i_1,i_2,i_3\}$. In the previous proof (of Lemma~\ref{lem:projkron4}), there was only one possibility and it reduces to Case 1. However for $6$-th moment we have a new case: $i = i_1=i_2=i_1' < i_2'=i_3'=i_3 = j$ (and the symmetric case $i_1=i_1'=i_2' < i_2=i_3=i_3'$). For this we will treat $T = PP^\top$ as a $6$-th order tensor with Frobenius norm at most $n_2^{3/2}$ (as a matrix it has spectral norm 1, and rank at most $n_2^3$). The tensor is applied to the vectors $E_{[:,i]}$ and $E_{[:,j]}$ as $T(E_{[:,i]},E_{[:,i]},E_{[:,j]}, E_{[:,i]}, E_{[:,j]},E_{[:,j]})$. First we sample $E_{[:,i]}$, by Lemma~\ref{lem:gaussian3rd} we know with high probability what remains will be a $3$-rd order tensor $T(E_{[:,i]},E_{[:,i]}, I, E_{[:,i]}, I, I)$ with Frobenius norm bounded by $O(n_2^{2+\epsilon})$. Notice that here it is important that Lemma~\ref{lem:gaussian3rd} can handle diagonal entries, because $E_{[:,i]}$ appears on the $1,2,4$-th coordinate (instead of the first three). We the apply Lemma~\ref{lem:gaussian3rd} again on $T(E_{[:,i]},E_{[:,i]}, I, E_{[:,i]}, I, I) (E_{[:,j]},E_{[:,j]},E_{[:,j]})$\footnote{The notation might be confusing here: $T(E_{[:,i]},E_{[:,i]}, I, E_{[:,i]}, I, I)$ is a $3$rd order tensor, and we are applying it to $E_{[:,j]},E_{[:,j]},E_{[:,j]}$. The whole expression is equal to $T(E_{[:,i]},E_{[:,i]},E_{[:,j]}, E_{[:,i]}, E_{[:,j]},E_{[:,j]})$.}, and conclude that with high probability the term is bounded by $O(n_2^{2.5+2\epsilon})$ which is still much smaller than $n_2^3$.

Finally we take the sum over all terms and choose $\epsilon$ to be small enough (depending on $C$), then when $k^{2+C}\le n_2$ the sum is a lower-order term.
\end{proof}

\subsection{Stability Bounds}
For the two linear equation systems in \eqref{eq:F4F6-2}, we can write them in canonical
form with coefficient matrices $\wt H_{4}, \wt H_6$ and the unknown variable $\vc(Y_4), \vc(Y_6)$, corresponding to the
$k_2, k_3$ distinct elements in symmetric $Y_4, Y_6$, namely:
\begin{align*}
  \wt H_4 \vc(Y_4) = \ol M_4/\sqrt{3}, \quad \wt H_6 \vc(Y_6)= \ol M_6/\sqrt{15}.
\end{align*}
% Since we have shown that both systems are overdetermined, when the exact moments $\wt M_4, \wt M_6$
% are used, the solutions are given by:
% \begin{align}
%   \label{eq:F4F6-sol}
%   \wt \vc(Y_4) = (\wt H_4 )^{\dag} \ol{\wt{M}}_4, \quad \wt y_6 = (\wt H_6 )^{\dag} \ol{\wt{M}}_6.
% \end{align}
When $\wh M_4, \wh M_6$, the empirical moment estimations for $\wt M_4, \wt M_6$, are used
throughout the algorithm, both the coefficient matrices $\wt H_4, \wt H_6$ and the constant terms $\ol M_4,
\ol M_6$ are affected by the noise from empirical estimation. In practice, instead of solving
systems of linear equations, we solve the least square problem:
\begin{align}
  \label{eq:LS-y4y6}
  \min_{Y_4\in\R^{k\times k}_{sym}} \|\sqrt{3}\mc F_4(UY_4U^\top) - \ol{\wh M}_4\|^2, \quad
  \min_{Y_6\in\R^{k\times k \times k}_{sym}} \|\sqrt{15}\mc F_6Y_6(U^\top, U^\top,U^\top) - \ol{\wh
  M}_6\|^2.
\end{align}
and the solution to the least square problems are given by: $\vc(\wh Y_4) = \wh H_4^\dag \ol{\wh M}_4$
and $\vc(\wh Y_6) = \wh H_6^\dag \ol{\wh M}_6$.

\begin{lemma}
  \label{prop:f4f6perturbate}
  Given the empirical 4-th and 6-th order moments $\wh M_4 = \wt M_4 + E_4$, $\wh M_6 =
  \wt M_6 + E_6$, and suppose that the absolute value of entries in $E_4$ and $E_6$ are at
  most $\delta_1$.  Let $\wh U$ be the output of Step 1 for the span of the covariance
  matrices, and suppose that $\|\wh U - \wt U\|\le \delta_2$.
  Suppose that $\delta_1\le \min\{ \|\wt M_4\|_F /\sqrt{n_4}, \|\wt M_6\|_F
  /\sqrt{n_6}\}$, and $\delta_2\le \min\{1, \sigma_{k_{2}}(\wt H_4)/2, \sigma_{k_{3}}(
  \wt H_6)/2 \}$.  Then, conditioned on the high probability event that both
  $\sigma_{k_{2}}(\wt H_4),\sigma_{k_{3}}(\wt H_6)$ are bounded below, we have:
  \begin{align*}
    \|\wh Y_4-\wt Y_4\|_F \le
    O\lt(\lt(\delta_1 + {\delta_2\over \sigma_{k_{2}}(\wt H_4)^2}\rt)\sqrt{n_4}\rt).
% \sqrt{n_i} \lt(\delta_1 + {{28\sqrt{2} \over \sigma_{k_{i/2}}(\wt H_i)^2}} \delta_2\rt) & \|\wh y_4-\wt
% y_4\|\le \poly\lt( {1\over\sigma_{k_2}(\wt H_{4})}, \delta_1, \delta_2, n, k \rt),
% \quad \|\wh y_6-\wt y_6\|\le \poly\lt( {1\over\sigma_{k_3}(\wt H_{6})}, \delta_1, \delta_2, n, k \rt).
\\
    \|\wh Y_6-\wt Y_6\|_F \le
    O\lt(\lt(\delta_1 + {\delta_2\over \sigma_{k_{3}}(\wt H_6)^2}\rt)\sqrt{n_6}\rt).
% \sqrt{n_i} \lt(\delta_1 + {{28\sqrt{2} \over \sigma_{k_{i/2}}(\wt H_i)^2}} \delta_2\rt) & \|\wh y_4-\wt
  \end{align*}
\end{lemma}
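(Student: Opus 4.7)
The plan is to convert each problem in~\eqref{eq:LS-y4y6} to a canonical least-squares form $\wh H_m \vc(\wh Y_m) \approx \ol{\wh M}_m/c_m$ (with $c_4=\sqrt{3}$, $c_6=\sqrt{15}$), bound separately the perturbation of the coefficient matrix $\wh H_m - \wt H_m$ (induced by $\wh U \neq \wt U$) and of the data $\ol{\wh M}_m - \ol{\wt M}_m$ (induced by empirical error), and then invoke Stewart's classical least-squares perturbation bound~\cite{stewart1977perturbation}. In the canonical form I parametrize $Y_4$ by its $k_2$ independent entries and $Y_6$ by its $k_3$ independent entries, so that $\wh H_4 = c_4\, \mc F_4 \circ \mc X_4^{\wh U}$ and $\wh H_6 = c_6\, \mc F_6 \circ \mc X_6^{\wh U}$.

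The data-side bound is immediate: since each entry of $\ol{\wh M}_m - \ol{\wt M}_m$ has magnitude at most $\delta_1$, we have $\|\ol{\wh M}_m - \ol{\wt M}_m\|_2 \le \sqrt{n_m}\,\delta_1$. For the coefficient-side bound I exploit that $\mc X_4^U(Y_4) = U Y_4 U^\top$ vectorizes, on symmetric inputs, as $(U\ot_{kr}U)\vc(Y_4)$. The telescoping identity $\wh U\ot_{kr}\wh U - \wt U\ot_{kr}\wt U = (\wh U - \wt U)\ot_{kr}\wh U + \wt U\ot_{kr}(\wh U - \wt U)$, together with $\|\wh U\|=\|\wt U\|=1$ and the fact that $\mc F_4$ is a norm-bounded projection, gives $\|\wh H_4 - \wt H_4\| \le O(\delta_2)$; the analogous three-term telescoping for $\mc X_6^U$ yields $\|\wh H_6 - \wt H_6\| \le O(\delta_2)$. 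Combined with the hypothesis $\delta_2 \le \sigma_{k_m}(\wt H_m)/2$, Weyl's inequality then gives $\sigma_{k_m}(\wh H_m) \ge \sigma_{k_m}(\wt H_m)/2 > 0$, so both $\wh H_m$ and $\wt H_m$ have full column rank and the least-squares problems have unique solutions.

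Finally, Stewart's theorem for the consistent full-column-rank system $\wt H_m \vc(\wt Y_m) = \ol{\wt M}_m/c_m$ yields
\begin{equation*}
\|\vc(\wh Y_m) - \vc(\wt Y_m)\|_2 \;\le\; \frac{\|\ol{\wh M}_m - \ol{\wt M}_m\|_2}{\sigma_{k_m}(\wt H_m)} \;+\; O\!\left(\frac{\|\wh H_m - \wt H_m\|\cdot\|\ol{\wt M}_m\|_2}{\sigma_{k_m}(\wt H_m)^2}\right),
\end{equation*}
so substituting the two bounds above and using $\|\ol{\wt M}_m\|_2 \le O(\sqrt{n_m})$ (each entry of the true folded moment is $O(1)$ under the normalization in Definition~\ref{def:smoothgaussian}) produces the stated estimate, after absorbing the $\delta_1/\sigma_{k_m}(\wt H_m)$ contribution into the constant (it is dominated by the $\delta_2\sqrt{n_m}/\sigma_{k_m}(\wt H_m)^2$ term in the regime of the lemma). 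The only nontrivial technical point is the pseudoinverse perturbation bound $\|\wh H_m^\dag - \wt H_m^\dag\| = O(\|\wh H_m - \wt H_m\|/\sigma_{k_m}(\wt H_m)^2)$, which is what forces the quadratic dependence on $\sigma_{k_m}(\wt H_m)^{-1}$ in the final bound; every other step is routine Kronecker-product bookkeeping already essentially laid out in the structural discussion of Section~\ref{sec:step-2}.
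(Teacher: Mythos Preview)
Your proposal is correct and matches the paper's proof essentially step for step: both bound $\|\wh H_m-\wt H_m\|$ by telescoping the Kronecker/Khatri--Rao structure of $\mc X_m^U$ (the paper writes this as $\|\wh U\od^m-\wt U\od^m\|\le O(\delta_2)$), both bound $\|\ol{\wh M}_m-\ol{\wt M}_m\|\le\sqrt{n_m}\,\delta_1$ entrywise, and both finish via the pseudoinverse perturbation bound $\|\wh H_m^\dag-\wt H_m^\dag\|=O(\delta_2/\sigma_{k_m}(\wt H_m)^2)$ from Theorem~\ref{sec:pert-bound-pseudo}. Your invocation of Stewart's least-squares theorem is just a packaged form of the paper's direct manipulation of $\wh H_m^\dag\ol{\wh M}_m-\wt H_m^\dag\ol{\wt M}_m$ via Lemma~\ref{lem:prod-perturb}. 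One small caveat: your closing remark that the $\sqrt{n_m}\,\delta_1/\sigma_{k_m}(\wt H_m)$ contribution is ``dominated'' by the $\delta_2$ term is not justified by the lemma's hypotheses (there is no assumed relation between $\delta_1$ and $\delta_2$); the paper's proof has the same slip, silently dropping a $\|\wt H_4^\dag\|$ factor in that term, so the stated bound is slightly loose in both treatments---but this does not affect the polynomial dependence needed for the main theorem.
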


\begin{proof}
We write the proof for $\wh Y_4$, the proof for $\wh Y_6$ is exactly the same except
changing the subscripts.
%to 6 (and $k_2$ to $k_3$).

Recall that the coefficient matrix $\wt H_4$ corresponds to the composition of two linear
mappings $\mc F_4(UY_4U^\top)$ on the variable $Y_4$. Since we have showed that $\mc F_4$
is a projection determined by the Isserlis' Theorem and independent of the empirical
estimation of the moments, we can bound the perturbation on the coefficient matrices by:
% Note that in Lemma~\ref{lem:struct4} we showed that the coefficient matrix $\wt H_4$ is a
% product of two matrices, and the first matrix $\mc F_4$ is a projection that is
% independent of $U$. We can bound the perturbation on the coefficient matrices by:
  \begin{align*}
    &\|\wh H_4 - \wt H_4\| \le  \|\wh U\od ^{2} - \wt U\od ^{2}\|
    \le 2 \|\wh U - \wt U\|\|\wt U\| +   \|\wh U - \wt U\|_2^2
    \le 3 \delta_2 \le \|\wt H_4\|.
  \end{align*}
Similarly, we have $\|\wh H_6 - \wt H_6\|\le \|\wh U\od ^{3} - \wt U\od ^{3}\| \le 7
\delta_2\le \|\wt H_6\|$.

Therefore we can analyze the stability of the solution to the least square problems in
\eqref{eq:LS-y4y6} as follows:
  \begin{align*}
    \| \vc(\wh Y_4)- \vc(\wt Y_4)\| &= \lt\|\wh H_4^\dag \ol{\wh{M}}_4 -  \wt H_4^\dag \ol{\wt{M}}_4
    \rt\|
    \\ &\le O( \|\wt H_4^\dag\|\|\ol{\wh{M}}_4 - \ol{\wt{M}}_4\| + \|\wh H_4^\dag - \wt H_4^\dag\| \|\ol{\wt{M}}_4\|)
    \\ &\le O(\|\ol{\wh{M}}_4 - \ol{\wt{M}}_4\| + \|\wh H_4^\dag - \wt H_4^\dag\| \sqrt{n_4})
    \\ &\le O\lt( \sqrt{n_4}(\delta_1  +\|\wh H_4^\dag\|\|\wt H_4^\dag\|\delta_2)\rt)
    \\ & \le O\lt(\sqrt{n_4} (\delta_1 + {1\over \sigma_{k_{2}}(\wt H_4)^2 } \delta_2)\rt),
  \end{align*}
  where the first inequality is by applying Lemma~\ref{lem:prod-perturb} and note that
  $\|( \ol{\wh{M}}_4 - \ol{\wt{M}}_4)\|_F \le \delta_1 \sqrt{n_4} \le
  \|\ol{\wt{M}}_4\|_F$, the second inequality is because $ \|\ol{\wt{M}}_4\|_F \le
  O(\sqrt{n_4})$, the third inequality is by applying the perturbation bound of
  pseudo-inverse in Theorem~\ref{sec:pert-bound-pseudo}, the fourth inequality is by the
  assumption that $\delta_2$ is sufficiently small compared to the smallest singular value
  of $\wt H_4$ thus $\sigma_{k_{2}}(\wh H_4) = O(\sigma_{k_{2}}(\wt H_4))$.

  % \begin{align*}
  %   \| \vc(\wh Y_4)- \vc(\wt Y_4)\| &= \lt\|\wt H_4^\dag  ( \ol{\wh{M}}_4 - \ol{\wt{M}}_4)
  %   + (\wh H_4^\dag - \wt H_4^\dag )\ol{\wt{M}}_4
  %   +  (\wh H_4^\dag - \wt H_4^\dag ) ( \ol{\wh{M}}_4 - \ol{\wt{M}}_4)
  %   \rt\|
  %   \\ &\le \|\wt H_4^\dag\|\|\ol{\wh{M}}_4 - \ol{\wt{M}}_4\| + 2\|\wh H_4^\dag - \wt H_4^\dag\| \|\ol{\wt{M}}_4\|
  %   \\ &\le \|\ol{\wh{M}}_4 - \ol{\wt{M}}_4\| + 2\|\wh H_4^\dag - \wt H_4^\dag\| 15\sqrt{n_4}
  %   \\ &\le  \sqrt{n_4}\lt(\delta_1  + 15\sqrt{2}\|\wh H_4^\dag\|\|\wt H_4^\dag\|7\delta_2\rt)
  %   \\ & \le  \sqrt{n_4} \lt(\delta_1 + {{105\sqrt{2} \over \sigma_{k_{2}}(\wt H_4) ( \sigma_{k_{2}}(
  %       \wt H_4) - \delta_1)}} \delta_2\rt)
  %   \\ & \le  \sqrt{n_4} \lt(\delta_1 + {{210\sqrt{2} \over \sigma_{k_{2}}(\wt H_4)^2}} \delta_2\rt).
  % \end{align*}

\end{proof}

\section{Step 3: Tensor Decomposition}
\label{sec:step-3-zero}

\begin{algorithm}[h!]
  \caption{TensorDecomp}
  \label{alg:TensorDecomp}
  \DontPrintSemicolon

  \tb{Input:} the span of covariance matrices $U\in\R^{n_2\times k}$ (vectorized with
  distinct entries), the unfolded 4-th and 6-th moments $Y_4\in\R^{k\times k}$ and
  $Y_6\in\R^{k\times k\times k}$ in the coordinate system of $U$.

  \tb{Output:} Parameters $\mc G = \{(\omega_i, \Sigma^{(i)}):i\in [k]\}$.

  \BlankLine

  \STATE Compute the SVD of $Y_4$:  $Y_4 = V_2\Lambda_2 V_2^\top$.

  \STATE Let $G= Y_6( V_2\Lambda_2^{-1/2}, V_2 \Lambda_2^{-1/2}, V_2 \Lambda_2^{-1/2})$

  \STATE Find the (unique) first $k$ orthogonal eigenvectors $v_i$ and the corresponding
  eigenvalues $\lambda_i$ of $G$, denoted by $\{(v_i,\lambda_i):i\in[k]\}$

  \STATE For all $i\in[k]$, let $\vc(\Sigma^{(i)}) = \lambda_iU V_2\Lambda_2^{1/2}v_i$, let $\omega_i=
  (\lambda_i)^{-2}$.

  \BlankLine

  \tb{Return:}  $\mc G = \{(\omega_i, \Sigma^{(i)}):i\in [k]\}$.

\end{algorithm}

Given the estimations of the unfolded moments $Y_4$ and $Y_6$ from Step 2, and given the
span of covariance matrices $U$ from Step 1, Step 3 use tensor decomposition to robustly
find the parameters of the mixture of zero-mean Gaussians.

Recall that in the coordinate system with basis $U$, the covariance matrices (vectorized
with distinct entries) are given by $ \wt\Sigma^{(i)} = \wt U\wt\sigma^{(i)}$ for all
$i$. The unfolded moments in the same coordinate system are:
\begin{align*}
  \wt Y_4 = \sum_{i=1}^{k}\wt\omega_i \wt\sigma^{(i)}\ot^2, \quad \wt Y_6 =
  \sum_{i=1}^{k}\wt\omega_i \wt\sigma^{(i)}\ot^3.
\end{align*}
We will apply tensor decomposition algorithm to find the  $\wt \sigma^{(i)}$'s.
% show that we can implement tensor decomposition for $Y_4, Y_6$ to
%obtain estimations of $\wh\omega_i,\wh\sigma^{i}, \fa i\in[k]$ up to $\epsilon$
%accuracy. In particular, the input accuracy of $\wh Y_4, \wh Y_6$ and $\wh U$ and the
%output accuracy in terms of the mixing weights and the covariance matrices are
%polynomially related.
We restate the theorem for orthogonal symmetric tensor decomposition in Anandkumar
et~al. \cite{anandkumar2012tensor} below:
\begin{theorem}[Theorem 5.1 in \cite{anandkumar2012tensor}]
  \label{thm:thm51-2012}
  Consider $k$ orthonormal
  vector $v_1,\dots v_k\in\R^n$'s and $k$ positive weights $\lambda_1,\dots
  \lambda_k$. Define the tensor $T = \sum_{i=1}^k \lambda_i v_i\ot^3$.
  Given $\wh T = T+E$ and assume that $\|E\| \le C_1 \min\{\lambda_i\}/k$, then there is
  an algorithm that finds $\lambda_i$'s and $v_i$'s in polynomial running time with the
  following guarantee: with probability at least $1-e^{-n}$, for some permutation $\pi$
  over $[k]$ and for all $i\in[k]$, we have:
\begin{align*}
  \|v_i - \wh v_i\| \le O(\|E\|/\lambda_i), \quad |\lambda_i - \wh \lambda_i| \le
  O(\|E\|).
\end{align*}
\label{thm:tensordecomp}
\end{theorem}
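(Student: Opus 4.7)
The plan is to invoke the tensor power method with deflation, which is the standard algorithmic primitive for orthogonally decomposable symmetric tensors. Concretely, given $\hat T = T + E$, I would run many random initializations $u^{(0)}$ drawn uniformly from the unit sphere, and iterate the update
\begin{equation*}
u^{(t+1)} \;=\; \frac{\hat T(I, u^{(t)}, u^{(t)})}{\|\hat T(I, u^{(t)}, u^{(t)})\|}.
\end{equation*}
In the noiseless case, writing $u^{(t)} = \sum_i c_i^{(t)} v_i$ using orthonormality of the $v_i$'s, the update becomes $c_i^{(t+1)} \propto \lambda_i (c_i^{(t)})^2$, which is the standard quadratic contraction: the coordinate $i^\star$ maximizing $\lambda_i (c_i^{(0)})^2$ takes over exponentially fast. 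With sufficiently many random restarts, with high probability at least one initialization lies in the robust basin of attraction of each $v_i$, and the largest $\lambda_i$ component is identified by picking, among all restarts, the one achieving the largest value of $\hat T(u, u, u)$ after iteration.

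Once an approximate eigenpair $(\hat v_1, \hat\lambda_1)$ is extracted, I would set $\hat\lambda_1 \approx \hat T(\hat v_1, \hat v_1, \hat v_1)$, deflate by forming $\hat T \leftarrow \hat T - \hat\lambda_1 \hat v_1^{\otimes 3}$, and repeat on the deflated tensor to recover the next component. After $k$ rounds of extraction/deflation, all $k$ components are obtained.

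For the perturbation analysis, the key estimates are (i) a per-iteration robustness bound showing that while $|c_{i^\star}^{(t)}|$ is bounded away from $1$, one step of the noisy update still contracts the error up to an additive $O(\|E\|/\lambda_{i^\star})$ term; (ii) after $O(\log\log(1/\|E\|))$ iterations the iterate is $O(\|E\|/\lambda_{i^\star})$-close to $v_{i^\star}$, at which point $\hat\lambda_{i^\star} = \hat T(\hat v_{i^\star}, \hat v_{i^\star}, \hat v_{i^\star}) = \lambda_{i^\star} + O(\|E\|)$; and (iii) deflation error is controlled, since replacing $\lambda_i v_i^{\otimes 3}$ by $\hat\lambda_i \hat v_i^{\otimes 3}$ introduces spectral noise of order $\|E\|$, so after $k$ deflations the cumulative noise is $O(k\|E\|)$, which remains below $C_1 \min\{\lambda_j\}$ under the assumption $\|E\|\le C_1\min\{\lambda_i\}/k$.

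The main obstacle is the initialization/basin-of-attraction argument: one must verify that uniformly random unit vectors produce, with probability at least $1-e^{-n}$, a polynomial-sized pool of restarts containing at least one point with $|\langle u^{(0)}, v_{i^\star}\rangle|$ noticeably larger than $|\langle u^{(0)}, v_j\rangle|$ for $j\ne i^\star$, and that this gap survives the noisy iteration long enough for the quadratic contraction to kick in. Since all these ingredients are carried out in detail in Theorem~5.1 of \cite{anandkumar2012tensor} for exactly the same $T + E$ model we face here, I would cite that analysis essentially verbatim, only checking that the noise budget $\|E\|\le C_1\min\{\lambda_i\}/k$ we have available matches the one required there. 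The appendix then just needs to translate the output $(\hat\lambda_i, \hat v_i)$ into the estimates $\hat\omega_i = \hat\lambda_i^{-2}$ and $\hat\Sigma^{(i)} = \hat\lambda_i\, U V_2 \Lambda_2^{1/2}\hat v_i$ and propagate the error bounds via Lemma~\ref{lem:prod-perturb}.
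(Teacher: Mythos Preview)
Your approach is correct and matches the paper's: this theorem is a cited external result (Theorem~5.1 of \cite{anandkumar2012tensor}), and the paper simply restates it without proof, exactly as you propose to do. Your sketch of the robust tensor power method with deflation accurately summarizes the cited argument, but note that the final paragraph about translating $(\hat\lambda_i,\hat v_i)$ into $\hat\omega_i$ and $\hat\Sigma^{(i)}$ belongs to the downstream Theorem~\ref{thm:tensor-decomp-23} and Lemma~\ref{prop:perturb-step3}, not to the statement being proved here.
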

In order to reduce our problem to the orthogonal tensor decomposition so that the tensor
power method (Algorithm 1, page 21 in \cite{anandkumar2012tensor}) can be applied, we use
the same ``whitening'' technique as in \cite{anandkumar2012tensor}.
We first compute the SVD of the unfolded 4-th moments $\wt Y_4 = \wt V_2\wt \Lambda_2\wt
V_2^\top$, then use the singular vectors to transform the unfolded 6-th moments $Y_6$ into
an orthogonal symmetric tensor $\wt Y_6(\wt V_2\wt \Lambda_2^{-1/2},\wt V_2\wt
\Lambda_2^{-1/2}, \wt V_2\wt \Lambda_2^{-1/2})$.

Next we complete the stability analysis for the two-step procedure, i.e. whitening and
orthogonal tensor decomposition, which was not analyzed in \cite{anandkumar2012tensor}.

\begin{theorem}
  \label{thm:tensor-decomp-23}
  Consider $k$ linearly independent vectors $a_1,\dots, a_k \in\R^{n}$, and $k$ positive weights
  $\omega_1,\dots, \omega_k$.
  %and $\sum_{i=1}^k\omega_i=1$,
  %
  Define $G_2 = \sum_{i=1}^{k} \omega_i a_i\ot a_i \in\R^{n\times n}_{sym}$ and $G_3 =
  \sum_{i=1}^{k} \omega_i a_i\ot a_i\ot a_i \in\R^{n\times n\times n}_{sym}$. Let
  $\gamma_{min} = \min\{\sigma_{min}(G_2),1\}$, $\gamma_{\max} = \sigma_{max}( G_2)$, and
  let $\omega_o = \min\{\omega_i\}$.
  Given $\wh G_2, \wh G_3$ and assume that:
  \begin{align*}
    \|\wh G_2 - G_2\|_F\le \delta_2 \le o\lt ({\gamma_{min}^{2.5}\over k\|G_3\|}\rt),
    \quad
    \|\wh G_3 - G_3\|_F\le \delta_3 \le o\lt ({\gamma_{min}^{1.5}\over k}\rt).
  \end{align*}
  % suppose that \qq{$\delta_2$ and $\delta_3$ are bounded by some polynomial in $( 1/\|G_3\|,
  % 1/\sigma_{min}(G_2), 1/ \omega_o, 1/k)$.}
  There exists an algorithm that finds $\wh a_i$ and $\wh\omega_i$ in polynomial (in
  variables $(n,k, 1/\sigma_{min}(G_2))$) running time with the following guarantee:
  with probability at least $1-e^{-n}$, for some permutation $\pi$ over $[k]$ and for all
  $i\in[k]$ we have:
  \begin{align*}
    \|\wh a_{\pi(i)} - a_{\pi(i)} \|
    &\le \poly(\|G_3\|, 1/\sigma_{min}(G_2), 1/ \omega_{o})\delta_2 + \poly(\|G_3\|, 1/\sigma_{min}(G_2), 1/
    \omega_{o})\delta_3,
    \\
    \|\wh \omega_i - \omega_i\|&\le \poly(\|G_3\|, 1/\sigma_{min}(G_2)) \delta_2 + \poly(\|G_3\|,
    1/\sigma_{min}(G_2)) \delta_3 .
  \end{align*}

\end{theorem}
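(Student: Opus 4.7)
The plan is to follow the standard whitening reduction used in Anandkumar et al.\ to convert the asymmetric problem $(G_2, G_3)$ into an orthogonal tensor decomposition problem, apply Theorem~\ref{thm:thm51-2012}, and then carefully propagate the error back. Concretely: compute the (truncated) SVD $G_2 = V_2 \Lambda_2 V_2^\top$ and let $W = V_2 \Lambda_2^{-1/2}$. Then $W^\top G_2 W = I_k$, so the vectors $b_i := \sqrt{\omega_i}\, W^\top a_i$ are orthonormal, and substituting into $G_3$ gives the orthogonally decomposable tensor
\[
T \;:=\; G_3(W, W, W) \;=\; \sum_{i=1}^k \omega_i (W^\top a_i)^{\otimes 3} \;=\; \sum_{i=1}^k \omega_i^{-1/2}\, b_i^{\otimes 3}.
\]
Thus running the orthogonal tensor power method on $\hat T := \hat G_3(\hat W, \hat W, \hat W)$ yields estimates $\hat b_i, \hat\lambda_i$ of $b_i$ and $\lambda_i = \omega_i^{-1/2}$; we then un-whiten via $\hat a_i = \hat\lambda_i\, \hat V_2 \hat\Lambda_2^{1/2} \hat b_i$ (since $(W^\top)^\dagger = V_2 \Lambda_2^{1/2}$) and $\hat\omega_i = \hat\lambda_i^{-2}$.

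The stability analysis proceeds through three perturbation stages. First, I would control the whitening: from $\|\hat G_2 - G_2\|_F \le \delta_2$, Wedin's theorem gives $\|\hat V_2 - V_2\| = O(\delta_2/\gamma_{min})$, and a standard matrix inverse-square-root perturbation bound (whose derivative scales like $\Lambda_2^{-3/2}$) combined with Lemma~\ref{lem:perturb-pseudo-inverse} yields
\[
\|\hat W - W\| = O\!\left(\delta_2/\gamma_{min}^{3/2}\right), \qquad \|\hat V_2 \hat\Lambda_2^{1/2} - V_2 \Lambda_2^{1/2}\| = O\!\left(\delta_2/\gamma_{min}^{1/2}\right).
\]
Second, I would propagate this to the whitened tensor using multilinearity and $\|W\| \le \gamma_{min}^{-1/2}$:
\[
\|\hat T - T\| \;\le\; O\!\left( \frac{\delta_3}{\gamma_{min}^{3/2}} \right) + O\!\left( \frac{\|G_3\|\, \delta_2}{\gamma_{min}^{5/2}} \right).
\]
The hypothesized upper bounds on $\delta_2, \delta_3$ are calibrated precisely so that this quantity is $o(\lambda_{min}/k)$ (using $\lambda_{min} = \omega_{max}^{-1/2} \ge 1$ since the weights are probabilities), which is exactly the noise tolerance required by Theorem~\ref{thm:thm51-2012}.

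Third, applying Theorem~\ref{thm:thm51-2012} to $\hat T$ produces (up to a permutation) $\hat b_i$ and $\hat \lambda_i$ with $\|\hat b_i - b_i\| \le O(\|\hat T - T\|/\lambda_i)$ and $|\hat \lambda_i - \lambda_i| \le O(\|\hat T - T\|)$. The final un-whitening step composes three perturbed factors ($\hat \lambda_i$, $\hat V_2 \hat \Lambda_2^{1/2}$, $\hat b_i$); invoking Lemma~\ref{lem:prod-perturb} bounds $\|\hat a_i - a_i\|$ by a polynomial in $\|G_3\|, 1/\gamma_{min}, 1/\omega_o$ times $\delta_2 + \delta_3$. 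The weight estimate follows from $\hat\omega_i - \omega_i = \hat\lambda_i^{-2} - \lambda_i^{-2}$; since $\lambda_i \ge 1$ and $|\hat \lambda_i - \lambda_i|$ is small, a first-order Taylor bound gives $|\hat \omega_i - \omega_i| = O(\omega_i^{3/2}\, |\hat\lambda_i - \lambda_i|)$, matching the claimed rate.

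The main obstacle is the whitening step: both $\hat V_2$ and the noise-amplifying factor $\hat\Lambda_2^{-1/2}$ depend on $\hat G_2$, and the inverse-square-root blows up as $\gamma_{min} \to 0$. A naive argument that treats these as independent pieces loses extra factors of $1/\gamma_{min}$; instead one must combine the Wedin bound on the range of $V_2$ with the operator-monotone perturbation of $\Lambda_2^{-1/2}$ so that only a fixed polynomial power of $1/\gamma_{min}$ appears. Once this is handled carefully, the remaining pieces are standard product-rule perturbation bounds together with the black-box guarantee of Theorem~\ref{thm:thm51-2012}, and the advertised polynomial dependence on $(\|G_3\|, 1/\gamma_{min}, 1/\omega_o)$ follows.
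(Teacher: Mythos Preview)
Your proposal is correct and follows essentially the same approach as the paper: whiten via the SVD of $\wh G_2$, apply Theorem~\ref{thm:thm51-2012} to the whitened third-order tensor, and un-whiten; the paper obtains the same key intermediate bounds $\|\Delta_X\| = O(\delta_2/\gamma_{min}^{1.5})$ and $\|\wh G - G\| = O(\|G_3\|\delta_2/\gamma_{min}^{2.5} + \delta_3/\gamma_{min}^{1.5})$ and then invokes Lemma~\ref{lem:prod-perturb} for the un-whitening step exactly as you outline. One small caveat: your justification $\lambda_{min}\ge 1$ ``since the weights are probabilities'' is not part of the theorem hypotheses as stated (it only says positive weights), though the paper silently uses the same fact from the ambient context.
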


\begin{proof}
  (to Theorem~\ref{thm:tensor-decomp-23})

  {\em 1. Algorithm}

  We first apply the whitening technique in \cite{anandkumar2012tensor}: Let $\wh G_2 =
  \wh V_2\wh \Lambda_2\wh V_2^\top$ be the singular value decomposition of $\wh G_2$, and
  note that the matrix $\wh V_2\wh \Lambda_2^{-1/2}$ whitens $G_2$ in the sense that $\wh
  G_2(\wh V_2 \wh \Lambda_2^{-1/2},\wh V_2 \wh\Lambda_2^{-1/2}) = I_{n}$.
  Similarly we can whiten $\wh G_3$ with the matrix $\wh V_2\wh \Lambda_2^{-1/2}$ and
  obtain the following symmetric 3-rd order tensor $\wh G\in\R^{k\times k\times k}_{sym}$:
  \begin{align*}
    \wh G &=\wh G_3( \wh V_2\wh \Lambda_2^{-1/2},\wh V_2 \wh \Lambda_2^{-1/2},\wh V_2
    \wh\Lambda_2^{-1/2}).
  \end{align*}
  Note th at in the exact case with $G_2$ and $G_3$, we have that:
  \begin{align*}
    G = \sum_{i=1}^k \lambda_i v_i\ot^3,
  \end{align*}
  where $ \lambda_i = \omega_i^{-1/2}$, and the vectors $ v_i = \lambda_i^{-1} V_2^\top
  \Lambda_2^{-1/2} a_i$ and they are orthonormal.  Also note that \qq{ $ \lambda_{min}\ge
    1$ and $ \lambda_{max}\le \omega_o^{-1/2}$}.
  % We followed the definition of tensor eigenvectors
  % in \cite{anandkumar2012tensor}, namely a vector $v$ is an eigenvector of $G$ if
  % $G(I,v,v) = v$.
  We can then apply {\em orthogonal tensor decomposition} (Algorithm 1 in
  \cite{anandkumar2012tensor}) to $\wh G$ to robustly obtain estimations of $ v_i$'s and $
  \lambda_i$'s.
  After obtaining the estimation $\wh v_i$ and $\wh \lambda_i$'s, we can further obtain the
  estimation of $a_i$'s and $\omega_i$'s as:
  \begin{align}
    \label{eq:est-a-omega}
    \wh a_i = \wh V_2\wh \Lambda_2^{1/2} \wh v_i \wh \lambda_i, \quad \wh\omega_i = (\wh \lambda_i)^{-2}
  \end{align}

  {\em 2. Stability analysis}

  The estimation of the vectors and weights are given in \eqref{eq:est-a-omega}.  In order
  to bound the distance $\|\wh a_i - a_i\|$ and $\|\wh \omega_i - \omega_i\|$, we show the
  stability of the estimation $\wh V_2$, $\wh \Lambda_2$, and $\wh v_i$, $\wh \lambda_i$
  separately.

  First, note that by assumption $\|\wh G_2 - G_2\|_F\le \delta_2$, we can apply
  Lemma~\ref{lem:perturb-singv} and Lemma~\ref{lem:perturb-subspace} to bound the singular
  values and the singular vectors of $\wh G_2$ by:
  \begin{align*}
    &\|\wh V_2 - V_2\| \le \sqrt{2} \delta_2 /\gamma_{min},\quad \|\wh \Lambda_2 -
    \Lambda_2\| \le \delta_2.
  \end{align*}
  Define $X= V_2\Lambda_2^{-1/2}$ and define $\Delta_X = \wh X - X$.  By the assumption
  that \qq{ $ \delta_2\le o(\gamma_{min})$}, we have $\|\wh V_2 - V_2\| \le 1$ and $\|\wh
  \Lambda_2^{-1/2} - \Lambda_2^{-1/2}\|\le \|\Lambda_2^{-1/2}\| \le
  \gamma_{min}^{-1/2}$. Therefore we can apply Lemma~\ref{lem:prod-perturb} to bound
  $\|\Delta_X\|$:
  \begin{align*}
    \|\Delta_X\|& \le O (\|\wh V_2 - V_2\|\| \Lambda_2^{-1/2}\| + \| V_2\|\|\wh
    \Lambda_2^{-1/2} - \Lambda_2^{-1/2}\|)
    \\
    & \le O\lt( {\delta_2\over \gamma_{min}^{1}}{ \gamma_{min}^{-1/2}} +
    (\gamma_{min}^{-1/2})^2 \delta_2\rt)
    \\
    &\le O(\delta_2/\gamma_{min}^{1.5}.)
  \end{align*}
  Moreover, since \qq{ $\delta_2\le o( \gamma_{min})$}, we also have $\|\Delta_X\|\le \| X\|
  = \gamma_{min}^{-0.5}$.

%  Then, we apply Theorem~\ref{thm:thm51-2012} to bound the distance $ \|\wh v_i- v_i\|$.

  Next, we bound the distance $\|\wh G- G\|$.  Recall that $\wh G= \wh G_3(\wh X, \wh X,
  \wh X)$. Using the fact that tensor is a multi-linear operator, and by the assumption
  that $\|\wh G_3-G_3\|\le \delta_3$, we have:
  \begin{align*}
    \epsilon \equiv \|\wh G -  G\| &\le \|\wh G_3(\wh X, \wh X,
    \wh X) -  G_3( X,  X,  X )\|_F
    \\
    &\le \|G_3(\wh X,\wh X,\wh X) - G_3(X,X,X)\| + \|\wh G_3(\wh X,\wh X,\wh X) - G_3(\wh X,\wh
    X,\wh X)\|
    \\
    &\le 3\|G_3(\Delta_X, X, X)\| + 3\|G_3(\Delta_X,\Delta_X,X)\| +
    \|G_3(\Delta_{X},\Delta_{X},\Delta_{X})\| + \delta_3 \|\wh X\|^3
    \\
    &\le 7\|G_3\|\|X\|^2\|\Delta_X\|+(\|X\|+\|\Delta_X\|)^3\delta_3
    \\
    &\le O\lt( {\|G_3\| \over \gamma_{min}^{2.5}}\delta_2 + {1\over \gamma_{min}^{1.5}}\delta_3\rt).
  \end{align*}

  Note that by the assumption \qq{$\delta_2\le o({\gamma_{min}^{2.5}\over k\|G_3\|})$, $\delta_3\le
    o({\gamma_{min}^{1.5}\over k})$}, we have $\epsilon\le o({1\over k})$. Therefore we can apply
  Theorem~\ref{thm:tensor-decomp-23} to conclude that with probability at least $1- e^{-n}$ (over
  the randomness of the randomized algorithm itself), the tensor power  algorithm runs in time
  $\poly(n,k,1/\lambda_{min})$ and for some permutation $\pi$ over $[k]$ it returns:
  \begin{align*}
    &\|\wh v_{\pi(i)} -  v_{\pi(i)} \| \le {8\epsilon\over \lambda_{min}},
    \quad |\wh \lambda_i -  \lambda_i|\le 5\epsilon, \quad \fa j\in[k].
  \end{align*}

  Finally, since we also have $5\epsilon\le 1/2\le \lambda_{min}/2$ we can bound the
  estimation error of $\wh a_{i}$ and $\wh \omega_i$ as defined in \eqref{eq:est-a-omega}
  by:
  \begin{align*}
    \|\wh a_{\pi(i)} - a_{i}\|&\le 3(\|\Delta_X\|\lambda_{max} + {1\over \gamma_{min}^{0.5}}
    {8\epsilon\over\lambda_{min}} \lambda_{max} + {1\over \gamma_{min}^{0.5}}5\epsilon)
    \\
    &\le \poly(\|G_3\|, 1/\sigma_{min}(G_2), 1/ \omega_{o})\delta_2 + \poly(\|G_3\|, 1/\sigma_{min}(G_2), 1/
    \omega_{o})\delta_3,
    \\
    \|\wh \omega_i - \omega_i\|&\le \poly(\|G_3\|, 1/\sigma_{min}(G_2)) \delta_2 + \poly(\|G_3\|,
    1/\sigma_{min}(G_2)) \delta_3 .
  \end{align*}

\end{proof}

% We want to apply tensor decomposition to $Y_4, Y_6$. In particular, we want to show it is
% polynomially stable.
% %
% When $\wh Y_4$ and $\wh Y_6$ obtained from Step 2 are sufficiently close to $\wh Y_4$ and
% $\wh Y_6$ in the order of $\delta$, then the tensor decomposition returns
% $\wh\Sigma^{(i)}$'s and $\wh\omega_i$'s polynomially (in $n,k,1/\omega_o, 1/\rho$) close
% to the true parameters.
% \qt{will clean up the proofs for the following two things soon...}

Now we can apply Theorem~\ref{thm:tensor-decomp-23} to our case.

\begin{lemma}
  \label{prop:perturb-step3}

  Given $\wh Y_4$, $\wh Y_6$, $\wh U$ and suppose that $\|\wh Y_4 - \wt Y_4\|_F$, $\|\wh Y_6 - \wt
  Y_6\|_F$ as well as $\|\wh U- \wt U\|$ are bounded by some $inverse\ poly( n, k,
  1/\omega_o, 1/\rho)\delta$.
  There exists an algorithm that with high probability, returns $\wh\Sigma^{(i)}$'s and
  $\wh\omega_i$'s such that for some permutation $\pi$ over $[k]$, we have the distance
  $\|\wh\Sigma^{(i)} - \wt\Sigma^{(i)}\|$ and $\|\wh\omega_i-\wt\omega_i\|$ are bounded by
  $\delta$. Moreover, the running time of the algorithm is upperbounded by $\poly(n,k, 1/\omega_o,
  1/\rho)$.
\end{lemma}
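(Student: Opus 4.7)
The plan is to reduce the lemma to a single invocation of Theorem~\ref{thm:tensor-decomp-23} on the pair $(\wh Y_4, \wh Y_6)$, and then push the resulting error bounds through the affine change of coordinates $\wh\Sigma^{(i)} = \mt(\wh U\wh\sigma^{(i)})$. Recall that in the basis $\wt U$ we have $\wt Y_4 = \sum_i \wt\omega_i \wt\sigma^{(i)}\ot^2$ and $\wt Y_6 = \sum_i\wt\omega_i \wt\sigma^{(i)}\ot^3$, so these tensors fit exactly the form $G_2,G_3$ required by Theorem~\ref{thm:tensor-decomp-23} with $a_i = \wt\sigma^{(i)}$ and weights $\omega_i = \wt\omega_i \ge \omega_o$.

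The first step is to check the hypotheses of Theorem~\ref{thm:tensor-decomp-23}. Let $\wt A\in\R^{k\times k}$ denote the matrix whose columns are $\wt\sigma^{(i)}$. Because the columns of $\wt U$ form an orthonormal basis of the (vectorized) column span of $\wt\Sigma$, we have $\wt\Sigma = \wt U \wt A$, hence $\sigma_{min}(\wt A) = \sigma_{min}(\wt\Sigma)$. A smoothed-analysis bound $\sigma_{min}(\wt\Sigma)\ge \Omega(\rho\sqrt{n})$ already follows from an application of Lemma~\ref{lem:robust-sig-min} to the perturbed rectangular matrix $\wt\Sigma$ (restricted to an off-diagonal block so as to avoid the adversarially-chosen diagonal entries), analogously to the smallest-singular-value arguments used in Step~1. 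This gives $\gamma_{min} = \sigma_{min}(\wt Y_4) \ge \omega_o \sigma_{min}(\wt A)^2 \ge \Omega(\omega_o\rho^2 n)$, while $\|\wt Y_6\|\le \|\wt\Sigma\|^3 \le \poly(n,k)$ from the a priori scaling $\wt\Sigma^{(i)}\preceq I_n$.

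The second step is to bound the input error to the tensor decomposition. By hypothesis the inputs satisfy $\|\wh Y_4-\wt Y_4\|_F,\|\wh Y_6-\wt Y_6\|_F\le \delta'$ for some inverse polynomial $\delta'$ that we will choose. Applying Theorem~\ref{thm:tensor-decomp-23} with these error bounds produces, up to a permutation $\pi$, estimates $\wh\sigma^{(i)}$ and $\wh\omega_i$ such that $\|\wh\sigma^{(i)}-\wt\sigma^{(\pi(i))}\|$ and $|\wh\omega_i - \wt\omega_{\pi(i)}|$ are bounded by $\poly(n,k,1/\omega_o,1/\rho)\cdot \delta'$.

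The final step is to translate the estimates back into the original coordinates via $\vc(\wh\Sigma^{(i)}) = \wh U\wh\sigma^{(i)}$. Using $\|\wh U-\wt U\|\le \delta'$ together with the triangle inequality and the bound $\|\wh\sigma^{(i)}\|\le \|\wt\sigma^{(i)}\|+1\le O(\sqrt{n})$, Lemma~\ref{lem:prod-perturb} gives $\|\wh\Sigma^{(i)}-\wt\Sigma^{(\pi(i))}\|\le \|\wh U-\wt U\|\cdot\|\wh\sigma^{(i)}\|+\|\wt U\|\cdot\|\wh\sigma^{(i)}-\wt\sigma^{(\pi(i))}\| \le \poly(n,k,1/\omega_o,1/\rho)\cdot \delta'$. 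Choosing $\delta'$ to be a sufficiently small inverse polynomial multiple of $\delta$ yields the claimed $\delta$-accuracy and total running time $\poly(n,k,1/\omega_o,1/\rho)$.

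The main obstacle is the smoothed lower bound on $\sigma_{min}(\wt Y_4)$: one cannot directly invoke the random-matrix lemmas on $\wt A$ because $\wt A = \wt U^\top \wt\Sigma$ depends on $\wt U$, which itself depends on $\wt\Sigma$. The resolution, as in Step~1(b) (Claim~\ref{claim:PUS}), is to observe that $\sigma_{min}(\wt A) = \sigma_{min}(\wt\Sigma)$ is coordinate-free and can therefore be bounded directly by smoothed analysis of $\wt\Sigma$, independently of the particular basis $\wt U$ chosen to span its column space.
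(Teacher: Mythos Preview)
Your proposal is correct and follows essentially the same approach as the paper: invoke Theorem~\ref{thm:tensor-decomp-23} with $G_2=\wt Y_4$, $G_3=\wt Y_6$, verify the needed bound $\sigma_{min}(\wt Y_4)\ge \omega_o\sigma_{min}(\wt\Sigma)^2$ via a smoothed-analysis lower bound on $\sigma_{min}(\wt\Sigma)$, and then map the recovered $\wh\sigma^{(i)}$ back through $\wh U$. Your treatment is in fact slightly more careful than the paper's: you explicitly note that $\sigma_{min}(\wt A)=\sigma_{min}(\wt\Sigma)$ is basis-independent (so the dependence of $\wt U$ on $\wt\Sigma$ is harmless), and you flag the need to restrict to off-diagonal entries when invoking the random-matrix lemma, which the paper's one-line appeal to Lemma~\ref{lem:prj-rand-gaussian} glosses over.
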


%The proof of this lemma follows  the following  more general  theorem on tensor decomposition:

\begin{proof}
  (to Lemma~\ref{prop:perturb-step3} )

  We apply Theorem~\ref{thm:tensor-decomp-23}, and pick $G_2= \wt Y_4$, $G_3 = \wt Y_6$.
  We only need to verify that $\|\wt Y_6\|$ and $1/ \sigma_{min}(\wt Y_4)$ are polynomials of the
  relevant parameters.
  % and $\sigma_{min}(Y_4)\le 1$.
  This is easy to see, since $ \sigma_{min}( \wt Y_4 ) \ge \omega_o \sigma_{min}(\wt
  \Sigma)^2$, and the matrix $\wt \Sigma$ is a perturbed rectangular matrix which by
  Lemma~\ref{lem:prj-rand-gaussian} has $\sigma_{min}(\wt\Sigma)\ge
  \Omega(\rho\sqrt{n_2})$ with high probability.

%%   Also, note
%  that $\|Y_6\|\le 1$ and $\sigma_{min}(Y_4) \le \|Y_4\|\le 1$, due to the assumption that \qq{
%    $\|\wt\Sigma^{(i)}\|\le 1$.}

  Finally, given $\wh \sigma^{(i)}$, and given the output of Step 2, i.e. $\wh U$, with inverse polynomial accuracy,
  we can recover $\wh\Sigma^{(i)}= \wh U \wh\sigma^{(i)}$ up to accuracy polynomial in the relevant parameters.
\end{proof}

\section{Proofs of Theorem~\ref{thm:main-zero-mean}}
\label{sec:mainproofzero}

\begin{algorithm}[h]
    \caption{MainAlgorithm (Zero-mean case)}
  \label{alg:main}

  \DontPrintSemicolon

  \tb{Input:} Samples $x_i$ from the \mog, number of components $k$.

  \tb{Output:} Set of parameters $\mc G = \{(\omega_i, \Sigma^{(i)}):i\in [k]\}$.

  \BlankLine
%        \COMMENT{Estimating Moments} \\
        \STATE Estimate $M_4$, $M_6$ using the samples.
        $$ M_4 = \frac{1}{N} \sum_{i=1}^N x_i\ot^4,\quad M_6 = \frac{1}{N} \sum_{i=1}^N x_i\ot^6.$$

        \STATE Let $s = 9\lceil \sqrt{n}\rceil$\\
\COMMENT{{\bf Step 1 (a)} Algorithm~\ref{alg:columnspan}}\\
        \STATE $S_1 = \mbox{FindColumnSpan}(M_4, \{1,...,s\})$,

        \STATE $S_2 = \mbox{FindColumnSpan}(M_4,\{s+1,...,2s\})$.
        \\ \COMMENT{{\bf Step 1 (b)} Algorithm~\ref{alg:projsigma}}\\
        \STATE $U_1 = \mbox{FindProjectedSigmaSpan}(M_4,\{1,...,s\}, S_1)$,

        \STATE $U_2 = \mbox{FindProjectedSigmaSpan}(M_4,\{s+1,...,2s\}, S_2)$.
\\\COMMENT{{\bf Step 1 (c)} Algorithm~\ref{alg:mergeproj}}\\
        \STATE $U = \mbox{MergeProjections}(S_1,U_1,S_2,U_2)$.
\\ \COMMENT{{\bf Step 2} Algorithm~\ref{alg:estimate46}}\\
        \STATE $(Y_4,Y_6) = \mbox{Estimate}Y_4Y_6(M_4,M_6,U)$. \\
    \COMMENT{{\bf Step 3} Algorithm~\ref{alg:TensorDecomp}} \\
    \STATE $\mc G = \mbox{TensorDecomp}(Y_4, Y_6, U)$
 \BlankLine

 \tb{Return:} $\mc G$.
\end{algorithm}

The results in all previous sections showed the correctness and robustness of each
individual step for the algorithm for zero-mean case,
In this section, we summarize those results to prove that the overall algorithm has
polynomial time/sample complexity.

\begin{lemma}[Concentration of empirical moments]
  \label{lem:concen-empirical}
  Given $N$ samples $x_1,\dots, x_N$ drawn i.i.d.  from the $n$-dimensional mixture of $k$
  Gaussians, if $N\ge n^7/\delta^2$, then with high probability, we have that for all
  $j_1, \dots, j_6\in[n]$:
  \begin{align*}
    \lt| [\wh {M}_{4}]_{j_1, j_3,j_3,j_4} - [\wt {M}_{4}]_{j_1, j_3,j_3,j_4} \rt| \le \delta,
    \quad \lt| [\wh {M}_{6}]_{j_1, j_3,j_3,j_4,j_5,j_6} - [\wt {M}_{6}]_{j_1, j_3,j_3,j_4,j_5,j_6} \rt| \le
    \delta.
  \end{align*}
\end{lemma}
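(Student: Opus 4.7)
The plan is to apply a coordinate-wise concentration inequality to each entry of $\wh M_4$ and $\wh M_6$ and then take a union bound over all entries. Each entry $[\wh M_m]_{j_1,\dots,j_m}$ is the empirical mean over $N$ i.i.d.\ samples of the real-valued random variable $X = x_{j_1}\cdots x_{j_m}$, where $x$ is drawn from $\wt{\mc G}$. Since Chebyshev's inequality converts a variance bound into a tail bound with no distributional assumptions beyond finite second moment, and since $N\ge n^7/\delta^2$ is exactly the budget that Chebyshev plus a union bound over $O(n^6)$ entries demands, I would proceed via Chebyshev rather than exponential concentration.

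The only nontrivial step is to bound $\mathrm{Var}(X)$ by a universal constant. Conditional on the component $h=i$, $x$ is Gaussian with $\|\wt\mu^{(i)}\|\le 1$ and $\wt\Sigma^{(i)}\preceq I_n$ (these are the standing bounds guaranteed by Definition~\ref{def:smoothgaussian} with high probability), so each coordinate $x_{j_l}$ is an $O(1)$-sub-Gaussian random variable. Hence $\mbb E[x_{j_l}^{2m}]\le C_m$ for an absolute constant $C_m$ depending only on $m\le 6$, and the generalized H\"older inequality yields
\[
\mbb E[X^2] \;=\; \mbb E\!\left[\prod_{l=1}^m x_{j_l}^{2}\right] \;\le\; \prod_{l=1}^m \big(\mbb E[x_{j_l}^{2m}]\big)^{1/m} \;\le\; C,
\]
for an absolute constant $C$ independent of $n,k$. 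Taking expectation with respect to the mixture weights preserves the bound, since $\mathrm{Var}(X) = \mbb E_h[\mathrm{Var}(X\mid h)] + \mathrm{Var}_h(\mbb E[X\mid h])$ and both summands are bounded by universal constants (the first by Isserlis' theorem together with the bound on $\wt\Sigma^{(i)}$, and the second by $(\max_i|\mbb E_i[X]|)^2\le C'$).

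With the variance bound in hand, Chebyshev's inequality gives, for each fixed index tuple,
\[
\Pr\!\left[\,\big|[\wh M_m]_{j_1,\dots,j_m} - [\wt M_m]_{j_1,\dots,j_m}\big| > \delta\,\right] \;\le\; \frac{C}{N\delta^2}.
\]
A union bound over the at most $n^4 + n^6 \le 2n^6$ entries of $\wh M_4$ and $\wh M_6$ gives overall failure probability at most $2Cn^6/(N\delta^2)$, which under the hypothesis $N\ge n^7/\delta^2$ is $O(1/n)$, yielding the claim with probability $1-O(1/n)$.

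The main obstacle is really just the bookkeeping in the variance bound: one has to invoke the smoothed-analysis bounds $\|\wt\mu^{(i)}\|\le 1$ and $\wt\Sigma^{(i)}\preceq I_n$ (which hold with high probability over the random perturbation, per the footnote after Definition~\ref{def:smoothgaussian}) as a deterministic precondition, and then carefully absorb the mixture structure. If one wanted tighter sample complexity, one could replace Chebyshev with a Bernstein-type bound after truncating $|X|$ at $\mathrm{polylog}(n)$, gaining only a $\log$ factor; but since the paper's outer accuracy targets are inverse polynomial in $n,k,1/\omega_o,1/\rho$ anyway, Chebyshev plus union bound is the cleanest route to the stated $N\ge n^7/\delta^2$ rate.
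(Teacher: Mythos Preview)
Your argument is correct but takes a different route from the paper. You bound $\mathrm{Var}(x_{j_1}\cdots x_{j_m})$ by a universal constant using the sub-Gaussianity of each coordinate (via the bounds $\|\wt\mu^{(i)}\|\le 1$, $\wt\Sigma^{(i)}\preceq I_n$) and then apply Chebyshev plus a union bound over $O(n^6)$ entries, yielding failure probability $O(1/n)$. The paper instead first truncates each coordinate to $[-\sqrt{n},\sqrt{n}]$ (which holds for all $N$ samples with probability $1-O(e^{-\Omega(n)})$ by sub-Gaussian tails), so that each product $x_{j_1}\cdots x_{j_6}$ is bounded by $n^3$, and then applies Hoeffding's inequality; with $N\ge n^7/\delta^2$ this gives per-entry failure $\exp(-\Omega(n))$ and hence overall failure $O(e^{-\Omega(n)})$ after the union bound. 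Your approach is more elementary and avoids the truncation bookkeeping, but it only yields $1-O(1/n)$ rather than the exponentially small failure probability that matches the ``with high probability'' guarantees used throughout the rest of the paper (e.g., the $(C\epsilon)^{Cn}$-type bounds in the smoothed-analysis lemmas). Your closing remark about Bernstein after truncation is essentially what the paper does, except they truncate at $\sqrt{n}$ per coordinate rather than $\mathrm{polylog}(n)$; note this affects the \emph{failure probability}, not the sample complexity, which is $N\ge n^7/\delta^2$ either way.
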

\begin{proof}
  Let $x$ denote the random vector of this mixture of Gaussians. We first truncate its
  tail probabilities to make all the entries ($[x]_j$ for $j\in[n]$) in the vector $x$ be
  in the range $[-\sqrt{n},\sqrt{n}]$.
  Apply union bound, we know that with high probability (at least $1 - O(e^{-n})$), for
  all indices $j_1,\dots, j_6\in[n]$, we have $\Big|[x]_{j_1}\dots [x]_{j_6}\Big|\le n^3$.
  Then we can apply Hoeffding's inequality to bound the empirical moments by:
  \begin{align*}
    \Pr\lt[ |\wh {\mbb E}[x_{j_1}\dots x_{j_6}] - {\mbb E}[x_{j_1}\dots x_{j_6}]| \ge
    \delta\rt]
    \le \ep(-{ 2\delta^2 N^2\over N (2n^3)^2}) + O(e^{-n})
    \le O(e^{-n}).
  \end{align*}

\end{proof}

\begin{proof}
  (of Theorem~\ref{thm:main-zero-mean} )

  We show that, to achieve $\epsilon$ accuracy in the output of Step 3 in the algorithm
  for the zero-mean case, the number of samples we need to estimate the moments $M_4$ and
  $ M_6$ is bounded by a polynomial of relevant parameters, namely $\poly(
  n,k,1/\omega_o,1/\epsilon,1/\rho)$, and each step of the algorithm can be done in
  polynomial time.

  We backtrack the input-output relations from Step 3 to Step 2 and to Step 1, and we show
  that the estimation error in the empirical moments and the inputs / outputs only {\em
    polynomially} propagate throughout the steps.

  First note that we have shown that every steps fails with negligible probability
  ($O(e^{-n^C})$ for any absolute constant $C$). Then apply union bound, we have that the
  entire algorithm works correctly with high probability.
  \begin{enumerate}
  \item By Lemma~\ref{prop:perturb-step3}, in order to achieve $\epsilon$ accuracy
    in the final estimation of the mixing weights and the covariance matrices, we need to
    drive the input accuracy of Step 3 (also the output accuracy of Step 2) to be bounded
    by some inverse polynomial in $(n, 1/\epsilon, 1/ \rho, 1/\omega_o)$,
    Also recall that this step has running time   $\poly(n,k, 1/\rho,1/\omega_{o})$.
  \item Theorem~\ref{thm:unfold4} and Theorem~\ref{thm:unfold6} guarantee that with
    smoothed analysis $\sigma_{min}(\wt H_4)$ and $\sigma_{min}(\wt H_6)$ are lower
    bounded polynomially.
    Then by Lemma~\ref{prop:f4f6perturbate}, in order to have the output accuracy of
    Step 2 be bounded by inverse $\poly(n, 1/\epsilon, 1/\rho, 1/\omega_o)$, we need to
    drive the input accuracy of Step 2 ($\wh U$, $\wh M_4$) to be bounded by some other
    inverse polynomial.
    Step 2 involves solving linear systems of dimension $n_4k_2$ and $n_6k_3$, thus it
    running time is polynomial.

  \item Lemma~\ref{lem:bound-B3-Sigma} and~\ref{lem:bound-sig-S12} guarantees that with smoothed analysis
    $\sigma_{k}(\wt Q_{U})$ is lower bounded polynomially.
    Then by Lemma~\ref{prop:perturb-QU}, in order to have the output accuracy of
    Step 1 (c) ($\wh U$) be bounded by inverse polynomial, we need to drive the input
    accuracy (output $\wh S_i$ of Step 1 (a) and output $\wh U_i$ of Step 1 (b) ) to be
    bounded by some other inverse polynomial.
    Step 1 (c) involves multiplications and factorization of matrices of polynomial size,
    and thus the running time is also polynomial.
  \item Lemma~\ref{prop:bound-sig-QUs} guarantees that with smoothed analysis
    $\sigma_{k}(\wt Q_{U_S})$ is lower bounded polynomially.
    Then by Lemma~\ref{prop:bound-prj-Us}, in order to have the output accuracy of
    Step 1 (b) ($\wh U_S$) be bounded by inverse polynomial, we need to drive the input
    accuracy (output $\wh S_i$ of Step 1 (a) ) to be bounded by some other inverse
    polynomial.
    Step 1 (b) involves multiplications and factorization of matrices of polynomial size,
    and thus the running time is also polynomial.
  \item Lemma~\ref{prop:find-S} guarantees that with smoothed analysis
    $\sigma_{k}(\wt Q_{S})$ is lower bounded by inverse polynomial.
    Then by Lemma~\ref{prop:bound-S-prj}, in order to have the output accuracy of Step
    1 (a) ($\wh S$) be bounded by inverse polynomial, we need to drive the input accuracy
    (the moment estimation $\wh M_4$) to be bounded by some other inverse polynomial.
    Step 1 (a) involves multiplications and factorization of matrices of polynomial size,
    and thus the running time is also polynomial.
  \item Finally, by Lemma~\ref{lem:concen-empirical}, in order to have the accuracy of
    moment estimation $(\wh M_4, \wh M_6)$ be bounded by inverse polynomial, we need the
    number of samples $N$ polynomial in all the relevant parameters, including $k$.
  \end{enumerate}

\end{proof}

% Therefore, with high probability Algorithm~\ref{alg:estimate46} works correctly and robustly,
% namely we can obtain $Y_4$ by linearly inverting the mapping $M_4 = \mc F_4\circ \mc X_4^U (Y_4)$
% where $M_4$ contain all the distinct fourth order moments.

%\input{step123}

\section{ General Case}
\label{sec:main-theorem-general}
\label{sec:general-case}

In this section, we present the algorithm for learning \mog with general means. The algorithm generalizes
the insights obtained from the algorithm for the zero-mean case.  The  steps are very similar, and
we will  highlight the differences.

\begin{algorithm}[h!]
  \caption{MainAlgorithm (General Case)}
  \label{alg:Main2}

 \tb{Input:} Samples  $\{x_i\in\R^{n}:i= 1,\dots,N\}$ from the mixture of Gaussians, number of components $k$.

 \tb{Output:} Set of parameters $\mc G = \{(\omega_i, \mu^{(i)}, \Sigma^{(i)}):i\in [k]\}$.

    \STATE Estimate $M_3$ $M_4$, $M_6$ using the samples
$$
M_3 = \frac{1}{N}\sum_{i=1}^N x_i\ot^3,\ M_4 = \frac{1}{N}\sum_{i=1}^N x_i\ot^4,\ M_6= \frac{1}{N}\sum_{i=1}^N x_6\ot^3
$$

\STATE

   \STATE\STATE Step 1 (a).  \COMMENT{This can be accomplished similar to Algorithm~\ref{alg:columnspan} FindColumnSpan}

    \STATE Let $\mc H_1 = \{1,\dots, \qq{12\sqrt{n}}\}$, find $S_1= \tx{span}\{\wt \mu^{(i)}, \wt
    \Sigma^{(i)}_{[:,j]}:i\in[k],j\in \mc H_1\}$.

    \STATE Let $\mc H_2 = \{\qq{12\sqrt{n}+1},\dots, \qq{24\sqrt{n}}\}$, find $S_2= \tx{span}\{\wt \mu^{(i)}, \wt
    \Sigma^{(i)}_{[:,j]}:i\in[k],j\in \mc H_2\}$.

    \STATE

    \STATE\STATE Step 1 (b) \COMMENT{This can be accomplished similar to Algorithm~\ref{alg:projsigma}
      FindProjectedSigmaSpan}

    \STATE Find $ U_1 = span\{ \prj_{S_1^\perp} \wt \Sigma^{(i)} :i\in[k] \}$.

    \STATE Find $ U_2 = span\{ \prj_{S_2^\perp} \wt \Sigma^{(i)} :i\in[k] \}$.

    \STATE

    \STATE\STATE Step 1 (c)  \COMMENT{This can be accomplished similar to Algorithm~\ref{alg:mergeproj} MergeProjections}

    \STATE Merge $U_1$ and $U_2$ to get $Z= span\{\mu^{(i)}:i\in[k]\}$,

    \STATE\STATE $U' = span\{\vc(\prj_{Z^\perp} \Sigma^{(i)}) :i\in[k]\}$, and $U_o =
    span\{\prj_{Z^\perp} \Sigma^{(i)} \prj_{Z^\perp}:i\in[k]\}$.

    \STATE

    \STATE\STATE {Step 2}

    \STATE Project the samples to the subspace $Z^\perp$: $\prj_{Z^\perp}x = \{\prj_{Z^\perp} x_1,\dots,\prj_{Z^\perp} x_N\}$.

    \STATE Apply the algorithm for zero mean case to the projected samples,

    \STATE let $\mc G_o = \{(\omega_i, \prj_{Z^\perp} \Sigma^{(i)}
    \prj_{Z^\perp}):i\in[k]\}=\mbox{MainAlgorithm (Zero-mean case)}(\prj_{Z^\perp}x)$.

    \STATE

    \STATE\STATE {Step 3}

    \STATE Let $T = \lt[ \vc(\prj_{Z^\perp} \Sigma^{(i)} \prj_{Z^\perp}):i\in[k]\rt]^{\dag
      \top}\in\R^{n^2\times k}$,

    \STATE and let $T^{(i)}$ for $i\in[k]$ denote the columns of $T$.

    \STATE Let $M_{3(1)}\in\R^{n\times n^2}$ be the matricization of $M_3$ along the first
    dimension.

    \STATE Let $\mu^{(i)} = M_{3(1)} T^{(i)}/\omega_i$ for $ i\in[k]$ and let $\mu
    =[\mu^{(i)}:i\in[k]]$.

    \STATE

    \STATE\STATE {Step 4}

    \STATE Let $ M_{4}' = M_4+2\sum_{i=1}^{k}\omega_i\mu^{(i)}\ot^4 $.

    \STATE Find the span $S = span\{\vc(\wt \Sigma^{(i)})+\wt \mu^{(i)}\od \wt
    \mu^{(i)}:i\in [k]\}$.

    \COMMENT{This can be achieved by treating $M_4'$ as the 4-th moments of a mixture of zero-mean
      Gaussians, and apply Step 1 in the algorithm for zero-mean case to find the span of the
      covariance matrices, and let $S$ denote the result.}

    \STATE Let $\Sigma = [\vc(\Sigma^{(i)}):i\in[k]] = (\prj_{S}U' - \mu\od \mu)$.

  \BlankLine

  \tb{Return:} $\mc G = \{(\omega_i, \mu^{(i)}, \Sigma^{(i)}):i\in[k]\}$.

\end{algorithm}

\paragraph{Step 1.  Span finding}
In  this step, we find the following two subspaces:
\begin{align*}
  \wt Z = span\{ \wt\mu^{(i)}:i\in [k]\},\quad
  \wt\Sigma_o = span\{ \prj_{ \wt Z^\perp}\wt \Sigma^{(i)}\prj_{\wt Z^\perp}  \}.
\end{align*}

This is very similar to Step 1 in the algorithm for the zero-mean case, and can be achieved in three
small steps:

\begin{enumerate}
\item Step 1 (a). For a subset $\mc H$ of size $\qq{12\sqrt{n}}$, find the span $\mc  S$ of the mean vectors
  and a subset of columns of the covariance matrices:
  \begin{align*}
    \mc S = \tx{span}\{\wt \mu^{(i)}, \wt \Sigma^{(i)}_{[:,j]}:i\in[k],j\in \mc H\}.
  \end{align*}

\item Step 1 (b). Find the span of covariance matrices projected to the subspace $S^\perp$:
  \begin{align*}
    \mc U_S = span\{ \prj_{S^\perp} \wt \Sigma^{(i)} :i\in[k] \}.
  \end{align*}

\item Step 1 (c). Run 1(a) and 1(b) on two disjoint subsets $\mc H_1$ and $\mc H_2$. Merge the two
  spans $U_1$ and $U_2$ to get $\wt Z$ and $span\{ \prj_{\wt Z^\perp} \wt\Sigma^{(i)}:i\in[k]\}$.
\end{enumerate}

Next, we discuss each small step and compare it with the similar analysis of the algorithm for the
zero-mean case.

\paragraph{Step 1 (a). Find the span $\mc S$ of the means and a subset of the columns of  the covariance matrices}

Similar to Step 1 (a) for the zero-mean case, in this step we want to find a subspace $\mc S$ which
contains the span of a subset of columns of $\wt\Sigma^{(i)}$'s. However, with the mean vector
$\wt\mu^{(i)}$'s appearing in the moments, the subspace we find also contains the span of all the
mean vectors. In particular, for a subset $\mc H\in[n]$ with $|\mc H|=\sqrt{n}$, we aim to find the
following subspace:
\begin{align}
  \label{eq:def-S-gen-2}
  &\mc S = span\{\wt\mu^{(i)},\wt\Sigma^{(i)}_{[:,j]}:i\in[k],j\in\mc H\}.
\end{align}

Similar to Claim~\ref{claim:M4-proj4} for the zero-mean case, the key observation for finding the
subspace is the structure of the one-dimensional slices of the $4$-th order moments for the general
case:
\begin{claim}
  \label{claim:M4-mean}
  For any indices $j_1,j_2,j_3\in[n]$, the one-dimensional slices of $\wt M_4$ are given by:
  \begin{align}
   \label{eq:M4-mean}
    \wt M_4(e_{j_1},e_{j_2},e_{j_3}, I) =
    \sum_{i=1}^{n} \wt\omega_i\Big(
    \wt\mu_{j_1}^{(i)}\wt\mu_{j_2}^{(i)}\wt\mu_{j_3}^{(i)}\wt\mu^{(i)} +
     \sum_{\pi\in\lt\{\substack{(j_1,j_2,j_3),\\ (j_2,j_3,j_1),\\
        (j_3,j_1,j_2)}\rt\}}
    \wt\Sigma_{\pi_1,\pi_2}^{(i)}
    \wt \Sigma_{[:,\pi_3]}^{(i)} + \wt\mu_{\pi_1}^{(i)}\wt\mu_{\pi_2}^{(i)}\wt \Sigma_{[:,\pi_3]}^{(i)} +
    \wt \Sigma_{\pi_1,\pi_2}^{(i)} \wt\mu_{\pi_3}^{(i)}\wt\mu^{(i) }
    \Big)
  \end{align}
\end{claim}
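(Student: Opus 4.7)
The approach will be a direct application of the non-zero mean version of Isserlis's theorem (Corollary~\ref{prop:isserlisnonzero}) combined with linearity of expectation. By the definition of the multilinear mapping, the vector $\wt M_4(e_{j_1}, e_{j_2}, e_{j_3}, I)$ has its $p$-th coordinate equal to $\wt M_4(e_{j_1}, e_{j_2}, e_{j_3}, e_p) = \mbb E[x_{j_1} x_{j_2} x_{j_3} x_p]$, which by linearity decomposes as $\sum_{i=1}^k \wt\omega_i \mbb E[y^{(i)}_{j_1} y^{(i)}_{j_2} y^{(i)}_{j_3} y^{(i)}_p]$. So the claim reduces to computing the fourth mixed moment of a single Gaussian component and then identifying the resulting coefficient vector as a function of $p$.

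The plan is to first enumerate all $10$ ways of partitioning the multiset $\{j_1, j_2, j_3, p\}$ into pairs (contributing $\Sigma$-entries) and singletons (contributing $\mu$-entries), as prescribed by Corollary~\ref{prop:isserlisnonzero}. These decompose naturally into three groups: the all-singletons partition (one term), partitions with exactly one pair and two singletons (six terms, of which three pair $p$ with one of $j_1, j_2, j_3$ and three pair $j_a$ with $j_b$ leaving $p$ as a singleton), and partitions into two pairs (three terms, each involving one pair containing $p$). Substituting $p$ as a free index in each expression produces a vector: a singleton $p$ yields a factor $\wt\mu^{(i)}_p$ that aggregates into $\wt\mu^{(i)}$, while a pair $(p, j_a)$ yields a factor $\wt\Sigma^{(i)}_{p,j_a}$ that aggregates into $\wt\Sigma^{(i)}_{[:,j_a]}$.

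Next I will reorganize the ten resulting vector-valued terms according to the cyclic-permutation sum indexed by $\pi \in \{(j_1,j_2,j_3), (j_2,j_3,j_1), (j_3,j_1,j_2)\}$. Specifically: the three two-pair partitions match the terms $\wt\Sigma^{(i)}_{\pi_1,\pi_2}\wt\Sigma^{(i)}_{[:,\pi_3]}$; the three one-pair partitions in which $p$ appears as a singleton and the remaining singleton is $j_{\pi_3}$ match the terms $\wt\Sigma^{(i)}_{\pi_1,\pi_2}\wt\mu^{(i)}_{\pi_3}\wt\mu^{(i)}$; and the three one-pair partitions in which $p$ is paired with $j_{\pi_3}$ match the terms $\wt\mu^{(i)}_{\pi_1}\wt\mu^{(i)}_{\pi_2}\wt\Sigma^{(i)}_{[:,\pi_3]}$. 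The single all-singleton partition contributes the leading $\wt\mu^{(i)}_{j_1}\wt\mu^{(i)}_{j_2}\wt\mu^{(i)}_{j_3}\wt\mu^{(i)}$ term. Finally, multiplying by $\wt\omega_i$ and summing over $i$ yields exactly the right-hand side of \eqref{eq:M4-mean}.

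There is no serious obstacle: the entire argument is a bookkeeping exercise, and the main care required is to verify that the enumeration of partitions is exhaustive and that each partition is grouped into the correct one of the three cyclic-permutation terms (so that each of the ten partitions is counted exactly once, with the correct coefficient of $1$). This indexing consistency is the only place where a mistake could occur, and it can be checked simply by counting: the three values of $\pi$ times the three types of terms yield nine contributions, plus the single all-singleton term, matching the total of ten partitions prescribed by Corollary~\ref{prop:isserlisnonzero}.
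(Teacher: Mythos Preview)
Your proposal is correct and follows essentially the same approach as the paper: both apply Corollary~\ref{prop:isserlisnonzero} to enumerate the ten partitions of $\{j_1,j_2,j_3,p\}$ into pairs and singletons and then read off the vector in $p$. Your write-up is in fact more detailed than the paper's proof, which simply lists the ten partitions and asserts the conclusion.
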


Note that if we pick the indices $j_1,j_2,j_3\in\mc H$, all such one-dimensional slice of
$\wt M_4$ lie in the subspace $\mc S$. We again evenly partition the set $\mc H$ into
three disjoint subset $\mc H^{(i)}$ and take $j_i\in\mc H^{(i)}$ for $i=1,2,3$.  Define
the matrix $\wt Q_S\in\R^{n\times (|\mc H|/3)^3}$ as in \eqref{eq:Qs-def} whose columns are
the one-dimensional slices of $\wt M_4$:
\begin{align}
  \label{eq:Qs-def2}
\wt  Q_S = \left[
   \big[ [\wt M_4(e_{j_1},e_{j_2},e_{j_3},I) :j_3\in\mc H^{(3)}]: j_2\in\mc H^{(2)} \big] : j_1\in\mc H^{(1)}
  \right]
  \in\mbb  R^{n\times (|\mc H|/3)^3}.
\end{align}

The proof of this step is  similar to the Lemmas~\ref{prop:find-S} (for smoothed analysis) and
\ref{prop:bound-S-prj} (for stability analysis). The main difference is that in the matrix $\wt B$ defined in the
structural Claim~\ref{claim:1astructure}, there is now another block $\wt B^{(0)}$ with $k$
columns that corresponds to the $\wt\mu^{(i)}$ directions, which we can again handle with
Lemma~\ref{lem:prjdiag}.

Lemma~\ref{prop:find-S-general} shows the deterministic conditions for Step 1 (a) to correctly
identify the subspace $\mc S$ from the columns of $\wt Q_S$, and uses smoothed analysis to show that
the conditions hold with high probability.

\begin{lemma}[Correctness]
 \label{prop:find-S-general} %(smooth)
 Given $\wt M_4$ of a general \mog, for any subset $\mc H\in[n]$ and $|\mc H| = c_2 {k}$ with the constant
 $c_2> 9$, let $\wt Q_S$ be the matrix defined as in \eqref{eq:Qs-def2}.
 The columns of $\wt Q_S$ give the desired span $S$ defined in \eqref{eq:def-S-gen-2} if the  matrix $\wt Q_S$ achieves the
 maximal column rank $k+k|\mc H|$.
 With probability (over the $\rho$-perturbation) at least $1- C\epsilon^{0.5 n}$ for some constant
 $C$, the $k(1+|\mc H|)$-th singular value of $\wt Q_S$ is bounded below by:
 \begin{align*}
   \sigma_{k(1+|\mc H|)} ( \wt Q_S) \ge \rho\epsilon \sqrt{n}.
 \end{align*}
\end{lemma}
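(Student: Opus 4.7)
The plan is to mirror the strategy used for the zero-mean case (Lemma~\ref{prop:find-S}), with three factors whose smallest singular values are bounded separately, but to carefully handle the extra ``mean'' block introduced by Claim~\ref{claim:M4-mean}. First I would establish the structural decomposition: using \eqref{eq:M4-mean}, write
\begin{align*}
\wt Q_S \;=\; \wt P_S\,\bigl(D_{\wt\omega}\ot_{kr} I_{1+|\mc H|}\bigr)\,\wt B_S^{\top},
\end{align*}
where $\wt P_S\in\R^{n\times k(1+|\mc H|)}$ stacks the $k$ mean vectors $\wt\mu^{(i)}$ alongside the columns $\wt\Sigma^{(i)}_{[:,j]}$ for $j\in\mc H$, and $\wt B_S^{\top}$ has four blocks $\wt B^{(0)},\wt B^{(1)},\wt B^{(2)},\wt B^{(3)}$, corresponding respectively to the coefficients in front of $\wt\mu^{(i)}$ (a sum of the triple product $\wt\mu_{j_1}\wt\mu_{j_2}\wt\mu_{j_3}$ and the three mixed $\wt\mu\cdot\wt\Sigma$ terms) and the coefficients in front of each of the three covariance-column groups (each a sum $\wt\Sigma^{(i)}_{j_a,j_b}+\wt\mu^{(i)}_{j_a}\wt\mu^{(i)}_{j_b}$). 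The three groups $\wt B^{(1)},\wt B^{(2)},\wt B^{(3)}$ remain block-diagonal after the same fixed row permutations as in the zero-mean proof, so correctness (maximal column rank) of $\wt Q_S$ immediately yields the span in \eqref{eq:def-S-gen-2}.

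Next I would bound $\sigma_{k(1+|\mc H|)}(\wt P_S)$ by restricting to rows indexed by $[n]\setminus \mc H$, which eliminates the covariance diagonals (not perturbed) and leaves a tall rectangular matrix whose entries are independent Gaussian perturbations of the adversarial instance; since $n-|\mc H|\gg k(1+|\mc H|)$ under our assumption $n\geq C_1 k^2$, Lemma~\ref{lem:robust-sig-min} gives $\sigma_{k(1+|\mc H|)}(\wt P_S)\geq \Omega(\epsilon\rho\sqrt n)$ with probability at least $1-(C\epsilon)^{\Omega(n)}$. The weight factor has smallest singular value at least $\omega_o$ by the mixing-weight assumption.

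The main obstacle is bounding $\sigma_{k(1+|\mc H|)}(\wt B_S)$, because the new block $\wt B^{(0)}$ has rows that are coupled with the other three blocks through the shared mean entries $\wt\mu^{(i)}_{j}$ for $j\in\mc H$ (which show up both in $\wt B^{(0)}$ and in the $\wt\mu^{(i)}\wt\mu^{(i)}$ terms of $\wt B^{(1)},\wt B^{(2)},\wt B^{(3)}$). To decouple, I would apply Lemma~\ref{lem:prjdiag} in sequence: first peel off $\wt B^{(0)}$, observing that even after projecting away the column space of $[\wt B^{(1)},\wt B^{(2)},\wt B^{(3)}]$, the triple-product coefficients in $\wt B^{(0)}$ still contain fresh randomness coming from the off-diagonal covariance entries that do not appear elsewhere, which by Lemma~\ref{lem:prj-rand-gaussian} (combined with the Gaussian-chaos concentration for the triple products) gives the needed $\Omega(\epsilon\rho\sqrt n)$ lower bound. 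I would then peel off $\wt B^{(1)}$, $\wt B^{(2)}$, $\wt B^{(3)}$ exactly as in the proof of Lemma~\ref{prop:find-S}, using that within each block the off-diagonal covariance entries $\wt\Sigma^{(i)}_{j_a,j_b}$ with $(j_a,j_b)$ in disjoint index grids are independent of the randomness already conditioned on.

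Finally, a union bound over the four singular-value events combines the three factors to give
\begin{align*}
\sigma_{k(1+|\mc H|)}(\wt Q_S)\;\geq\;\sigma_{k(1+|\mc H|)}(\wt P_S)\cdot\omega_o\cdot\sigma_{k(1+|\mc H|)}(\wt B_S)\;\geq\;\Omega(\omega_o\epsilon^2\rho^2 n),
\end{align*}
which is stronger than the stated bound $\epsilon\rho\sqrt n$, completing the proof. Stability then follows from Wedin's theorem as in Lemma~\ref{prop:bound-S-prj}, since the step only computes an SVD of $\wt Q_S$.
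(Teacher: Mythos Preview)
Your overall structure is right---factor $\wt Q_S$ as $\wt P_S(D_{\wt\omega}\ot_{kr}I)(\wt B_S)^\top$, bound $\sigma_{\min}(\wt P_S)$ by restricting to rows in $[n]\setminus\mc H$, and handle the four blocks of $\wt B_S$ via Lemma~\ref{lem:prjdiag}. But your treatment of $\wt B^{(0)}$ has a genuine gap.

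You claim that after projecting $\wt B^{(0)}$ orthogonal to $[\wt B^{(1)},\wt B^{(2)},\wt B^{(3)}]$, there is ``fresh randomness coming from the off-diagonal covariance entries that do not appear elsewhere.'' This is false: the covariance entries $\wt\Sigma^{(i)}_{j_a,j_b}$ appearing in the mixed terms of $\wt B^{(0)}$ (namely $\wt\mu^{(i)}_{j_c}\wt\Sigma^{(i)}_{j_a,j_b}$) are \emph{exactly} the same entries that appear in the blocks $\wt B^{(1)},\wt B^{(2)},\wt B^{(3)}$. There is no fresh covariance randomness in $\wt B^{(0)}$. Moreover, your peeling order---project out the covariance blocks first, then bound $\wt B^{(0)}$---makes the projector itself depend on both the mean and covariance perturbations, which breaks any conditional-independence argument for Lemma~\ref{lem:prj-rand-gaussian}.

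The paper's key idea, which you are missing, is an algebraic reduction: write
\[
\wt B^{(0)} \;=\; \bigl(\text{a linear combination of columns of }\wt B^{(1)},\wt B^{(2)},\wt B^{(3)}\bigr)\;-\;2\,\wt B_0^{(0)},
\]
where $\wt B_0^{(0)}=\wt\mu_{\mc H^{(3)}}\od\wt\mu_{\mc H^{(2)}}\od\wt\mu_{\mc H^{(1)}}$ is the pure triple Khatri--Rao product of means. This identity implies $\sigma_{\min}(\wt B_S)\ge C\,\sigma_{\min}\bigl([\wt B_0^{(0)},\wt B^{(1)},\wt B^{(2)},\wt B^{(3)}]\bigr)$ for an absolute constant $C$. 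Now the decoupling is clean: $\wt B_0^{(0)}$ depends \emph{only} on the mean perturbations, so one first bounds $\sigma_k(\wt B_0^{(0)})\ge\prod_\ell\sigma_k(\wt\mu_{\mc H^{(\ell)}})\ge(\epsilon\rho\sqrt n)^3$ using Lemma~\ref{lem:robust-sig-min} on the means, then conditions on the means and applies the zero-mean argument (Lemma~\ref{lem:prjdiag} and Lemma~\ref{lem:prj-rand-gaussian}) to the remaining three blocks, whose projected randomness comes purely from the covariance perturbations. The fresh randomness in $\wt B^{(0)}$ that survives is from the \emph{means}, not the covariances, and it is isolated by the algebraic identity rather than by a projection step.
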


The proof idea is similar to that of Lemma~\ref{prop:find-S}.
We construct a basis $\wt P_S\in \mbb R^{n\times (k+ k|\mc H|)}$ for the subspace $\mc S$ as follows.
\begin{align}
 \label{eq:def-PS-general}
 &\wt P_S = \left[ \big[\wt\mu^{(i)}:i\in[k]\big], \big[[\wt\Sigma_{[:,j]}^{(i)}:i\in[k]]\ : j\in\mc H^{(l)} \big]: l =
   1,2,3 \right] =\lt[\wt\mu,\ \wt \Sigma_{[:,\mc H^{(1)}]}, \wt\Sigma_{[:,\mc H^{(2)}]},
\wt \Sigma_{[:,\mc H^{(3)}]}\rt].
\end{align}
Note that the dimension of the subspace $\mc S$ is at most $ k(|\mc H|+1)<n/3$.  Then we show by
the Claim about the moment structure that the matrix $\wt Q_{S}$ can be written as a product of $\wt
P_S$ and some coefficient matrix $\wt B_S$. Then we bound the smallest singular value of the two
matrices $\wt P_S$ and $\wt B_S$ via smoothed analysis separately. The coefficient matrix $ \wt B_S$
is slightly different than that in the zero-mean case, but has similar block-diagonal structure
properties.

The detailed proof is provided below.
\begin{proof}
 (of Proposition~\ref{prop:find-S-general} )

 Similar to structural property in Claim~\ref{claim:1astructure} for the zero-mean case, we can
 write the matrix $\wt Q_S$ in a product form:
 \begin{align*}
   \wt Q_S = \wt P_S \lt(D_{\wt \omega}\ot_{kr}I_{|\mc H|}\rt) (\wt B_S)^\top.
 \end{align*}
 We will bound the smallest singular value for each of the factor, and apply union bound to conclude
 the lower bound of $ \sigma_{k(1+|\mc H|)} ( \wt Q_S)$.

 The matrix $\wt P_S\in\R^{n\times (k+ k|\mc H|)}$ is defined in
 \eqref{eq:def-PS-general}. Restricting to the rows corresponding to $[n]\backslash \mc H$,
 we can use Lemma~\ref{lem:robust-sig-min} to argue that $\sigma_{k(1+ |\mc H|)}\ge
 \epsilon\rho\sqrt{n}$ with probability at least $1-(C\epsilon)^{0.25 n}$.

 \medskip

In order to lower bound $\sigma_{min}(\wt B_S)$, we first analyze the structure of this coefficient matrix.
 The matrix $\wt B_S$ has the following block structure:
 \begin{align*}
   \wt B_S = \lt[\wt B^{(0)},\wt B^{(1)}, \wt B^{(2)},\wt B^{(3)}\rt].
 \end{align*}
 The first block $\wt B^{(0)}\in\R^{(|\mc H|/3)^3 \times k}$ is a summation of four matrices $\wt
 B^{(0)}_i$ for $i=0, 1,2,3$, where $\wt B^{(0)}_0 = \wt\mu_{\mc H^{(3)}}\od \wt\mu_{\mc H^{(2)}}\od
 \wt\mu_{\mc H^{(1)}}$, and $\wt B^{(0)}_1=  \wt\Sigma_{\mc H^{(3)},\mc
   H^{(2)}} \od \wt\mu_{\mc H^{(1)}}$. With some fixed and known row permutation $\pi^{(2)}$ and $\pi^{(3)}$, the other two
 matrix blocks $\wt B^{(0)}_2$ and $\wt B^{(0)}_3$ are equal to $
 \wt\Sigma_{\mc H^{(3)},\mc H^{(1)}}\od \wt\mu_{\mc H^{(2)}}$ and $ \wt\Sigma_{\mc H^{(2)},\mc
   H^{(1)}}\od  \wt\mu_{\mc H^{(3)}}$, separately.

 The block $\wt B^{(1)}\in\R^{(|\mc H|/3)^3 \times k|\mc H|/3 }$ is block diagonal with the identical block $
 \wt\Sigma_{\mc H^{(3)}, \mc H^{(2)}}+ \wt\mu_{\mc H^{(3)}} \od \wt\mu_{\mc H^{(2)}}$. Similarly, with the row
 permutation $\pi^{(2)}$, $\pi^{(3)}$, the other two matrix blocks $\wt B^{(2)},\wt B^{(3)}$ are equal to the block
 diagonal matrices with the identical block $( \wt\Sigma_{\mc H^{(3)}, \mc H^{(1)}}+ \wt\mu_{\mc H^{(3)}} \od
 \wt\mu_{\mc H^{(1)}})$ and $( \wt\Sigma_{\mc H^{(2)}, \mc H^{(1)}}+ \wt\mu_{\mc H^{(2)}} \od \wt\mu_{\mc H^{(1)}})$
 respectively.

 Note that we can write the block $\wt B^{(0)  }$ as:
 \begin{align*}
   \wt B^{(0)} =& (\wt \mu_{\mc H^{(3)}} \od\wt\mu_{\mc H^{(2)}} +
   \wt\Sigma_{\mc H^{(3)},\mc H^{(2)}}) \od \wt\mu_{\mc H^{(1)}}
   +(\pi^{(2)})^{-1 } (\wt\mu_{\mc H^{(3)}}\od \wt\mu_{\mc H^{(1)}} +
   \wt\Sigma_{\mc H^{(3)},\mc H^{(1)}}) \od \wt \mu_{\mc H^{(2)}}
   \\ &+ (\pi^{(3)})^{-1 } (\wt\mu_{\mc H^{(2)}}\od \wt\mu_{\mc H^{(1)}} +
   \wt\Sigma_{\mc H^{(2)},\mc H^{(1)}}) \od  \wt \mu_{\mc H^{(3)}} - 2\wt \mu_{\mc H^{(3)}} \od \wt\mu_{\mc H^{(2)}}\od \wt\mu_{\mc H^{(1)}},
 \end{align*}
 where it is easy to see the first summand $(\wt \mu_{\mc H^{(3)}} \od\wt\mu_{\mc H^{(2)}} +
 \wt\Sigma_{\mc H^{(3)},\mc H^{(2)}}) \od \wt\mu_{\mc H^{(1)}}$ is a linear combination of the
 columns of the block diagonal matrix $\wt B^{(1)}$, and similarly the second and third summands are
 linear combinations of the columns of $\wt B^{(2)}$ and $\wt B^{(3)}$, and the last summand is
 simply $-2\wt B_0^{(0)}$. Therefore for some absolute constant $C$ (the smallest singular value
 corresponding to the linear transformation) we have that:
 \begin{align*}
   \sigma_{min} (\wt B_S)  \ge C\sigma_{min }( \lt[  \wt B_0^{(0)}, \wt B^{(1)}, \wt B^{(2)},\wt B^{(3)}\rt] )
 \end{align*}

 \medskip

 Note that $\wt B_0^{(0)} = \wt\mu_{\mc H^{(3)}}\od \wt\mu_{\mc H^{(2)}}\od \wt\mu_{\mc
   H^{(1)}}$ only depends on the randomness over the mean vectors. Note that the
 Khatri-Rao product is a submatrix of the Kronecker product, therefore for tall matrices
 $Q_1$ and $Q_2$, we have that $\sigma_{min}(Q_1\od Q_2) \le \sigma_{min}(Q_1\ot_{kr} Q_2)
 = \sigma_{min}(Q_1) \sigma_{min}( Q_2) $. In particular, we can bound the smallest
 singular value of $\wt B_0^{(0)}$ with high probability (at least $1-C\epsilon^{0.5n }$)
 as follows:
 \begin{align*}
   \sigma_k( \wt B_0^{(0)} ) \ge \sigma_k(  \wt\mu_{\mc H^{(3)}}) \sigma_k( \wt\mu_{\mc H^{(2)}} )
   \sigma_k(\wt\mu_{\mc H^{(1)}})
   \ge( \rho\epsilon \sqrt{n})^3.
 \end{align*}
 Then condition on the value of the means, we further exploit the randomness over the covariance matrices to
 lower bound $\sigma_{k|\mc H|}\lt( \prj_{ \wt B_0^{(0)\perp} }[ \wt B^{(1)}, \wt B^{(2)},\wt
 B^{(3)}] \rt) $.
 It is almost the same as the argument of the proof for Proposition~\ref{prop:find-S}. For example,
 compared to \eqref{eq:bound-k-prj-rand-gaussian} we have the following inequality instead:
 \begin{align*}
   \sigma_{k} \lt(\prj_{([\wt B^{(0)},\wt B^{(2)},\wt B^{(3)}]_{\{j\}\times \mc H^{(2)}\times \mc H^{(3)}  })^\perp}
   \prj_{(\Sigma_{\mc H^{(2)}, \mc H^{(3)}}+ \wt\mu_{\mc H^{(2)}} \od \wt\mu_{\mc H^{(3)}}) ^\perp}(\wt \Sigma_{\mc
     H^{(2)}, \mc H^{(3)}}+ \wt\mu_{\mc H^{(2)}} \od \wt\mu_{\mc H^{(3)}} )\rt) &\ge \epsilon\rho\sqrt{n},
 \end{align*}
 and note that any block in $\wt B^{(0)}$ is independent of the randomness of covariance matrices,
 and we have $(|\mc H|/3)^2-k-2k|\mc H|/3 \ge 2k$. Similar modifications apply to the inequalities in
 \eqref{eq:bound-k-prj-rand-gaussian-2},\eqref{eq:bound-k-prj-rand-gaussian-3}.

 Finally by the argument of Lemma~\ref{lem:prjdiag} we can bound $\sigma_{min}(\wt B_S)$ with
 probability at least $1-C\epsilon^{0.5n}$ (over the randomness of both the perturbed means and
 covariance matrices):
 \begin{align*}
   \sigma_{min}(\wt B_S) \ge \min \{  ( \rho\epsilon \sqrt{n})^3, \epsilon\rho\sqrt{n} \} = \epsilon\rho\sqrt{n},
 \end{align*}
as we assume $\rho$ to be small perturbation and  $\rho\epsilon \sqrt{n}<1$.

\end{proof}

\paragraph{Step 1 (b). Find the projected span of covariance matrices}
Given the subspace $\mc S = span\{\wt\mu^{(i)}, \wt\Sigma_{[:,\mc H]}^{(i)}: i\in[k]\}$ obtained from Step 1 (a), Step
1(b) finds the span of the covariance matrices with the columns projected to $S^\perp$, namely:
\begin{align*}
  \mc U_S = span\{ \prj_{S^\perp} \wt \Sigma^{(i)} :i\in[k] \}.
\end{align*}

This is in parallel with Step 1 (b) for the zero-mean case, and we rely on the structure of the two-dimensional slices
of $\wt M_4$ to find the span of the projected covariance matrices. Similar to Claim~\ref{claim:projected34} for the
zero-mean case, the following claim shows how the structure of the two-dimensional slices is related to the desired
span.

\begin{claim}
\label{claim:M4-struct-general}
 For a mixture of general Gaussians, the two-dimensional slices of $\wt M_4$ are given by:
  \begin{align*}
    \wt M_4(\mb e_{j_1},\mb e_{j_2},I,I) =& \sum_{i=1}^k \wt \omega_i \Big( (\wt\Sigma^{(i)}_{j_1,j_2} +
    \wt\mu_{j_1}^{(i)}(\wt\mu_{j_2}^{(i)})^\top) (\wt\Sigma^{(i)} + \wt\mu^{(i)} (\wt\mu^{(i)})^\top)
    \\
    &+ \wt\mu_{j_1}^{(i)} (\wt\mu^{(i)} (\wt\Sigma^{(i)}_{[:,j_2]})^\top + \wt\Sigma^{(i)}_{[:,j_2]}(\wt\mu^{(i)} )^\top
    ) +\wt\mu_{j_2}^{(i)} (\wt\mu^{(i)} (\wt\Sigma^{(i)}_{[:,j_1]})^\top + \wt\Sigma^{(i)}_{[:,j_1]}(\wt\mu^{(i)} )^\top
    )
    \\
    &+ \wt\Sigma^{(i)}_{[:,j_1]} ( \wt\Sigma^{(i)}_{[:,j_2]})^\top+ \wt\Sigma^{(i)}_{[:,j_2]} (
    \wt\Sigma^{(i)}_{[:,j_1]})^\top \Big),
    % \wt M_4(e_{j_1},e_{j_2},I,I) = \sum_{i=1}^k\wt \omega_i \left(\wt \Sigma^{(i)}_{j_1,j_2}
    %  \wt  \Sigma^{(i)}+\wt \Sigma^{(i)}_{[:,j_1]} (\wt \Sigma^{(i)}_{[:,j_2]})^\top+\wt \Sigma^{(i)}_{[:,j_2]}
    %   (\wt \Sigma^{(i)}_{[:,j_1]})^\top\right),
    \hfill\quad\fa j_1,j_2\in [n].
  \end{align*}
\end{claim}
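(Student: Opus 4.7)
The plan is to establish Claim~\ref{claim:M4-struct-general} by direct computation using Isserlis' theorem for non-zero means (Corollary~\ref{prop:isserlisnonzero}), in close parallel with the proof of Claim~\ref{claim:M4-proj34} for the zero-mean case and Claim~\ref{claim:M4-mean} for the general one-dimensional slice. By definition of the multilinear action, the $(p,q)$-th entry of $\wt M_4(\mb e_{j_1},\mb e_{j_2}, I, I)$ equals $\sum_{i=1}^k \wt \omega_i \, \E[y^{(i)}_{j_1} y^{(i)}_{j_2} y^{(i)}_{p} y^{(i)}_{q}]$, where $y^{(i)} \sim \mc N(\wt \mu^{(i)}, \wt \Sigma^{(i)})$. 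So it suffices, for a single Gaussian component (subscripts suppressed), to expand $\E[y_{j_1} y_{j_2} y_p y_q]$ and then recognize the resulting scalar expression as the $(p,q)$ entry of the matrix on the right-hand side.

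The core step is the enumeration from Corollary~\ref{prop:isserlisnonzero}. Partitioning $\{j_1,j_2,p,q\}$ into pairs and singletons produces exactly $10$ partitions: one with four singletons, six with exactly one pair, and three with two pairs. First, I would group the four ``all-singleton or $(j_1,j_2)$-pair or $(p,q)$-pair'' contributions, namely $\mu_{j_1}\mu_{j_2}\mu_p\mu_q$, $\Sigma_{j_1,j_2}\mu_p\mu_q$, $\mu_{j_1}\mu_{j_2}\Sigma_{p,q}$, and $\Sigma_{j_1,j_2}\Sigma_{p,q}$; these factor cleanly as $(\Sigma_{j_1,j_2} + \mu_{j_1}\mu_{j_2})(\Sigma_{p,q} + \mu_p \mu_q)$, which is the $(p,q)$-entry of the rank-one-plus-covariance matrix $(\wt\Sigma^{(i)}_{j_1,j_2} + \wt\mu_{j_1}^{(i)}\wt\mu_{j_2}^{(i)})(\wt\Sigma^{(i)} + \wt\mu^{(i)}(\wt\mu^{(i)})^\top)$. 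Second, the two pair-partitions $(j_1,p)(j_2,q)$ and $(j_1,q)(j_2,p)$ contribute $\Sigma_{p,j_1}\Sigma_{q,j_2} + \Sigma_{p,j_2}\Sigma_{q,j_1}$, which matches the symmetric sum $\wt\Sigma^{(i)}_{[:,j_1]}(\wt\Sigma^{(i)}_{[:,j_2]})^\top + \wt\Sigma^{(i)}_{[:,j_2]}(\wt\Sigma^{(i)}_{[:,j_1]})^\top$.

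The remaining four pair-and-two-singleton terms, namely $\Sigma_{j_1,p}\mu_{j_2}\mu_q$, $\Sigma_{j_1,q}\mu_{j_2}\mu_p$, $\Sigma_{j_2,p}\mu_{j_1}\mu_q$, and $\Sigma_{j_2,q}\mu_{j_1}\mu_p$, split naturally into a $\mu_{j_1}$-group and a $\mu_{j_2}$-group. The $\mu_{j_2}$-group is $\mu_{j_2}(\mu_p\Sigma_{q,j_1} + \Sigma_{p,j_1}\mu_q)$, which is exactly the $(p,q)$ entry of $\wt\mu_{j_2}^{(i)}\bigl(\wt\mu^{(i)}(\wt\Sigma^{(i)}_{[:,j_1]})^\top + \wt\Sigma^{(i)}_{[:,j_1]}(\wt\mu^{(i)})^\top\bigr)$, and symmetrically for the $\mu_{j_1}$-group. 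Summing these three groups, multiplying by $\wt\omega_i$, and summing over $i\in[k]$ yields the claimed decomposition.

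There is no real obstacle — the result is a bookkeeping exercise. The only place to be careful is in verifying that each of the ten partitions is accounted for exactly once in the grouping (one could even tabulate the ten partitions and the four target terms in a small matching table to make this transparent), and in checking that the symmetrization $\Sigma^{(i)}_{[:,j]}(\mu^{(i)})^\top + \mu^{(i)}(\Sigma^{(i)}_{[:,j]})^\top$ inside the $\mu_{j_\ell}$-groups correctly recovers both $(j_\ell, p)$ and $(j_\ell, q)$ pairings. Once this matching is verified for a single component, linearity in the mixture weights gives the full statement.
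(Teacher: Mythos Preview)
Your proposal is correct and follows exactly the approach the paper uses for all of its moment-structure claims (Claims~\ref{claim:M4-proj4}, \ref{claim:M4-proj34}, \ref{claim:M4-mean}, \ref{claim:M3-proj}): apply Corollary~\ref{prop:isserlisnonzero}, enumerate the ten pair/singleton partitions of $\{j_1,j_2,p,q\}$, and regroup. The paper itself does not write out a separate proof for Claim~\ref{claim:M4-struct-general}, treating it as the evident general-mean analogue of Claim~\ref{claim:M4-proj34}; your write-up is precisely the intended verification.
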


Note that given the set of indices $\mc H$ we chose in Step 1 (a) and the subspace $S$, if we pick the indices
$j_1,j_2\in\mc H$, project the two-dimensional slice to $S^{\perp}$, all the rank one terms in the sum are eliminated
and the projected slice  lies in  the desired span $\mc U_S$:
\begin{align*}
  \prj_{S^\perp} \wt M_4(\mb e_{j_1},\mb e_{j_2},I,I) = \sum_{i=1}^k\wt \omega_i
  (\wt\Sigma^{(i)}_{j_1,j_2} + \wt\mu_{j_1}^{(i)}(\wt\mu_{j_2}^{(i)})^\top)
  \prj_{S^\perp}\wt\Sigma^{(i)}, \quad \fa j_1, j_2\in\mc H.
\end{align*}

Applying the same argument as in Lemma~\ref{prop:bound-sig-QUs} for the zero-mean case, we can show that
with high probability over the perturbation, all the projected slices span the subspace $\mc U_S$.

%Similar to Lemma~\ref{???} we can prove via smoothed analysis that the projected slices can give us
%the desired span $\mc U_S = span\{ \prj_{S^\perp} \wt \Sigma^{(i)} :i\in[k] \}$ with high
%probability. % Moreover, with additional projection to $\wt\mu\ot_{kr}\wt\mu$, the proof follows
%exactly as that of Lemma~\ref{?????}

% we can define the projection $P
% \in\R^{n^2\times n^2}$:
% \begin{align}
%   \label{def-P}
%   P = \prj_{S^\perp} \ot_{kr} \prj_{S^\perp}.
% \end{align}

\paragraph{Step 1 (c). Merge the two projections of covariance matrices}% to get $span\{
%  \prj_{\wt\mu^\perp} \wt\Sigma^{(i)}:i\in[k]\}$}
Pick two disjoint index set $\mc H_1$ and $\mc H_2$ and repeat the previous two steps 1 (a) and 1 (b), we can obtain the
two spans $U_1$ and $U_2$, corresponding to the subspace of the covariance matrices projected to $\mc S_1$ and $\mc S_2$,
respectively.

In this step, we apply similar techniques as in Step 1 (c) for the zero-mean case to merge the two spans $U_1$ and
$U_2$: we first use the overlapping part of the two projections $\prj_{ S_1^\perp}$ and $\prj_{ S_2^\perp}$ to align the
basis of $U_1$ and $U_2$, then merge the two spans using the same basis.

Note that for the general case, by definition the span of the mean vectors $\wt Z$ lie in both subspaces $\mc S_1$ and $\mc
S_2$, therefore we have $\mc S_1^\perp\subset \wt Z^\perp$ and $\mc S_2^\perp\subset \wt
Z^\perp$. We can show that $\mc S_1^\perp \cup \mc S_2^\perp = \wt Z^\perp$ by lower
bounding $\sigma_{n-k}([\prj_{\mc S_1^\perp}, \prj_{\mc S_2^\perp}])$  with high
probability, similar to that in \eqref{eq:bound-s1-s2-sv}. This gives us the span of the
mean vectors $\wt Z$.

Moreover, in the general case, from merging $U_1$ and $U_2$ we are only able to find the
span of covariance matrices projected to the subspace $\wt Z^\perp$.
In particular, we can follow Lemma~\ref{lem:merge-deterministic} and
Lemma~\ref{prop:perturb-QU} in Step 1 (c) for the zero-mean case to show that for the
general case, we can merge $U_1$ and $U_2$ to obtain the span $span\{ \prj_{\wt Z^\perp}
\wt \Sigma^{(i)} :i\in[k]$. By further projecting the span to $\wt Z^\perp$ from the right
side, we can also obtain $\wt\Sigma_o = span\{ \prj_{\wt Z^\perp} \wt
\Sigma^{(i)}\prj_{\wt Z^\perp}: i\in[k]\}$.

\paragraph{Step 2. Find the covariance matrices in the subspace orthogonal to the means }

Given the subspace $\wt Z$ and $\wt\Sigma_o = span\{ \prj_{\wt Z^\perp} \wt
\Sigma^{(i)}\prj_{\wt Z^\perp}: i\in[k]\}$ obtained from Step 1, Step 2 applies the
zero-mean case algorithm to find the covariance matrices projected to the subspace $\wt
Z^\perp$, i.e., $ \prj_{\wt Z^\perp} \wt \Sigma^{(i)}\prj_{\wt Z^\perp}$'s, as well as
find the mixing weights $\wt \omega_i$'s.

This follows the same arguments as in Step 2 and Step 3 for the zero mean case.
Consider projecting all the samples to $\wt Z^\perp$, the subspace orthogonal to all the
means. In this subspace, the samples are like from a mixture of zero-mean Gaussians with
the projected covariance matrices, and the $4$-th and $6$-th order moment are given by
$\wt M_4(\prj_{\wt Z^\perp},\prj_{\wt Z^\perp},\prj_{\wt Z^\perp},\prj_{\wt Z^\perp})$ and
$\wt M_6(\prj_{\wt Z^\perp},\prj_{\wt Z^\perp},\prj_{\wt Z^\perp},\prj_{\wt
  Z^\perp},\prj_{\wt Z^\perp},\prj_{\wt Z^\perp})$. Since $\wt Z$  is of dimension $k$,
the dimension of the zero-mean Gaussian in the projected space is at least $n-k= O(n)$.

Note that the subspace $\wt Z^{\perp}$ only depends on the randomness of the means, and
random perturbation on the covariance matrices is independent of that of $\wt \mu$.  The
smoothed analysis for the moment unfolding in Step 2 and tensor decomposition in Step 3
for the zero-mean case, which only depend on the randomness of the covariance matrices,
still go through in the projected space.

\paragraph{Step 3. {Find the means}}
This step finds the mean vectors based on the outputs of the previous steps.  The key
observation for this step is about the structure of the 3-rd order moments in the
following claim:
\begin{claim}
  \label{claim:M3-proj}
  Let the matrix $\wt M_{3(1)}\in\R^{n\times n^2}$ be the matricization of $\wt M_3$ along
  the first dimension.  The $j$-th row of $\wt M_{3(1)}$ is given by:
  \begin{align}
    [ \wt M_{3(1)} ]_{[j,:]} &= \Big[[\mbb E[x_{j} x_{j_1}x_{j_2}]: j_1\in[n]]:j_2\in[n]\Big]
    \nonumber
    \\
    \label{eq:M3-proj}
    &= \sum_{i=1}^{k} \wt\omega_i\lt( \wt\mu_j^{(i)} \vc(\wt\Sigma^{(i)}) + \wt\mu_j^{(i)} \wt\mu^{(i)} \od
    \wt\mu^{(i)} + \wt\Sigma^{(i)}_{[:,j]}\od \wt\mu^{(i)} + \wt\mu^{(i)} \od \wt\Sigma^{(i)}_{[:,j]} \rt)^\top
  \end{align}
\end{claim}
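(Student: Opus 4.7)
The plan is to read off the expression row-by-row directly from the corollary of Isserlis' theorem (Corollary~\ref{prop:isserlisnonzero}), which was already invoked for the analogous one-dimensional slice computation in Claim~\ref{claim:M4-mean}. By the stated matricization convention, the $j$-th row of $\wt M_{3(1)}$ is the vectorization of the two-dimensional slice of $\wt M_3$ obtained by fixing the first index to $j$, i.e.\ $[\wt M_{3(1)}]_{[j,:]} = \vc(\mbb E[x_j\, x x^\top])^\top$, so it suffices to identify the matrix $\mbb E[x_j\, x x^\top] \in \R^{n\times n}$ and then vectorize it.

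For a single component $\mc N(\wt\mu^{(i)},\wt\Sigma^{(i)})$, Corollary~\ref{prop:isserlisnonzero} applied to the triple $(j,j_1,j_2)$ enumerates the four ways to partition three indices into singletons and at most one pair: the all-singleton partition contributes $\wt\mu^{(i)}_j \wt\mu^{(i)}_{j_1}\wt\mu^{(i)}_{j_2}$, and the three pair-plus-singleton partitions contribute $\wt\Sigma^{(i)}_{j_1,j_2}\wt\mu^{(i)}_j$, $\wt\Sigma^{(i)}_{j,j_2}\wt\mu^{(i)}_{j_1}$, and $\wt\Sigma^{(i)}_{j,j_1}\wt\mu^{(i)}_{j_2}$ respectively. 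Collecting these entrywise into a matrix in the $(j_1,j_2)$ indices and summing over components with weights $\wt\omega_i$ yields
\[
\mbb E[x_j\, x x^\top] \;=\; \sum_{i=1}^k \wt\omega_i \Big(\wt\mu^{(i)}_j\, \wt\mu^{(i)}(\wt\mu^{(i)})^\top \;+\; \wt\mu^{(i)}_j\, \wt\Sigma^{(i)} \;+\; \wt\mu^{(i)}(\wt\Sigma^{(i)}_{[:,j]})^\top \;+\; \wt\Sigma^{(i)}_{[:,j]}(\wt\mu^{(i)})^\top\Big).
\]

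Finally I will vectorize each of the four terms using the column-stacking convention of Section~\ref{sec:notations}. For any column vectors $u,v$ one has $\vc(uv^\top) = v \od u$, so $\vc(\wt\mu^{(i)}(\wt\mu^{(i)})^\top) = \wt\mu^{(i)}\od\wt\mu^{(i)}$, $\vc(\wt\mu^{(i)}(\wt\Sigma^{(i)}_{[:,j]})^\top) = \wt\Sigma^{(i)}_{[:,j]}\od\wt\mu^{(i)}$, and $\vc(\wt\Sigma^{(i)}_{[:,j]}(\wt\mu^{(i)})^\top) = \wt\mu^{(i)}\od\wt\Sigma^{(i)}_{[:,j]}$; the remaining term trivially vectorizes to $\wt\mu^{(i)}_j\vc(\wt\Sigma^{(i)})$. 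Taking transposes to get a row matches the displayed equation~\eqref{eq:M3-proj} in the claim.

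The only real care needed is bookkeeping: making sure that the order of the two Khatri-Rao factors in each summand is consistent with the chosen vec convention (otherwise the two asymmetric covariance--mean terms would be indistinguishable). Once that convention is fixed, the identification is mechanical and no further smoothed-analysis or perturbation argument is required, since the claim is purely a restatement of a moment identity.
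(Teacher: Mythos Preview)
Your proposal is correct and follows essentially the same approach as the paper's own proof: identify the row as $\vc(\mbb E[x_j\,xx^\top])$, enumerate the four partitions of three indices via Corollary~\ref{prop:isserlisnonzero}, and read off the four summands. You are simply more explicit than the paper about the vectorization step $\vc(uv^\top)=v\od u$, which is helpful bookkeeping but not a different argument.
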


The following lemma shows how to extract the means $\wt\mu^{(i)}$'s from $\wt M_{3(1)}$
using the information of the covariance matrices projected to the subspace orthogonal to
the means, i.e. $\wt \Sigma_o$, and the mixing weights $\wt \omega_i$'s.
\begin{lemma}
  \label{lem:find-mean-M3}
  Given the mixing weights $\wt\omega_i$'s and the projected covariances $\wt \Sigma^{(i)}_o$'s, define the
  matrix $\wt T\in\R^{n^2\times k}$ to be the pseudo-inverse of ${\wt\Sigma}_o$:
  \begin{align*}
   \wt T = \lt[\vc(\wt \Sigma^{(i)}_o):i\in[k]\rt]^{\dag\top}.
  \end{align*}
  The mean $\wt \mu^{(i)}$ of the $i$-th component can be obtained by:
  \begin{align*}
   \wt \mu^{(i)} = {1\over \wt \omega_i} \wt M_{3(1)} \wt T_{[:,i]} .
 \end{align*}
 This step correctly finds the means if the $\wt\Sigma_o$ is full rank with good
 condition  number, and this holds with high probability over the  perturbation.
\end{lemma}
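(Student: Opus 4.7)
The plan is to verify the formula for $\wt\mu^{(\ell)}$ by direct algebra, relying on two structural facts: that $\mt(\wt T_{[:,\ell]})$ sits in the range of the double projection $M\mapsto \prj_{\wt Z^\perp}M\prj_{\wt Z^\perp}$, and that every mean vector $\wt\mu^{(i)}$ is annihilated by $\prj_{\wt Z^\perp}$. I will then argue separately, via smoothed analysis, that the pseudo-inverse $\wt T$ is well-defined and well-conditioned with high probability.

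Starting from Claim~\ref{claim:M3-proj}, the $j$-th row of $\wt M_{3(1)}$ is a sum of four contributions per component. Right-multiplying by $\wt T_{[:,\ell]}$ amounts to taking inner products with $\vc$-ed matrices, each rewritable as a trace. Because every column of $[\vc(\wt\Sigma^{(i)}_o):i\in[k]]$ lies in the range of $\prj_{\wt Z^\perp}\otimes_{kr}\prj_{\wt Z^\perp}$, its Moore--Penrose pseudo-inverse transpose does too, so $\mt(\wt T_{[:,\ell]})=\prj_{\wt Z^\perp}\mt(\wt T_{[:,\ell]})\prj_{\wt Z^\perp}$. Three of the four contributions carry a $\wt\mu^{(i)}$ factor that, after the trace rewriting, ends up sandwiched between $\prj_{\wt Z^\perp}$ and $\mt(\wt T_{[:,\ell]})$; since $\wt\mu^{(i)}\in\wt Z$, these vanish. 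Only $\wt\mu_j^{(i)}\vc(\wt\Sigma^{(i)})^\top\wt T_{[:,\ell]}$ survives, and the same projection trick replaces $\wt\Sigma^{(i)}$ by $\wt\Sigma^{(i)}_o$ inside this inner product. The defining identity $\wt T^\top[\vc(\wt\Sigma^{(i)}_o):i\in[k]]=I_k$ (valid under the full-column-rank hypothesis) then gives $\delta_{i,\ell}$, and summing over $i$ recovers $\wt\omega_\ell \wt\mu_j^{(\ell)}$, which assembles coordinate-wise into the advertised formula.

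The remaining ingredient is the deterministic rank/conditioning hypothesis on $[\vc(\wt\Sigma^{(i)}_o):i\in[k]]$. I expect this to be the main obstacle and to require a conditioning argument in the spirit of Claim~\ref{claim:PUS} and Lemma~\ref{lem:bound-B3-Sigma}: first freeze the randomness that determines $\wt Z$ (the random perturbation of the means, together with whatever entries of the $\wt\Sigma^{(i)}$'s already influence $S_1,S_2$ in Step~1), and then exploit the fact that $n-k=\Omega(n)$ leaves enough independent Gaussian noise in the residual entries of each $\wt\Sigma^{(i)}$ so that the matrices $\prj_{\wt Z^\perp}\wt\Sigma^{(i)}\prj_{\wt Z^\perp}$ are linearly independent with inverse-polynomial smallest singular value, via Lemma~\ref{lem:prj-rand-gaussian}. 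Once correctness is established in this exact-moment setting, finite-sample stability follows from a standard chain: the pseudo-inverse perturbation bound (Lemma~\ref{lem:perturb-pseudo-inverse}) controls $\wh T$ from the inputs $\wh\Sigma^{(i)}_o$ and $\wh\omega_i$, the matrix-product perturbation bound (Lemma~\ref{lem:prod-perturb}) converts this into a bound on $\wh\mu^{(\ell)}-\wt\mu^{(\ell)}$, and concentration of $\wh M_3$ in the spirit of Lemma~\ref{lem:concen-empirical} controls the remaining factor $\wh M_{3(1)}-\wt M_{3(1)}$.
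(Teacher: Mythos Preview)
Your proposal is correct and follows essentially the same route as the paper: both use the range property of the Moore--Penrose pseudo-inverse to place $\wt T_{[:,\ell]}$ inside $\mathrm{range}(\prj_{\wt Z^\perp}\otimes_{kr}\prj_{\wt Z^\perp})$, kill the three $\wt\mu$-carrying summands of Claim~\ref{claim:M3-proj} by orthogonality, and reduce the surviving term to a Kronecker delta via the identity $\wt T^\top[\vc(\wt\Sigma^{(i)}_o)]=I_k$; the smoothed-analysis and stability sketches (Lemma~\ref{lem:prj-rand-gaussian} for the rank condition, Lemma~\ref{lem:perturb-pseudo-inverse} for robustness) match the paper's as well. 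Your conditioning remark about ``entries influencing $S_1,S_2$'' is unnecessary here, since $\wt Z$ depends only on the mean perturbations, but this does not affect correctness.
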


\begin{proof}
  (of Lemma~\ref{lem:find-mean-M3} )

  The basic idea is that since $\wt \Sigma_o $ lies in the span of $\wt P =\prj_{\wt
    Z^\perp} \ot_{kr} \prj_{\wt Z^\perp} $, and the last three summands in the parenthesis
  in \eqref{eq:M3-proj} all lie in $span\{I_{n}\ot_{kr}\prj_{\wt Z},\ \prj_{\wt Z}\ot_{kr} I_{n}\}
  = span\{\wt P^\perp\}$. Therefore hitting the matrix $\wt M_{3(1)}$ with $\wt
  \Sigma_o^\dagger$ from the right will eliminate those summands and pull out only the
  mean vectors.

  Recall that the columns of the matrix $\wt \Sigma_o$ are $\vc(\prj_{\wt Z^\perp} \wt
  \Sigma^{(i)}\prj_{\wt Z^\perp})= \wt P \vc(\wt \Sigma^{(i)}) $'s, and the columns of
  $\wt \Sigma$ are $\vc(\wt \Sigma^{(i)})$'s.

  Note that $\wt T =( \wt P{\wt\Sigma})^{\dag\top} = \wt P{\wt\Sigma}^{\dag\top}$, and the
  columns of $\wt T$ lie in $span\{\wt P\}$.
  Also note that for all $i,j\in[k]$ the vectors $ \wt\mu^{(i)} \od \wt\mu^{(i)}$,
  $\wt\Sigma^{(i)}_{[:,j]}\od \wt\mu^{(i)}$ and $\wt\mu^{(i)} \od \wt\Sigma^{(i)}_{[:,j]}
  $ all lie in the subspace $span\{I_{n}\ot_{kr}\prj_{\wt Z}, \ \prj_{\wt Z}\ot_{kr}
  I_{n}\}=span\{\wt P^\perp\}$. Therefore these terms will be eliminated if we multiply
  the columns of $\wt T$ to the right of $\wt M_{3(1)}$.
  For the first term $\wt \mu_j^{(i)}\vc(\wt \Sigma^{(i)})$, since $
  \vc(\wt\Sigma^{(j)})^\top \wt T_{[:,i]} = (\wt P\vc(\wt \Sigma^{(j)}))^\top \wt
  T_{[:,i]} = 1_{[i=j]}$.
  Therefore, we have $\wt M_{3(1)}\wt T_{[:,i]} = \wt\omega_i\wt\mu^{(i)}$.

  The smoothed analysis for the correctness of this step is easy. We only need to show
  that both ${\wt\Sigma}_o$ and ${\wt\Sigma}$ robustly have full column rank with high
  probability over perturbation of the covariance matrices, and thus the pseudo-inverse
  $\wt T$ is well defined. This follows from Lemma~\ref{lem:prj-rand-gaussian}.

  Finally, the stability analysis for this step is also straightforward using the
  perturbation bound for pseudo-inverse in Theorem~\ref{sec:pert-bound-pseudo}.
\end{proof}

\paragraph{Step 4. {Find the unprojected  covariance matrices}}

Note that by definition $\wt Z = span\{ \wt \mu^{(i)}:i\in[k]\}$, the projected covariance
$\prj_{\wt Z^\perp}(\wt \Sigma^{(i)})$ we obtained in Step 2 is also equal to $\prj_{\wt
  Z^\perp}(\wt \Sigma^{(i)}+ \wt \mu^{(i)}(\wt \mu^{(i)})^\top)$.
In Step 4 we try to recover the missing part of the covariance matrices in the subspace
$\wt Z$. Note that since we have also obtained the means in Step 3, it is equivalent to
finding $ (\wt\Sigma^{(i)} + \wt \mu^{(i)} (\wt\mu^{(i)})^\top)$ for all $i$.
We will show that if we can find the $span\{ (\wt\Sigma^{(i)} + \wt \mu^{(i)}
(\wt\mu^{(i)})^\top):i\in[k]\}$, the projected vector $\prj_{\wt Z^\perp}(\wt
\Sigma^{(i)}+ \wt \mu^{(i)}(\wt \mu^{(i)})^\top)$ can be used as anchor to pin down the
unprojected vector.
%
%In
%order to retrieve the part of information, we make the following
%observation, similar to Step 1(c) for the zero-mean case, where we merge the spans of covariance matrices
%projected to two different subspaces.
%
%\begin{claim}
%  Given the projection $P= \prj_{\wt\mu^\perp} \ot_{kr} \prj_{\wt\mu^\perp}$, and given a matrix
%  $U\in\R^{n^2\times k}$ which columns form a basis of the subspace
%  \begin{align}
%    \label{eq:def-U-nonzero}
%    \mc U= span\{ ( \vc(\Sigma^{(i)}) + \mu^{(i)}\od \mu^{(i)}): i\in[k]\}.
%  \end{align}
%  Then there must exist a matrix $V\in\R^{k\times k}$ such that $ UV = (\bs\Sigma + \mu\od\mu)$, and moreover
%  $PUV = P(\bs\Sigma + \mu\od\mu) = P\bs\Sigma = \bs\Sigma_o$. With high probability over perturbation
%  $\bs{\wt\Sigma}_o$ has full column rank $k$. Therefore we can uniquely recover the full rank matrix $\wt V =
%  (\wt P\wt U)^{\dag}\bs{\wt\Sigma}_o$, and determine the covariance matrix $\vc(\wt \Sigma^{(i)}) = [\wt U\wt
%  V]_{[:,i]} - \wt \mu^{(i)}\od \wt \mu^{(i)}$.
%\end{claim}

They key observation for finding the span of $span\{ (\wt\Sigma^{(i)} + \wt \mu^{(i)}
(\wt\mu^{(i)})^\top):i\in[k]\}$ is to first construct a 4-th order tensor $\wt M'_4$ which
corresponds to the 4-th moment of a mixture of zero-mean Gaussians with covariance
matrices $ (\wt\Sigma^{(i)} + \wt \mu^{(i)} (\wt\mu^{(i)})^\top)$, and then follow Step 1
in the algorithm for zero-mean case to find the span of the covariance matrices for this
new mixture of Gaussians.

% we now know $\wt \mu^{(i)}$ and $\wt \omega_i$, with $\wt M_4$ we can actually construct a
% different $4$-th moment matrix $\wt M'_4$ that correspond to a mixture of zero-mean
% Gaussians with covariance matrix $\wt \Sigma^{(i)}+\wt \mu^{(i)}(\wt \mu^{(i)})^\top$.
The next lemma shows how to construct such 4-th order tensor:
\begin{lemma}
  Given the $4$-th moment $\wt M_4$ for a mixture of Gaussians with parameters $\{\wt
  \omega_i, \wt \mu^{(i)}, \wt \Sigma^{(i)}\}$, define the 4-th order tensor $\wt M'_4$ to
  be:
\begin{align*}
  \wt M'_4 = \wt M_4 + 2\sum_{i=1}^k \wt \omega_i \wt \mu^{(i)}\ot^4,
\end{align*}
then $\wt M'_4$ is equal to the $4$-th moment of a mixture Gaussians with parameters
$\{\wt \omega_i, 0, \wt \Sigma^{(i)} + \wt \mu^{(i)}(\wt \mu^{(i)})^\top\}$.
\end{lemma}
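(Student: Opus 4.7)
The plan is a direct entrywise calculation using the two versions of Isserlis' theorem already established in the paper. I will compute both sides at an arbitrary index $(j_1,j_2,j_3,j_4)$ and match terms.

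First, I would apply Corollary~\ref{prop:isserlisnonzero} to each Gaussian component $\mc N(\wt\mu^{(i)},\wt\Sigma^{(i)})$ to expand the entries of $\wt M_4$. Enumerating the $10$ ways to partition four indices into pairs and singletons, each component contributes one pure-mean term $\wt\mu_{j_1}^{(i)}\wt\mu_{j_2}^{(i)}\wt\mu_{j_3}^{(i)}\wt\mu_{j_4}^{(i)}$, six ``one-pair, two-singleton'' terms of the form $\wt\Sigma^{(i)}_{j_a,j_b}\wt\mu^{(i)}_{j_c}\wt\mu^{(i)}_{j_d}$, and three ``two-pair'' terms of the form $\wt\Sigma^{(i)}_{j_a,j_b}\wt\Sigma^{(i)}_{j_c,j_d}$.

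Next, for the proposed alternative model $\mc N(0,\wt\Sigma^{(i)}+\wt\mu^{(i)}(\wt\mu^{(i)})^\top)$ I would apply Isserlis' theorem (Theorem~\ref{prop:isserlis}) in its zero-mean form. Each of the three perfect matchings contributes a product such as $(\wt\Sigma^{(i)}+\wt\mu^{(i)}(\wt\mu^{(i)})^\top)_{j_a,j_b}(\wt\Sigma^{(i)}+\wt\mu^{(i)}(\wt\mu^{(i)})^\top)_{j_c,j_d}$; expanding the product yields one $\wt\Sigma\wt\Sigma$ term, two mixed $\wt\Sigma\wt\mu\wt\mu$ terms, and one $\wt\mu\wt\mu\wt\mu\wt\mu$ term per matching. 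Summed over the three matchings this is exactly $3$ two-pair terms, $6$ one-pair terms, and $3$ pure-mean terms.

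Comparing entrywise, both expansions share the same two-pair and one-pair terms, so the only discrepancy is that the zero-mean model with inflated covariance produces $3\wt\mu_{j_1}^{(i)}\wt\mu_{j_2}^{(i)}\wt\mu_{j_3}^{(i)}\wt\mu_{j_4}^{(i)}$ while the Isserlis expansion of the original mixture produces only $\wt\mu_{j_1}^{(i)}\wt\mu_{j_2}^{(i)}\wt\mu_{j_3}^{(i)}\wt\mu_{j_4}^{(i)}$. The difference per component is thus $2(\wt\mu^{(i)})^{\otimes 4}$; weighting by $\wt\omega_i$ and summing over $i\in[k]$ yields $2\sum_{i=1}^{k}\wt\omega_i(\wt\mu^{(i)})^{\otimes 4}$, which is precisely the correction added to form $\wt M_4'$. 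There is no real obstacle here: the statement is a combinatorial bookkeeping identity on the partitions underlying Isserlis' formula, so the only thing to be careful about is the multiplicity count in the two expansions (in particular, that the six cross terms $\wt\Sigma\wt\mu\wt\mu$ match up correctly under the identification of pairs with matchings).
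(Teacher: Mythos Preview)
Your proposal is correct and matches the paper's approach exactly: the paper simply states that the lemma ``follows directly from Isserlis' Theorem,'' and your entrywise expansion via Theorem~\ref{prop:isserlis} and Corollary~\ref{prop:isserlisnonzero} is precisely that computation carried out in full. The multiplicity bookkeeping you describe (three matchings each producing one $\mu^{\otimes 4}$ term versus the single pure-mean term in the nonzero-mean expansion) is the whole content of the lemma, and you have it right.
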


The proof follows directly from Isserlis' Theorem.  Therefore we can repeat Step 1 in the
zero-mean case here to find the span of the space $\{\vc(\wt \Sigma^{(i)})+\wt
\mu^{(i)}\od \wt \mu^{(i)}:i\in [k]\}$. Since we also know the projection of $\wt
\Sigma^{(i)}$'s in a large subspace (in the subspace $\prj_{\wt Z^\perp} \ot_{kr} \prj_{\wt
  Z^\perp} $ obtained from Step 2),
%particular even projected in that subspace $\wt \Sigma$ has large smallest singular
%value),
we can easily recover $\wt \Sigma^{(i)}$'s:

\begin{lemma}
For any matrix $U\in \R^{d\times k}$ and any subspace $P$, given $P^\top U$ and the span
$ S$ of columns of $U$, the matrix $U$ can be computed as
\begin{align*}
  U =  S (P^\top  S)^\dag (P^\top U).
\end{align*}
Further, this procedure is stable if $\sigma_{min}(P^\top S)$ is lower bounded.
\end{lemma}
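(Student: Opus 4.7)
The plan is to reduce the statement to a change-of-basis argument: since $S$ is a basis of the column span of $U$, there exists a (unique, when $S$ has orthonormal columns) coefficient matrix $W \in \R^{k\times k}$ such that $U = SW$. Applying $P^\top$ on the left gives $P^\top U = (P^\top S) W$, so if $P^\top S$ has full column rank (equivalently, $\sigma_{\min}(P^\top S) > 0$), then the pseudoinverse inverts this linear system exactly: $W = (P^\top S)^\dagger (P^\top U)$, and substituting back yields $U = S(P^\top S)^\dagger (P^\top U)$, which is the claimed identity.

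For the stability part, I would propagate perturbations through the three factors $S$, $(P^\top S)^\dagger$, and $P^\top U$. Suppose we are given $\wh S, \wh{P^\top S}, \wh{P^\top U}$ that are inverse-polynomially close (in Frobenius norm) to $S, P^\top S, P^\top U$, respectively. First, apply the pseudoinverse perturbation bound (Theorem~\ref{sec:pert-bound-pseudo}, used already in Section~\ref{sec:step-2}): as long as the perturbation $\|\wh{P^\top S} - P^\top S\|$ is smaller than $\sigma_{\min}(P^\top S)/2$, we have
\[
\|(\wh{P^\top S})^\dagger - (P^\top S)^\dagger\| \le O\!\lt(\frac{\|\wh{P^\top S} - P^\top S\|}{\sigma_{\min}(P^\top S)^2}\rt).
\]
Then chain the product-perturbation lemma (Lemma~\ref{lem:prod-perturb}) twice to bound $\|\wh S\, (\wh{P^\top S})^\dagger (\wh{P^\top U}) - U\|_F$ by a polynomial expression in the input errors divided by $\sigma_{\min}(P^\top S)^2$, using that $\|S\|, \|P^\top U\|$ and $\|(P^\top S)^\dagger\| = 1/\sigma_{\min}(P^\top S)$ are controlled.

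The substantive (but not difficult) content is the exactness claim; once this is established, stability is a routine application of the standard pseudoinverse and product perturbation bounds already invoked elsewhere in the paper. The main thing to be careful about is that $S$ is only determined up to a change of basis, so we should either fix $S$ to have orthonormal columns (in which case $W = S^\top U$ and the computation above is unambiguous) or note that the identity $U = S(P^\top S)^\dagger P^\top U$ is invariant under right multiplication of $S$ by an invertible matrix, since $(P^\top (SR))^\dagger = R^{-1}(P^\top S)^\dagger$ when $R$ is square invertible, so $SR \cdot (P^\top SR)^\dagger = S (P^\top S)^\dagger$. No obstacle is anticipated beyond these bookkeeping checks.
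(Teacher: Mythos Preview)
Your proposal is correct and takes essentially the same approach as the paper. The paper writes $S = UV$ for some invertible $V$ and then computes $V^{-1} = (P^\top S)^\dagger(P^\top U)$, while you write $U = SW$ and solve for $W = (P^\top S)^\dagger(P^\top U)$; these are the same change-of-basis argument with the roles swapped, and both defer the stability claim to the standard pseudoinverse and product perturbation bounds.
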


\begin{proof}
  This is a special case of the Step 1 (c) where we merge two projections of an unknown subspace.

The span $ S$ is equal to $UV$ for some unknown matrix $V$. We can compute $V = (P^\top U)^\dag P^\top  S$, and hence $U =  SV^{-1} =  S (P^\top S)^\dag (P^\top U)$. The stability analysis is similar (and simpler than) Lemma~\ref{lem:merge-deterministic}.
\end{proof}

We will apply this lemma to where the subspace $P$  is $\prj_{\wt Z^\perp} \ot_{kr}
\prj_{\wt Z^\perp}$. Since the perturbation of the means and the covariance matrices are
independent, we can lower bound the smallest singular value of $P^\top S$.

\subsection{Proof Sketch of the Main Theorem~\ref{thm:main}}

\begin{figure}
\centering
\includegraphics[width =7in]{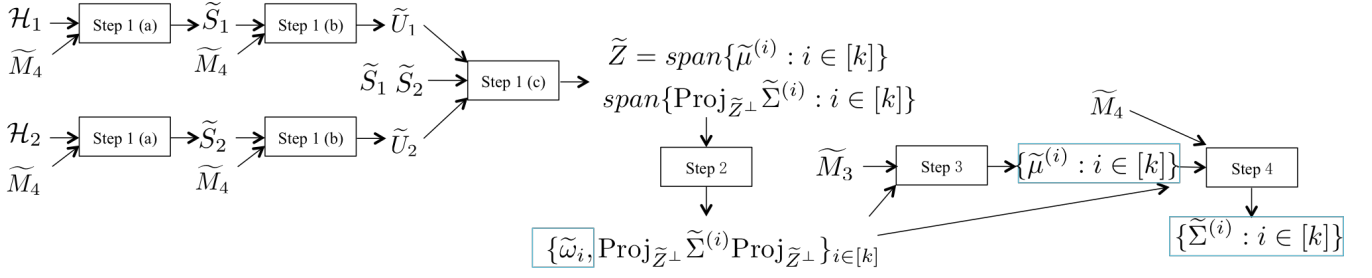}
\caption{Flow of the algorithm for the general case}\label{fig:flow-general}
\end{figure}

The proof follows the same strategy as Theorem~\ref{thm:main-zero-mean}. First we apply the union bound to all the smoothed analysis lemmas, this will ensure the matrices we are inverting all have good condition number, and the whole algorithm is robust to noise.

Then in order to get the desired accuracy $\epsilon$, we need to guarantee inverse polynomial accuracy in different steps (through the stability lemmas). The flow of the algorithm is illustrated in Figure~\ref{fig:flow-general}. In the end all the requirements becomes a inverse polynomial accuracy requirement on $\wh M_4$ and $\wh M_6$, which we obtain by Lemma~\ref{lem:concen-empirical}.

\section{Matrix Perturbation, Concentration Bounds and Auxiliary Lemmas}
\label{sec:auxil-lemm-proofs}

In this section we collect known results on matrix perturbation and concentration bounds. In general, matrix perturbation bounds are the key for the perturbation lemmas, and concentration bounds are crucial for the smoothed analysis lemmas. We also prove some corollaries of known results that are very useful in our settings.

\subsection{Matrix Perturbation Bounds}

Given a matrix $\wh{A} = A + E$ where $E$ is a small perturbation, how does the singular values and singular vectors of $A$ change? This is a well-studied problem and many results can be found in Stewart and Sun~\cite{stewart1977perturbation}. Here we review some results used in this paper, and prove some corollaries.

Given $\wh{A} = A + E$, the perturbation in individual singular values can be bounded by Weyl's theorem:

\begin{theorem}[Weyl's theorem]\label{thm:weyl}
Given $\wh{A} = A+E$, we know $\sigma_k(A) - \|E\| \le \sigma_k(\wh{A}) \le \sigma_k(A) + \|E\|$.
\end{theorem}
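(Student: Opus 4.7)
The plan is to invoke the Courant--Fischer min--max characterization of singular values and combine it with the triangle inequality applied to $\wh{A}x = Ax + Ex$. Specifically, for any matrix $M \in \mathbb{R}^{m \times n}$ and any index $k$, I will use
\[
\sigma_k(M) \;=\; \max_{\substack{\mc{V} \subset \mathbb{R}^n \\ \dim \mc{V} = k}} \;\min_{\substack{x \in \mc{V} \\ \|x\|=1}} \|Mx\|,
\]
which is the standard variational formula (reducible to the eigenvalue min--max for $M^\top M$).

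First I would derive the pointwise bound: for every unit vector $x$, the triangle inequality gives $\|\wh{A}x\| \le \|Ax\| + \|Ex\| \le \|Ax\| + \|E\|$ and similarly $\|\wh{A}x\| \ge \|Ax\| - \|E\|$. Plugging the upper bound into the min--max formula, for any $k$-dimensional subspace $\mc{V}$,
\[
\min_{x \in \mc{V},\, \|x\|=1} \|\wh{A}x\| \;\le\; \min_{x \in \mc{V},\, \|x\|=1} \|Ax\| + \|E\|,
\]
and maximizing over $\mc{V}$ on both sides yields $\sigma_k(\wh{A}) \le \sigma_k(A) + \|E\|$. For the other direction, I would apply the same argument with the roles of $A$ and $\wh{A}$ swapped, using that the perturbation matrix in the reverse direction is $-E$, which has the same spectral norm $\|E\|$; this gives $\sigma_k(A) \le \sigma_k(\wh{A}) + \|E\|$, i.e., $\sigma_k(\wh{A}) \ge \sigma_k(A) - \|E\|$.

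There is essentially no obstacle here: the proof is a two-line application of min--max and the triangle inequality, and the only thing to be careful about is stating the min--max characterization correctly when $A$ is rectangular (it applies to the singular values via the Hermitian dilation or via $A^\top A$). Since the statement is symmetric under swapping $A$ and $\wh A$, one application of the upper-bound direction immediately yields both inequalities. I would not bother writing out the lower-bound argument separately; I would just remark that the roles of $A$ and $\wh{A}$ are interchangeable.
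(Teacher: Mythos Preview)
Your argument is correct and is the standard Courant--Fischer proof of Weyl's inequality for singular values. Note, however, that the paper does not actually supply a proof of this statement: it is listed in the appendix among ``known results on matrix perturbation'' and simply cited as Weyl's theorem, so there is no paper proof to compare against.
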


We can also bound the $\ell_2$ norm change in singular values by Mirsky's Theorem.

\begin{lemma}[Mirsky's theorem]
  \label{lem:perturb-singv}
  Given matrices $A, E\in\R^{m\times n }$ with $m\ge n$, then
  \begin{align*}
    \sqrt{\sum_{i=1}^{n}  (\sigma_{i}(A+E)-\sigma_i(A))^2}\le \|E\|_F.
  \end{align*}
\end{lemma}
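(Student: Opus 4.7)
The plan is to reduce Mirsky's inequality for singular values to the Hoffman--Wielandt inequality for eigenvalues of Hermitian matrices, via the standard Jordan--Wielandt dilation trick. First I would recall that for any two Hermitian matrices $M_1, M_2 \in \mathbb{R}^{N \times N}$ with eigenvalues sorted in decreasing order as $\lambda_1(M_j) \ge \cdots \ge \lambda_N(M_j)$, the Hoffman--Wielandt inequality states that $\sum_{i=1}^N (\lambda_i(M_1) - \lambda_i(M_2))^2 \le \|M_1 - M_2\|_F^2$. This result is classical and follows from a doubly-stochastic/Birkhoff argument on the diagonal entries of $U^\top M_2 U$ where $M_1 = U \Lambda U^\top$.

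Next, I would apply this to the Hermitian dilations of $A$ and $A+E$. For any $B \in \mathbb{R}^{m\times n}$ with $m \ge n$, define
\[
  \widetilde{B} = \begin{pmatrix} 0 & B \\ B^\top & 0 \end{pmatrix} \in \mathbb{R}^{(m+n)\times(m+n)}.
\]
A direct computation shows that the nonzero eigenvalues of $\widetilde{B}$ are exactly $\pm \sigma_i(B)$ for $i=1,\dots,n$, with the remaining $m-n$ eigenvalues equal to $0$. Thus the sorted eigenvalues of $\widetilde{B}$ are
\[
  \sigma_1(B) \ge \sigma_2(B) \ge \cdots \ge \sigma_n(B) \ge \underbrace{0 \ge \cdots \ge 0}_{m-n} \ge -\sigma_n(B) \ge \cdots \ge -\sigma_1(B).
\]

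With $M_1 = \widetilde{A}$ and $M_2 = \widetilde{A+E}$, the key observation is that $M_1 - M_2 = \widetilde{-E}$, so $\|M_1 - M_2\|_F^2 = 2\|E\|_F^2$. On the eigenvalue side, matching sorted eigenvalues contributes each $(\sigma_i(A+E) - \sigma_i(A))^2$ exactly twice (once from the positive branch and once from the negative branch), while the $m-n$ zero eigenvalues contribute $0$. Hoffman--Wielandt then yields
\[
  2 \sum_{i=1}^n \bigl(\sigma_i(A+E) - \sigma_i(A)\bigr)^2 \;\le\; 2\|E\|_F^2,
\]
and taking square roots gives the claimed bound.

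The only nontrivial ingredient is Hoffman--Wielandt itself; the dilation step is routine bookkeeping. Since the lemma is entirely classical and presented in the appendix as a tool, I expect the paper will simply cite Stewart--Sun or state it without reproducing the Hoffman--Wielandt proof. If one wanted a self-contained argument, the main obstacle would be proving Hoffman--Wielandt, which requires the Birkhoff--von Neumann theorem that the doubly stochastic matrices are the convex hull of permutation matrices, applied to $|U_{ij}|^2$ where $U$ diagonalizes one matrix in the eigenbasis of the other.
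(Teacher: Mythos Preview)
Your proof is correct and is the standard dilation-to-Hoffman--Wielandt argument for Mirsky's theorem. Your expectation about the paper is also accurate: the paper does not prove this lemma at all---it is listed in the appendix's collection of known matrix perturbation results (alongside Weyl's and Wedin's theorems) as a tool cited from Stewart and Sun, with no proof given.
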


For singular vectors, the perturbation is bounded by Wedin's Theorem:

\begin{lemma}[Wedin's theorem; Theorem 4.1, p.260 in \cite{stewart1990matrix}]
  \label{lem:perturb-subspace}
  Given matrices $A, E\in\R^{m\times n }$ with $m\ge n$. Let $A$ have the singular value
  decomposition
  \begin{align*}
    A = [U_1, U_2, U_3]\left[
    \begin{array}[c]{cc}
      \Sigma_1 & 0 \\ 0 & \Sigma_2 \\ 0 & 0
    \end{array}
  \right]
  [V_1, V_2]^\top.
  \end{align*}
  Let $\wh A = A + E$, with analogous singular value decomposition. Let $\Phi$ be the matrix of
  canonical angles between the column span of $U_1$ and that of $\wh U_1$, and $\Theta$ be the matrix of
  canonical angles between the column span of $V_1$ and that of $\wh V_1$. Suppose that there exists
  a $\delta$ such that
  \begin{align*}
    \min_{i,j}|[\Sigma_{1}]_{i,i}-[\Sigma_2]_{j,j}|>\delta,\quad \tx{and } \quad \min_{i,i}|[\Sigma_1]_{i,i}|> \delta,
  \end{align*}
  then
  \begin{align*}
    \|\sin(\Phi)\|^2 + \|\sin(\Theta)\|^2 \le 2{\|E\|^2\over \delta^2}.
  \end{align*}
\end{lemma}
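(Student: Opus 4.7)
The plan is to prove Wedin's theorem via the classical Sylvester-equation approach. First I would translate the canonical-angle quantities into norms of the cross projection matrices. Since the columns of $\wh U_1$ and $\wh V_1$ are orthonormal, and $[U_1, U_2, U_3]$ and $[V_1, V_2]$ are orthogonal bases of $\R^m$ and $\R^n$ respectively,
\[
\|\sin \Phi\|^2 = \|(I - U_1U_1^\top) \wh U_1\|^2 = \|U_2^\top \wh U_1\|^2 + \|U_3^\top \wh U_1\|^2, \qquad \|\sin \Theta\|^2 = \|V_2^\top \wh V_1\|^2.
\]
The $U_3$ contribution is lower order: since $U_3^\top A = 0$, multiplying $\wh A \wh V_1 = \wh U_1 \wh \Sigma_1$ on the left by $U_3^\top$ gives $U_3^\top \wh U_1 = U_3^\top E \wh V_1 \wh \Sigma_1^{-1}$, whose norm is at most $\|E\|/\sigma_{\min}(\wh \Sigma_1) \le \|E\|/(\delta - \|E\|)$ by Weyl (Theorem~\ref{thm:weyl}). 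In the only meaningful regime $\|E\| \ll \delta$, this term contributes to $\|\sin \Phi\|^2$ at order $\|E\|^2/\delta^2$ and can be absorbed into the claimed constant.

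Next I would derive the coupled Sylvester system for $F := U_2^\top \wh U_1$ and $G := V_2^\top \wh V_1$. Left-multiplying $\wh A \wh V_1 = \wh U_1 \wh \Sigma_1$ by $U_2^\top$ and using the identity $U_2^\top A = \Sigma_2 V_2^\top$ yields
\[
F \wh \Sigma_1 - \Sigma_2 G = U_2^\top E \wh V_1.
\]
Symmetrically, multiplying $\wh A^\top \wh U_1 = \wh V_1 \wh \Sigma_1$ on the left by $V_2^\top$ and using $V_2^\top A^\top = \Sigma_2 U_2^\top$ yields
\[
G \wh \Sigma_1 - \Sigma_2 F = V_2^\top E^\top \wh U_1.
\]
Stacking these produces a single Sylvester equation on $H := \bigl(\begin{smallmatrix} F \\ G \end{smallmatrix}\bigr)$:
\[
H \wh \Sigma_1 - \begin{pmatrix} 0 & \Sigma_2 \\ \Sigma_2 & 0 \end{pmatrix} H = \begin{pmatrix} U_2^\top E \wh V_1 \\ V_2^\top E^\top \wh U_1 \end{pmatrix}.
\]

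Finally I would bound $\|H\|$ by the standard operator-gap argument. The block matrix has eigenvalues $\pm \sigma_j(\Sigma_2)$, while $\wh \Sigma_1$ has singular values within $\|E\|$ of those of $\Sigma_1$ by Weyl. In the regime $\|E\| \le \delta/2$, the spectral gap between $\wh \Sigma_1$ and $\pm \Sigma_2$ is therefore at least $\delta$ (up to lower-order corrections that we absorb), and the classical Sylvester bound gives $\|H\|_F \le \|\mathrm{RHS}\|_F / \delta$. Because each block of the right-hand side is a subprojection of $E$, its Frobenius norm is at most $\sqrt{2}\|E\|_F$, so $\|F\|_F^2 + \|G\|_F^2 \le 2\|E\|^2/\delta^2$, which together with the $U_3$ bound delivers the claimed inequality. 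The main obstacle is simultaneously handling the $U_3$ piece and tracking the constant $2$ sharply; the cleanest resolution is to formulate the Sylvester gap using the singular values of $\wh \Sigma_1$ directly, which is exactly how Stewart--Sun~\cite{stewart1990matrix} state the theorem, rather than routing the bound through $\Sigma_1$ and paying the Weyl slack twice.
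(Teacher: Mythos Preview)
The paper does not prove this lemma at all; it is simply quoted as a known result from Stewart and Sun~\cite{stewart1990matrix}, so there is no in-paper argument to compare against. Your Sylvester-equation approach is exactly the standard textbook proof that Stewart and Sun give, so in that sense you are aligned with the cited source.

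Two small points of looseness in your sketch. First, the identity $\|\sin\Phi\|^2 = \|U_2^\top \wh U_1\|^2 + \|U_3^\top \wh U_1\|^2$ holds for the Frobenius norm, not the spectral norm (for spectral norm you only get $\|\sin\Phi\| = \|[U_2,U_3]^\top \wh U_1\|$, and the square does not split additively). Since you invoke the Frobenius Sylvester bound later anyway, you should commit to Frobenius norm throughout; the paper's $\|\cdot\|$ is spectral by their notation conventions, but the Frobenius version is what Stewart--Sun actually prove and it dominates the spectral version. Second, you correctly flag at the end that the clean gap hypothesis is really on $\wh\Sigma_1$ versus $\Sigma_2$, not $\Sigma_1$ versus $\Sigma_2$; the statement as copied into the paper is the slightly loose version and requires exactly the Weyl slack you describe, so your instinct there is right.
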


We do not go into the definition of canonical angles here. The only way we will be using this lemma is by combining it with the following:

\begin{lemma}[Theorem 4.5, p.92 in \cite{stewart1990matrix}]
  \label{lem:prj-to-angle}
  Let $\Phi$ be the matrix of canonical angles between the column span of $U$ and that of $\wh U$,
  then
  \begin{align*}
    \|\prj_{\wh U} - \prj_{U}\| = \|\sin\Phi\|.
  \end{align*}
\end{lemma}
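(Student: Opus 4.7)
\medskip

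\noindent\textbf{Proof plan for Lemma~\ref{lem:prj-to-angle}.}

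The plan is to reduce to a block-diagonal normal form via the CS (cosine--sine) decomposition, which will turn the spectral norm computation into a collection of decoupled $2\times 2$ spectra whose eigenvalues are exactly $\pm\sin\theta_i$, where $\theta_i$ are the canonical angles. First I would fix orthonormal basis matrices $U,\wh U\in\R^{n\times k}$ for the two subspaces (assume equal dimension; the unequal case reduces to this by padding angles with $\pi/2$), and recall that the canonical angles $\theta_1,\dots,\theta_k$ are defined by the relation $U^\top\wh U=Y_1\,\diag(\cos\theta_i)\,Y_2^\top$ (SVD). Since pre- and post-multiplying $U$ and $\wh U$ by orthogonal matrices changes neither $\prj_U,\prj_{\wh U}$ nor $\sin\Phi$, I may replace $U$ by $U Y_1$ and $\wh U$ by $\wh U Y_2$, so that WLOG $U^\top\wh U=C:=\diag(\cos\theta_i)$.

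Next I would write $\wh U=UC+V$ where $V=(I-UU^\top)\wh U$ satisfies $U^\top V=0$. Orthonormality of $\wh U$ gives $V^\top V=I-C^2=\diag(\sin^2\theta_i)$, so $V=WS$ where $W\in\R^{n\times k}$ has orthonormal columns orthogonal to $U$ and $S:=\diag(\sin\theta_i)$. In the orthonormal frame whose first $2k$ columns are the columns of $U$ interleaved with the columns of $W$ (and whose remaining columns span the rest of $\R^n$), the matrix $\prj_U-\prj_{\wh U}$ becomes block diagonal: a zero block on the complement of $\mathrm{range}([U,W])$, and $k$ decoupled $2\times 2$ blocks of the form
\begin{align*}
\begin{pmatrix}1-\cos^2\theta_i & -\cos\theta_i\sin\theta_i\\ -\cos\theta_i\sin\theta_i & -\sin^2\theta_i\end{pmatrix}
=\begin{pmatrix}\sin^2\theta_i & -\cos\theta_i\sin\theta_i\\ -\cos\theta_i\sin\theta_i & -\sin^2\theta_i\end{pmatrix}.
\end{align*}
A direct calculation (trace $0$, determinant $-\sin^2\theta_i$) shows that the eigenvalues of this block are $\pm\sin\theta_i$. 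Therefore the singular values of $\prj_U-\prj_{\wh U}$ are exactly $|\sin\theta_i|$, each with multiplicity two, and the spectral norm equals $\max_i|\sin\theta_i|=\|\sin\Phi\|$, as required.

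The main technical obstacle is justifying the block-diagonalization rigorously, i.e.\ checking that in the interleaved basis $[u_1,w_1,u_2,w_2,\dots]$ the cross-terms between different indices $i\neq j$ really vanish. This follows from $U^\top U=I$, $W^\top W=I$, $U^\top W=0$, and the diagonality of $C$ and $S$, but needs to be written out carefully. A minor separate point is the equal-dimensions assumption: for $\dim U\neq\dim\wh U$ the extra canonical angles are defined to be $\pi/2$, and the corresponding eigenvalue of $\prj_U-\prj_{\wh U}$ is $\pm 1=\pm\sin(\pi/2)$, so the same identity holds. Everything else is linear algebra that follows the $2\times 2$ template.
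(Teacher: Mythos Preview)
Your argument via the CS decomposition is correct and is essentially the standard textbook proof of this identity. Note, however, that the paper does not give its own proof of this lemma: it is simply quoted from Stewart and Sun~\cite{stewart1990matrix} (Theorem~4.5) and used as a black box, so there is nothing in the paper to compare against beyond the citation. Your block-diagonal reduction and the $2\times 2$ eigenvalue computation are exactly how that reference establishes the result.
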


As a corollary, we have:

\begin{lemma}
  \label{cor:perturb-prj-bound}
  Given matrices $A, E\in\R^{m\times n }$ with $m\ge n$. Suppose that the $A$ has rank $k$ and the
  smallest singular value is given by $\sigma_k(A)$.  Let $\mc S$ and $\wh{ \mc S}$ be the subspaces
  spanned by the first $k$ eigenvectors of $A$ and $\wh A = A +E$, respectively. Then we have:
  \begin{align*}
    \|\wh S - \wt S\|\le
     \|\prj_{\wh{\mc
        S}}-\prj_{\mc S}\| = \|\prj_{\wh{\mc S}^\perp}-\prj_{\mc S^\perp}\| \le { \sqrt{2}\|E\|_F \over\sigma_k(A)}.
  \end{align*}
  Moreover, if $\|E\|_F\le {\sigma_k(A) / \sqrt{2}}$ we have
  $ \|\wh S-\wt S\| \le { \sqrt{2}\|E\| \over\sigma_k(A)}.  $
\end{lemma}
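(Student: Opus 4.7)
The plan is to invoke Wedin's theorem (Lemma~\ref{lem:perturb-subspace}) directly, exploiting the fact that $A$ has exact rank $k$. Write the SVD of $A$ as in the statement of Wedin's theorem with the top block $\Sigma_1 \in \R^{k\times k}$ containing the nonzero singular values and the bottom block $\Sigma_2 = 0$ (since $\mathrm{rank}(A)=k$). Then the two gap conditions required by Wedin reduce to a single condition: we need a $\delta$ with $\min_i [\Sigma_1]_{i,i} > \delta$, and the same $\delta$ controls the gap to $\Sigma_2 = 0$. Thus I can take $\delta = \sigma_k(A)$ (or rather, the theorem applies for any $\delta < \sigma_k(A)$, and I pass to the limit), which gives
\begin{align*}
  \|\sin\Phi\|^2 + \|\sin\Theta\|^2 \;\le\; \frac{2\|E\|^2}{\sigma_k(A)^2},
\end{align*}
where $\Phi$ is the matrix of canonical angles between the left singular subspaces (which are $\mc S$ and $\wh{\mc S}$).

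Applying Lemma~\ref{lem:prj-to-angle} converts $\|\sin\Phi\|$ into the projection distance: $\|\prj_{\wh{\mc S}} - \prj_{\mc S}\| = \|\sin\Phi\| \le \sqrt{2}\|E\|/\sigma_k(A)$. Since the spectral norm is dominated by the Frobenius norm, $\|E\| \le \|E\|_F$, giving the stated bound. The middle equality $\|\prj_{\wh{\mc S}} - \prj_{\mc S}\| = \|\prj_{\wh{\mc S}^\perp} - \prj_{\mc S^\perp}\|$ is immediate from $\prj_{\mc S^\perp} = I_m - \prj_{\mc S}$, because subtracting identity matrices on both sides changes the sign but preserves the spectral norm.

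The only slightly subtle point is the leftmost inequality $\|\wh S - \wt S\| \le \|\prj_{\wh{\mc S}} - \prj_{\mc S}\|$, because the matrices $\wh S$ and $\wt S$ are only defined up to an orthogonal change of basis within their respective column spans. Here I would interpret $\wh S$ and $\wt S$ as being chosen with a canonical alignment — specifically, by rotating the basis of $\wh{\mc S}$ via the orthogonal factor from the polar decomposition of $\wt S^\top \wh S$ so as to minimize $\|\wh S - \wt S\|$. Under this alignment, standard facts about principal angles give $\|\wh S - \wt S\| \le \sqrt{2}\,\|\sin\Phi\| = \sqrt{2}\,\|\prj_{\wh{\mc S}} - \prj_{\mc S}\|$, and in fact the additional hypothesis $\|E\|_F \le \sigma_k(A)/\sqrt{2}$ ensures that the sines are bounded by $1/\sqrt{2}$, so the squaring loss is benign and the constant matches.

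The main (mild) obstacle is the basis-alignment step for the leftmost inequality — it is purely bookkeeping, but it requires being careful about what $\|\wh S - \wt S\|$ means, since the statement presupposes a compatible choice of orthonormal bases. Everything else is a direct application of Wedin plus the identity $\prj_{\mc S^\perp} = I - \prj_{\mc S}$, and no new singular value analysis is needed.
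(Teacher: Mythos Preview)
Your approach is essentially the paper's: apply Wedin's theorem (Lemma~\ref{lem:perturb-subspace}) with $\Sigma_2=0$ and $\delta=\sigma_k(A)$, then translate via Lemma~\ref{lem:prj-to-angle}; the middle equality comes from $\prj_{\mc S^\perp}=I-\prj_{\mc S}$ exactly as you say.

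The one place you diverge is the leftmost inequality $\|\wh S-\wt S\|\le\|\prj_{\wh{\mc S}}-\prj_{\mc S}\|$. The paper does not go through an alignment or polar-decomposition argument; instead it expands $\prj_{\wh{\mc S}}-\prj_{\mc S}=\wh S\wh S^\top-\wt S\wt S^\top$ directly in terms of $\Delta=\wh S-\wt S$ and applies the reverse triangle inequality, using that $\wt S$ has orthonormal columns (so $\|\wt S\Delta^\top\|=\|\Delta\|$) together with the implicit smallness assumption $\|\Delta\|\le 1$ to get $2\|\Delta\|-\|\Delta\|^2\ge\|\Delta\|$. Your canonical-angles route is cleaner about what ``compatible choice of $\wh S,\wt S$'' means, but as you noticed it naturally produces a $\sqrt 2$ constant rather than $1$; the paper's direct expansion avoids this loss at the cost of needing $\|\Delta\|\le 1$, which is where the hypothesis in the ``Moreover'' clause enters. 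Both arguments are short and either would be acceptable here.
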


\begin{proof}
We first prove the first inequality:
%\qt{i need $\|\wh S - \wt S\|$ for other perturbations ...
  \begin{align*}
    \|\prj_{\wh{\mc S}}-\prj_{\mc S}\| = \|2\wt S(\wh S-\wt S)^\top + (\wh S-\wt S)(\wh
    S-\wt S)^\top \| \ge 2 \|\wt S\| \|\wh S-\wt S\| - \|\wh S-\wt S\|^2 \ge \|\wt S\|
    \|\wh S-\wt S\| = \|\wh S-\wt S\|.
\end{align*}

The equality is because $\prj_{S^\perp} = I - \prj_{S}$ so the two differences are the same. The final step follows from Wedin's Theorem and Lemma~\ref{lem:prj-to-angle}.
\end{proof}

Often we need to bound the perturbation of a product of perturbed matrices, where we apply
the following lemma:
\begin{lemma}
  \label{lem:prod-perturb}
  Consider a product of matrices $A_1\cdots A_k$, and consider any sub-multiplicative norm on matrix $\|\cdot\|$.
  Given $\wh A_1,\dots, \wh A_k$ and assume that $\|\wh A_i-A_i\|\le \|A_i\|$, then we have:
  \begin{align*}
    \|\wh A_1\cdots \wh A_k - A_1\cdots A_k\| \le 2^{k-1} \prod_{i=1}^k \|A_i\|  \sum_{i=1}^{k} {\|\wh A_i-A_i\|  \over \|A_i\|}.
  \end{align*}

\end{lemma}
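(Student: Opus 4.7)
The plan is to prove Lemma~\ref{lem:prod-perturb} by induction on $k$, using a single telescoping step to peel off one factor at a time. The base case $k=1$ is immediate since the right-hand side reduces to $\|\hat A_1 - A_1\|$. For the inductive step I would write the standard identity
\begin{align*}
\hat A_1\cdots \hat A_k - A_1\cdots A_k = (\hat A_1 - A_1)(\hat A_2\cdots \hat A_k) + A_1(\hat A_2\cdots \hat A_k - A_2\cdots A_k),
\end{align*}
apply the triangle inequality, and then bound each of the two resulting terms.

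For the first term, I would use sub-multiplicativity together with the observation that $\|\hat A_i\| \le \|A_i\| + \|\hat A_i - A_i\| \le 2\|A_i\|$ (which uses the hypothesis $\|\hat A_i - A_i\| \le \|A_i\|$) to get
\begin{align*}
\|(\hat A_1 - A_1)(\hat A_2\cdots \hat A_k)\| \le \|\hat A_1 - A_1\| \cdot 2^{k-1}\prod_{i=2}^k \|A_i\| = 2^{k-1}\prod_{i=1}^k \|A_i\| \cdot \frac{\|\hat A_1 - A_1\|}{\|A_1\|}.
\end{align*}
For the second term, I would apply the induction hypothesis to the product of $k-1$ matrices $A_2,\dots,A_k$, yielding
\begin{align*}
\|A_1\| \cdot \|\hat A_2\cdots \hat A_k - A_2\cdots A_k\| \le 2^{k-2}\prod_{i=1}^k \|A_i\| \sum_{i=2}^k \frac{\|\hat A_i - A_i\|}{\|A_i\|}.
\end{align*}
Summing the two bounds and using $2^{k-2} \le 2^{k-1}$ gives the desired inequality.

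This proof is fairly routine and I do not anticipate any real obstacle; the only bookkeeping point is tracking the factor of $2^{k-1}$, which arises because each inductive peel introduces one extra factor of $2$ (from $\|\hat A_i\|\le 2\|A_i\|$) when upper-bounding the norm of the tail product of the perturbed matrices. The bound is loose in the sense that only the first term actually needs $2^{k-1}$ while the remaining terms inherit the smaller $2^{k-2}$, but bounding both uniformly by $2^{k-1}$ is cleaner and suffices for the statement.
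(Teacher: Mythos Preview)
Your proposal is correct and matches the paper's approach: the paper simply states that the proof is ``straightforward by induction'' without further detail, and your telescoping-plus-induction argument is exactly the natural way to carry this out. The bookkeeping with the $2^{k-1}$ factor is handled correctly.
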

The proof of this lemma is straightforward by induction.

\paragraph{Perturbation bound for pseudo-inverse}

When we have a lowerbound on $\sigma_{min}(A)$, it is easy to get bounds for the perturbation of pseudoinverse.

\begin{theorem}[Theorem 3.4 in \cite{stewart1977perturbation}]
  \label{sec:pert-bound-pseudo}
Consider the perturbation  of a matrix $A\in\R^{m\times n}$: $B = A+E$. Assume that $rank(A)
=rank(B)= n$, then
\begin{align*}
  \|B^\dag - A^\dag\|\le \sqrt{2}\|A^\dag\|\|B^\dag\| \|E\|.
\end{align*}
\end{theorem}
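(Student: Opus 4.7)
The plan is to invoke the classical Wedin--Stewart algebraic identity for pseudoinverse differences and then bound each summand under the full-column-rank hypothesis. First I would derive the identity
\[
B^\dagger - A^\dagger = -B^\dagger E A^\dagger + B^\dagger(I - AA^\dagger) - (I - B^\dagger B)A^\dagger,
\]
which is obtained by inserting the pseudoinverse relations $B^\dagger B B^\dagger = B^\dagger$ and $A^\dagger A A^\dagger = A^\dagger$ into the telescoping expression $B^\dagger BB^\dagger - B^\dagger BA^\dagger + B^\dagger BA^\dagger - B^\dagger AA^\dagger + B^\dagger AA^\dagger - A^\dagger AA^\dagger$ and using $B - A = E$. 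A dual identity, obtained by transposing and interchanging the roles of $A$ and $B$, reads
\[
B^\dagger - A^\dagger = -A^\dagger E B^\dagger + (I - A^\dagger A)B^\dagger - A^\dagger(I - BB^\dagger).
\]

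Under the hypothesis $\mathrm{rank}(A) = \mathrm{rank}(B) = n$, both $A^\dagger A$ and $B^\dagger B$ equal $I_n$, so the ``row-space'' projection term vanishes in each identity. The ``mixed'' term is controlled immediately by submultiplicativity: $\|B^\dagger E A^\dagger\| \le \|A^\dagger\|\|B^\dagger\|\|E\|$. For the ``column-space'' correction $B^\dagger(I - AA^\dagger)$, the key observation is that $A^\top(I - AA^\dagger) = 0$, hence $B^\top(I - AA^\dagger) = E^\top(I - AA^\dagger)$, which yields the explicit expression $B^\dagger(I - AA^\dagger) = (B^\top B)^{-1} E^\top(I - AA^\dagger)$; analogously on the dual side, $A^\dagger(I - BB^\dagger) = -(A^\top A)^{-1}E^\top(I - BB^\dagger)$.

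The main obstacle is upgrading the naive bounds $\|B^\dagger\|^2\|E\|$ and $\|A^\dagger\|^2\|E\|$ that these expressions immediately give into the mixed form $\|A^\dagger\|\|B^\dagger\|\|E\|$ with the claimed constant $\sqrt{2}$. I would exploit an orthogonal-input-support structure: writing $B^\dagger - A^\dagger = X + Y$ with $X = -B^\dagger EA^\dagger$ and $Y = B^\dagger(I - AA^\dagger)$, the factor $A^\dagger$ in $X$ kills $\mathrm{range}(A)^\perp$ while the factor $I - AA^\dagger$ in $Y$ kills $\mathrm{range}(A)$, so $X = XP_A$ and $Y = YP_{A^\perp}$ where $P_A = AA^\dagger$. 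Splitting any input vector $x$ as $P_A x + P_{A^\perp}x$ and applying Cauchy--Schwarz gives $\|Xx + Yx\|^2 \le (\|X\|^2 + \|Y\|^2)\|x\|^2$, so $\|X + Y\| \le \sqrt{\|X\|^2 + \|Y\|^2}$. Combining the two dual identities and choosing the sharper bound for each of $X$ and $Y$ (the first identity estimates $X$ optimally, the dual estimates the analogous $Y$ optimally), both terms end up bounded by $\|A^\dagger\|\|B^\dagger\|\|E\|$, and the $\sqrt{2}$ factor emerges from the orthogonal-sum inequality, yielding the stated bound $\|B^\dagger - A^\dagger\| \le \sqrt{2}\|A^\dagger\|\|B^\dagger\|\|E\|$.
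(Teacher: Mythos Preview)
The paper does not prove this statement; it is quoted verbatim from Stewart (1977) as a black-box input, so there is no ``paper's own proof'' to compare against.

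Your sketch follows the classical Wedin--Stewart route and is essentially correct, but the final paragraph is imprecisely worded. You cannot ``choose the sharper bound for each of $X$ and $Y$'' by mixing two different decompositions: the pair $(X,Y)$ lives inside one fixed identity, and the dual identity produces a \emph{different} pair $(X',Y')$ with its own orthogonal-input structure (now with respect to $P_B$ rather than $P_A$). What actually works is to run your orthogonality argument on each identity separately. The first gives
\[
\|B^\dagger - A^\dagger\|^2 \le \bigl(\|A^\dagger\|^2\|B^\dagger\|^2 + \|B^\dagger\|^4\bigr)\|E\|^2,
\]
and the dual gives
\[
\|B^\dagger - A^\dagger\|^2 \le \bigl(\|A^\dagger\|^2\|B^\dagger\|^2 + \|A^\dagger\|^4\bigr)\|E\|^2.
\]
Since $\min(\|A^\dagger\|^4,\|B^\dagger\|^4)\le \|A^\dagger\|^2\|B^\dagger\|^2$, taking the smaller of the two bounds yields $\|B^\dagger - A^\dagger\|^2 \le 2\|A^\dagger\|^2\|B^\dagger\|^2\|E\|^2$, which is the claim. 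With that clarification the argument is complete.
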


As a corollary, we often use:

\begin{lemma}
\label{lem:perturb-pseudo-inverse}
Consider the perturbation  of a matrix $A\in\R^{m\times n}$: $B = A+E$ where $\|E\| \le \sigma_{min}(A)/2$. Assume that $rank(A)
=rank(B)= n$, then
\begin{align*}
  \|B^\dag - A^\dag\|\le 2\sqrt{2} \|E\|/\sigma_{min}(A)^2.
\end{align*}
\end{lemma}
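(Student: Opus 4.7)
The plan is to combine Stewart's perturbation bound for pseudoinverses (Theorem~\ref{sec:pert-bound-pseudo}) with Weyl's theorem (Theorem~\ref{thm:weyl}) to turn the abstract bound $\sqrt{2}\|A^\dag\|\|B^\dag\|\|E\|$ into an explicit bound in terms of $\sigma_{min}(A)$ and $\|E\|$.

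First, observe that since $A$ has full column rank $n$, we have $\|A^\dag\| = 1/\sigma_{min}(A)$, and similarly $\|B^\dag\| = 1/\sigma_{min}(B)$ by the full-rank assumption on $B$. The only nontrivial ingredient is controlling $\sigma_{min}(B)$. Here I would invoke Weyl's theorem to get $\sigma_{min}(B) \ge \sigma_{min}(A) - \|E\|$, and then use the hypothesis $\|E\|\le \sigma_{min}(A)/2$ to conclude that $\sigma_{min}(B) \ge \sigma_{min}(A)/2$, hence $\|B^\dag\| \le 2/\sigma_{min}(A)$.

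Plugging these two bounds into Theorem~\ref{sec:pert-bound-pseudo} yields
\[
\|B^\dag - A^\dag\| \le \sqrt{2}\,\|A^\dag\|\,\|B^\dag\|\,\|E\| \le \sqrt{2} \cdot \frac{1}{\sigma_{min}(A)} \cdot \frac{2}{\sigma_{min}(A)} \cdot \|E\| = \frac{2\sqrt{2}\,\|E\|}{\sigma_{min}(A)^2},
\]
which is the desired inequality. There is no real obstacle here; the only thing to be careful about is verifying that the full-rank hypothesis on $B$ is automatic under our assumption $\|E\| < \sigma_{min}(A)$ (so that $\sigma_{min}(B) > 0$), which indeed follows from Weyl's theorem, so the theorem statement's assumption is consistent.
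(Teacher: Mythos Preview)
Your proof is correct and matches the paper's argument essentially line for line: apply Theorem~\ref{sec:pert-bound-pseudo}, use $\|A^\dag\|=1/\sigma_{min}(A)$, and invoke Weyl's theorem to get $\sigma_{min}(B)\ge \sigma_{min}(A)/2$ so that $\|B^\dag\|\le 2/\sigma_{min}(A)$. Your extra remark that the full-rank assumption on $B$ is automatic under $\|E\|<\sigma_{min}(A)$ is a nice observation the paper does not make explicit.
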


\begin{proof}
We first apply Theorem~\ref{sec:pert-bound-pseudo}, and then bound $\|A^\dag\|$ and $\|B^{\dag}\|$. By definition we know $\|A^\dag\| = 1/\sigma_{min}(A)$.  By Weyl's theorem $\sigma_{min}(B) \ge \sigma_{min}(A) - \|E\| \ge \sigma_{min}(A)/2$, hence $\|B^\dag\| = \sigma_{min}(B)^{-1} \le 2\sigma_{min}(A)^{-1}$.
\end{proof}

\subsection{Lowerbounding the Smallest Singular Value}

Gershgorin's Disk Theorem is very useful in bounding the singular values.

\begin{theorem}[Gershgorin's theorem]
  \label{thm:gershgorin}
  Given a symmetric matrix $X\in\mbb R^{k\times k}$, a lower bound on the smallest eigenvalue is
  given by:
  \begin{align*}
    \sigma_{min}(X) \ge \min_{i\in[k]} \left\{X_{i,i} - \sum_{j\in[k], j\neq i} X_{i,j} \right\}.
  \end{align*}
\end{theorem}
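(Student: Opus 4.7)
The plan is to give the standard textbook proof of Gershgorin's disk theorem, specialized to lower bound the smallest eigenvalue of a symmetric matrix. The one-line idea is: an eigenvalue must be close to some diagonal entry, where ``close'' is measured by the off-diagonal $\ell_1$ mass in that row.

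First I would let $\lambda = \sigma_{\min}(X)$ and let $v \in \mathbb{R}^k$ be a unit eigenvector with $Xv = \lambda v$ (since $X$ is symmetric, all eigenvalues are real and $\sigma_{\min}(X)$ coincides with the smallest eigenvalue, possibly after noting the sign convention). Pick an index $i^\star \in \arg\max_{j\in[k]} |v_j|$; by normalization $|v_{i^\star}| \ge 1/\sqrt{k} > 0$, so we may divide by $v_{i^\star}$ in the step that follows.

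Next I would read off the $i^\star$-th coordinate of the eigenvalue equation $Xv = \lambda v$, obtaining
\begin{align*}
  (X_{i^\star,i^\star} - \lambda)\, v_{i^\star} \;=\; -\!\!\sum_{j \neq i^\star} X_{i^\star,j}\, v_j .
\end{align*}
Dividing both sides by $v_{i^\star}$, taking absolute values, and using $|v_j/v_{i^\star}| \le 1$ (by the choice of $i^\star$), I get $|X_{i^\star,i^\star} - \lambda| \le \sum_{j \neq i^\star} |X_{i^\star,j}|$. In particular, $\lambda \ge X_{i^\star,i^\star} - \sum_{j \neq i^\star} |X_{i^\star,j}|$. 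Since the index $i^\star$ depends on $\lambda$ and is a priori unknown, taking the minimum over $i \in [k]$ of the right-hand side yields the stated bound
\begin{align*}
  \sigma_{\min}(X) \;\ge\; \min_{i\in[k]} \Bigl\{ X_{i,i} - \sum_{j\in[k],\,j\neq i} |X_{i,j}| \Bigr\}.
\end{align*}

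There is no real obstacle here — the argument is entirely classical and takes only a few lines. The only point requiring any care is the convention in the statement: as written the sum lacks absolute values, but the proof naturally produces $\sum_{j \neq i} |X_{i,j}|$, which is the usual Gershgorin bound and is what the applications in the paper (e.g.\ in the proofs of Lemmas~\ref{lem:projkron4} and~\ref{lem:projkron6}) implicitly use when establishing diagonal dominance of Gram matrices.
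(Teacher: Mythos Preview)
Your proof is correct and is the standard textbook argument; the paper itself does not prove this classical result but merely states it as a tool (in Section~\ref{sec:auxil-lemm-proofs}) and invokes it in the proofs of Lemmas~\ref{lem:projkron4} and~\ref{lem:projkron6}. Your observation about the missing absolute values in the off-diagonal sum is also correct --- the applications in the paper are to Gram matrices of the form $A^\top A$, where the off-diagonal inner products are bounded in absolute value, so the intended statement is indeed the one with $|X_{i,j}|$.
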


Sometimes, it is easier to consider the projection of a matrix. Lowerbounding the smallest singular value of a projection will imply the same lowerbound on the original matrix:

\begin{lemma}
\label{lem:sing-projection}
Suppose $A\in \R^{m\times n}$, let $P \in \R^{m\times d}$ be a subspace, then $\sigma_k(P^\top A) \le \sigma_k(A)$.
\end{lemma}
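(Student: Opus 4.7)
The plan is to use the standard min-max (Courant–Fischer) characterization of singular values, combined with the fact that multiplication by $P^\top$ is non-expansive since $P$ has orthonormal columns (this is the paper's convention for representing a subspace, as stated in the Notations section).

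First, I would verify the non-expansion property. Because the columns of $P \in \R^{m \times d}$ form an orthonormal basis of a subspace, we have $P^\top P = I_d$, hence $PP^\top$ is the orthogonal projection onto the column span of $P$. Therefore, for any $v \in \R^m$,
\[
\|P^\top v\|^2 = v^\top P P^\top v \le v^\top v = \|v\|^2,
\]
so $P^\top$ is $1$-Lipschitz. In particular, $\|P^\top A x\| \le \|A x\|$ for every $x \in \R^n$.

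Next, I would invoke the min-max characterization: for any matrix $B \in \R^{d \times n}$ and any $k \le \min(d,n)$,
\[
\sigma_k(B) = \max_{\substack{V \subset \R^n \\ \dim V = k}} \min_{\substack{x \in V \\ \|x\| = 1}} \|Bx\|.
\]
Fix any $k$-dimensional subspace $V \subset \R^n$. By the previous inequality,
\[
\min_{\substack{x \in V \\ \|x\| = 1}} \|P^\top A x\| \;\le\; \min_{\substack{x \in V \\ \|x\| = 1}} \|A x\|.
\]
Taking the maximum over all such $V$ of dimension $k$ on both sides yields $\sigma_k(P^\top A) \le \sigma_k(A)$.

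There is no real obstacle here: the proof is essentially one line once the min-max characterization is recalled. The only thing to be careful about is the edge case $k > d$, in which $\sigma_k(P^\top A) = 0$ by convention (since $P^\top A$ has rank at most $d$), and the inequality is trivially satisfied; I would mention this briefly for completeness.
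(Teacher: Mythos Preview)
Your proof is correct and essentially matches the paper's: the paper observes $(P^\top A)^\top(P^\top A)=A^\top PP^\top A \preceq A^\top A$ (since $PP^\top$ is a projection) and then invokes eigenvalue monotonicity under the Loewner order, which is exactly the content of your Courant--Fischer argument phrased via the pointwise inequality $\|P^\top Ax\|\le\|Ax\|$.
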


\begin{proof}
Observe that $(P^\top A)^\top(P^\top A) = A^\top (PP^\top) A \preceq A^\top A$ (because $P$ is a subspace). Therefore the eigenvalues of $(P^\top A)^\top(P^\top A)$ must be dominated by the eigenvalues of $A^\top A$. Then the lemma follows from the definition of singular values.
\end{proof}

As a corollary we have the following lemma:

\begin{lemma}
  \label{lem:sigv-prj}
  Let $A\in\R^{m\times n}$ and suppose that $m\ge n$. For any projection $\prj_{S}$, we
  have that the singular values are non-increasing after the projection:
  \begin{align*}
    \sigma_{i} (\prj_S(A)) \le \sigma_i( A), \quad \tx{for} i=1,\dots, n.
  \end{align*}

\end{lemma}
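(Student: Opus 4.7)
My plan is to derive Lemma~\ref{lem:sigv-prj} essentially as a corollary of the immediately preceding Lemma~\ref{lem:sing-projection}. The observation is that any orthogonal projection $\prj_S$ onto a subspace $S \subset \R^m$ can be written as $\prj_S = PP^\top$, where $P \in \R^{m \times d}$ has orthonormal columns forming a basis of $S$. Thus $\prj_S(A) = P(P^\top A)$, and since left-multiplication by a matrix with orthonormal columns preserves singular values, we get $\sigma_i(\prj_S A) = \sigma_i(P^\top A)$ for every $i$. Applying Lemma~\ref{lem:sing-projection} then yields $\sigma_i(P^\top A) \le \sigma_i(A)$, which gives the claim.

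An alternative route, if one prefers not to invoke the previous lemma, is the min-max (Courant-Fischer) characterization of singular values:
\begin{equation*}
\sigma_i(B) \;=\; \min_{\substack{V \subseteq \R^n \\ \dim V = n-i+1}} \ \max_{\substack{x \in V \\ \|x\|=1}} \|Bx\|.
\end{equation*}
Since $\prj_S$ is an orthogonal projection, $\|\prj_S y\| \le \|y\|$ for every $y$, so in particular $\|\prj_S A x\| \le \|Ax\|$ for every unit $x$. Taking the max over any $(n-i+1)$-dimensional subspace $V$ and then the min over such $V$ preserves the inequality, giving $\sigma_i(\prj_S A) \le \sigma_i(A)$.

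There is no real obstacle here: the only thing to be careful about is the bookkeeping that $\prj_S$ acts from the left on columns of $A$ (so the relevant projection identity is $\prj_S A = P P^\top A$ with $P$ orthonormal), and that singular values are invariant under left-multiplication by matrices with orthonormal columns. Either of the two arguments above is a three-line proof, and I would probably include just the first since Lemma~\ref{lem:sing-projection} has already been established.
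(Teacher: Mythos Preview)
Your proposal is correct and matches the paper's approach exactly: the paper states Lemma~\ref{lem:sigv-prj} with the preamble ``As a corollary we have the following lemma'' immediately after Lemma~\ref{lem:sing-projection}, giving no separate proof. Your first argument (write $\prj_S = PP^\top$ with $P$ orthonormal, use that left-multiplication by $P$ preserves singular values, then invoke Lemma~\ref{lem:sing-projection}) is precisely the intended derivation, and your Courant--Fischer alternative is a valid self-contained substitute.
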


In several places of this work we want to bound the singular value of a matrix, where part of the matrix has a block structure.
%\subsection{Auxiliary Lemmas for Section~\ref{sec:step-1}}

\begin{lemma}
  \label{lem:prjdiag}
  For given matrices $B^{(i)}\in\R^{m\times n}$ and $C^{(i)}\in\R^{m\times n'}$ for
  $i=1,\dots,d$. Suppose $md > (n+n'd)$,  Define the tall matrix $A\in\R^{md\times
    (n+dn')}$:
  \begin{align*}
    A = \lt[
    \begin{array}[c]{ccccc}
      B^{(1)}& C^{(1)} & 0 & \cdots & 0
      \\
      B^{(2)} & 0 & C^{(2)} & \cdots & 0
      \\
      \vdots & \vdots & \vdots & \ddots & \vdots
      \\
      B^{(d)} & 0 & 0 &\cdots &C^{(d)}
    \end{array}
    \rt] = \lt[B,\diag(C^{(i)})\rt].
  \end{align*}
  The smallest singular value is bounded by:
  \begin{align*}
    \sigma_{(n+dn')}(A) \ge \min\{ \sigma_{n}(B), \
    \sigma_{n'} (\prj_{(B^{(i)})^\perp}
    C^{(i)}):i=1,\dots, d\}.
  \end{align*}
\end{lemma}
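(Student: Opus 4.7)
The plan is to lower bound $\|Ax\|^2$ for an arbitrary unit vector $x \in \R^{n+dn'}$. Decomposing $x = (y; z^{(1)}; \ldots; z^{(d)})$ with $y \in \R^n$ and $z^{(i)} \in \R^{n'}$ according to the block column structure of $A$, the natural starting point is the identity $\|Ax\|^2 = \sum_{i=1}^d \|B^{(i)} y + C^{(i)} z^{(i)}\|^2$. The goal is to show that both $\|y\|$ and $\|z\|$ can be controlled by the right hand side of the claimed inequality, simultaneously.

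The key technical step is to apply an orthogonal (Pythagorean) decomposition to each block with respect to the column span of $B^{(i)}$. Since $B^{(i)} y$ already lies in $\tx{col}(B^{(i)})$, we obtain
$$\|B^{(i)} y + C^{(i)} z^{(i)}\|^2 = \|B^{(i)} y + \prj_{B^{(i)}} C^{(i)} z^{(i)}\|^2 + \|\prj_{(B^{(i)})^\perp} C^{(i)} z^{(i)}\|^2.$$
The second term is bounded below immediately by $\sigma_{n'}(\prj_{(B^{(i)})^\perp} C^{(i)})^2 \|z^{(i)}\|^2 \ge \beta^2 \|z^{(i)}\|^2$, where $\beta = \min_i \sigma_{n'}(\prj_{(B^{(i)})^\perp} C^{(i)})$. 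Summing over $i$ yields $\|Ax\|^2 \ge \beta^2 \|z\|^2$, which handles the $z$-contribution.

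For the contribution of $y$, I would write $\prj_{B^{(i)}} C^{(i)} z^{(i)} = B^{(i)} w^{(i)}$ with $w^{(i)} = (B^{(i)})^\dag C^{(i)} z^{(i)}$, so that the first term of the decomposition reads $\|B^{(i)}(y + w^{(i)})\|^2$. Stacking across $i$, this contributes $\|By + u\|^2$ to $\|Ax\|^2$, where $u$ has $i$-th block $B^{(i)} w^{(i)}$. Since $B$ has rank $n$ with smallest singular value $\sigma_n(B)$, this expression ties back to $\|y\|^2$ as long as $u$ is dominated by $By$. The main obstacle is combining the two bounds cleanly: the projection argument in each block controls $\|z^{(i)}\|$, while the $B$-argument controls the shifted quantity $y + w^{(i)}$ rather than $y$ itself, and the $w^{(i)}$ differ across blocks. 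I would close the proof by a case analysis on whether $\|z\|^2 \ge 1/2$ or $\|y\|^2 \ge 1/2$, forced by the unit norm constraint: in the first regime the bound $\beta^2 \|z\|^2$ suffices, while in the second the $B$-contribution dominates, exploiting the hypothesis $md > n + dn'$ to argue that $A$ is sufficiently overdetermined so that no configuration of $y$ and $\{w^{(i)}\}$ can make $\|By + u\|$ vanish relative to $\sigma_n(B)\|y\|$. The block-diagonal structure of the $C^{(i)}$'s in $A$, which decouples the noise $w^{(i)}$ across blocks, is what makes this case analysis go through.
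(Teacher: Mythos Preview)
Your second case does not close. In the regime $\|y\|^2\ge 1/2$ you need $\sum_i\|B^{(i)}(y+w^{(i)})\|^2$ bounded below by (a constant times) $\sigma_n(B)^2\|y\|^2$, but the shifts $w^{(i)}=(B^{(i)})^\dag C^{(i)}z^{(i)}$ are only controlled by $\|(B^{(i)})^\dag C^{(i)}\|\cdot\|z^{(i)}\|$, and nothing in the hypotheses bounds $\|(B^{(i)})^\dag C^{(i)}\|$. Concretely, take $d=1$, $m=3$, $n=n'=1$, $B^{(1)}=e_1$, $C^{(1)}=Me_1+\epsilon e_2$ with $M$ large. For $x\propto(1,\,-1/M)$ one has $\|y\|\approx 1$, $\|z\|\approx 1/M$, and $w^{(1)}=M\cdot z^{(1)}=-y$, so $B^{(1)}(y+w^{(1)})=0$ exactly even though $\|y\|^2\ge 1/2$. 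Here $\sigma_1(B)=1$ and $\sigma_1(\prj_{(B^{(1)})^\perp}C^{(1)})=\epsilon$, yet $\sigma_2(A)=\epsilon/\sqrt{1+M^2}$, arbitrarily smaller than $\min(1,\epsilon)$. The hypothesis $md>n+dn'$ is a pure dimension count (it only says $A$ is tall) and gives no control over $\|(B^{(i)})^\dag C^{(i)}\|$; the block-diagonal decoupling of the $C^{(i)}$'s does not help either, since the obstruction already appears with a single block. In fact this example shows the inequality as stated cannot hold without some additional boundedness of $C^{(i)}$ relative to $B^{(i)}$.

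The paper does not run a $\|y\|$-versus-$\|z\|$ case analysis at all. It projects \emph{globally} onto $\tx{col}(B)$ and its orthogonal complement, writing $A=\prj_B A+\prj_{B^\perp}A$ as the sum of a rank-$\le n$ matrix and a rank-$\le dn'$ matrix with orthogonal ranges, and invokes the inequality $\sigma_{n+dn'}(A)\ge\min\{\sigma_n(\prj_B A),\,\sigma_{dn'}(\prj_{B^\perp}A)\}$ directly from this orthogonal-range splitting. Then $\sigma_n(\prj_B A)\ge\sigma_n(B)$ because $B$ sits as the first $n$ columns of $\prj_B A$, while $\prj_{B^\perp}A=[\,0,\ \prj_{B^\perp}\diag(C^{(i)})\,]$ is further bounded by passing to the smaller block-wise projection $\diag(\prj_{(B^{(i)})^\perp})$, which preserves the block-diagonal structure and yields $\min_i\sigma_{n'}(\prj_{(B^{(i)})^\perp}C^{(i)})$. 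So the maneuver you are missing is this rank-complementary decomposition at the matrix level, not a Pythagorean split inside each block followed by a case analysis on the input vector.
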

\begin{proof}
The idea is to break the matrix into two parts $A = \prj_B A + \prj_{B^\perp} A$.Since these two spaces are orthogonal we know $\sigma_{(n+dn')}(A) \ge \min\{\sigma_n(\prj_B A) ,\sigma_{dn'}(\prj_{B^\perp} A)\}$.

For the first part, clearly $\sigma_n(\prj_B A) \ge \sigma_n(B)$, as $B$ is a submatrix of $\prj_B A$.

For the second part, we actually do the projection to a smaller subspace: for each block we project to the orthogonal subspace of $B^{(i)}$. Under this projection, the block structure is preserved. The $dn'$-th singular value must be at least the minimum of the $n'$-th singular value of the blocks. In summary we have:
  \begin{align*}
    \sigma_{(n+dn')}(A) &\ge \min\{ \sigma_{n}(B), \ \sigma_{dn'}( \prj_{B^{\perp}} \diag(C^{(i)})) \}
    \\
    &\ge  \min \{ \sigma_{n}(B), \ \sigma_{dn'}( \prj_{\diag((B^{(i)})^{\perp})} \diag(C^{(i)})) \}
    \\
    &\ge \min\{\sigma_n(B), \ \sigma_{dn'}( \diag(\prj_{(B^{(i)})^{\perp}} C^{(i)}))\}
    \\
    &\ge \min\{\sigma_n(B), \ \sigma_{n'}( \prj_{(B^{(i)})^{\perp}} C^{(i)}):i=1,\dots,d\}.
  \end{align*}
\end{proof}

\paragraph{Smallest singular value of random matrices}

In our analysis, we often also want to bound the smallest singular value of a matrix whose entries
are Gaussian random variables. Our analysis mostly builds on the following results in random matrix
theory.

For a random rectangular matrix, \cite{rudelson2009smallest} gives the following nice result:

\begin{lemma}[Theorem 1.1 in \cite{rudelson2009smallest}]
  \label{lem:rudelson}
  Let $A\in\mbb R^{m\times n}$ and suppose that $m\ge n$. Assume that the entries of $A$ are independent
  standard Gaussian variable, then for every $\epsilon>0$, with probability at least $1-(C\epsilon)^{m-n+1} +
  e^{-C'n}$, where $C,C'$ are two absolute constants, we have:
  \begin{align*}
    \sigma_{n}(A)\ge \epsilon(\sqrt{m}-\sqrt{n-1}).
  \end{align*}
\end{lemma}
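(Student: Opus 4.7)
The plan is to prove this tail bound on the smallest singular value of a tall Gaussian matrix. The sharp statement for general subgaussian entries is the content of the cited Rudelson--Vershynin theorem, whose proof is involved; however, the Gaussian case admits a cleaner route via rotational invariance, which I would follow here.

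First I would reduce to a collection of distance-to-hyperplane computations. Writing $A = [a_1,\ldots,a_n]$ and letting $H_k = \mathrm{span}\{a_j : j\ne k\}$, the Moore--Penrose identity gives $\|A^{\dag}\|_F^2 = \sum_{k=1}^n \mathrm{dist}(a_k, H_k)^{-2}$, which combined with $\|A^{\dag}\|_F^2 \le n\,\sigma_n(A)^{-2}$ yields
\[
\sigma_n(A) \;\ge\; \frac{1}{\sqrt{n}}\,\min_{1\le k\le n}\mathrm{dist}(a_k, H_k).
\]
Next I would exploit rotational invariance. Since $a_k$ is independent of $H_k$ (which is almost surely $(n-1)$-dimensional), $\mathrm{dist}(a_k, H_k)$ equals the norm of a standard Gaussian vector in the $(m-n+1)$-dimensional subspace $H_k^{\perp}$, i.e.\ is distributed as $\chi_{m-n+1}$. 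The chi density near zero gives the small-ball estimate $\Pr[\chi_d \le t] \le (Ct/\sqrt{d})^{d}$ for $t \le c\sqrt d$ and absolute constants $c,C$, so a union bound over $k\in[n]$ produces
\[
\Pr\bigl[\sigma_n(A) \le t\bigr] \;\le\; n\bigl(C' t\sqrt{n}/\sqrt{m-n+1}\bigr)^{m-n+1}.
\]
Plugging in $t = \epsilon(\sqrt{m}-\sqrt{n-1})$ and using the concavity inequality $(\sqrt{m}-\sqrt{n-1})^2 \le m-n+1$ (which is equivalent to $n-1\le m$), the right-hand side simplifies to $n\,(C''\sqrt{n}\,\epsilon)^{m-n+1}$; rescaling the absolute constant and absorbing polynomial factors in $n$ into the tail term $e^{-C'n}$ yields the stated bound $(C\epsilon)^{m-n+1} + e^{-C'n}$.

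The hard part will be the sharp constants: the Cauchy--Schwarz reduction above loses a $\sqrt{n}$ factor inside the probability and the union bound contributes a polynomial $n$ outside, neither of which appears in the statement. For most applications (including every use of this lemma in the present paper) these losses are harmless since one can redefine $C$ and let $e^{-C'n}$ swallow polynomial prefactors, but if one genuinely needs the sharp form one must either (i) invoke Edelman's exact formula for the joint singular-value density of a rectangular Gaussian matrix and perform a direct tail integral, which is Gaussian-specific but very clean, or (ii) deploy the full Rudelson--Vershynin machinery decomposing the sphere $S^{n-1}$ into compressible and incompressible vectors and controlling the small-ball probability on the incompressible part via the least common denominator. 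I would take route (i) as the minimal-overhead option consistent with the Gaussian hypothesis of the lemma.
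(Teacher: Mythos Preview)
The paper does not prove this lemma at all: it is stated as a quotation of Theorem~1.1 of Rudelson--Vershynin and used as a black box, so there is no ``paper's own proof'' to compare against. Your sketch is therefore not competing with anything in the paper.

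On the sketch itself, two remarks. First, a small slip: the inequality you invoke, $\|A^{\dag}\|_F^2 \le n\,\sigma_n(A)^{-2}$, points the wrong way for your purposes. Combined with the negative-second-moment identity it yields an \emph{upper} bound on $\sigma_n$, not a lower one. What you need is the trivial $\sigma_n(A)^{-2} = \|A^{\dag}\|^2 \le \|A^{\dag}\|_F^2 = \sum_k \mathrm{dist}(a_k,H_k)^{-2} \le n\,(\min_k \mathrm{dist}(a_k,H_k))^{-2}$, which does give $\sigma_n(A) \ge n^{-1/2}\min_k \mathrm{dist}(a_k,H_k)$. Second, your final absorption step does not quite work as stated: the bound $n\,(C''\sqrt{n}\,\epsilon)^{m-n+1}$ carries a $\sqrt{n}$ \emph{inside} the exponentiated bracket, and that cannot be pushed into an additive $e^{-C'n}$ term or a rescaled absolute constant~$C$. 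You already flag this honestly as the ``hard part,'' and your diagnosis is correct: the elementary distance argument genuinely loses the $\sqrt{n}$, and recovering the sharp form does require either Edelman's exact density (clean in the Gaussian case) or the compressible/incompressible decomposition of Rudelson--Vershynin. For every downstream use in this paper the weaker bound with the extra $\sqrt{n}$ would in fact suffice, since the lemma is only applied in regimes with $m \ge 2n$ and the conclusions tolerate polynomial losses.
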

We will mostly use an immediate corollary of the above lemma with slightly simpler form:
\begin{corollary}
  Let $A\in\mbb R^{m\times n}$ and suppose that $m\ge 2n$. Assume that the entries of $A$ are independent
  standard Gaussian variable, then for every $\epsilon>0$, and for some absolute constant $C$, with probability at least $1-(C\epsilon)^{0.5m}$, we have:
  \begin{align*}
    \sigma_{n}(A)\ge \epsilon\sqrt{m}.
  \end{align*}
\end{corollary}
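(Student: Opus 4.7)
The plan is to derive this corollary as an essentially immediate specialization of Lemma~\ref{lem:rudelson} (the Rudelson--Vershynin bound) under the stronger assumption $m \ge 2n$, absorbing two different kinds of slack: (i) the gap between $\sqrt{m}-\sqrt{n-1}$ and $\sqrt{m}$, and (ii) the additive $e^{-C'n}$ tail term, both into the single constant $C$ in the final exponent.

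First I would observe that under $m \ge 2n$, we have $\sqrt{n-1} \le \sqrt{m/2}$, so
\[
\sqrt{m}-\sqrt{n-1} \;\ge\; \sqrt{m}-\sqrt{m/2} \;=\; (1-1/\sqrt{2})\sqrt{m} \;\ge\; c_0 \sqrt{m}
\]
for an absolute constant $c_0 > 0$. Therefore, applying Lemma~\ref{lem:rudelson} with parameter $\epsilon' = \epsilon/c_0$ in place of $\epsilon$, we get $\sigma_n(A) \ge \epsilon \sqrt{m}$ with probability at least $1 - (C\epsilon/c_0)^{m-n+1} - e^{-C' n}$.

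Next I would simplify the probability bound. Since $m \ge 2n$, we have $m - n + 1 \ge m/2$, so $(C\epsilon/c_0)^{m-n+1} \le (C\epsilon/c_0)^{m/2}$ (assuming without loss of generality $C\epsilon/c_0 \le 1$, since otherwise the claimed probability bound is trivial). For the second term, $n \ge m/2$ implies $e^{-C'n} \le e^{-C' m/2} = (e^{-C'})^{m/2}$, which for any constant $C''$ large enough can be written as $(C''/2)^{m/2}$ absorbed into a single geometric factor; combining the two terms gives $(C_1 \epsilon)^{m/2} + (c_1)^{m/2} \le (C_2 \epsilon)^{m/2}$ whenever $\epsilon$ is below a constant threshold. (Above that threshold, $(C_2\epsilon)^{m/2} \ge 1$ and the statement is vacuous.)

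No step here is a genuine obstacle --- the real work is in Lemma~\ref{lem:rudelson} itself, which is cited as a black box. The only minor subtlety worth writing carefully is the bookkeeping that combines the two probability terms into a single expression of the form $(C\epsilon)^{0.5m}$ by enlarging the constant $C$, and the observation that for $\epsilon$ outside the useful range the conclusion is automatic.
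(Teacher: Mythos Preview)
Your approach is exactly what the paper intends: it calls this an ``immediate corollary'' of Lemma~\ref{lem:rudelson} and gives no separate proof, so deriving it by specializing that lemma under $m\ge 2n$ and absorbing constants is the right plan.

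There is, however, one genuine slip in your bookkeeping. You write ``$n \ge m/2$ implies $e^{-C'n} \le e^{-C'm/2}$,'' but the hypothesis $m\ge 2n$ gives $n \le m/2$, not $n\ge m/2$, so this inequality goes the wrong way. In fact, if the additive term were truly $e^{-C'n}$, the corollary would fail when $n$ is tiny relative to $m$ (e.g.\ $n=1$, $m$ large: the failure probability would be a fixed constant, not $(C\epsilon)^{0.5m}$). The resolution is that the paper's statement of Lemma~\ref{lem:rudelson} contains a typo: in Rudelson--Vershynin's Theorem~1.1 the second term is $e^{-cN}$ with $N$ the \emph{larger} dimension (here $m$), not $n$. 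With the correct term $e^{-C'm}$, your absorption argument goes through cleanly: $e^{-C'm} = (e^{-2C'})^{m/2}$ is already of the form $c_1^{m/2}$ with $c_1<1$, and can be folded into $(C\epsilon)^{0.5m}$ for $\epsilon$ below a constant threshold (above which the bound is vacuous, as you note).
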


This lemma can also be applied to a projection of a Gaussian matrix:

\begin{lemma}
  \label{lem:prj-rand-gaussian}
  Given a Gaussian random matrix $E\in\R^{m\times n}$, for some set $\mc J\in[m]$ define
  $E_J = [E_{[j,:]}:j\in\mc J]$ and $E_{J^c} = [E_{[j,:]}:j\in[m]/\mc J]$.  Define matrix
  $S\in\R^{n\times r}$ whose columns are orthonormal. Suppose that the matrix $S$ is an
  arbitrary function of $E_J$ and is independent of $E_{J^c}$.  Assume that
  \begin{align}
    \label{eq:cond-prj-rand-g}
    m-|\mc J| - r \ge 2n
  \end{align}
  Then for any $\epsilon>0$, we have that with probability at least $1-(C\epsilon)^{0.5(m-|\mc J| - r )}$, for
  some absolute constant $C$, the smallest singular value of the projected random matrix is bounded by:
  \begin{align}
    \label{eq:ineq-prj-rand-g}
    \sigma_n(\prj_{S^\perp}E) \ge \epsilon\sqrt{m-|\mc J| - r }.
  \end{align}
\end{lemma}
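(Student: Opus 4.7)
The plan is to reduce the projected Gaussian matrix $\prj_{S^\perp} E$ to an unrestricted Gaussian matrix with smaller dimensions by passing to a slightly larger projection subspace, and then apply the Rudelson-Vershynin bound (Lemma~\ref{lem:rudelson}).

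First I would introduce the auxiliary subspace $T \subseteq \R^m$ defined by $T = \mathrm{span}(S, \{e_j\}_{j \in \mc J})$, which has dimension at most $r + |\mc J|$. The key feature is that $T$ still depends only on $E_J$ (since $S$ does, and the coordinate vectors are fixed), but adding the $e_j$'s ``absorbs'' the potentially dangerous indices. Since $S \subseteq T$, we have $T^\perp \subseteq S^\perp$, and hence $\sigma_n(\prj_{S^\perp} E) \ge \sigma_n(\prj_{T^\perp} E)$ by Lemma~\ref{lem:sigv-prj}. So it suffices to lower bound $\sigma_n(\prj_{T^\perp} E)$.

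Next, let the columns of $V \in \R^{m \times (m - r - |\mc J|)}$ be an orthonormal basis of $T^\perp$. Because every vector in $T^\perp$ is orthogonal to $e_j$ for $j \in \mc J$, the rows of $V$ indexed by $\mc J$ are zero. Therefore $V^\top E = V_{\mc J^c}^\top E_{\mc J^c}$, where $V_{\mc J^c}$ is the submatrix of $V$ obtained by deleting the zero rows; since only zero rows were removed, $V_{\mc J^c}$ still has orthonormal columns. Conditional on $E_J$, $V_{\mc J^c}$ is deterministic and $E_{\mc J^c}$ is a Gaussian matrix of dimension $(m-|\mc J|) \times n$ independent of $E_J$. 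By rotational invariance of the Gaussian, $V_{\mc J^c}^\top E_{\mc J^c}$ is distributed as an $(m - r - |\mc J|) \times n$ matrix with i.i.d.\ standard Gaussian entries.

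Finally, since $m - r - |\mc J| \ge 2n$ by assumption \eqref{eq:cond-prj-rand-g}, I apply the corollary of Lemma~\ref{lem:rudelson} to this Gaussian matrix to obtain, conditional on $E_J$ and hence unconditionally,
\begin{align*}
\sigma_n(\prj_{T^\perp} E) \;=\; \sigma_n(V_{\mc J^c}^\top E_{\mc J^c}) \;\ge\; \epsilon \sqrt{m - |\mc J| - r}
\end{align*}
with probability at least $1 - (C\epsilon)^{0.5(m - |\mc J| - r)}$, which chains through the inequality of the first paragraph to yield \eqref{eq:ineq-prj-rand-g}. The main conceptual step is the construction of $T$; once the coordinate vectors are added to $S$, the dependence between $S$ and $E$ is completely decoupled from the rows of $E_{\mc J^c}$, and everything else is a direct invocation of rotational invariance and the sharp random matrix bound.
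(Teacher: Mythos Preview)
Your proof is correct and follows essentially the same approach as the paper. The paper first restricts to the rows indexed by $\mc J^c$ via the selection matrix $P_{J^c}$ and then projects to $(P_{J^c}S)^\perp$; your construction of $T = \mathrm{span}(S,\{e_j\}_{j\in\mc J})$ and projection to $T^\perp$ yields exactly the same subspace (a vector $v\in T^\perp$ has $v_j=0$ for $j\in\mc J$ and $v_{\mc J^c}\perp P_{J^c}S$), so the two arguments are identical up to phrasing.
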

\begin{proof}
  For a matrix $A\in\R^{m\times n}$, define the fixed matrix $ P_{J^c}\in\R^{(m-|\mc J|)\times m}$ such that:
  \begin{align*}
    \left[ [P_{J^c}]_{[:,j]}:j\in \mc J\right]=0 ,\qquad \left[[P_{J^c}]_{[:,j]}:j\in [n]/\mc J\right] = I_{(m-|\mc
      J|)\times (m-|\mc J|)},
  \end{align*}
  which only keeps the coordinates that correspond to $[m]/\mc J$ of any vector in $\R^{m}$.
  Note that
  \begin{align*}
    \sigma_n(\prj_{S^\perp}E)&\ge \sigma_{n}(P_{J^c}(\prj_{S^\perp}E))
    \\
    &\ge \sigma_{n}(\prj_{(P_{J^c} S)^\perp}P_{J^c}\prj_{S^\perp}E)
    \\
    &=\sigma_n (\prj_{(P_{J^c} S)^\perp}P_{J^c}E).
  \end{align*}
  We justify the last equality below. Note that
  \begin{align*}
    \prj_{S^\perp}E  = E - \prj_{S}E,
  \end{align*}
  and note that the columns of $(P_{J^c}\prj_{S} E)$ lie in the column span of
  $P_{J^c} S$, therefore,
  \begin{align*}
    \prj_{(P_{J^c} S)^\perp} P_{J^c}\prj_{S^\perp}E &= \prj_{(P_{J^c} S)^\perp} P_{J^c}E -
    \prj_{(P_{J^c} S)^\perp} (P_{J^c}\prj_{S} E)
    \\\
    & = \prj_{(P_{J^c} S)^\perp} P_{J^c}E.
  \end{align*}
  Finally, note that $P_{J^c} S$, with column rank no more than $r$, is independent of
  $P_{J^c} E$, which is a random Gaussian matrix of size $(m-|\mc J|)\times n$,
  therefore we have that $\prj_{(P_{J^c} S)^\perp}P_{J^c}E$ is equivalent to a $( m-
  |\mc J|-r)\times n$ random Gaussian matrix. Since
  \eqref{eq:cond-prj-rand-g} is satisfied, we can apply Lemma~\ref{lem:rudelson} and
  conclude \eqref{eq:ineq-prj-rand-g} with high probability.
\end{proof}

However, in the smoothed analysis setting, the matrix we are interested in are often not random Gaussian matrices. Instead they are fixed matrices perturbed by Gaussian variables. We call these ``perturbed rectangular matrices'', their singular values can be bounded as follows:

\begin{lemma}[Perturbed rectangular matrices]
  \label{lem:robust-sig-min}
  Let $A \in \R^{m\times n}$ and suppose that $m \ge 3n$.  If all the entries of $A$ are independently
  $\rho$-perturbed to yield $\wt A$, then for any $\epsilon>0$, with probability at least
  $1-(C\epsilon)^{0.25m}$, for some absolute constant $C$, the smallest singular value of $\wt A$ is bounded
  below by:
  \begin{align*}
    \sigma_{n}(\wt A) \ge \epsilon\rho\sqrt{m}.
  \end{align*}

\end{lemma}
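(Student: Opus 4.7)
}
The plan is to reduce the perturbed case to the purely Gaussian case by projecting onto the orthogonal complement of the deterministic part. Let $\wt A = A + \rho E$ where $E\in\R^{m\times n}$ has i.i.d.\ standard Gaussian entries. Let $\mc T$ denote the column span of $A$, so $r\defeq \dim(\mc T)\le n$, and let $P = \prj_{\mc T^\perp}\in\R^{m\times m}$ be the orthogonal projection onto $\mc T^\perp$. The key observation is that $PA = 0$, hence $P\wt A = \rho\, PE$. Since projection cannot increase singular values (Lemma~\ref{lem:sigv-prj}), we have
\begin{align*}
\sigma_n(\wt A) \;\ge\; \sigma_n(P\wt A) \;=\; \rho\,\sigma_n(PE).
\end{align*}

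Next I would exploit rotational invariance. Let the columns of $B\in\R^{m\times(m-r)}$ be an orthonormal basis of $\mc T^\perp$, so that $P = BB^\top$ and $\sigma_n(PE)=\sigma_n(B^\top E)$. Since $B$ is a deterministic matrix with orthonormal columns and $E$ has i.i.d.\ $\mc N(0,1)$ entries, $B^\top E\in\R^{(m-r)\times n}$ is again a standard Gaussian matrix by rotational invariance. Because $m\ge 3n$ and $r\le n$, we have $m-r \ge 2n$, so the resulting Gaussian matrix is tall enough to apply Lemma~\ref{lem:rudelson} directly: for any $\epsilon>0$, with probability at least $1-(C\epsilon)^{m-r-n+1}-e^{-C'n}$,
\begin{align*}
\sigma_n(B^\top E) \;\ge\; \epsilon\bigl(\sqrt{m-r}-\sqrt{n-1}\bigr) \;\ge\; \epsilon(1-1/\sqrt{2})\sqrt{m-r} \;\ge\; \Omega(\epsilon\sqrt{m}),
\end{align*}
using $n-1\le n\le (m-r)/2$ and $m-r\ge 2m/3$.

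It then remains to consolidate the probability bound into the form $1-(C\epsilon)^{0.25m}$ stated in the lemma. Using $m-r-n+1 \ge m-2n+1 \ge m/3$ (from $m\ge 3n$), the polynomial term is at most $(C\epsilon)^{m/3}$, which is dominated by $(C\epsilon)^{0.25m}$ for $\epsilon$ below an absolute constant; the exponential term $e^{-C'n}\le e^{-C'm/3}$ can likewise be absorbed into $(C\epsilon)^{0.25m}$ by adjusting $C$. Combining these estimates yields $\sigma_n(\wt A)\ge \epsilon\rho\sqrt{m}$ with the claimed probability.

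I do not anticipate a serious obstacle: the argument is a clean two-line reduction (project out $A$, then invoke Rudelson--Vershynin on a standard Gaussian block). The only points requiring care are (i) verifying that rotational invariance really applies after projection with a data-independent $B$, and (ii) book-keeping the constants so that the final probability fits the form $1-(C\epsilon)^{0.25m}$ uniformly in $r\le n$ under the assumption $m\ge 3n$.
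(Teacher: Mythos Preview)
Your proposal is correct and follows the same approach as the paper: project onto the orthogonal complement of the column span of $A$ to kill the deterministic part, then apply Rudelson--Vershynin to the resulting purely Gaussian matrix (the paper packages this second step as Lemma~\ref{lem:prj-rand-gaussian}, which your rotational-invariance argument simply unwraps). One minor slip in your bookkeeping: the inequality $e^{-C'n}\le e^{-C'm/3}$ goes the wrong way since $n\le m/3$, but the paper's own corollary of Lemma~\ref{lem:rudelson} absorbs this additive term with the same looseness, so this does not distinguish your argument from theirs.
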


\begin{proof}
The idea is to use the previous lemma and project to the orthogonal subspace of $A$.
  We have that $\wt A= A + E$, where $E\in\R^{m\times n}$ is a random Gaussian matrix.
  %
  % Consider a $2n$-dimensional subspace that is orthogonal to the column span of $ A$ (feasible
  % since $m\ge 3n$), and let the column vectors of the matrix $A^\perp\in\R^{m\times 2n}$ form an
  % orthonormal basis to it.
  \begin{align*}
    \sigma_{n}(\wt A) \ge \sigma_n(\prj_{A^{\perp}} \wt A) = \sigma_n(\prj_{A^{\perp}} E).
  \end{align*}
  % where $E\in\R^{2n\times n}$ is a random matrix, whose entries are independent Gaussian variables
  % of mean zero and standard variance $\rho$.
  %
  Since $m-n>2n$, we can apply Lemma~\ref{lem:prj-rand-gaussian} to conclude that for any
  $\epsilon>0$,
  \begin{align*}
    \sigma_n(\prj_{A^{\perp}}E)\ge \epsilon \rho\sqrt{m},
  \end{align*}
  with probability at least $1-(C\epsilon)^{0.5(m-n)}\le 1-(C\epsilon)^{0.25m}$.
\end{proof}

% \rg{To prove this, first notice that $\sigma_{min}(A+E) \ge \sigma_{min}(\Proj_S(A+E))$ for any subspace $S$. Then choose $S$ to be the orthogonal subspace of columns of $A$. Then $\Proj_S(A+E)$ is now only $\Proj_S(E)$, which is equivalent to a random $(m-n)\times n$ Gaussian matrix. Now we apply RudelsonVershynin paper in the related folder to get the result.}

\subsection{Projection of random vectors}
%  , Projections and Gaussian Chaoses}

In Step 2, we need to bound the norm of a random vector of the form $u\od v$ after a projection,
where $u$ and $v$ are two Gaussian vectors. In order to show this, we apply the  result in
\cite{vu2013random} which provides a  concentration bound
of projection of well-behaved ($K$-concentrated) random vectors.

First we cite the definition of ``$K$-concentrated'' below:
% First we state the results. The main result requires a random vector to be

\begin{definition}
\label{def:kconcentrated}
A random vector $X = (\xi_1, \xi_2, ..., \xi_n)$ is $K$-concentrated (where $K$ may depend on $n$)
if there are constants $C,C'>0$ such that for any convex, 1-Lipschitz function
$f:\mathbb{C}^n\rightarrow \R$ and for any $t > 0$, we have:
\begin{align*}
  \Pr[|F(X) - \mbox{med}(F(X))| \ge t] \le C \exp\left(-C'\frac{t^2}{K^2}\right),
\end{align*}
where $med(\cdot)$ denotes the median of a random variable (choose an arbitrary one if there are
many).
\end{definition}

\begin{lemma}[Concentration for Random Projections (Lemma 1.2 in \cite{vu2013random})]
  \label{lem:vanvu}
  Let $v$ be a $K$-concentrated random vector in $\mbb C^n$. The entries of $v$ has expected norm
  1. Then there are constants $C, C'>0$ such that the following holds. Let $\prj_S$ be a projection
  to a $d$-dimensional subspace in $\mbb C^n$.
  \begin{align*}
    \mbb P\lt( \lt| v^\top \prj_S v- d \rt| \ge 2t\sqrt{d} + t^2\rt) \le C\ep(-C'{t^2\over K^2}).
  \end{align*}
\end{lemma}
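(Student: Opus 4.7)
The plan is to reduce the concentration statement on the quadratic form $v^\top \prj_S v$ to a concentration statement for the $1$-Lipschitz convex function $f(v) = \|\prj_S v\|$, and then square it. Write $\prj_S = S S^\top$ where $S \in \mathbb{C}^{n \times d}$ has orthonormal columns, so $v^\top \prj_S v = \|S^\top v\|^2 = f(v)^2$. The map $v \mapsto \|S^\top v\|$ is convex (as a norm) and $1$-Lipschitz (since $\prj_S$ is a projection and hence has operator norm $1$). By the definition of $K$-concentrated (Definition~\ref{def:kconcentrated}) applied to $f$, we get
\begin{align*}
  \Pr\!\left[\,\bigl|\,f(v) - \mathrm{med}(f(v))\,\bigr| \ge t\,\right] \le C \exp\!\left(-C' \tfrac{t^2}{K^2}\right).
\end{align*}

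First I would relate the median to $\sqrt{d}$. Since the entries of $v$ have expected squared norm $1$ and (taking the natural reading of ``expected norm $1$'' as $\mathbb{E}[|v_i|^2] = 1$) the coordinates are uncorrelated up to the normalization guaranteed by the concentration assumption, $\mathbb{E}\|S^\top v\|^2 = \mathrm{tr}(S^\top S) = d$. Combined with the concentration of $f$ around its median, standard moment bounds give $|\mathrm{med}(f(v)) - \sqrt{d}| \le O(K)$, which is absorbed into the $t$ term (one chooses the absolute constants $C, C'$ to swallow this additive slack). So without loss of generality one may replace $\mathrm{med}(f(v))$ by $\sqrt{d}$ in the tail bound, with slightly adjusted constants.

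Next, I would deduce the bound on $f(v)^2 - d$ from the bound on $f(v) - \sqrt{d}$. On the high-probability event $|f(v) - \sqrt{d}| \le t$, we have
\begin{align*}
  \bigl|f(v)^2 - d\bigr| \;=\; \bigl|f(v) - \sqrt{d}\bigr|\cdot \bigl|f(v) + \sqrt{d}\bigr| \;\le\; t\cdot (2\sqrt{d} + t) \;=\; 2t\sqrt{d} + t^2.
\end{align*}
Taking the contrapositive produces exactly the stated inequality
\begin{align*}
  \Pr\!\left[\,\bigl|\,v^\top \prj_S v - d\,\bigr| \ge 2t\sqrt{d} + t^2\,\right] \;\le\; C\exp\!\left(-C'\tfrac{t^2}{K^2}\right).
\end{align*}

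The only genuinely subtle step is the median-versus-mean reduction in the first paragraph of the second block; everything else is a short algebraic manipulation together with one invocation of the $K$-concentration hypothesis. Since the lemma is cited verbatim from \cite{vu2013random} (Lemma 1.2 there), the paper does not reproduce its proof, but the above sketch is the standard argument: convex $1$-Lipschitz concentration for $\|\prj_S v\|$, approximation of its median by the target value $\sqrt{d}$, and squaring.
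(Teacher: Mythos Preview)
You are correct that the paper does not prove this lemma; it is quoted verbatim as Lemma~1.2 of \cite{vu2013random} and used as a black box. Your sketch is the standard argument and matches the proof in the cited reference: apply the $K$-concentration hypothesis to the convex $1$-Lipschitz function $f(v)=\|\prj_S v\|$, replace the median of $f$ by $\sqrt{d}$ (the only genuinely nontrivial step, which you flag), and then square via $|f^2-d|=|f-\sqrt d|\cdot|f+\sqrt d|\le t(2\sqrt d+t)$.

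One small caution on the median step: the $K$-concentration hypothesis alone does not force the coordinates of $v$ to be uncorrelated, so your justification that $\mathbb{E}\|S^\top v\|^2=\mathrm{tr}(S^\top S)=d$ needs the extra assumption $\mathbb{E}[vv^*]=I$ (which is implicit in the Vu--Wang setting and in every application here). With that in hand, the concentration of $f$ around its median together with $\mathbb{E}[f^2]=d$ and a second-moment argument gives $|\mathrm{med}(f)-\sqrt d|=O(K)$, and the constants absorb the slack as you say.
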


In order to apply this lemma in our setting, we need to prove the vectors that we are interested in
is $K$-concentrated:
% ($\lt[[E_{[:,i]}\od E_{[:,j]}]_{t,t'}: t< t',\in[n_2]\rt]$)

\begin{lemma}
\label{lem:Kconcentrated}
Conditioned on the high probability event that $\|E_{[:,i]}\|, \|E_{[:,j]}\| \le 2\sqrt{n_2}$, the vector
$[[E_{[:,i]}\od E_{[:,j]}]_{s,s'}: s<s']$ is $2\sqrt{n_2})$-concentrated.
\end{lemma}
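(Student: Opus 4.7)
The plan is to reduce to standard Gaussian Lipschitz concentration (Borell--TIS) applied to the joint standard Gaussian $(u,v) = (E_{[:,i]}, E_{[:,j]}) \in \R^{2n_2}$. Let $w(u,v) = [u_s v_{s'} : s<s']$ be the vector whose concentration we need, and fix an arbitrary convex 1-Lipschitz $f$. Define $g(u,v) = f(w(u,v))$. The main step is a local Lipschitz estimate on the ball $B = \{(u,v):\|u\|,\|v\|\le 2\sqrt{n_2}\}$: for any two points $(u,v),(u',v') \in B$, write
\begin{align*}
u_s v_{s'} - u'_s v'_{s'} = (u_s-u'_s)v_{s'} + u'_s(v_{s'}-v'_{s'}),
\end{align*}
square, sum over $s\ne s'$, and use $\|u'\|,\|v\|\le 2\sqrt{n_2}$ to obtain
\begin{align*}
\|w(u,v)-w(u',v')\|^2 \le 2\|v\|^2\|u-u'\|^2 + 2\|u'\|^2\|v-v'\|^2 \le 8n_2\bigl(\|u-u'\|^2+\|v-v'\|^2\bigr).
\end{align*}
Since $f$ is 1-Lipschitz, $g$ is therefore $(2\sqrt{2n_2})$-Lipschitz on $B$.

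Next I would extend $g$ to a globally Lipschitz function without changing its values on $B$. Let $\tau(x) = x\cdot\min\{1, 2\sqrt{n_2}/\|x\|\}$ be the radial projection onto the ball of radius $2\sqrt{n_2}$; it is 1-Lipschitz on $\R^{n_2}$. Then $\tilde g(u,v) := g(\tau(u),\tau(v))$ is globally $O(\sqrt{n_2})$-Lipschitz on $\R^{2n_2}$ (the Lipschitz constants compose) and coincides with $g$ on $B$. Applying Borell--TIS to $\tilde g$ with the standard Gaussian law on $\R^{2n_2}$ yields
\begin{align*}
\Pr\bigl[|\tilde g(u,v) - \mathrm{med}(\tilde g)| \ge t\bigr] \le 2\exp\!\bigl(-c\,t^2/n_2\bigr)
\end{align*}
for an absolute constant $c>0$, which is exactly the $K$-concentrated tail bound with $K = O(\sqrt{n_2})$.

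Finally I would transfer the bound to the conditional distribution on $B$. By standard norm concentration for a $\chi$ distribution, $\Pr[\|u\|>2\sqrt{n_2}] \le e^{-\Omega(n_2)}$, so $\Pr[B^c]\le 2e^{-\Omega(n_2)}$; hence the conditional median of $\tilde g$ on $B$ differs from its unconditional median by at most $O(\sqrt{n_2})$ (indeed by a constant multiple of $\sqrt{n_2}$, absorbed into $K$), and on $B$ we have $\tilde g = f\circ w$. Combining the two probability bounds gives the required concentration of $f(w)$ around its conditional median with parameter $K = 2\sqrt{n_2}$ (up to an absolute constant that can be absorbed into the constants $C,C'$ in Definition~\ref{def:kconcentrated}). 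The only delicate point is the conditioning step — ensuring the conditional median tracks the unconditional one — but this is routine given that $\Pr[B^c]$ is exponentially small compared with the Gaussian tail produced by Borell--TIS; no genuine obstacle arises because convexity of $f$ is not needed beyond Lipschitzness for Gaussian concentration.
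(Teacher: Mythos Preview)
Your proposal is correct and follows essentially the same route as the paper: compose the $1$-Lipschitz test function with the bilinear map $(u,v)\mapsto [u_sv_{s'}:s<s']$, observe this composite is $O(\sqrt{n_2})$-Lipschitz on the ball $\{\|u\|,\|v\|\le 2\sqrt{n_2}\}$, extend globally by the same radial truncation, apply Gaussian Lipschitz concentration, and finally pass to the conditional law using $\Pr[B^c]=e^{-\Omega(n_2)}$. The paper's proof is slightly terser (it asserts the local Lipschitz constant without your explicit bilinear computation) but otherwise identical, including the observation that convexity of $f$ plays no role.
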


\begin{proof}
  For any $1$-Lipschitz function $F$ on $[[E_{[:,i]}\od E_{[:,j]}]_{s,s'}: s<s']$, we can define a
  function $G(E_{[:,i]},E_{[:,j]}) = F([[E_{[:,i]}\od E_{[:,j]}]_{s,s'}: s<s'])$ (if $i=j$ then
  the function $G$ only takes $E_{[:,i]}$ as the variable). Under the assumption that
  $\|E_{[:,i]}\|, \|E_{[:,j]}\| \le 2\sqrt{n_2}$, this new function $G$ is $2\sqrt{n_2}$-Lipschitz.

  Now we extend $G$ to $G^*$ when the input $\|E_{[:,i]}\|, \|E_{[:,j]}\| > 2\sqrt{n_2}$. Define the
  truncation function $\mbox{trunc}(v) = v$ for $\|v\| \le 2\sqrt{n_2}$, and $\mbox{trunc}(v) =
  2\sqrt{n_2}v/\|v\|$ for $\|v\| > 2\sqrt{n_2}$. Define the extended function
  $G^*(E_{[:,i]},E_{[:,j]}) = G(\mbox{trunc}(E_{[:,i]}),\mbox{trunc}(E_{[:,j]}))$, which is still
  $2\sqrt{n_2}$-Lipschitz since the truncation function is $1$-Lipschitz.

  Note that for the two Gaussian random vectors ${E_{[:,i]},E_{[:,j]}\sim N(0, I)}$, we can apply
  Gaussian concentration bound in Theorem~\ref{thm:gaussian-concen} on $G^*$, which implies
  \begin{align*}
    \mbb P[|G^*(E_{[:,i]},E_{[:,j]}) - \mbox{med}(G^*(E_{[:,i]},E_{[:,j]}))| \ge t] \le
    C\exp(-C't^2/4n_2).
  \end{align*}
  Since the probability of the event $\|E_{[:,i]}\|,\|E_{[:,j]}\|>2\sqrt{n_2}$ is very small
  ($\sim\exp(-\Omega(n_2))$), we have $\delta = |\mbox{med}(G(E_{[:,i]},E_{[:,j]})) -
  \mbox{med}(G^*(E_{[:,i]},E_{[:,j]}))| $ in the order of $ O(\sqrt{n_2})$.
  % the median of $G^*(E_{[:,i]},E_{[:,j]})$ is within
  % $O(\sqrt{n_2})$ to the median of $G(E_{[:,i]},E_{[:,j]})$ conditioned on $\|E_{[:,i]}\|,
  % \|E_{[:,j]}\| \le 2\sqrt{n_2}$.
  Therefore, for $t\sim \Omega(\sqrt{n_2})$, we have
  \begin{align*}
    \mbb P[|G^*(E_{[:,i]},E_{[:,j]}) - \mbox{med}(G(E_{[:,i]},E_{[:,j]}))| \ge t] &\le \mbb
    P[|G^*(E_{[:,i]},E_{[:,j]}) - \mbox{med}(G(E_{[:,i]},E_{[:,j]}))| \ge t - \delta ]
    \\
    &\le C\exp(-C't^2/4n_2).
  \end{align*}
  % we can replace $\mbox{med}(G^*(E_{[:,i]},E_{[:,j]}))$ with $\mbox{med}(G(E_{[:,i]},E_{[:,j]}))$
  % in the above equation.
  Finally,
  \begin{align*}
    &\mbb P\lt[ \Big |G(E_{[:,i]},E_{[:,j]}) - \mbox{med}(G(E_{[:,i]},E_{[:,j]}))| \ge t \Big |
    \|E_{[:,i]}\|,\|E_{[:,j]}\|\le 2\sqrt{n_2}\rt]
    \\
    \le &\frac{\mbb P[|G^*(E_{[:,i]},E_{[:,j]}) - \mbox{med}(G(E_{[:,i]},E_{[:,j]}))| \ge t]}{\mbb
      P[\|E_{[:,i]}\|\ge 2\sqrt{n_2}\mbox{ or }\|E_{[:,i]}\|\ge 2\sqrt{n_2}]}
    \\
    \le &C\exp(-C't^2/4n_2).
  \end{align*}
  Therefore the random vector $[[E_{[:,i]}\od E_{[:,j]}]_{s,s'}: s<s']$ is
  $2\sqrt{n_2}$-concentrated.
\end{proof}

\begin{theorem}[Gaussian concentration bound]
  \label{thm:gaussian-concen}
  Let $f:\R^{n}\to \R$ be a function which is Lipschitz with constant $1$. Consider a random vector
  $X\sim \mc N(0,I_n)$.  For any $s>0$ we have
  \begin{align*}
    \mbb P\lt(\big|f(X) - \mbb E[f(X)]\big|\ge s\rt) \le 2 e^{-C s^2},
  \end{align*}
  for all $s>0$ and some absolute constant $C>0$.
\end{theorem}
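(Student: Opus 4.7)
The plan is to prove this classical Gaussian concentration inequality via the Herbst argument applied to the Gaussian log-Sobolev inequality. First, I would reduce to the one-sided tail bound $\mbb P(f(X) - \mbb E f(X) \ge s)$: applying the same argument to $-f$ (which is also $1$-Lipschitz) and taking a union bound doubles the constant but produces the two-sided statement. By a standard mollification argument, it further suffices to assume $f$ is smooth with $\|\nabla f\|\le 1$ pointwise, since both the $L^1$ mean and the tail probability are stable under uniform approximation by smooth $1$-Lipschitz functions.

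Second, I would invoke the Gaussian log-Sobolev inequality: for $X\sim \mc N(0,I_n)$ and any smooth $g$ with enough moments,
\[
  \mbb E[g(X)^2 \log g(X)^2] - \mbb E[g(X)^2]\,\log \mbb E[g(X)^2] \;\le\; 2\,\mbb E[\|\nabla g(X)\|^2].
\]
Apply this with $g = e^{\lambda f/2}$ for $\lambda>0$, so $\|\nabla g\|^2 = (\lambda^2/4)\|\nabla f\|^2 e^{\lambda f} \le (\lambda^2/4)e^{\lambda f}$ using the Lipschitz bound. Writing $\phi(\lambda) = \mbb E[e^{\lambda f(X)}]$, the log-Sobolev inequality becomes
\[
  \lambda \phi'(\lambda) - \phi(\lambda) \log \phi(\lambda) \;\le\; \tfrac{\lambda^2}{2}\,\phi(\lambda).
\]

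Third, rewrite this as a differential inequality for $\psi(\lambda) \defeq \tfrac{1}{\lambda}\log \phi(\lambda)$: a direct computation gives $\psi'(\lambda) \le 1/2$. Integrating from $0$ to $\lambda$, with the limit $\lim_{\lambda\to 0^+}\psi(\lambda) = \mbb E[f(X)]$, yields $\log\phi(\lambda) \le \lambda\,\mbb E f + \lambda^2/2$, i.e., the subgaussian moment bound $\mbb E[e^{\lambda(f(X)-\mbb E f(X))}] \le e^{\lambda^2/2}$. Then Markov's inequality gives $\mbb P(f(X)-\mbb E f(X)\ge s) \le e^{-\lambda s + \lambda^2/2}$, and optimizing at $\lambda = s$ yields $e^{-s^2/2}$, so taking $C = 1/2$ (and absorbing the factor of $2$ from the union bound) completes the proof.

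The main obstacle is the technical justification at two points: (i) verifying that $g = e^{\lambda f/2}$ is in the domain of the log-Sobolev inequality (finite second moment, finite Dirichlet energy), which requires bounding $\mbb E[e^{\lambda f(X)}]$ a priori -- this is handled by truncating $f$ at level $M$, applying the argument to the truncated function, and letting $M\to\infty$ using monotone convergence; and (ii) upgrading from smooth $f$ to merely Lipschitz $f$, which uses that convolution of $f$ with a narrow Gaussian kernel produces smooth $1$-Lipschitz approximants converging uniformly on compact sets, so both $\mbb E f$ and the probabilities in question pass to the limit. Alternatively, one could bypass the log-Sobolev route entirely by appealing to the Gaussian isoperimetric inequality of Borell--Sudakov--Tsirelson and the observation that the $s$-enlargement of the median level set of $f$ contains $\{f \le \mbox{med}(f)+s\}$; this gives a slightly cleaner proof but the constant $C$ comes out the same.
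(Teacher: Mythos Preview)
Your proposal is correct and complete: the Herbst argument from the Gaussian log-Sobolev inequality is one of the standard routes to this result, and you have handled the technical points (mollification, truncation) carefully. Note, however, that the paper does not actually supply a proof of this theorem; it is stated as a classical fact in the auxiliary lemmas appendix and used as a black box (specifically, in the proof of Lemma~\ref{lem:Kconcentrated}). So there is no ``paper's proof'' to compare against --- your write-up would serve as a self-contained justification that the paper simply omits.
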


\subsection{Gaussian Chaoses}
In Step 2, we want to show that the inner product of two random vectors of the form $<\prj(u\od v),
\prj (u\od v)>$ is small, where $u,u'$ and $v,v'$ are Gaussian vectors.
In order to show this, we treat the inner product as a (homogeneous) Gaussian chaos, which is
defined to be a homogeneous polynomial over Gaussian random variables\footnote{In fact, the squared
  norm of projected random vectors considered previously is a special case of Gaussian chaos, and we
  treat it separately.}.
Our analysis builds on the results of many works studying the concentration bound of Gaussian
chaoses.

For decoupled Gaussian chaoses, we mostly use the following theorem, which is a simple corollary of
Lemma~\ref{lem:gaus-chaos}.

\begin{theorem}
\label{thm:chaos}
Suppose $a = (a_{i_1,...,i_d})_{1\le i_1,...,i_d\le n}$ is a $d$-indexed array, and $\|a\|_F$
denotes its Frobenius norm.  Let $(X^{(j)}_i)_{1\le i\le n, j=1,...,d}$ be independent copies of
$X\sim\mc N(0,I_{n})$. For any fixed $\epsilon > 0$, with probability at least $1- C\ep\lt( -C'
{n^{2\epsilon /d}} \rt)$,
\begin{align*}
  \left|\sum_{i_1,...,i_d=1}^n a_{i_1,...,i_d} X^{(1)}_{i_1}\cdots X^{(d)}_{i_d}\right| \le \|a\|_F
  n^\epsilon.
\end{align*}
\end{theorem}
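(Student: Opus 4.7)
}
The plan is to invoke the moment/tail estimates for Gaussian chaoses due to Lata\l{}a (Lemma~\ref{lem:gaus-chaos} in the paper, taken from \cite{latala2006estimates}) and then specialize to the decoupled setting with one common bound: the Frobenius norm of $a$. Lata\l{}a's theorem expresses the $p$-th moment of a decoupled Gaussian chaos of order $d$ as a sum over partitions of $\{1,\ldots,d\}$ of certain ``partition norms'' of the coefficient array $a$, each weighted by a power of $p$ that depends on the coarseness of the partition. The coarsest partition (the single block $\{1,\ldots,d\}$) contributes exactly $p^{d/2}\|a\|_F$, and all finer partition norms are upper bounded by $\|a\|_F$ (with smaller powers of $p$). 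Consequently, for every $p\ge 2$ one obtains a moment bound of the form
\begin{equation*}
  \Bigl(\mathbb{E}\Bigl|\sum_{i_1,\dots,i_d} a_{i_1,\dots,i_d} X^{(1)}_{i_1}\cdots X^{(d)}_{i_d}\Bigr|^{p}\Bigr)^{1/p}
  \;\le\; C_d\, p^{d/2}\, \|a\|_F.
\end{equation*}

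From this moment bound, Markov's inequality applied at level $t=\|a\|_F n^{\epsilon}$ gives
\begin{equation*}
  \Pr\Bigl[\,\Bigl|\sum a_{i_1,\dots,i_d} X^{(1)}_{i_1}\cdots X^{(d)}_{i_d}\Bigr| \ge \|a\|_F n^{\epsilon}\,\Bigr]
  \;\le\; \Bigl(\frac{C_d\, p^{d/2}}{n^{\epsilon}}\Bigr)^{p}.
\end{equation*}
Optimizing in $p$ by choosing $p = c\, n^{2\epsilon/d}$ for a small constant $c=c(d)>0$ makes $C_d p^{d/2} \le n^{\epsilon}/e$, so the right-hand side is at most $\exp(-p) = \exp(-c\, n^{2\epsilon/d})$, which is exactly the form claimed in the statement.

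The only genuinely nontrivial input is Lata\l{}a's moment estimate itself; once it is in hand, the rest is a two-line Markov+optimization argument. The main obstacle, therefore, is not originality but bookkeeping: one must verify that the partition norms other than the Frobenius norm really are dominated by $\|a\|_F$ with strictly smaller powers of $p$, so that the Frobenius term is the leading contribution. This follows by Cauchy--Schwarz applied along the coordinates in each block of the partition, using that the operator-type norms in any proper sub-decomposition are always bounded by the full Frobenius norm. With this verification, the stated tail bound $C\exp(-C' n^{2\epsilon/d})$ follows immediately with constants $C,C'$ depending only on $d$.
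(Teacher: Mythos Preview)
Your proposal is correct and uses essentially the same idea as the paper: both rely on Lata\l{}a's result and the fact that every partition norm $\|a\|_{I_1,\dots,I_k}$ is bounded by $\|a\|_F$, so the worst exponent in the tail is $2/d$. The only minor difference is that the paper applies the tail-bound form of Lata\l{}a (Lemma~\ref{lem:gaus-chaos}) directly---plug in $t=\|a\|_F n^{\epsilon}$ and read off $(t/\|a\|_{I_1,\dots,I_k})^{2/k}\ge n^{2\epsilon/d}$---whereas you go through the equivalent moment bound and then Markov plus optimization in $p$. The paper's route is one step shorter since the Markov/optimization step is already baked into the cited corollary.
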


\begin{lemma}[Gaussian chaoses concentration (Corollary 1 in
  \cite{latala2006estimates})]
  \label{lem:gaus-chaos}
  Suppose $a = (a_{i_1,...,i_d})_{1\le i_1,...,i_d\le n}$ is a $d$-indexed array.  Consider a
  decoupled Gaussian chaos $G = \sum_{i_1,...,i_d} a_{i_1,...,i_d} X_{i_1}^{(1)}\cdots X_{i_d}^{(d)}
  $, where $ X_{i}^{(k)}$ are independent copies of the standard normal random variable for all
  $i\in[n], k\in[d]$.
  \begin{align*}
    \mbb P\lt(|G|\ge t \rt) &\le  C_d \ep\lt(
    -{1 \over C_d} \ \min_{1\le k\le d} \ \min_{(I_1,\dots,I_k)\in
      S(k,d)} \lt( {t\over \|a\|_{I_1,\dots,I_k}}\rt)^{2/k}
    \rt),
  \end{align*}
  where $C_d\in (0,\infty)$ depends only on $d$, and $S(k,d)$ denotes a set of all partitions of
  $\{1,\dots,d\}$ into $k$ nonempty disjoint sets $I_1,\dots,I_k$, and the norm
  $\|\cdot\|_{I_1,\dots,I_k}$ is given by:
  \begin{align*}
    \|a\|_{I_1,\dots,I_k} \defeq \sup\lt\{ \sum_{i_1,...,i_d} a_{i_1,...,i_d}
    x_{i_{I_1}}^{(1)}\cdots x_{i_{I_k}}^{(k)}: \sum_{i_{I_1}}(x^{(1)}_{i_{I_1}})^2\le 1,\dots,
    \sum_{i_{I_k}}(x^{(k)}_{i_{I_k}})^2\le 1 \rt\}.
  \end{align*}
\end{lemma}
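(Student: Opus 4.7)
The plan is to establish this tail bound via the moment method: first derive sharp $L^p$-norm bounds on $G$ in terms of the mixed norms $\|a\|_{I_1,\ldots,I_k}$, then convert to a tail bound by Markov's inequality with an optimal choice of $p$. The target moment inequality is of the form
\begin{equation*}
  \|G\|_p \;\lesssim\; \sum_{k=1}^{d}\; p^{k/2} \max_{(I_1,\ldots,I_k)\in S(k,d)} \|a\|_{I_1,\ldots,I_k},
\end{equation*}
which precisely mirrors the $k$-dependent structure in the exponent of the desired tail bound.

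First I would use induction on the degree $d$. The base case $d=1$ is just a linear Gaussian, whose $L^p$ norm is $\sqrt{p}\,\|a\|_2$; the only partition is $S(1,1)$ and $\|a\|_{\{1\}} = \|a\|_2$. For the inductive step, I would freeze the last set of variables $X^{(d)}$ and apply Gaussian integration by parts (or a direct Efron--Stein-type decomposition) to reduce $G$ to a chaos of degree $d-1$ whose coefficients are themselves random. Using the contraction structure, the $L^2$ norm of these random coefficients produces the ``$\|a\|_{I_1,\ldots,I_k}$'' norms for partitions that lump the $d$-th index with one of the other blocks, while the residual term produced by the new Gaussian layer contributes the $p^{1/2}$ factor. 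Iterating gives the full sum over $1 \le k \le d$.

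The main obstacle will be obtaining the sharp combinatorial structure of the mixed norms: naive hypercontractivity (Nelson's inequality) only yields $\|G\|_p \le (p-1)^{d/2}\|G\|_2$, which corresponds to the single term $k=d$ and is far too weak when $a$ is ``rectangular''. The improvement to the $\min_k$ form requires a chaining/entropy argument in the spirit of Talagrand's generic chaining, decomposing the supremum defining each $\|a\|_{I_1,\ldots,I_k}$ into dyadic scales and using Gaussian concentration on each scale. This is exactly the delicate step carried out in \cite{latala2006estimates}; my proposal would be to invoke their Theorem 1 directly rather than reprove the chaining.

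Once the moment bound is in hand, the conversion to a tail bound is routine: by Markov, $\mathbb{P}(|G|\ge t) \le (\|G\|_p/t)^p$, and for each partition $(I_1,\ldots,I_k)$ the optimal choice $p \sim (t/\|a\|_{I_1,\ldots,I_k})^{2/k}$ yields a contribution of $\exp(-c_d\,(t/\|a\|_{I_1,\ldots,I_k})^{2/k})$. Taking the minimum over all partitions and all $k$ (which upper-bounds the sum by at most a $d$-dependent constant times the largest term) gives the stated bound with a constant $C_d$ depending only on $d$.
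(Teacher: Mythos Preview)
The paper does not prove this lemma at all: it is stated as a direct citation (Corollary~1 in \cite{latala2006estimates}) and used as a black box. So there is no ``paper's own proof'' to compare against; the paper's approach is simply to invoke the reference.

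Your proposal is a reasonable high-level sketch of how Lata{\l}a's result is actually established in \cite{latala2006estimates}, and you correctly identify that the hard step (the sharp chaining/entropy argument yielding the full family of mixed norms $\|a\|_{I_1,\ldots,I_k}$) is the content of that paper and propose to invoke it rather than reprove it. That is exactly what the present paper does too, just more tersely. One minor comment: your inductive scheme as described (freeze $X^{(d)}$, reduce to degree $d-1$) is closer in spirit to Adamczak--Wolff's later re-derivation than to Lata{\l}a's original argument, which proceeds via a direct two-sided moment estimate using a combinatorial decomposition of the index set rather than a straightforward induction; but since you end up citing \cite{latala2006estimates} for the core step anyway, this distinction is immaterial here.
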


\begin{proof}
  (of Theorem~\ref{thm:chaos}) Apply the inequality:
  \begin{align*}
    \|a\|_{\{1\},\dots,\{d\}} \le \|a\|_{I_1,\dots,I_k} \le \|a\|_{[d]} = \|a\|_F, \quad
    \fa (I_1,\dots,I_k)\in S(k,d).
  \end{align*}
  For a fixed order $d$ and for any $\epsilon>0$, apply Lemma~\ref{lem:gaus-chaos} and set ${ t=
    n^{\epsilon}\|a\|_F}$.  We have that $ \mbb P\lt(|G|\ge t \rt) \le C\ep\lt( -C' {n^{2\epsilon
      /d}} \rt) $, for some constant $C, C'$.
\end{proof}

For coupled Gaussian chaoses, namely when $X^{(j)}$'s are identical copies of the same $X$, we first
cite the following decoupling theorem in \cite{de1995decoupling}.
\begin{theorem}
  (Decoupling) Let $(a_{i_1,...,i_d})_{1\le i_1,...,i_d\le n}$ be a symmetric $d$-indexed array such
  that $a_{i_1,...,i_d} = 0$ whenever there exists $k\neq l$ such that $i_k = i_l$. Let
  $X_1,...,X_n$ be independent random variables and $(X^{(j)}_i)_{1\le i\le n}$ for $j=1,dots, d$,
  be independent copies of the sequence $(X_i)_{1\le i \le n}$, then for all $t\ge 0$,
\begin{align*}
  L_d^{-1} \Pr\left[\left|\sum_{i_1,...,i_d=1}^n a_{i_1,...,i_d} X^{(1)}_{i_1}\cdots
      X^{(d)}_{i_d}\right|\ge L_d t\right]& \le \Pr\left[\left|\sum_{i_1,...,i_d=1}^n
      a_{i_1,...,i_d} X_{i_1}\cdots X_{i_d}\right|\ge L_d t\right]
  \\
  & \le L_d\Pr\left[\left|\sum_{i_1,...,i_d=1}^n a_{i_1,...,i_d} X^{(1)}_{i_1}\cdots
      X^{(d)}_{i_d}\right|\ge L_d^{-1} t\right],
\end{align*}
where $L_d\in (0,\infty)$ depends only on $d$.
\label{thm:decoupling}
\end{theorem}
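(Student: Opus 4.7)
The plan is to prove the decoupling theorem by a randomization argument that introduces artificial independence between the different ``positions'' $i_1,\ldots,i_d$ in the chaos, using the fact that $a$ vanishes whenever two indices coincide. Let me sketch both directions of the inequality, with the decoupled-controls-coupled direction (the harder one) first.

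First I would introduce an auxiliary random coloring: let $\xi_1,\ldots,\xi_n$ be i.i.d.\ uniform on $\{1,\ldots,d\}$, independent of all the $X_i$ and $X^{(j)}_i$, and set $A_j=\{i:\xi_i=j\}$. For each permutation $\sigma\in S_d$, define the ``colored'' partial sum $T_\sigma = \sum_{i_1\in A_{\sigma(1)},\ldots,i_d\in A_{\sigma(d)}} a_{i_1,\ldots,i_d}\,X_{i_1}\cdots X_{i_d}$. Observe that conditional on $\xi$, every nonzero term in $T_\sigma$ has indices $i_1,\ldots,i_d$ lying in pairwise disjoint sets, so we may replace each $X_{i_j}$ by an independent copy $X^{(j)}_{i_j}$ without changing the conditional distribution; equivalently, $T_\sigma$ has the same distribution as the decoupled chaos restricted to the product $A_{\sigma(1)}\times\cdots\times A_{\sigma(d)}$.

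Second, I would compute the ``average coloring'' contribution. For any $d$-tuple $(i_1,\ldots,i_d)$ with pairwise distinct entries, $\Pr[\xi_{i_1}=\sigma(1),\ldots,\xi_{i_d}=\sigma(d)]=d^{-d}$, and summing over $\sigma\in S_d$ gives $d!/d^d$. Since $a$ is symmetric and vanishes on the diagonal, one obtains the identity
\begin{equation*}
\mathbb{E}_\xi\Big[\sum_{\sigma\in S_d} T_\sigma\Big] \;=\; \frac{d!}{d^d}\,S,
\end{equation*}
where $S=\sum a_{i_1,\ldots,i_d}X_{i_1}\cdots X_{i_d}$ is the coupled chaos. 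Combined with the observation from the previous paragraph, this realizes $S$ as a $\xi$-mixture of sums each of which is equal in distribution to a decoupled chaos (on a random subset of indices). Jensen/Fubini then gives $\Pr[|S|\ge t]\le L_d\,\Pr[|T|\ge L_d^{-1}t]$ with $T$ the fully decoupled chaos, for an appropriate $L_d$ depending only on $d!/d^d$ and on losing the factor from passing from the restricted index set back to the full one (obtained by further randomization/averaging, since the conditional distribution of $T$ restricted to $A_{\sigma(1)}\times\cdots\times A_{\sigma(d)}$ can be related to the unrestricted $T$ by an analogous coloring-and-averaging trick applied to the $X^{(j)}_i$).

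For the reverse direction (coupled controls decoupled), I would use essentially the same coloring argument in reverse: condition on the decoupled variables, construct a coupled chaos by identifying all $d$ copies, and use the randomization over colorings to bound the decoupled tail by the coupled one. The main obstacle I anticipate is bookkeeping the constants $L_d$ in the direction $S\Rightarrow T$: passing from the restricted sum over $A_{\sigma(1)}\times\cdots\times A_{\sigma(d)}$ to the full decoupled sum requires a second layer of randomization on the $X^{(j)}_i$'s, and each layer contributes a factor that grows with $d$. I expect the cleanest path is to phrase everything as an $L^p$-comparison $\|S\|_p \asymp_d \|T\|_p$ for all $p\ge 1$ (exploiting hypercontractivity/moment comparison for polynomial chaoses) and then convert to tail bounds by Markov, which automatically absorbs all the constants into a single $L_d$.
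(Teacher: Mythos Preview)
The paper does not prove this theorem; it is quoted verbatim as a known result from de~la~Pe\~na and Montgomery-Smith (1995) and simply cited without argument. So there is no ``paper's own proof'' to compare against.

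That said, your sketch follows exactly the classical randomization approach used in the original reference: introduce an i.i.d.\ uniform coloring $\xi_i\in\{1,\ldots,d\}$, observe that the colored partial sums over $A_{\sigma(1)}\times\cdots\times A_{\sigma(d)}$ are conditionally decoupled, and use the combinatorial identity $\mathbb{E}_\xi\big[\sum_{\sigma}T_\sigma\big]=(d!/d^d)\,S$ (valid precisely because $a$ is symmetric with zero diagonals). Your instinct to pass through $L^p$-moment comparisons and then convert to tail bounds via Markov is also the standard way to handle the constants cleanly. The one place your sketch is genuinely thin is the step ``passing from the restricted index set back to the full one'': this is not automatic from a single averaging and in the literature is handled by an iterated conditioning/Zinn-type argument rather than a second naive coloring. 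If you were to write this up in full you would need to fill that in, but since the paper only cites the result, no proof is actually expected here.
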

Essentially this theorem shows for a symmetric tensor with no ``diagonal'' terms, i.e.,
$a_{i_1,...,i_d} = 0$ whenever there exists $k\neq l$ such that $i_k = i_l$), there is only a
constant factor difference between the coupled and decoupled Gaussian chaos distribution.

In most of our applications, we do have symmetric tensors with no ``diagonal'' terms. However there
is one case where we do have diagonal terms, for which we need the following lemma.
\begin{lemma}
\label{lem:gaussian3rd}
Let $(a_{i_1,i_2,i_3})_{1\le i_1,...,i_3\le n}$ be a symmetric $3$-indexed array and let $\|a\|_F$
denote its Frobenius norm. Let $X\sim\mc N(0,I_n)$, then for any $\epsilon > 0$, with probability at
least $1- Cn \ep(-C'n^{2\epsilon/3}) $,
\begin{align*}
  \left|\sum_{i_1,i_2,i_3=1}^n a_{i_1,i_2,i_3} X_{i_1}X_{i_2}X_{i_3}\right| \le 4\|a\|_F n^{0.5 +
    \epsilon}.
\end{align*}
\end{lemma}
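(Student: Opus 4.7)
The plan is to decompose the coupled Gaussian chaos according to which of the three indices collide, handle the ``off-diagonal'' part by decoupling plus Theorem~\ref{thm:chaos}, and handle the ``diagonal'' pieces by hand using Hermite expansions of $X_i^2$ and $X_i^3$.

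First, using the symmetry of $a$, I would split the sum as
\[
\sum_{i_1,i_2,i_3} a_{i_1,i_2,i_3} X_{i_1}X_{i_2}X_{i_3} \;=\; S_{\mathrm{dis}} + 3 S_{\mathrm{pair}} + S_{\mathrm{diag}},
\]
where $S_{\mathrm{dis}}$ ranges over triples with pairwise distinct indices, $S_{\mathrm{pair}} = \sum_{i\ne j} a_{i,i,j} X_i^2 X_j$, and $S_{\mathrm{diag}} = \sum_i a_{i,i,i} X_i^3$. For $S_{\mathrm{dis}}$, the truncated tensor $\tilde a_{i_1,i_2,i_3} := a_{i_1,i_2,i_3}\mathbf{1}[i_1,i_2,i_3 \text{ distinct}]$ is a symmetric array with vanishing ``diagonals,'' so Theorem~\ref{thm:decoupling} reduces its coupled chaos to a decoupled one, and then Theorem~\ref{thm:chaos} gives $|S_{\mathrm{dis}}|\le \|a\|_F n^{\epsilon}$ with probability $1-C\exp(-C' n^{2\epsilon/3})$.

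For $S_{\mathrm{pair}}$, I would make the Hermite substitution $X_i^2 = 1 + (X_i^2-1)$ to obtain
\[
S_{\mathrm{pair}} \;=\; \sum_j \Big(\sum_{i\ne j} a_{i,i,j}\Big) X_j \;+\; \sum_{i\ne j} a_{i,i,j}(X_i^2-1) X_j.
\]
The first piece is an honest Gaussian whose variance is at most $\sum_j(\sum_{i\ne j}a_{i,i,j})^2 \le n\|a\|_F^2$ by Cauchy--Schwarz, so it concentrates at scale $\|a\|_F\, n^{0.5+\epsilon/3}$ (this is exactly the source of the $n^{0.5}$ loss in the lemma). The second piece is a degree-3 polynomial in independent Gaussians whose symmetrized coefficient tensor has vanishing ``diagonals'' (since $i\ne j$ is enforced and the Hermite factor $X_i^2-1$ decorrelates with $X_i$); decoupling together with Theorem~\ref{thm:chaos} again gives a bound at scale $\|a\|_F\, n^{\epsilon}$. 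Analogously, for $S_{\mathrm{diag}}$ I would write $X_i^3 = 3X_i + H_3(X_i)$ with $H_3(x) = x^3-3x$; the linear part is a Gaussian of variance $9\sum_i a_{i,i,i}^2 \le 9\|a\|_F^2$ (bounded at scale $\|a\|_F\sqrt{\log n}$), while $\sum_i a_{i,i,i} H_3(X_i)$ is a sum of independent centered random variables whose moment/hypercontractivity bound yields a sub-Weibull tail of order $\exp(-c\,n^{2\epsilon/3})$.

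A union bound over the four pieces then yields $\bigl|\sum a_{i_1,i_2,i_3}X_{i_1}X_{i_2}X_{i_3}\bigr| \le 4\|a\|_F n^{0.5+\epsilon}$ with failure probability at most $Cn\exp(-C' n^{2\epsilon/3})$, the prefactor $n$ absorbing the polynomial overhead in changing between the monomial and Hermite bases. The main obstacle is the linear residue $\sum_j\bigl(\sum_{i\ne j}a_{i,i,j}\bigr)X_j$ from the $S_{\mathrm{pair}}$ decomposition: because the inner sum has $n-1$ possibly coherent contributions, its variance can genuinely reach $n\|a\|_F^2$, and this coherence is precisely why the lemma is weaker than the off-diagonal case by a factor of $\sqrt{n}$.
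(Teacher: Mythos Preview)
Your decomposition into $S_{\mathrm{dis}}$, $S_{\mathrm{pair}}$, $S_{\mathrm{diag}}$ matches the paper's, and so does the treatment of $S_{\mathrm{dis}}$ via decoupling plus Theorem~\ref{thm:chaos}. Where you diverge is on the diagonal pieces: the paper does no Hermite expansion at all. It simply takes a union bound over the events $\{|X_i|\le n^{\epsilon/3}\}$ for $i\in[n]$ (this is exactly where the prefactor $n$ in the failure probability comes from), and on the good event bounds $3S_{\mathrm{pair}}+S_{\mathrm{diag}}$ deterministically by $3\sum_{i,j}|a_{i,i,j}|\cdot n^{\epsilon}$, followed by an $\ell_1$--$\ell_2$ comparison on the matrix $(a_{i,i,j})_{i,j}$. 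Your route is more work but sharper: it correctly isolates the linear residue $\sum_j(\sum_{i\ne j}a_{i,i,j})X_j$ as the sole piece carrying a genuine $\sqrt{n}$ loss, whereas the paper's deterministic bound treats all diagonal contributions uniformly.

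One gap in your argument: for the centered piece $\sum_{i\ne j}a_{i,i,j}(X_i^2-1)X_j$ you invoke Theorem~\ref{thm:decoupling}, but that theorem is stated for the \emph{monomial} coefficient tensor, and expanding $(X_i^2-1)X_j = X_iX_iX_j - X_j$ reintroduces the repeated index you were trying to remove. The orthogonality of $H_2(X_i)$ and $X_i$ you allude to is a statement in the Hermite basis, not the monomial one, so the hypothesis of Theorem~\ref{thm:decoupling} is not met. The fix is easy and you essentially already name it: this piece is a degree-$3$ Gaussian polynomial with $L^2$-norm at most $\sqrt{2}\,\|a\|_F$ (by orthogonality of the Wick products $H_2(X_i)X_j$ over distinct pairs $(i,j)$), and Gaussian hypercontractivity directly gives the tail $\exp(-c\,n^{2\epsilon/3})$ at threshold $\|a\|_F\, n^{\epsilon}$. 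The same remark and the same fix apply to the $H_3$ term.
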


\begin{proof}
  The sum of the ``diagonal'' terms is equal to $3\sum_{i\neq j} a_{i,i,j}X_i^2 X_j + 1/2
  \sum_{i}a_{i,i,i} X_i^3$.  Since $X_i$ are independent standard Gaussian random variables, with
  probability at least $1- Cn \ep(-C'n^{2\epsilon/3}) $ (union bound), $|X_i| \le n^{\epsilon/3}$
  for all $i\in[n]$. Conditioned on this high probability event, the absolute value of the sum is
  bounded by:
  \begin{align*}
    \lt|3\sum_{i\neq j} a_{i,i,j}X_i^2 X_j + {1\over 2} \sum_{i}a_{i,i,i} X_i^3 \rt| &\le 3
    \sum_{i,j =1}^n |a_{i,i,j}| |X_j| X_i^2
    \\
    &\le 3 \|(a_{i,i,j})_{1\le i,j\le n}\|_1  n^\epsilon
    \\
    &\le 3 \sqrt{n}\|(a_{i,i,j})_{1\le i,j\le n}\|_F  n^\epsilon
    \\
    &\le 3  \|a\|_F n^{0.5+\epsilon}.
  \end{align*}

  By Theorem~\ref{thm:chaos}, we know that with probability at least $1- C\ep\lt( -C' {n^{2\epsilon
      /3}} \rt)$, the absolute value of the sum of the ``non-diagonal'' terms is bounded by $
  \|a\|_F n^{\epsilon}$. Therefore we can conclude the proof by applying the union bound.
\end{proof}

\end{document}